\theoremstyle{plain}
\newtheorem{theorem}{Theorem}[section]
\newtheorem{fact}[theorem]{Fact}
\newtheorem{claim}[theorem]{Claim}
\theoremstyle{definition}
\theoremstyle{remark}
\newtheorem{remark}[theorem]{Remark}
\newcommand{\floor}[1]{\left\lfloor #1 \right\rfloor}
\newcommand{\ceil}[1]{\left\lceil #1 \right\rceil}
\DeclareMathOperator*{\argmax}{arg\,max}
\DeclareMathOperator{\Points}{\mathcal{C}}
\DeclareMathOperator{\Pointsh}{\mathnormal{\mathcal{C}^h}}
\DeclareMathOperator{\rh}{\mathnormal{r_h}}
\DeclareMathOperator{\ci}{\mathnormal{C_i}}
\DeclareMathOperator{\cih}{\mathnormal{C^h_i}}
\DeclareMathOperator{\OPT}{\mathrm{OPT}}
\DeclareMathOperator{\OPTSF}{\mathrm{OPT_{\text{SF}}}}
\DeclareMathOperator{\OPTR}{\mathrm{OPT_{\text{R}}}}
\DeclareMathOperator{\OPTLPR}{\mathrm{OPT^{\text{LP}}_{\text{R}}}}
\DeclareMathOperator{\OPTU}{\mathrm{OPT_{\text{U}}}}
\DeclareMathOperator{\OPTLPU}{\mathrm{OPT^{\text{LP}}_{\text{U}}}}
\DeclareMathOperator{\constr}{\mathnormal{c_{R}}}
\DeclareMathOperator{\constu}{\mathnormal{c_{U}}}
\DeclareMathOperator{\OPTW}{\mathrm{OPT_{\W}}}
\DeclareMathOperator{\Colors}{\mathcal{H}}
\DeclareMathOperator{\viol}{\mathnormal{\nu}}
\DeclareMathOperator{\disu}{\mathnormal{D}}
\DeclareMathOperator{\totdistcol}{\mathcal{D}}
\DeclareMathOperator{\totviolcol}{\mathcal{V}}
\DeclareMathOperator{\gammap}{\mathnormal{\gamma_p}}
\DeclareMathOperator{\gammapp}{\mathnormal{\gamma_p\prime}}
\newcommand{\adult}{\textbf{Adult}}
\newcommand{\cens}{\textbf{Census1990}}
\newcommand{\credit}{\textbf{CreditCard}}
\newcommand{\bank}{\textbf{Bank}}
\newcommand{\cO}{\mathnormal{O}}
\newcommand{\cH}{\mathcal{H}}
\newcommand{\Sstar}{{S^*}}
\newcommand{\Ssf}{{S_{\SF}}}
\newcommand{\phisf}{{\phi_{\SF}}}
\newcommand{\phibar}{{\bar{\phi}}}
\newcommand{\phir}{\phi}
\newcommand{\sr}{\mathnormal{S_{\SF}}}
\newcommand{\phiu}{\phi}
\newcommand{\su}{\mathnormal{S_{\W}}}
\newcommand{\Sw}{{S_{\W}}}
\newcommand{\phiw}{{\phi_{\W}}}
\newcommand{\istar}{{i^*}}
\newcommand{\phistar}{{\phi^*}}
\newcommand{\SF}{\textbf{SF}}
\newcommand{\W}{\textbf{W}}
\newcommand{\nrst}{\text{nrst}}
\def\x{\mathbf{x}}
\newcommand{\algr}{\mathnormal{\textsc{RawlsianAlg}}}
\newcommand{\algu}{\mathnormal{\textsc{UtilitarianAlg}}}
\newcommand{\rawlsrounding}{\mathnormal{\textsc{Rawlsian-Rounding}}}
\title{Welfare-Centric Clustering}
\author{%
  Claire Jie Zhang \thanks{Equal contribution.} \\
  University of Washington\\
   \And
  Seyed A. Esmaeili \protect\footnotemark[1] \\
  University of Chicago \\
   \AND
  Jamie Morgenstern \\
  University of Washington }
\begin{document}

\maketitle

\begin{abstract}
Fair clustering has traditionally focused on ensuring equitable group representation or equalizing group-specific clustering costs. However, \citet{dickerson2024fair} recently showed that these fairness notions may yield undesirable or unintuitive clustering outcomes and advocated for a welfare-centric clustering approach that models the utilities of the groups. In this work, we model group utilities based on both distances and proportional representation and formalize two optimization objectives based on welfare-centric clustering: the Rawlsian (Egalitarian) objective and the Utilitarian objective. We introduce novel  algorithms for both objectives and prove theoretical guarantees for them. 
Empirical evaluations on multiple real-world datasets demonstrate that our methods significantly outperform existing fair clustering baselines.
\end{abstract}


\section{Introduction}
Algorithmic decision making has proliferated across various domains, from personalized advertising, recommender systems, and  hiring to even more consequential and critical applications such as recidivism prediction, loan approval, and kidney exchange \cite{tejaswini2020accurate, arun2016loan, purohit2019hiring, berk2021fairness, liu2011comparison, berk2013statistical, awasthi2009online, aziz2021optimal, mcelfresh2019scalable, raghavan2020mitigating}. This widespread use has led to many documented examples where intuitive and agreed upon notions of fairness have been violated \cite{kearns2019ethical,o2017weapons}. As a result fairness has become a major consideration in machine learning and algorithm design.

Clustering, a fundamental and classical problem in both machine learning and operations research, has received significant attention as a case study for operationalizing various notions of fairness \cite{chierichetti2017fair, bercea2018cost, bera2019fair, ahmadian2019clustering, ghadiri2021socially, abbasi2021fair, makarychev2021approximation, jung2019center, chen2019proportionally}. These works consider standard clustering objectives (e.g., $k$-means or $k$-median) and solve them subject to a constraint on an operationalization of some fairness notion. For example, the most prominent notion in fair clustering is that of \emph{proportional mixing} where the standard clustering objective is minimized subject to each cluster having equitable representation of each demographic group.\footnote{For example, if the dataset consists of demographic groups $A$, $B$, and $C$ with proportions $50\%$, $30\%$, and $20\%$, respectively, then for each cluster the percentage of points from groups $A$, $B$, and $C$ should be around $50\%$, $30\%$, and $20\%$, respectively.}

While proportional mixing as well as other fairness notions in clustering are well-motivated\footnote{Specifically, for proportional mixing, since each group is proportionally represented in each cluster, this would ensure that any outcome assigned purely by cluster assignment  will affect each group proportionally.}, it has  recently been shown that the application of these notions can lead to clustering outcomes which are unintuitive and result in poor solutions both globally and for each group separately \cite{dickerson2024fair}. Take as an example the clustering instance in Figure \ref{fig:motivation} where the red and blue colors denote two different demographic groups. In this instance, the top two groupings of points are far from each other relative to the other points: clustering them with any points outside those groupings leads to high distance costs for those points. So, any proportionally mixed clustering will lead to large distances for all of the points in those top two groupings.\footnote{In general, shorter distances to centers are more favorable. This holds if the application was a machine learning or an operations research application. In the former, shorter distances correspond to better representation by the center and in the latter they correspond to shorter travel distance to the center.}

As noted in \citet{dickerson2024fair} and shown in Figure \ref{fig:motivation}, a \emph{welfare-centric} clustering which quantifies the utilities of the red points and blue points separately and optimizes for them jointly would overcome these issues and produce clusterings where the upper groupings would be clustered separately and the lower groupings would cluster together. While the instance presented in the figure is simplified, it is representative of many settings where balancing the proportional mixing and the distances should be handled more carefully and where utility-aware clusterings are much more preferable. In this paper, we give a formalization of welfare-centric clustering and introduce algorithms with provable guarantees for it.

\begin{wrapfigure}{r}{0.5\textwidth}
  \vspace{-0.5cm}  
  \begin{center}
    \includegraphics[width=1\linewidth]{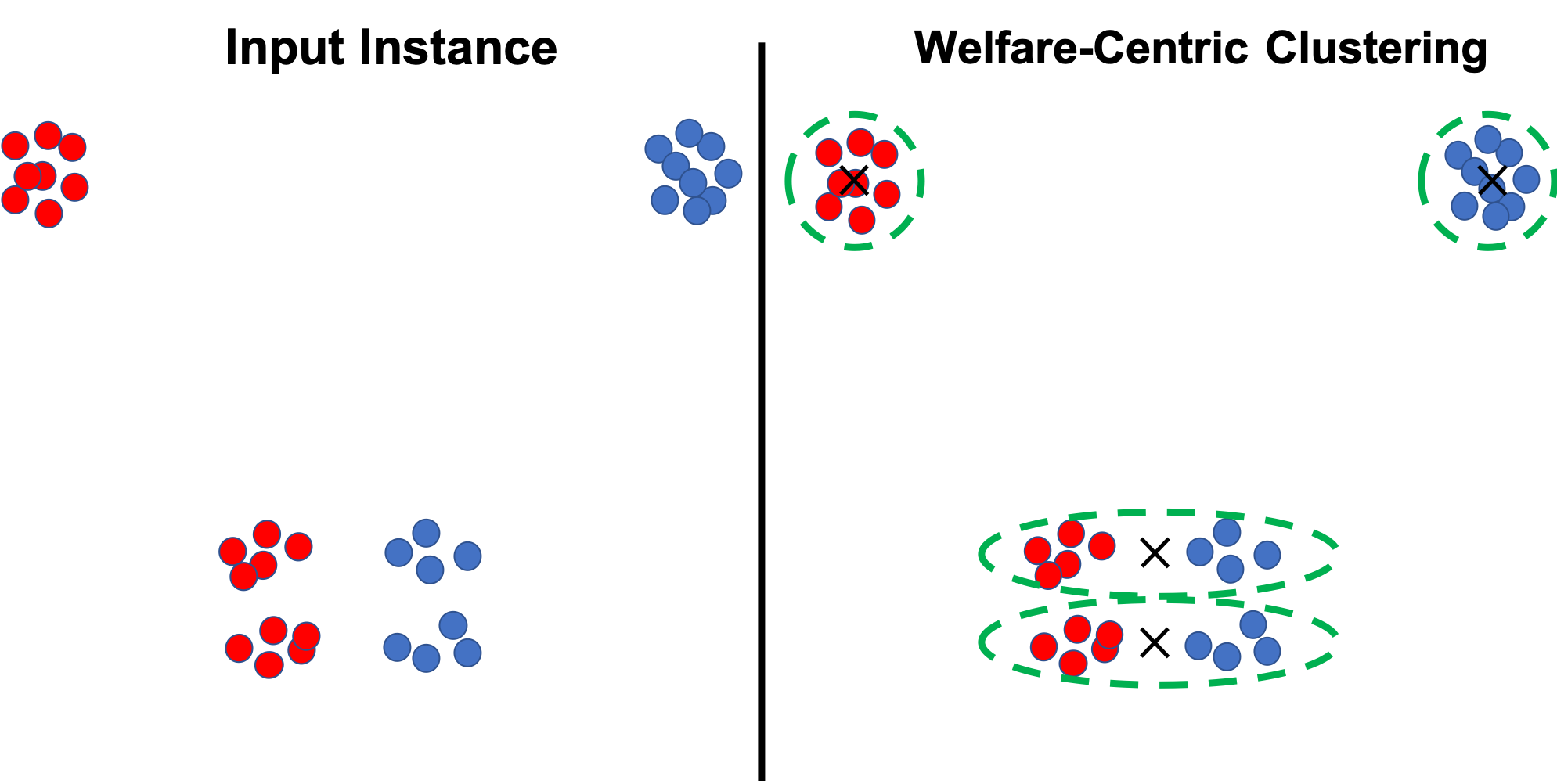}
  \end{center}
    \caption{Instance and its welfare optimal clustering.}
    \label{fig:motivation}
\end{wrapfigure}

\paragraph{Our Contributions and Outline.} In Section \ref{sec:related_work}, we review the related works and demonstrate the need for a clustering notion based on utilities and welfare. In Section \ref{sec:problem_statement}, we formalize the notions for group utility 
and introduce two formal notions of welfare. Specifically, we introduce the Rawlsian (Egalitarian) objective as well as the Utilitarian objective. In Section \ref{sec:wc_algo_and_thm}, we introduce our novel algorithms for both objectives. Interestingly, although our algorithms for both objectives share a similar format, the details of each have to be carefully adjusted to optimize its own objective. In Section \ref{sec:experiments}, we implement our algorithms and show superior performance over a collection of baselines on real world datasets. Due to space limits all of the proofs are pushed to the Appendix.

\section{Related Work}\label{sec:related_work}
As mentioned the most prominent notion in fair clustering is that of proportional color mixing where each cluster is constrained to have proportional representations of the groups. This notion was originally introduced in \cite{chierichetti2017fair} and was then generalized to satisfy arbitrary lower and upper bounds as well as various clustering objectives \cite{bercea2018cost,bera2019fair,kleindessner2019guarantees,knittel2023fair}. \citet{esmaeili2021fair} noted that satisfying proportional color mixing might come at an unbounded increase to the  clustering cost and therefore introduced the \emph{\textbf{F}air \textbf{C}lustering under a \textbf{B}ounded \textbf{C}ost} (FCBC) problem where an upper bound on the clustering cost is given and then proportional color mixing is essentially maximized across the clusters. Although, this notion could mix the bottom clusters in Figure \ref{fig:motivation} it still leads to inequitable solutions where one group may unreasonably have a strictly lower utility than the other. This happens since FCBC is only using the clustering cost instead of formulating a welfare function based on clear utility assignments. See Appendix \ref{app:fcbc_comparison} for a detailed discussion. Additionally, \citet{hakim2024fairness} extends \cite{esmaeili2021fair} by finding the Pareto frontier of the clustering cost vs the proportional violations. However, similar to \cite{esmaeili2021fair} it has the same issues since it only considers the clustering cost instead of the welfare. Further, the algorithm of \cite{hakim2024fairness} has a run-time of $\Omega(n^k)$ where $n$ is the number points and $k$ is the number of clusters, making it highly impractical.

Another important notion is that of \emph{socially fair} clustering \cite{abbasi2021fair,ghadiri2021socially,makarychev2021approximation} where the average clustering cost for each group is found and then the maximum average across all groups is minimized. While this notion is well-motivated, it completely ignores the proportional representation of the groups within the formed clustering. For example, in the instance of Figure \ref{fig:motivation}, the optimal socially fair clustering would not form clusters where the groups are mixed. This is a shortcoming that our framework overcomes since the average clustering cost of a group as well as its proportional representation are optimized for simultaneously by including them in the utility.

Finally, there has been significant work in supervised learning that emphasized the importance of formalizing utility/welfare and having them as the main objective to optimize for instead of imposing simple constraints such as demographic parity \cite{heidari2018fairness,chen2021fairness,chohlas2024learning,mladenov2020optimizing,hu2020fair,cousins2021axiomatic,heidari2019moral,rosenfeld2025machine}. Despite the significant attention welfare-based formulations have received in supervised learning, they have been largely ignored in clustering. The exception being the recent work of \citet{dickerson2024fair}. In particular, \citet{dickerson2024fair} actually introduced an example similar to Figure \ref{fig:motivation} and proved that proportional mixing and socially fair clustering would in fact produce strictly less welfare in comparison to a welfare-centric approach. In this paper, we formulate welfare in a manner largely similar to \citet{dickerson2024fair} with the main difference being that we formulate utility at the level of the group instead of the individual and that we define a specific form to calculate the proportional mixing (or violation of proportional mixing).
Additionally, we note that while \cite{dickerson2024fair} pointed out the importance of considering welfare in clustering, it gave no algorithms or even heuristics for obtaining welfare-centric clusterings.

\vspace{-0.4cm}
\section{Problem Statement}\label{sec:problem_statement}
We are given a set of points  $\Points$ representing individuals to be clustered with $|\Points| = n$. Each point $j \in \Points$ has a color $h \in \cH$ where each color is associated with a specific demographic group. We assume that each point belongs to exactly one group. We therefore,  define the color assignment function $\chi: \Points \to \Colors$ that assigns points to colors, i.e., $\chi(j)$ is the color of point $j$. The subset of individuals with color $h$ is denoted by $\Points^h$, and the total number of such individuals is $n_h = |\Points^h|$. Accordingly, the proportion of color $h$ in the dataset is $r_h = \frac{n_h}{n}$. Additionally, we have a distance function $d(.,.)$ which defines a metric over the space.

A clustering solution is represented by the pair $(S, \phi)$, where $S$ is the set of selected centers and $\phi : \Points \rightarrow S$ is an assignment function that maps each point to a center, thereby forming the clusters. The distance between a point $j$ and its assigned center is $d(j, \phi(j))$. Of particular interest to us are the $k$-median and $k$-means clustering objectives which can be formally written as:
\begin{align}
    \min\limits_{S,\phi} \sum_{j \in \Points} d^p(j,\phi(j))
\end{align}
Where in the above we set $p=1$ and $p=2$ for the $k$-median and $k$-means objectives, respectively. Note that the assignment function $\phi$ is trivial in the above objectives since one would simply assign each point to its closest center to achieve a minimum clustering cost. However, that is not the case when constraints are imposed on the clustering objectives.\footnote{For example, if we constraint each cluster to have a maximum number of points \cite{li2017uniform,chuzhoy2005approximating}, then some points maybe assigned to further away centers if their closest center has too many points assigned to it.} 

Additionally, we use $C_i$ to denote the set of points assigned to center $i \in S$, i.e., $C_i$ is the $i^{\text{th}}$ cluster. Further, $\cih = \ci \cap \Pointsh$, i.e., the subset of those points with color $h$ in $C_i$. With this notation, we may instead write the clustering cost as $\sum_{i \in S} \sum_{j \in C_i} d^p(j,i)$.

Before introducing our problem we formally introduce the following two clustering objectives which will be relevant to us. The first is \emph{socially fair clustering} \citep{abbasi2021fair, ghadiri2021socially, makarychev2021approximation} which is defined as:

\begin{align}
    \textbf{Socially Fair Clustering:} \quad \quad \min\limits_{S,\phi} \max\limits_{h \in \Colors} \frac{1}{|\Points^h|} \sum_{j \in \Points^h} d^p(j,\phi(j))  \label{eq:def_sf}
\end{align}
This objective amounts to minimizing the maximum average clustering cost across the groups. The second clustering objective is \emph{weighted clustering} \citep{har2004coresets,charikar1999constant} where each point $j \in \Points$ has a specific weight $w_j\in \mathbb{R}_{>0}$ associated with it, formally we have:
\begin{align}
    \textbf{Weighted Clustering:} \quad \quad \min\limits_{S,\phi}  \sum_{j \in \Points} w_j \ d^p(j,\phi(j))  \label{eq:def_wighted}
\end{align}
Note that unlike socially fair clustering, weighted clustering is well-defined even if the points have no color assignments.

Our approach is focused on optimizing welfare (optimizing an aggregate of the average group utilities). First, clearly in a clustering solution, the distance between a point and its assigned center will have a great effect on the disutility\footnote{Since smaller distance leads to better utility, we find it more natural to use disutilities instead of utilities. Therefore we will minimize an aggregate of the disutilities instead of maximizing it.}, with shorter distances leading to smaller disutilities. Therefore, for a given solution $(S,\phi)$ and a given group $h \in \Colors$, we define $\totdistcol_h$ as follows:
\begin{align}
    \totdistcol_h(S,\phi) = \sum_{j \in \Points^h} d^p(j,\phi(j)) 
\end{align}
Additionally, since the proportional representation of a group in a given clustering has a significant impact on the disutility as well\footnote{As noted earlier, this is the main motivation in the notion of proportional mixing.}, we define $\totviolcol_h(S,\phi)$ as follows:
\begin{align}\label{eq:totviolcol}
    \totviolcol_h(S,\phi) = \sum_{i \in S} |C_i| \cdot \viol(h,i)
\end{align}
Where $\viol(h,i)$ is the amount of proportional violation color $h$ experiences in cluster $i$. Formally, $\viol: \Colors \times S \to \mathbb{R}_{\ge 0}$ and is defined as follows: 
\begin{align}\label{eq:def_piece_wise_linear_asym_fairness_in_each_point}
    \viol(h,i)= \max 	\left\{ \frac{|\cih|}{|\ci|} - (r_h+\alpha_h) , \ (r_h-\beta_h) - \frac{|\cih|}{|\ci|} , \ 0  \right\}
\end{align}
where $\alpha_h$ and $\beta_h$ are upper and lower proportional violations respectively.
To understand \eqref{eq:def_piece_wise_linear_asym_fairness_in_each_point}, note that if color $h$ is proportionally represented, i.e., $r_h+\alpha_h \ge \frac{|\cih|}{|\ci|} \ge r_h-\beta_h$, then $\viol(h,i)=0$. On the other hand, if the $h$ is over-represented, i.e.,  $\frac{|\cih|}{|\ci|} > r_h+\alpha_h $, then would have $\viol(h,i)= \frac{|\cih|}{|\ci|} - (r_h+\alpha_h)$. Similarly, $\viol(h,i) = (r_h-\beta_h) - \frac{|\cih|}{|\ci|}$ when there is an under-representation. Note that $\alpha_h$ and $\beta_h$ are there to relax the violation values for the upper and lower proportions desired for color $h$, respectively. For example, if we set $\alpha_h\ge 1-r_h$ and $\beta_h=r_h$, then we only obtain positive violations if color $h$ is under-represented and never when it is over-represented. 

It is also important to note that in \eqref{eq:totviolcol} the amount of violation a group (color) experiences in cluster $i$ is weighted by the cluster size $|C_i|$. This choice is well-motivated, since one would expect that a group would not be affected significantly by an disproportional representation in a cluster if the cluster size is not significant.

Based on the above, we define the average disutility of a group $\disu_h$ as:
\begin{align}\label{eq:disu}
    \disu_h(S,\phi) =  \frac{\lambda \cdot \totdistcol_h(S,\phi) + (1-\lambda) \cdot \totviolcol_h(S,\phi)}{|\Points^h|}
\end{align}
where we have $\lambda \in [0,1]$. Equation \eqref{eq:disu} is a very natural assignment for a group's disutility, simply stating that distance and proportional violation should be minimized simultaneously. The role of $\lambda$ is to decide the weighting between the distance and the proportional violation. 

All we have left is to set an objective that minimizes an aggregate of the disutilities. The two most standard objectives from welfare economics \cite{brandt2016handbook} are the \emph{Rawlsian} (egalitarian) objective $R = \max\limits_{h \in \Colors} \disu_h(S,\phi)$ and the \emph{Utilitarian} objective $U(S,\phi) =  \sum_{h \in \Colors} \disu_h(S,\phi)$, therefore we will focus on minimizing one of the following two objectives:
\begin{align}
    \textbf{Rawlsian:} & \quad \quad \min\limits_{S,\phi} R(S,\phi) = \min\limits_{S,\phi} \max\limits_{h \in \Colors} \disu_h(S,\phi) \label{defn:Rawlsian} \\ 
    \textbf{Utilitarian:} & \quad \quad \min\limits_{S,\phi} U(S,\phi)  = \min\limits_{S,\phi} \sum_{h \in \Colors} \disu_h(S,\phi) \label{defn:Utilitarian}  
\end{align}

Clearly, for the case where $\lambda=1$ the Rawlsian objective $R$ is equal to the socially fair objective \eqref{eq:def_sf}. Further, for $\lambda=1$ it is not difficult to see that the the Utilitarian objective $U$ would be a special instance of weighted clustering \eqref{eq:def_wighted} where for each point $j \in \Points$ we would have $w_j=\frac{1}{|C^{\chi(j)}|}$ where $\chi(j)$ is the color of point $j$. 

As a final notional note, we will use $\OPTR$ and $\OPTU$ for the optimal values of the Rawlsian and Utilitarian objectives, respectively.

\section{Algorithms and Theoretical Guarantees}\label{sec:wc_algo_and_thm}
It is not difficult to see that ordinary clustering is a special case of both the Rawlsian and the Utilitarian objectives. Therefore, since both the $k$-median and $k$-means problems are NP-hard \cite{bhattacharya2020hardness,charikar1999constant,awasthi2015hardness}, it follows immediately that both the Rawlsian and Utilitarian objectives are NP-hard as well. We therefore design approximation algorithms for them that would be guaranteed to run in polynomial instead of exponential time. Our approach is modular and is similar for both objectives. Specifically, for both objectives we start by running an algorithm to find the set of centers and then having fixed the set of centers we find an assignment from points to centers. 
Although such a ``two stage'' approach of choosing centers and then solving for the assignment was followed before in fair clustering \cite{bercea2018cost,bera2019fair,esmaeili2020probabilistic,esmaeili2021fair}, the center selection is done using a \emph{vanilla} clustering algorithm and the rounding can be done mostly through a standard structure of a min-cost max-flow network. In our case, this approach has to be significantly adjusted. Specifically, as will be shown in the next two subsections, center selection has to be done using socially fair and weighted clustering algorithms for the Rawlsian and Utilitarian objectives, respectively. Further, the min-cost max-flow rounding has to be adjusted significantly for the Rawlsian  objective.

For a more clear presentation, we will set $\gammap=2^{p-1}(2^{p-1}+1)$ and $\gammapp=2^{2(p-1)}$.

\subsection{Algorithm for the Rawlsian Objective: $\algr$}\label{subsec:rawlsian}

\begin{algorithm}[h!] 
\caption{$\algr$}\label{alg:algr}
    \begin{algorithmic}[1]
    \State \textbf{Input:} Set of points $\Points$, number of centers $k$ .
    \State $\sr = \textsc{SociallyFairAlg}(\Points,k)$. \label{alg_line:cs_r}
    \State Solve LP \eqref{eq:assignment_lp} to obtain the solution $\x^{\text{frac}}$. \label{alg_line:lp_r}
    \State Set $\x^{\text{integ}} = \textsc{Rawlsian-Rounding}(\x^{\text{frac}})$ and form the assignment $\phi$ using $\x^{\text{integ}}$.\label{alg_line:round_r}
    \State \textbf{Return} $(\sr,\phir)$. 
    \end{algorithmic}
\end{algorithm}

Algorithm \ref{alg:algr} shows the pseudo-code of $\algr$ which is our approximation algorithm for the Rawlsian objective. The set of centers $\sr$ is set to the output of a $\textsc{SociallyFairAlg}$ which can be any $\alpha$-approximation algorithm for the socially fair objective. The intuition behind this choice lies in the fact that the  Rawlsian objective is $\min_{h \in \Colors} \disu_h = \frac{\lambda \totdistcol_h+ (1-\lambda) \totviolcol_h}{|\Points^h|}$, which ignoring the term $(1-\lambda) \totviolcol_h$ is identical to a $\lambda$ multiple of the socially fair objective \eqref{eq:def_sf}. In fact a formal connection can be proved. Specifically, if $\sr$ is the set of centers chosen by an $\alpha$-approximation algorithm for the socially fair objective then there exists an assignment $\phi'$ that leads to a solution with a bounded increase to the approximation factor $\alpha$ for the Rawlsian objective. The lemma below gives the formal statement. 
\begin{restatable}{lemma}{existanceofapproxsol}\label{lemma:existance_of_approx_solution}
Using the set of centers $\sr$ chosen by an  $\alpha$-approximation algorithm for the socially fair objective, there exists an assignment $\phi'$ such that $(\sr,\phi')$ leads to a $(\gamma_p+\gamma_p' \alpha)$-approximate solution for the Rawlsian objective. 
\end{restatable}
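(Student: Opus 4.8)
The plan is to fix an optimal Rawlsian solution $(\Sstar,\phistar)$ with value $\OPTR$ and to construct the required assignment $\phi'$ onto the given centers $\sr$ by ``copying'' the optimal cluster structure. Concretely, for each optimal center $i \in \Sstar$ let $\nrst(i) \in \sr$ be the center of $\sr$ closest to $i$, and set $\phi'(j) = \nrst(\phistar(j))$. Under $\phi'$ the cluster of a center $s \in \sr$ is exactly the union of those optimal clusters $\ci$ whose nearest $\sr$-center is $s$; this merging is the feature that makes both the distance and the violation controllable. I would then bound $\totdistcol_h(\sr,\phi')$ and $\totviolcol_h(\sr,\phi')$ separately and recombine them through $\disu_h$.

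For the distance term I would apply the triangle inequality twice, using that $\nrst(\phistar(j))$ is the closest $\sr$-center to $\phistar(j)$, to get $d(j,\phi'(j)) \le d(j,\phistar(j)) + d(\phistar(j),\nrst(\phistar(j)))$ and $d(\phistar(j),\nrst(\phistar(j))) \le d(j,\phistar(j)) + d(j,\phisf(j))$, where $\phisf$ denotes the nearest-center (hence socially-fair-optimal for fixed $\sr$) assignment. Raising to the $p$-th power and applying $(a+b)^p \le 2^{p-1}(a^p+b^p)$ twice yields the pointwise bound
\begin{align*}
d^p(j,\phi'(j)) \le \gamma_p\, d^p(j,\phistar(j)) + \gamma_p'\, d^p(j,\phisf(j)),
\end{align*}
where the constants collapse exactly to $\gamma_p = 2^{p-1}(2^{p-1}+1)$ and $\gamma_p' = 2^{2(p-1)}$. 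Summing over $j \in \Pointsh$ gives $\totdistcol_h(\sr,\phi') \le \gamma_p\,\totdistcol_h(\Sstar,\phistar) + \gamma_p'\,\totdistcol_h(\sr,\phisf)$. I would then relate $(\sr,\phisf)$ to $\OPTR$: dropping the nonnegative violation term from $\disu_h$ shows the socially fair value of $(\Sstar,\phistar)$ is at most $\OPTR/\lambda$, so $\OPTSF \le \OPTR/\lambda$, and since $\sr$ comes from an $\alpha$-approximation, $\frac{1}{n_h}\totdistcol_h(\sr,\phisf) \le \alpha\,\OPTSF \le \tfrac{\alpha}{\lambda}\OPTR$ for every $h$.

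The step I expect to be the crux is the violation bound, which is where the merged-cluster structure of $\phi'$ pays off. For a center $s \in \sr$, the color-$h$ proportion of its $\phi'$-cluster is a size-weighted average of the proportions $|\cih|/|\ci|$ of the optimal clusters merged into it. Since $\viol(h,i)$ is, as a function of this proportion, the pointwise maximum of three affine functions and hence convex, Jensen's inequality gives that the size-weighted violation cannot increase under merging; summing over $s \in \sr$ yields $\totviolcol_h(\sr,\phi') \le \totviolcol_h(\Sstar,\phistar)$. This is precisely why $\phi'$ is defined via $\nrst \circ \phistar$ rather than by plain nearest-center assignment, which would give an equally good distance but no handle on the proportions.

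Finally I would recombine. Plugging the two bounds into $\disu_h(\sr,\phi') = \frac{\lambda\,\totdistcol_h(\sr,\phi') + (1-\lambda)\,\totviolcol_h(\sr,\phi')}{n_h}$ and using $\gamma_p \ge 1$ to absorb the nonnegative violation term, the optimal-solution contributions combine into $\gamma_p\,\disu_h(\Sstar,\phistar) \le \gamma_p\,\OPTR$, while the socially-fair distance contributes $\gamma_p'\lambda \cdot \tfrac{\alpha}{\lambda}\OPTR = \gamma_p'\alpha\,\OPTR$. Hence $\disu_h(\sr,\phi') \le (\gamma_p + \gamma_p'\alpha)\OPTR$ for every $h$, and taking the maximum over $h \in \Colors$ gives $R(\sr,\phi') \le (\gamma_p+\gamma_p'\alpha)\OPTR$, as claimed. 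The case $\lambda = 0$ is immediate since the socially-fair term then vanishes and no division by $\lambda$ is needed.
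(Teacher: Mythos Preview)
Your proposal is correct and follows essentially the same approach as the paper: the same construction $\phi' = \nrst \circ \phistar$, the same two-step triangle-inequality bound on distances yielding the constants $\gamma_p$ and $\gamma_p'$, the same comparison $\OPTSF \le \OPTR/\lambda$, and the same recombination into $\disu_h$. The one substantive difference is your treatment of the violation term: the paper proves $\totviolcol_h(\sr,\phi') \le \totviolcol_h(\Sstar,\phistar)$ by a direct case analysis (over-representation, under-representation, and balanced sub-clusters), whereas you obtain it in one stroke by observing that $\viol(h,\cdot)$, viewed as a function of the proportion, is a pointwise maximum of affine functions and hence convex, so Jensen applies to the size-weighted average of merged proportions. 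Both arguments are valid and give the identical bound; your convexity observation is a bit cleaner and makes more transparent why the merging construction is the right one. Your explicit handling of $\lambda = 0$ is also a nice touch that the paper leaves implicit.
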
 
Note that the above lemma leads to an approximation factor of $(2+\alpha)$ and $(6+4\alpha)$ for $p=1$ and $p=2$, respectively. 

It is important to note that the above lemma is non-constructive, i.e., although it establishes the existence of such an assignment $\phi'$ it does not give a procedure for finding it. Nevertheless, the lemma will be essential to constructing the assignment algorithm and proving guarantees for it.

It is actually still NP-hard to optimize the Rawlsian objective even with the fixed set of centers  $\sr$ (see Appendix \ref{app:hardness_of_assignment} for details). Our approach to finding the assignment function $\phir$ starts by solving the following linear program (LP):
\begin{subequations} \label{eq:assignment_lp}
\begin{align}
    \min \quad & \quad z\\
    \text{subject to} & \quad \forall h  \in \cH : \frac{1}{|\Points^{h}|} \left( \lambda \sum_{j \in \Points^{h}} \sum_{i \in \sr}  d^p(i, j)x_{ij} + (1-\lambda)  \sum_{i \in \sr} t_{i, h}\right) \leq z \label{constraint:b} \\
    & \quad \forall i \in \sr, \forall h \in \Colors:  (r_h - \beta_h) \cdot \sum_{j \in \Points}x_{ij} - \sum_{j \in \Pointsh}x_{ij} = u_{i, h} \label{constraint:c}\\
    &\quad \forall i \in \sr, \forall h \in \Colors:  \sum_{j \in \Pointsh}x_{ij} -(r_h+\alpha_h) \cdot \sum_{j \in \Points}x_{ij} = o_{i, h} \label{constraint:d}\\
    &\quad \forall i \in \sr, \forall h \in \Colors: u_{i, h} \leq t_{i, h}, \ \ o_{i, h} \leq t_{i, h}, \ \ 0 \leq t_{i, h}\label{constraint:e}\\
    &\quad \forall j \in \Points: \sum_{i \in \sr}x_{ij} = 1, \ \ x_{ij} \in [0, 1] \label{constraint:f} 
\end{align}
\end{subequations}
The LP is more easily interpreted by considering integral values (in $\{0,1\}$) of the variables $x_{ij}$ instead of the fractional values (in $[0,1]$). In particular, $x_{ij}$ decides if point $j \in \Points$ is assigned to center $i \in \sr$ and since each point must be assigned to exactly one center, the constraints of \eqref{constraint:f} must hold. Further, when there is an under-representation of color $h$ in cluster $i$, the variable $u_{i,h}$ in \eqref{constraint:c} would equal the amount of under-representation (i.e., $|C_i| \viol(h,i)$). On the other hand, if there is no under-representation of color $h$ in cluster $i$ then we would have $u_{i,h}\leq 0$. A similar comment can be made about $o_{i,h}$ \eqref{constraint:d} for the over-representation of color $h$ in cluster $i$. Therefore, the variable $t_{i,h}$ is introduced and in constraint \eqref{constraint:e} we have $t_{i,h}$ lower bounded by $0$, $u_{i,h}$ and $o_{i,h}$ ensuring that it is equal to the value of $|C_i| \viol(h,i)$ under all possible scenarios. Moreover, the left hand side of constraint \eqref{constraint:b} equals the disutility $\disu_h$ of group $h$ and we have the variable $z$ lower bounded by the disutilities of all groups. This ensures that the objective $\min z$ is minimizing the maximum disutility, i.e., the Rawlsian objective. In fact, the LP is exactly a relaxation of of the integer program that would find the optimal Rawlsian assignment given the centers $\sr$. Based on the above the following lemma must hold. 
\begin{restatable}{lemma}{lpboundopt}\label{lemma:lpboundopt}
Denoting the optimal value of LP $\eqref{eq:assignment_lp}$ by $\OPTLPR$ then $\OPTLPR \leq (\gamma_p+\gamma_p' \alpha) \OPTR$. 
\end{restatable}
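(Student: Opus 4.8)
The plan is to combine two facts: that the LP \eqref{eq:assignment_lp} is a genuine relaxation of the integer program computing the optimal Rawlsian assignment over the \emph{fixed} center set $\sr$, and that Lemma \ref{lemma:existance_of_approx_solution} guarantees the existence of a good integral assignment $\phi'$ over exactly that center set. Chaining these gives the bound: the optimal LP value can only be smaller than the Rawlsian cost of any integral assignment that the LP feasibly encodes, and $\phi'$ is such an assignment with cost at most $(\gamma_p + \gamma_p' \alpha)\OPTR$. So the entire argument reduces to exhibiting $\phi'$ as a feasible LP point of the right objective value.

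Concretely, I would take the assignment $\phi'$ promised by Lemma \ref{lemma:existance_of_approx_solution} and turn it into a candidate LP solution by setting $x_{ij} = 1$ when $\phi'(j) = i$ and $x_{ij} = 0$ otherwise; these $x_{ij} \in \{0,1\}$ trivially satisfy the assignment constraints \eqref{constraint:f}. I would then define $u_{i,h}$ and $o_{i,h}$ exactly as forced by the equalities \eqref{constraint:c} and \eqref{constraint:d}, set $t_{i,h} = \max\{u_{i,h}, o_{i,h}, 0\}$ (the smallest value consistent with \eqref{constraint:e}), and finally set $z = R(\sr,\phi') = \max_{h} \disu_h(\sr,\phi')$. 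The one nontrivial verification, which I expect to be the main (and essentially only) obstacle, is confirming that with these settings the left-hand side of \eqref{constraint:b} equals $\disu_h(\sr,\phi')$ for every color $h$, so that this $z$ is feasible and equals the true Rawlsian cost. For an integral assignment one has $\sum_{j} x_{ij} = |\ci|$ and $\sum_{j \in \Pointsh} x_{ij} = |\cih|$, whence $u_{i,h} = (r_h - \beta_h)|\ci| - |\cih|$ and $o_{i,h} = |\cih| - (r_h + \alpha_h)|\ci|$. Comparing with the definition \eqref{eq:def_piece_wise_linear_asym_fairness_in_each_point} then shows $\max\{u_{i,h}, o_{i,h}, 0\} = |\ci| \cdot \viol(h,i)$, so $\sum_{i} t_{i,h} = \totviolcol_h(\sr,\phi')$ and the bracketed term in \eqref{constraint:b} is precisely $\lambda \totdistcol_h + (1-\lambda)\totviolcol_h = |\Pointsh| \cdot \disu_h(\sr,\phi')$.

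With feasibility established at objective value $z = R(\sr,\phi')$, the conclusion is immediate: since $\OPTLPR$ is the minimum over all feasible LP solutions, $\OPTLPR \leq R(\sr,\phi')$, and Lemma \ref{lemma:existance_of_approx_solution} gives $R(\sr,\phi') \leq (\gamma_p + \gamma_p' \alpha)\OPTR$, yielding the claim. The mild subtlety worth flagging is that $t_{i,h}$ is only lower-bounded, not fixed, by \eqref{constraint:e}; but because $t_{i,h}$ enters \eqref{constraint:b} with the nonnegative coefficient $(1-\lambda)/|\Pointsh|$, setting each $t_{i,h}$ to its minimum feasible value $\max\{u_{i,h}, o_{i,h}, 0\}$ is exactly what makes the LHS of \eqref{constraint:b} coincide with $\disu_h$, so no slack is introduced and the matching of objective values is tight.
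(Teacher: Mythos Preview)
Your proposal is correct and follows essentially the same approach as the paper's proof: both exhibit the integral assignment $\phi'$ from Lemma~\ref{lemma:existance_of_approx_solution} as a feasible LP point whose objective value equals $R(\sr,\phi')$, then invoke that lemma to bound this by $(\gamma_p+\gamma_p'\alpha)\OPTR$. If anything, your treatment of the $t_{i,h}$ variables (explicitly setting them to $\max\{u_{i,h},o_{i,h},0\}$ and verifying this equals $|\ci|\cdot\viol(h,i)$) is more careful than the paper's, which simply asserts that ``it is not difficult to see'' this identity.
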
 

The essential point behind the above lemma is that the LP includes all possible assignments to the centers $\sr$ and would thus include the assignment $\phi'$ from Lemma \ref{lemma:existance_of_approx_solution} leading to a cost of at most $(\gamma_p+\gamma_p' \alpha) \OPTR$.

Although we can obtain the optimal LP solution (assignment) $\x^{\text{frac}}$, the values in the entries $x^{\text{frac}}_{ij}$ would in general be fractional instead of integral and therefore cannot be used directly. One of the standard rounding methods used in fair clustering is based on constructing a min-cost max-flow network \cite{bercea2018cost,esmaeili2021fair,dickerson2024doubly}. Essentially, in this constructed network lower and upper bounds on arcs are set to ensure that the total amount of points from each group in each cluster do not vary from the fractional LP solution. Further, the costs on the arcs are set to the distance values between the points and the centers, ensuring that an assignment that minimizes the total distance aggregated across points from different colors is found. However, since the Rawlsian objective minimizes the maximum disutility across the colors instead of aggregating them as a sum, we modify the min-cost max-flow scheme. Figure \ref{fig:color_specific_rounding} shows a schematic of our rounding which we call $\rawlsrounding$. In particular, the rounding is done $|\Colors|$ many times, each time over the points of one color in isolation from the rest of the points in the dataset. The full details of the network and $\rawlsrounding$ are given in Appendix \ref{app:rawlsrounding}. Denoting the quantities associated with the LP solution with the superscript $^\text{frac}$ and those of the integral solution with $^\text{integ}$, the following lemma can be established for $\rawlsrounding$. 
\begin{restatable}{lemma}{rawlsrounding}\label{lemma:rawlsrounding}
Let $\x^{\text{integ}}=\textsc{Rawlsian-Rounding}(\x^{\text{frac}})$ then the following holds.
\begin{align}
    \forall h \in \Colors: & \totdistcol^{\text{integ}}_h \leq \totdistcol^{\text{frac}}_h \label{eq:rawlsrounding_g1}\\
    \forall i \in \sr, \forall h \in \Colors: & \floor{\sum_{j \in \Pointsh} x_{ij}^{\text{frac}}} \leq \sum_{j \in \Pointsh}  x_{ij}^{\text{integ}} \leq \ceil{\sum_{j \in \Pointsh} x_{ij}^{\text{frac}}} \label{eq:rawlsrounding_g2}
\end{align}
\end{restatable}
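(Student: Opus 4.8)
The plan is to realize $\rawlsrounding$ as a collection of $|\Colors|$ independent min-cost flow computations and then to invoke the integrality of min-cost flow on networks with integral arc bounds. Fix a color $h \in \Colors$. I would build a flow network $N_h$ with a source $s$, a node for every point $j \in \Pointsh$, a node for every center $i \in \sr$, and a sink $t$. The source sends one unit to each point node via unit-capacity arcs; each point node $j$ connects to each center $i$ with a unit-capacity arc of cost $d^p(i,j)$; and each center node $i$ connects to $t$ by an arc whose lower and upper capacities are $\floor{\sum_{j \in \Pointsh} x^{\text{frac}}_{ij}}$ and $\ceil{\sum_{j \in \Pointsh} x^{\text{frac}}_{ij}}$, respectively. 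The target flow value is $n_h$, so each point must route exactly one unit, which models assigning every color-$h$ point to a single center.

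First I would establish feasibility and connect the fractional LP solution to this network. The restriction of $\x^{\text{frac}}$ to color $h$ is itself a fractional $s$--$t$ flow of value $n_h$: constraint \eqref{constraint:f} gives $\sum_{i \in \sr} x^{\text{frac}}_{ij} = 1$ for every point, so conservation holds at each point node with unit throughput, and for any center $i$ the incoming mass $\sum_{j \in \Pointsh} x^{\text{frac}}_{ij}$ is a real number that trivially lies in the interval $[\floor{\cdot},\,\ceil{\cdot}]$ defining that center's sink arc. Hence a feasible flow of the required value exists. Because every arc bound in $N_h$ is integral, the integrality theorem for min-cost flows with lower and upper capacities guarantees an integral minimum-cost flow of this value, and $\rawlsrounding$ returns such a flow, which I read as the $0/1$ assignment $x^{\text{integ}}_{ij}$ for the points of color $h$.

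Finally I would read off both conclusions. The center--sink bounds immediately yield $\floor{\sum_{j \in \Pointsh} x^{\text{frac}}_{ij}} \le \sum_{j \in \Pointsh} x^{\text{integ}}_{ij} \le \ceil{\sum_{j \in \Pointsh} x^{\text{frac}}_{ij}}$ for every $i \in \sr$, which is \eqref{eq:rawlsrounding_g2}. For \eqref{eq:rawlsrounding_g1}, the cost of any $s$--$t$ flow in $N_h$ equals $\sum_{j \in \Pointsh} \sum_{i \in \sr} d^p(i,j)\,x_{ij} = \totdistcol_h$; since the fractional restriction of $\x^{\text{frac}}$ is a feasible flow and $\rawlsrounding$ returns a minimum-cost one, we obtain $\totdistcol^{\text{integ}}_h \le \totdistcol^{\text{frac}}_h$. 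Because the colors partition $\Points$ and each point appears in exactly one per-color network, concatenating the integral per-color assignments yields a valid global assignment still satisfying \eqref{constraint:f}, establishing the lemma.

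The main delicacy, and the step I would treat most carefully, is the presence of nonzero lower bounds on the center--sink arcs: min-cost flow with lower bounds requires the standard lower-bound elimination reduction (or an equivalent circulation formulation) before the integrality theorem applies, and I would confirm that this reduction preserves both feasibility and integrality. A secondary point to verify is that solving each color in isolation is legitimate; this holds precisely because each point carries a single color, so the per-color networks never compete for the same point and the union of their integral solutions respects the assignment constraint \eqref{constraint:f}.
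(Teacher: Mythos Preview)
Your proposal is correct and follows essentially the same approach as the paper: both solve $|\Colors|$ independent min-cost flow instances (one per color), observe that the color-restricted fractional LP solution is feasible, invoke integrality of min-cost flow with integral bounds to obtain \eqref{eq:rawlsrounding_g1}, and read off \eqref{eq:rawlsrounding_g2} from the floor/ceiling constraints on the center--sink arcs. Your formulation with an explicit source and lower/upper arc bounds is a cosmetic variant of the paper's demand-based network (indeed it matches the figure caption), and your extra remarks on lower-bound elimination and on why the per-color decomposition is legitimate only make the argument more self-contained.
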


\begin{figure}[h!]
  \centering
  \includegraphics[scale=0.4]{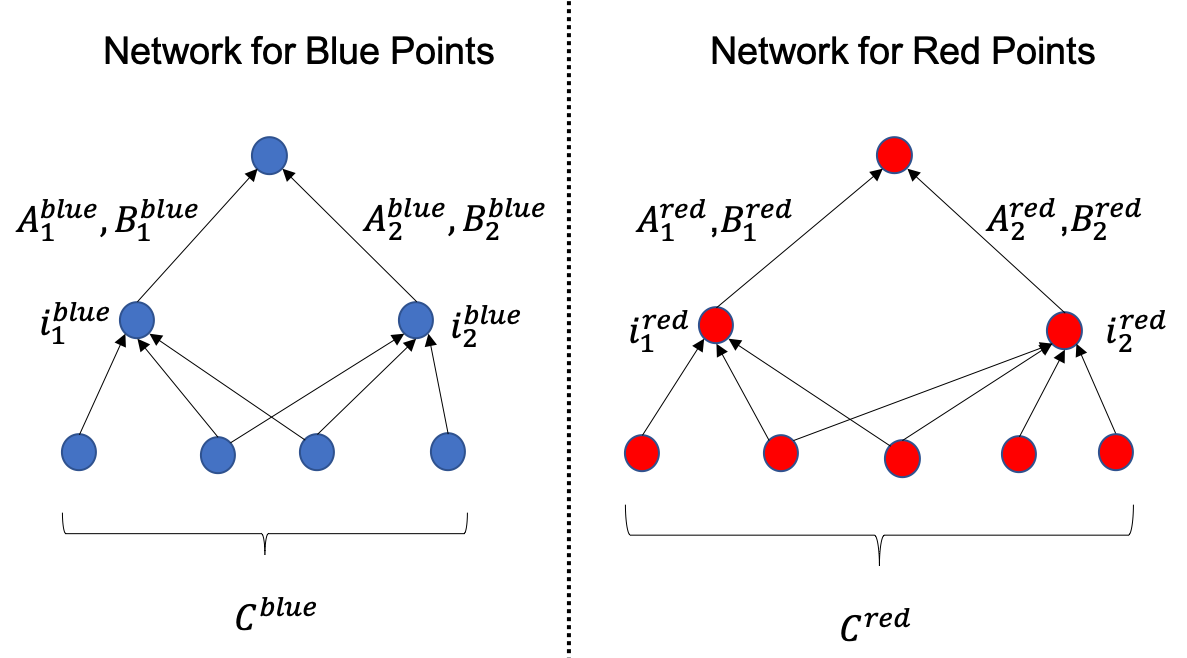}
     \caption{\textsc{Rawlsian-Rounding} for the case with two centers $S=\{i_1,i_2\}$. For any color $h$ and center $i$ we set flow lower and upper bound on the arcs between the centers and the sink to $A_{i}^{h} = \floor{\sum_{j \in \Points^{h}} x_{ij}^{\text{frac}}}$, $B_{i}^{h} = \ceil{\sum_{j \in \Points^{h}} x_{ij}^{\text{frac}}}$, respectively.  Other arcs, have a flow upper bound of $1$ and a lower bound of $0$. See Appendix \ref{app:rawlsrounding} for full details. }
   \label{fig:color_specific_rounding}
\end{figure}

In the above, Inequality \eqref{eq:rawlsrounding_g1} ensures that the value of $\totdistcol_h$ does not increase in the rounded integral solution for any color. Further, Inequality \eqref{eq:rawlsrounding_g2} ensures that the change in the total amount of color assigned to each cluster cannot vary by more than 1 (since it is between the floor and the ceiling). Based on the above, the following lemma can be established.
\begin{restatable}{lemma}{roundedfromLPrawls}\label{lemma:roundedfromLPrawls}
The rounded Rawlsian solution has a cost of at most $\OPTLPR+\constr$ where $\constr=\Big(\frac{|\Colors|+1}{\min\limits_{h\in \Colors} r_h} \cdot \frac{k}{n}\Big)$. 
\end{restatable}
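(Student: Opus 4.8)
The plan is to fix an arbitrary color $h \in \Colors$, bound its disutility $\disu_h$ under the rounded (integral) assignment, and take the maximum over colors at the end. Expanding $\disu_h$ via \eqref{eq:disu}, the distance part $\lambda\,\totdistcol_h$ is immediately controlled by inequality \eqref{eq:rawlsrounding_g1} of Lemma \ref{lemma:rawlsrounding}, which gives $\totdistcol^{\text{integ}}_h \le \totdistcol^{\text{frac}}_h$. Hence all the error introduced by rounding must come from the violation part $(1-\lambda)\,\totviolcol_h$, and since $1-\lambda \le 1$ it suffices to bound the additive increase $\totviolcol^{\text{integ}}_h - \totviolcol^{\text{frac}}_h$. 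Here, by \eqref{eq:totviolcol} and the LP encoding, I would write $\totviolcol^{\text{frac}}_h = \sum_{i \in \sr} t^{\text{frac}}_{i,h}$ and $\totviolcol^{\text{integ}}_h = \sum_{i \in \sr} t^{\text{integ}}_{i,h}$, where each $t_{i,h} = |\ci|\,\viol(h,i) = \max\{o_{i,h}, u_{i,h}, 0\}$ matches the over/under-representation terms of \eqref{constraint:c}--\eqref{constraint:e}.

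The heart of the argument is a per-cluster, per-color bound: $t^{\text{integ}}_{i,h} - t^{\text{frac}}_{i,h} \le |\Colors| + 1$. To prove it I would track how the two relevant counts move under rounding. Inequality \eqref{eq:rawlsrounding_g2} says the color-$h$ mass $\sum_{j \in \Pointsh} x_{ij}$ in cluster $i$ changes by at most $1$ (it stays between the floor and ceiling of its fractional value). Summing this over all colors and using that each point carries exactly one color shows the total cluster mass $|\ci| = \sum_{h' \in \Colors}\sum_{j \in \Points^{h'}} x_{ij}$ changes by at most $|\Colors|$. I then split on which of the three terms realizes $t^{\text{integ}}_{i,h}$: if it is $0$ the difference is nonpositive; if it is the over-representation term $o_{i,h} = |\cih| - (r_h+\alpha_h)|\ci|$, then since $t^{\text{frac}}_{i,h} \ge o_{i,h}^{\text{frac}}$ the difference is at most the change in $o_{i,h}$, bounded by (change in $|\cih|$) $+ (r_h+\alpha_h)\cdot$(change in $|\ci|$) $\le 1 + (r_h+\alpha_h)|\Colors|$. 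Moreover a strictly positive over-representation term forces $r_h+\alpha_h < 1$ (otherwise $o_{i,h}\le |\cih|-|\ci|\le 0$), so this is at most $1+|\Colors|$; the under-representation term is handled identically using $r_h-\beta_h \le r_h \le 1$.

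Having established $t^{\text{integ}}_{i,h} - t^{\text{frac}}_{i,h} \le |\Colors|+1$, I would sum over the $k$ clusters to get $\totviolcol^{\text{integ}}_h - \totviolcol^{\text{frac}}_h \le (|\Colors|+1)k$. Dividing the disutility by $|\Pointsh| = \nh = r_h\,n \ge (\min_{h'\in\Colors} r_{h'})\,n$ and combining with constraint \eqref{constraint:b} (which certifies that the fractional disutility of every color is at most $\OPTLPR$) yields $\disu^{\text{integ}}_h \le \OPTLPR + \frac{(|\Colors|+1)k}{(\min_{h'\in\Colors} r_{h'})\,n} = \OPTLPR + \constr$. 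Taking the maximum over $h \in \Colors$ then gives the claimed Rawlsian cost bound.

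The main obstacle is the second step: $\viol(h,i)$ is a ratio $|\cih|/|\ci|$ normalized by the rounding-dependent cluster size $|\ci|$, so perturbing the assignment perturbs both numerator and denominator, and a naive per-cluster bound would scale with $|\ci|$ rather than with a constant. Two facts rescue the argument, and I would want to highlight both: \eqref{eq:totviolcol} weights $\viol(h,i)$ by $|\ci|$, which turns the analysis into one about the unnormalized affine quantities $o_{i,h}$ and $u_{i,h}$, and the coefficients multiplying the cluster-size change are at most $1$ whenever the corresponding term is active. Some care is also needed to confirm that the LP value $t^{\text{frac}}_{i,h}$ equals the fractional violation $|\ci|\,\viol(h,i)$ at optimality, which is exactly what lets constraint \eqref{constraint:b} stand in for the fractional objective.
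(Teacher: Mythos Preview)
Your proposal is correct and follows essentially the same route as the paper: both derive the per-cluster, per-color bound $t^{\text{integ}}_{i,h} - t^{\text{frac}}_{i,h} \le |\Colors|+1$ by combining the $\pm 1$ change in $|\cih|$ from \eqref{eq:rawlsrounding_g2} with the $\pm|\Colors|$ change in $|\ci|$ obtained by summing \eqref{eq:rawlsrounding_g2} over colors, and both then use $r_h+\alpha_h \le 1$ (respectively $r_h-\beta_h \le 1$) on the active branch. Your observation that a strictly positive over-representation term already forces $r_h+\alpha_h < 1$ is a nice sharpening of a step the paper states without justification.
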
 
From Lemmas \ref{lemma:lpboundopt} and \ref{lemma:roundedfromLPrawls}, we can establish the the main guarantee for $\algr$.
\begin{restatable}{theorem}{rawlsmaing}\label{tm:rawls_main_g}
$\algr$ (Algorithm \ref{alg:algr}) returns a solution of Rawlsian cost at most $(\gammap+\gammapp \alpha) \OPTR +\constr$. 
\end{restatable}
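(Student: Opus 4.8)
The plan is to obtain the theorem as a direct composition of Lemmas~\ref{lemma:lpboundopt} and~\ref{lemma:roundedfromLPrawls}, so the final step is essentially bookkeeping: the substantive work has already been front-loaded into those two lemmas and their precursors. First I would pin down that the object whose cost I am bounding is exactly what the algorithm returns. The solution output by $\algr$ is $(\sr,\phir)$, where $\sr$ is produced by $\textsc{SociallyFairAlg}$ in Step~\ref{alg_line:cs_r} and $\phir$ is induced by $\x^{\text{integ}}=\rawlsrounding(\x^{\text{frac}})$ in Step~\ref{alg_line:round_r}. Hence its Rawlsian objective value $R(\sr,\phir)$ is precisely the cost of the rounded assignment analyzed in Lemma~\ref{lemma:roundedfromLPrawls}.

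Next I would apply the two bounds in sequence. Step~\ref{alg_line:lp_r} solves LP~\eqref{eq:assignment_lp} to optimality, so $\x^{\text{frac}}$ attains the value $\OPTLPR$, and Step~\ref{alg_line:round_r} rounds exactly this $\x^{\text{frac}}$; thus Lemma~\ref{lemma:roundedfromLPrawls} applies verbatim to the solution the algorithm actually produces, giving $R(\sr,\phir)\le \OPTLPR+\constr$. Then Lemma~\ref{lemma:lpboundopt} supplies $\OPTLPR\le(\gammap+\gammapp\alpha)\,\OPTR$. Chaining the two inequalities yields
\begin{equation}
R(\sr,\phir)\;\le\;\OPTLPR+\constr\;\le\;(\gammap+\gammapp\alpha)\,\OPTR+\constr,
\end{equation}
which is exactly the claimed guarantee, and the proof is complete.

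The only point requiring a moment's care is making sure the two lemmas are quoted on a common fractional solution and a common LP optimum, which the remark above about Steps~\ref{alg_line:lp_r} and~\ref{alg_line:round_r} settles. Consequently, the composition itself presents no real obstacle. The genuine difficulty lives entirely upstream: in establishing the \emph{existence} of the near-optimal assignment $\phi'$ (Lemma~\ref{lemma:existance_of_approx_solution}) that certifies the relaxation bound of Lemma~\ref{lemma:lpboundopt}, and in verifying that the color-by-color min-cost max-flow scheme of $\rawlsrounding$ simultaneously preserves every $\totdistcol_h$ and perturbs each cluster's per-color mass by strictly less than one unit (Lemmas~\ref{lemma:rawlsrounding} and~\ref{lemma:roundedfromLPrawls}), since it is the latter per-cluster slack, summed across the $|\Colors|$ colors and $k$ clusters, that produces the additive term $\constr$.
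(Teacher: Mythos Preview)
Your proposal is correct and follows essentially the same approach as the paper: compose Lemma~\ref{lemma:lpboundopt} with Lemma~\ref{lemma:roundedfromLPrawls} to chain $R(\sr,\phir)\le\OPTLPR+\constr\le(\gammap+\gammapp\alpha)\OPTR+\constr$. The paper's appendix proof phrases the first step via Lemma~\ref{lemma:existance_of_approx_solution} rather than Lemma~\ref{lemma:lpboundopt}, but since the latter is an immediate corollary of the former this is the same argument.
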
 

\begin{remark}
Since the number of colors $|\Colors|$ and the ratio $r_h$ of each color $h$ is also a constant. Our final guarantee is $(\gammap+\gammapp \alpha) \OPTR +\cO(\frac{k}{n})$.
\end{remark}

Although our final solution incurs an additional additive approximation of $\cO(\frac{k}{n})$ we note that it is not significant since in most applications we would have $n \gg k$, thus $\frac{k}{n}$ would be very small. Empirically, we do not observe any significant violations because of this additive factor.

\subsection{Algorithm for the Utilitarian Objective: $\algu$}\label{subsec:rawlsian}
Our algorithm for the Utilitarian Objective is $\algu$ whose pseudo-code is shown in Algorithm \ref{alg:algu}. Interestingly, although the details of $\algu$ are significantly different from $\algr$, the format remains very similar. In particular, $\algu$ begins by assigning a weight of $w_j=\frac{1}{|\Points^{\chi(j)}|}$ for each point $j \in \Points$ and then it runs \textsc{WightedClusteringAlg} which is an $\alpha$ approximation for the weighted clustering cost to find a set of centers $\su$. The rationale behind doing so is similar to the Rawlsian objective. Specifically, the Utilitarian objective is $\sum_{h \in \Colors} \disu_h = \sum_{h \in \Colors} \frac{\lambda \totdistcol_h + (1-\lambda)\totviolcol_h}{|\Points^h|}$, which ignoring the $(1-\lambda)\totviolcol_h$ term would equal a $\lambda$ weighted multiple of the weighted clustering cost of \eqref{eq:def_wighted} with the weight of each point $j$ set to $\frac{1}{|\Points^{\chi(j)}|}$. Similar, to Lemma \ref{lemma:existance_of_approx_solution} we can establish the following lemma. 
\begin{restatable}{lemma}{weightedapproxutil}\label{lemma:weighted_approx_util}
Using the set of centers $\Sw$ chosen by an  $\alpha$-approximation algorithm for the weighted clustering objective  with $w_j=\frac{1}{|\Points^{\chi(j)}|} \ \forall j \in \Points$, there exists an assignment $\phi'$ such that $(\Sw,\phi')$ leads to a $(\gammap +\gammapp\alpha)$-approximate solution for the Utilitarian objective. 
\end{restatable}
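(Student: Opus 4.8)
The plan is to mirror the argument behind Lemma~\ref{lemma:existance_of_approx_solution}: exhibit an explicit assignment onto $\Sw$ built from an optimal Utilitarian solution, then bound the distance and violation contributions separately. Let $(\Sstar,\phistar)$ be an optimal Utilitarian solution, so $U(\Sstar,\phistar)=\OPTU$, and for each optimal center $\istar \in \Sstar$ let $\sigma(\istar)=\argmin_{s\in\Sw} d(\istar,s)$ be its nearest center in $\Sw$. I would take $\phi'(j)=\sigma(\phistar(j))$, which sends every optimal cluster entirely to a single center of $\Sw$; this ``merging'' structure is exactly what makes the violation term controllable.

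For the distance term I would apply the approximate triangle inequality $d^p(a,c)\le 2^{p-1}(d^p(a,b)+d^p(b,c))$ twice. Writing $\istar=\phistar(j)$ and letting $\eta(j)$ denote the center of $\Sw$ closest to $j$, the choice of $\sigma$ gives $d(\istar,\sigma(\istar))\le d(\istar,\eta(j))\le d(j,\istar)+d(j,\eta(j))$, and combining the two expansions yields the pointwise bound
\[
d^p(j,\phi'(j)) \le \gammap\, d^p(j,\phistar(j)) + \gammapp\, d^p(j,\eta(j)),
\]
since $2^{p-1}+2^{2(p-1)}=\gammap$ and $2^{2(p-1)}=\gammapp$. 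Summing over $j\in\Pointsh$ bounds $\totdistcol_h(\Sw,\phi')$ by $\gammap\,\totdistcol_h(\Sstar,\phistar)$ plus a $\gammapp$-multiple of the per-color nearest-center cost to $\Sw$.

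For the violation term, the key observation is that $\viol(h,\cdot)$ in \eqref{eq:def_piece_wise_linear_asym_fairness_in_each_point} is a maximum of affine functions of the ratio $|\cih|/|\ci|$, hence convex in that ratio. Because $\phi'$ merges a set of optimal clusters into one $\Sw$-cluster, the merged ratio is the size-weighted average of the component ratios; applying Jensen's inequality and multiplying through by the (conserved) total cluster size gives $\totviolcol_h(\Sw,\phi')\le \totviolcol_h(\Sstar,\phistar)$ for every color, i.e.\ merging never increases the weighted proportional violation. This is the step I expect to demand the most care, since it is what prevents the representation penalty from degrading under reassignment and is the feature that distinguishes this argument from the vanilla clustering case.

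Finally, I would assemble the two bounds inside $\disu_h$ and sum over colors. Using $\gammap\ge 1$, the contributions of $(\Sstar,\phistar)$ collapse to $\gammap\,\OPTU$. For the residual $\gammapp$-term, note $\sum_{h}\frac{1}{|\Pointsh|}\sum_{j\in\Pointsh} d^p(j,\eta(j)) = \sum_{j\in\Points} w_j\, d^p(j,\eta(j))$ with $w_j=\tfrac{1}{|\Points^{\chi(j)}|}$, which is precisely the weighted clustering cost of $\Sw$ and is therefore at most $\alpha\,\OPTW$. Since $\Sstar$ is itself a feasible set of $k$ centers, $\OPTW \le \sum_{j} w_j\, d^p(j,\phistar(j))=\sum_{h} \totdistcol_h(\Sstar,\phistar)/|\Pointsh|$, and the crucial point is that the $\lambda$ weighting on the distance term lets this residual be absorbed: $\gammapp\,\lambda\sum_{h}\totdistcol_h(\Sstar,\phistar)/|\Pointsh| \le \gammapp\sum_{h}\disu_h(\Sstar,\phistar)=\gammapp\,\OPTU$ because $\lambda\,\totdistcol_h^*\le \lambda\,\totdistcol_h^*+(1-\lambda)\,\totviolcol_h^*$. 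Combining the two pieces gives $U(\Sw,\phi')\le (\gammap+\gammapp\alpha)\,\OPTU$, as claimed.
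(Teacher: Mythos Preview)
Your proposal is correct and follows essentially the same approach as the paper: the same construction $\phi'(j)=\sigma(\phistar(j))$, the same per-point distance bound via two applications of Fact~\ref{claim:generalized_triangle_ineq}, the same claim that merging optimal clusters cannot increase $\totviolcol_h$, and the same final assembly through $\OPTW\le\frac{1}{\lambda}\OPTU$ (which you phrase directly rather than by contradiction). The one pleasant variation is that you argue the violation bound via convexity of $\viol(h,\cdot)$ and Jensen, whereas the paper does an explicit case split over under-/over-representation; both yield the identical inequality $\totviolcol'_h\le\totviolcol^*_h$.
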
 
Lemma \ref{lemma:weighted_approx_util} shares significant similarities with Lemma \ref{lemma:existance_of_approx_solution}
despite the fact that they address different objectives and are built using a different set of centers (socially fair vs weighted). In particular, the proof techniques are similar and both lemmas are non-constructive, i.e., the existence of assignment $\phi'$ is proved but without an algorithm for finding it. Furthermore, the increase in the approximation factor is identical and equal to $(2+\alpha)$ and $(6+4\alpha)$ for $p=1$ and $p=2$, respectively.  

Having found the set of centers $\su$ we proceed to find the assignment. We follow a similar approach to that in the Rawlsian objective. In particular, we setup LP \eqref{eq:assignment_lp_util} below. The interpretation of the variables in LP \eqref{eq:assignment_lp_util} are similar to those in LP \eqref{eq:assignment_lp}. Note however, we do not have a variable $z$ and directly minimizes the sum of the disutilities in the objective. 

\begin{subequations} \label{eq:assignment_lp_util}
\begin{align}
    \min \quad & \quad \quad \sum_{h \in \Colors} \frac{1}{|\Points^{h}|} \left( \lambda \sum_{j \in \Points^{h}} \sum_{i \in \su}  d^p(i, j)x_{ij} + (1-\lambda)  \sum_{i \in \su} t_{i, h}\right)  \label{obj_util:a} \\
    \text{subject to} & \quad \forall i \in \su, \forall h \in \Colors:  (r_h - \beta_h) \cdot \sum_{j \in \Points}x_{ij} - \sum_{j \in \Pointsh}x_{ij} = u_{i, h} \label{constraint_util:b}\\
    &\quad \forall i \in \su, \forall h\in \Colors:  \sum_{j \in \Pointsh}x_{ij} -(r_h+\alpha_h) \cdot \sum_{j \in \Points}x_{ij} = o_{i, h} \label{constraint_util:c}\\
    &\quad \forall i \in \su, \forall h \in \Colors: u_{i, h} \leq t_{i, h}, \ \ o_{i, h} \leq t_{i, h}, \ \ 0 \leq t_{i, h}\label{constraint_util:d}\\
    &\quad \forall j \in \Points: \sum_{i \in \su}x_{ij} = 1, \ \ x_{ij} \in [0, 1] \label{constraint_util:e} 
\end{align}
\end{subequations}

Based on similar reasoning we can establish a similar guarantee for the optimal value of LP \eqref{eq:assignment_lp_util} to Lemma \ref{lemma:lpboundopt} as shown below.

\begin{restatable}{lemma}{lpboundoptutil}\label{lemma:lpboundopt_util}
Denoting the optimal value of LP $\eqref{eq:assignment_lp_util}$ by $\OPTLPU$ then $\OPTLPU \leq (\gamma_p+\gamma_p' \alpha) \OPTU$. 
\end{restatable}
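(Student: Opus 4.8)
The plan is to mirror the argument used for the Rawlsian counterpart (Lemma \ref{lemma:lpboundopt}): I would exhibit a single integral feasible point of LP \eqref{eq:assignment_lp_util} whose objective value is small, and use the fact that the LP is a relaxation of the exact integral assignment problem over the fixed center set $\su$. Concretely, I would first recall from Lemma \ref{lemma:weighted_approx_util} that there exists an assignment $\phi'$ with $U(\su,\phi') \le (\gammap+\gammapp\alpha)\OPTU$. The whole task then reduces to showing that this assignment induces a feasible LP solution whose objective is at most (in fact equal to) $U(\su,\phi')$; the claimed bound follows by chaining the two inequalities $\OPTLPU \le U(\su,\phi') \le (\gammap+\gammapp\alpha)\OPTU$.

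Next I would construct the integral point explicitly. Set $x_{ij}=1$ iff $\phi'(j)=i$ and $x_{ij}=0$ otherwise; this satisfies the assignment constraints \eqref{constraint_util:e}. With these $x$ fixed, define $u_{i,h}$ and $o_{i,h}$ via the equalities \eqref{constraint_util:b}--\eqref{constraint_util:c}, which makes them $u_{i,h}=(r_h-\beta_h)|\ci|-|\cih|$ and $o_{i,h}=|\cih|-(r_h+\alpha_h)|\ci|$, and set each $t_{i,h}$ to its smallest value permitted by \eqref{constraint_util:d}, namely $t_{i,h}=\max\{u_{i,h},o_{i,h},0\}$. This point is feasible by construction.

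The key verification is that the LP objective \eqref{obj_util:a} evaluated at this point equals $U(\su,\phi')$. For the distance terms this is immediate, since $\sum_{i\in\su}d^p(i,j)x_{ij}=d^p(j,\phi'(j))$, so $\sum_{j\in\Points^h}\sum_{i\in\su} d^p(i,j)x_{ij}=\totdistcol_h(\su,\phi')$. For the violation terms I would observe that multiplying the definition \eqref{eq:def_piece_wise_linear_asym_fairness_in_each_point} of $\viol(h,i)$ by $|\ci|$ gives exactly $|\ci|\,\viol(h,i)=\max\{|\cih|-(r_h+\alpha_h)|\ci|,\ (r_h-\beta_h)|\ci|-|\cih|,\ 0\}=\max\{o_{i,h},u_{i,h},0\}=t_{i,h}$, so $\sum_{i\in\su}t_{i,h}=\totviolcol_h(\su,\phi')$. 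Substituting both into \eqref{obj_util:a} reproduces $\sum_{h\in\Colors}\disu_h(\su,\phi')=U(\su,\phi')$ by the definition \eqref{eq:disu}. Since the optimal LP value is no larger than the value at any feasible point, this yields $\OPTLPU\le U(\su,\phi')$, completing the chain.

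I do not expect a genuine obstacle here: the argument is the same relaxation-plus-witness scheme as in Lemma \ref{lemma:lpboundopt}. The only point requiring care is confirming that the piecewise maximum defining $\viol$ is captured \emph{exactly} by $t_{i,h}$ at its minimal feasible setting, rather than being over- or under-counted. This in turn relies on noting that the coefficient $(1-\lambda)/|\Points^h|$ on $t_{i,h}$ in \eqref{obj_util:a} is nonnegative (as $\lambda\in[0,1]$), so any objective-minimizing choice drives $t_{i,h}$ down to its lower bound $\max\{u_{i,h},o_{i,h},0\}$; for the purpose of the upper bound it in fact suffices to set $t_{i,h}$ to this value directly, which already achieves objective $U(\su,\phi')$.
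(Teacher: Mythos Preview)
Your proposal is correct and follows essentially the same approach as the paper: exhibit the integral assignment $\phi'$ from Lemma~\ref{lemma:weighted_approx_util} as a feasible point of LP~\eqref{eq:assignment_lp_util}, verify that with $t_{i,h}=\max\{u_{i,h},o_{i,h},0\}=|C_i|\,\viol(h,i)$ the objective equals $U(\su,\phi')$, and conclude by relaxation. Your write-up is in fact more explicit than the paper's (which just says $t_{i,h}$ ``equals the violation $|C_i|\viol(h,i)$'' and defers to the Rawlsian case), but the argument is the same.
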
 
Having found the optimal LP solution $\x^{\text{frac}}$ we round it using the \textsc{Utilitarian-Rounding} min-cost max-flow network. Unlike \textsc{Rawlsian-Rounding}, the \textsc{Utilitarian-Rounding} is very similar to the standard min-cost max-flow rounding done in \cite{bercea2018cost,esmaeili2021fair,dickerson2024doubly} with the difference being that we set the cost of an arc between the vertex for a point $j$ and a vertex for a center $i$ to $\frac{d^p(j,i)}{|\Points^{\chi(j)}|}$ instead of just $d^p(j,i)$ as usually done. See Appendix \ref{app:utilrounding} for the full details. Based on the construction of the the network, the following lemma can be proved. 
\begin{restatable}{lemma}{utilrounding}\label{lemma:utilrounding}
Let $\x^{\text{integ}}=\textsc{Utilitarian-Rounding}(\x^{\text{frac}})$ then the following holds.
\begin{align}
    & \sum_{h \in \Colors} \frac{1}{|\Points^h|}\totdistcol^{\text{integ}}_h \leq \sum_{h \in \Colors}  \frac{1}{|\Points^h|} \totdistcol^{\text{frac}}_h \label{eq:utilsrounding_g1}\\
    \forall i \in \su, \forall h \in \Colors: & \floor{\sum_{j \in \Pointsh} x_{ij}^{\text{frac}}} \leq \sum_{j \in \Pointsh}  x_{ij}^{\text{integ}} \leq \ceil{\sum_{j \in \Pointsh} x_{ij}^{\text{frac}}} \label{eq:utilsrounding_g2}\\
    \forall i \in \su: & \floor{\sum_{j \in \Points} x_{ij}^{\text{frac}}} \leq \sum_{j \in \Points}  x_{ij}^{\text{integ}} \leq \ceil{\sum_{j \in \Points} x_{ij}^{\text{frac}}} \label{eq:utilsrounding_g3}
\end{align}
\end{restatable}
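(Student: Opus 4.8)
The plan is to prove all three inequalities simultaneously by invoking the integrality theorem for minimum-cost flow on the \textsc{Utilitarian-Rounding} network, since the network is constructed so that (i) every arc carries integer lower and upper capacity bounds and (ii) any integral feasible flow decomposes into a valid integral point-to-center assignment. First I would recall the structure from Appendix \ref{app:utilrounding}: a source feeds each point-vertex $j$ through a unit-capacity arc; each point $j$ of color $h$ connects to an intermediate $(i,h)$ color-center vertex through a $[0,1]$-capacity arc of cost $d^p(j,i)/|\Points^{\chi(j)}|$; each $(i,h)$ vertex connects to the center-vertex $i$ through an arc with lower and upper bounds $\floor{\sum_{j\in\Pointsh} x_{ij}^{\text{frac}}}$ and $\ceil{\sum_{j\in\Pointsh} x_{ij}^{\text{frac}}}$; and each center-vertex $i$ connects to the sink through an arc with bounds $\floor{\sum_{j\in\Points} x_{ij}^{\text{frac}}}$ and $\ceil{\sum_{j\in\Points} x_{ij}^{\text{frac}}}$. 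All these bounds are integers, so the minimum-cost flow polytope is integral.

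Next I would exhibit that the fractional LP solution $\x^{\text{frac}}$ induces a feasible fractional flow by placing flow $x_{ij}^{\text{frac}}$ on the arc from point-vertex $j$ to $(i,\chi(j))$. Constraint \eqref{constraint_util:e} forces exactly one unit out of each point-vertex, giving conservation at the source; the flow into vertex $(i,h)$ equals $\sum_{j\in\Pointsh} x_{ij}^{\text{frac}}$, which lies between its own floor and ceiling, so the color-center arc bounds hold, and summing over colors shows the center-to-sink arc bounds hold as well. The cost of this flow is exactly $\sum_{h\in\Colors} \frac{1}{|\Points^h|}\sum_{j\in\Pointsh}\sum_{i\in\su} d^p(i,j)\,x_{ij}^{\text{frac}} = \sum_{h\in\Colors}\frac{1}{|\Points^h|}\totdistcol^{\text{frac}}_h$.

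Because the network is feasible (the fractional flow witnesses this) and all bounds are integral, it admits a minimum-cost integral flow $f^{\text{integ}}$ of cost no larger than the fractional flow. Setting $x_{ij}^{\text{integ}}$ equal to the flow on the arc from $j$ to $(i,\chi(j))$ yields a $\{0,1\}$-valued assignment, since the unit source arcs force each point onto exactly one center. The cost comparison yields \eqref{eq:utilsrounding_g1} directly, as the integral flow cost equals $\sum_{h}\frac{1}{|\Points^h|}\totdistcol^{\text{integ}}_h$. Inequalities \eqref{eq:utilsrounding_g2} and \eqref{eq:utilsrounding_g3} follow immediately from the arc bounds: the flow through the $(i,h)$-to-$i$ arc equals $\sum_{j\in\Pointsh} x_{ij}^{\text{integ}}$ and is pinned inside $[\floor{\cdot},\ceil{\cdot}]$, while the flow through the $i$-to-sink arc equals $\sum_{j\in\Points} x_{ij}^{\text{integ}}$ and respects the corresponding total bound.

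The main obstacle is verifying that integrality is preserved in the presence of the nontrivial lower bounds on the color-center and center-sink arcs, and that the lower-bounded instance is feasible at all. I would handle this through the standard reduction of a lower-bounded min-cost flow to an ordinary min-cost circulation, for which the integral-optimum guarantee applies whenever a feasible flow exists; the fractional flow constructed above serves precisely as that feasibility certificate. A secondary subtlety is that the rounding only certifies \eqref{eq:utilsrounding_g1}–\eqref{eq:utilsrounding_g3} and says nothing directly about the recomputed proportional-violation terms $t_{i,h}$; the propagation of the count guarantee \eqref{eq:utilsrounding_g2} into a bound on the violation part of the Utilitarian objective is deferred to the subsequent cost lemma and is not part of this statement.
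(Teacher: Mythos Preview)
Your proposal is correct and follows essentially the same approach as the paper's (very terse) proof: the fractional LP solution is a feasible flow in the rounding network, all capacities/demands are integral so an optimal integral flow exists with cost no larger than the fractional cost, and the structural constraints on the color–center and center–sink arcs pin the per-color and per-cluster totals between the floor and ceiling of their fractional values. One cosmetic discrepancy: the paper's actual construction in Appendix~\ref{app:utilrounding} uses vertex \emph{demands} together with unit-capacity overflow arcs rather than the explicit $[\lfloor\cdot\rfloor,\lceil\cdot\rceil]$ arc bounds you describe, but the two formulations are standard equivalents and the argument is identical.
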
 
Inequality \eqref{eq:utilsrounding_g1} guarantees that $\sum_{h\in \Colors} \frac{1}{|\Points^h|} \totdistcol_h$ does not increase in the rounding. Further, Inequality \eqref{eq:utilsrounding_g2} is identical to Inequality \eqref{eq:rawlsrounding_g2} from \textsc{Rawlsian-Rounding} and guarantees that the amount of points from any color assigned to each cluster in the integral solution does not vary from the fractional amount by more than $1$. Interestingly, \textsc{Utilitarian-Rounding} has an additional inequality which is Inequality \eqref{eq:utilsrounding_g3} that also guarantees that the integral cluster size (total number of points in the cluster) also does not vary by more than 1 from  the fractional cluster size. We can now establish the following lemma.

\begin{restatable}{lemma}{roundedfromLPutil}\label{lemma:roundedfromLPutil}
The rounded Utilitarian solution has a cost of at most $\OPTLPU+\constu$ where $\constu=\Big( \frac{2k}{n} \cdot \sum_{h \in \Colors} \frac{1}{r_h} \Big)$. 
\end{restatable}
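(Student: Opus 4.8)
The plan is to bound the actual Utilitarian cost of the rounded integral solution $\x^{\text{integ}}$ against the LP optimum $\OPTLPU$ by splitting the cost into its distance contribution and its proportional-violation contribution, mirroring the way the guarantees of Lemma \ref{lemma:utilrounding} split. Writing the integral cost as $\sum_{h} \frac{1}{|\Points^h|}\big(\lambda \totdistcol^{\text{integ}}_h + (1-\lambda)\totviolcol^{\text{integ}}_h\big)$, I first observe that at LP optimality each $t_{i,h}^{\text{frac}}$ is driven down to its lower bound $\max\{o_{i,h}^{\text{frac}}, u_{i,h}^{\text{frac}}, 0\}$, since it carries a nonnegative objective coefficient and appears only in lower-bound constraints; hence $\OPTLPU = \sum_h \frac{1}{|\Points^h|}\big(\lambda \totdistcol^{\text{frac}}_h + (1-\lambda)\totviolcol^{\text{frac}}_h\big)$ where $\totviolcol^{\text{frac}}_h = \sum_{i}\max\{o_{i,h}^{\text{frac}}, u_{i,h}^{\text{frac}},0\}$ is the fractional violation. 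The distance term is immediate: scaling \eqref{eq:utilsrounding_g1} by $\lambda \ge 0$ gives $\sum_h \frac{\lambda}{|\Points^h|}\totdistcol^{\text{integ}}_h \le \sum_h \frac{\lambda}{|\Points^h|}\totdistcol^{\text{frac}}_h$, so the entire additive slack $\constu$ must be charged to the violation term.

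For the violation term I would fix a color $h$ and a cluster $i \in \su$ and compare $\max\{o_{i,h}^{\text{integ}}, u_{i,h}^{\text{integ}}, 0\}$ with $\max\{o_{i,h}^{\text{frac}}, u_{i,h}^{\text{frac}}, 0\}$. Abbreviating $a_i = \sum_{j\in\Pointsh} x_{ij}$ and $b_i = \sum_{j\in\Points} x_{ij}$, we have $o_{i,h} = a_i - (r_h+\alpha_h)b_i$ and $u_{i,h} = (r_h-\beta_h)b_i - a_i$. We may assume without loss of generality that $r_h+\alpha_h \le 1$ and $r_h-\beta_h \ge 0$, since otherwise the corresponding branch of $\viol$ is identically zero and drops out of the max. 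Because $\max\{\cdot,\cdot,0\}$ is $1$-Lipschitz coordinatewise, the per-cluster discrepancy is at most $\max\{|o_{i,h}^{\text{integ}} - o_{i,h}^{\text{frac}}|, |u_{i,h}^{\text{integ}} - u_{i,h}^{\text{frac}}|\}$, and each of these is at most $|a_i^{\text{integ}} - a_i^{\text{frac}}| + (r_h+\alpha_h)\,|b_i^{\text{integ}} - b_i^{\text{frac}}|$ (respectively with coefficient $r_h-\beta_h$). This is where both new rounding guarantees enter: \eqref{eq:utilsrounding_g2} forces $|a_i^{\text{integ}} - a_i^{\text{frac}}| \le 1$ and \eqref{eq:utilsrounding_g3} forces $|b_i^{\text{integ}} - b_i^{\text{frac}}| \le 1$. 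Since both coefficients lie in $[0,1]$, each per-cluster discrepancy is at most $2$.

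Summing over the at most $k$ clusters of $\su$ then gives $|\totviolcol^{\text{integ}}_h - \totviolcol^{\text{frac}}_h| \le 2k$ for every color, so $\sum_h \frac{1-\lambda}{|\Points^h|}\totviolcol^{\text{integ}}_h \le \sum_h \frac{1-\lambda}{|\Points^h|}\totviolcol^{\text{frac}}_h + (1-\lambda)\sum_h \frac{2k}{|\Points^h|}$. Substituting $|\Points^h| = r_h n$ and bounding $1-\lambda \le 1$ turns the trailing sum into $\frac{2k}{n}\sum_{h\in\Colors}\frac{1}{r_h} = \constu$. Adding this to the distance bound yields $U^{\text{integ}} \le \OPTLPU + \constu$, which is the claim.

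The main obstacle is the per-cluster violation estimate, and specifically the need to control the change in the \emph{total} cluster size $b_i$, not merely the per-color count $a_i$, since both appear in $o_{i,h}$ and $u_{i,h}$. This is exactly what distinguishes the Utilitarian analysis from the Rawlsian one: \textsc{Rawlsian-Rounding} processes each color in isolation and so cannot guarantee an analogue of \eqref{eq:utilsrounding_g3}, which forces the larger additive constant $\constr$; \textsc{Utilitarian-Rounding} is constructed precisely so that \eqref{eq:utilsrounding_g3} holds, and this is what makes the clean $\frac{2k}{n}\sum_{h}\frac{1}{r_h}$ bound attainable.
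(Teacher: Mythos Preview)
Your proof is correct and follows essentially the same route as the paper's: split the Utilitarian cost into its distance part (handled directly by \eqref{eq:utilsrounding_g1}) and its violation part, establish a per-cluster per-color additive bound of $2$ on the violation change using \eqref{eq:utilsrounding_g2} and \eqref{eq:utilsrounding_g3}, then sum over the at most $k$ clusters and divide by $|\Points^h|=r_h n$. Your Lipschitz-continuity framing of the per-cluster estimate is a little slicker than the paper's explicit case analysis on over- versus under-representation, but the underlying computation and the resulting constant are identical.
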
 
Based on Lemma \ref{lemma:lpboundopt_util} and Lemma \ref{lemma:roundedfromLPutil}, the main guarantee for $\algu$ follows. 

\begin{restatable}{theorem}{utilmaing}\label{tm:util_main_g}
$\algu$ (Algorithm \ref{alg:algu}) returns a solution of Utilitarian cost at most $(\gammap+\gammapp \alpha) \OPTU +\constu$. 
\end{restatable}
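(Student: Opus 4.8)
The plan is to combine the two preceding lemmas by a short chaining argument, since essentially all of the technical content has been isolated into them. First I would fix the set of centers $\su$ returned by \textsc{WeightedClusteringAlg} in Algorithm \ref{alg:algu}, and recall that $\algu$ then solves LP \eqref{eq:assignment_lp_util} to optimality, producing a fractional assignment $\x^{\text{frac}}$ whose objective value \eqref{obj_util:a} is by definition $\OPTLPU$. The returned integral assignment $\phiu$ is obtained by applying \textsc{Utilitarian-Rounding} to $\x^{\text{frac}}$, so the solution whose cost we must bound is $(\su,\phiu)$.

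Second, I would invoke Lemma \ref{lemma:roundedfromLPutil} directly: it certifies that the true Utilitarian cost $U(\su,\phiu)$ of the rounded integral solution is at most $\OPTLPU + \constu$. The point packaged here is that, although the LP objective \eqref{obj_util:a} coincides with the true objective $U$ only at integral points where each auxiliary variable $t_{i,h}$ tightly equals $|\ci|\cdot\viol(h,i)$, the rounding guarantees \eqref{eq:utilsrounding_g1}, \eqref{eq:utilsrounding_g2}, and \eqref{eq:utilsrounding_g3} ensure the true cost of the integral solution exceeds the LP optimum by no more than the additive slack $\constu$.

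Third, I would apply Lemma \ref{lemma:lpboundopt_util}, which bounds the LP optimum against the true Utilitarian optimum: $\OPTLPU \le (\gammap + \gammapp \alpha)\OPTU$. Substituting this into the bound from the previous step yields
\[
U(\su,\phiu) \;\le\; \OPTLPU + \constu \;\le\; (\gammap + \gammapp \alpha)\OPTU + \constu,
\]
which is precisely the claimed guarantee.

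Because the two lemmas do all the work, the proof of the theorem itself is essentially immediate; the genuine difficulty lives one level down, in establishing Lemma \ref{lemma:roundedfromLPutil}. There the main obstacle is bounding the increase of the proportional-violation term $\sum_{h\in\Colors}\frac{1}{|\Points^h|}\totviolcol_h$ under rounding: unlike the distance term, which \eqref{eq:utilsrounding_g1} shows can only decrease, $\viol(h,i)$ depends nonlinearly on the ratio $|\cih|/|\ci|$, so a change of at most $1$ in each of $|\cih|$ (from \eqref{eq:utilsrounding_g2}) and $|\ci|$ (from \eqref{eq:utilsrounding_g3}) must be translated into a bound on the change in $|\ci|\cdot\viol(h,i)$. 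Carrying out that per-cluster estimate and summing over the $k$ clusters, each reweighted by $\frac{1}{|\Points^h|}=\frac{1}{r_h n}$, is exactly what produces the additive constant $\constu = \frac{2k}{n}\sum_{h\in\Colors}\frac{1}{r_h}$.
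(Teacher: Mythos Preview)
Your proposal is correct and matches the paper's own proof essentially line for line: both simply chain Lemma~\ref{lemma:lpboundopt_util} (bounding $\OPTLPU$ by $(\gammap+\gammapp\alpha)\OPTU$) with Lemma~\ref{lemma:roundedfromLPutil} (bounding the rounded cost by $\OPTLPU+\constu$) to obtain the theorem. Your additional commentary on where the real work lies is accurate but goes beyond what the paper records for this particular proof.
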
 

\begin{remark}
Similar to the guarantee of Theorem \ref{tm:rawls_main_g} the number of colors $|\Colors|$ and the ratio $r_h$ of each color $h$ is also a constant. Therefore, $\constu$ is a constant and our final guarantee is $(\gammap+\gammapp \alpha) \OPTR +\cO(\frac{k}{n})$.
\end{remark}

It is worthwhile to note that \textsc{Rawlsian-Rounding} could have been applied instead of \textsc{Utilitarian-Rounding}. However, since \textsc{Rawlsian-Rounding} lacks the guarantee of Inequality \eqref{eq:utilsrounding_g3} the constant $\constu$ would have increased. Specifically, it would instead be $\constu=\Big( (|\Colors|+1)\frac{k}{n} \cdot \sum_{h \in \Colors} \frac{1}{r_h} \Big)$ where $2$ has been replaced by $(|\Colors|+1)$. Finally, we also see empirically that the final $\cO(\frac{k}{n})$ term does not have a significant effect.

\begin{algorithm}[h!] 
\caption{$\algu$}\label{alg:algu}
    \begin{algorithmic}[1]
    \State \textbf{Input:} Set of points $\Points$, number of centers $k$ .
    \State Set $w_j=\frac{1}{|\Points^{\chi(j)}|} \ \forall j \in \Points$.
    \State $\su = \textsc{WightedClusteringAlg}(\Points,k,\{w_j\}_{j\in \Points})$. \label{alg_line:cs_u}
    \State Solve LP \eqref{eq:assignment_lp_util} to obtain the solution $\x^{\text{frac}}$. \label{alg_line:lp_u}
    \State Set $\x^{\text{integ}} = \textsc{Utilitarian-Rounding}(\x^{\text{frac}})$ and form the assignment $\phi$ using $\x^{\text{integ}}$.  \label{alg_line:round_u}
    \State \textbf{Return} $(\su,\phiu)$. 
    \end{algorithmic}
\end{algorithm}

\section{Experiments}\label{sec:experiments}
We evaluate performance of $\algr$ and $\algu$ across multiple datasets. 
Empirically we observe that $\algr$ and $\algu$ find solutions that dominate (are lower in the overall objective) the baselines in Rawlsian and the Utilitarian objectives. In the main paper, we include comparisons on three datasets for the two group setting. 
Additional experimental details and results including those for multiple groups ($|\Colors|>2$) are in Appendix~\ref{app:exp}.

\paragraph{Algorithms.} We compare $\algr$ with three other baselines: (1) \textsc{Vanilla} $k$-means; (2) \textsc{Socially Fair}: for the socially fair $k$-means objective; (3) \textsc{FCBC-Rawl}: the egalitarian variant in \cite{esmaeili2021fair} which minimizes maximum group fairness violation under bounded clustering distance. For $\algu$, we compare with: (1) \textsc{Vanilla} $k$-means; (2) \textsc{Weight Clustering}: weighted version of $k$-means with $w_j=\frac{1}{|\Points^{\chi(j)}|} \ \forall j \in \Points$; (3) \textsc{FCBC-Util}: the utilitarian variant in \cite{esmaeili2021fair} which minimizes sum of group fairness violation under bounded clustering distance. The distance measure is always the Euclidean distance.  We set $\alpha_h=\beta_h =\delta r_h$, where $\delta \in \{0.01, 0.1\}$. This ensures a fair comparison with FCBC algorithms which take parameter $\delta$. See more details of this translation in Appendix~\ref{app:exp}. We set the upper bound in FCBC to be $1.5$ times of vanilla $k$-means clustering cost. In Appendix~\ref{app:exp} , we show other upper bounds lead FCBC to perform worse.

\paragraph{Software and Packages.} All experiments are run on Python 3.10. We use \texttt{KMeans} and its weighted option from \texttt{Scikit-learn} \citep{pedregosa2011scikit} to find initial centers.

Linear programs are solved using the CPLEX package \citep{nickel2022ibm}. Network-flow rounding is done via \texttt{NetworkX} \citep{hagberg2013networkx}.  More details about hardware and implementation details can be found in Appendix~\ref{app:exp}.

\paragraph{Datasets.} We conduct experiments on four UCI datasets \citep{frank2010uci}: \adult{}, \credit{}, \cens{}

and \bank{}. We form feature vectors by choosing a set of columns as features in each dataset. We also choose a specific column as group membership. For \adult{}, \credit{}, and \cens{} the features and group memberships are selected following \cite{bera2019fair,esmaeili2020probabilistic} whereas for \bank{} we follow \cite{backurs2019scalable}. More details about datasets and normalization are in Appendix~\ref{app:exp}.

\begin{figure}[h!]
    \centering
    \setlength{\tabcolsep}{6pt}

    \begin{tabular}{@{}c c c c@{}}
        & \textbf{Adult} & \textbf{Credit Card} & \textbf{Census} \\

        \raisebox{4 em}{\rotatebox[origin=c]{90}{\normalsize\textbf{Rawlsian}}} &
        \includegraphics[width=0.28\textwidth]{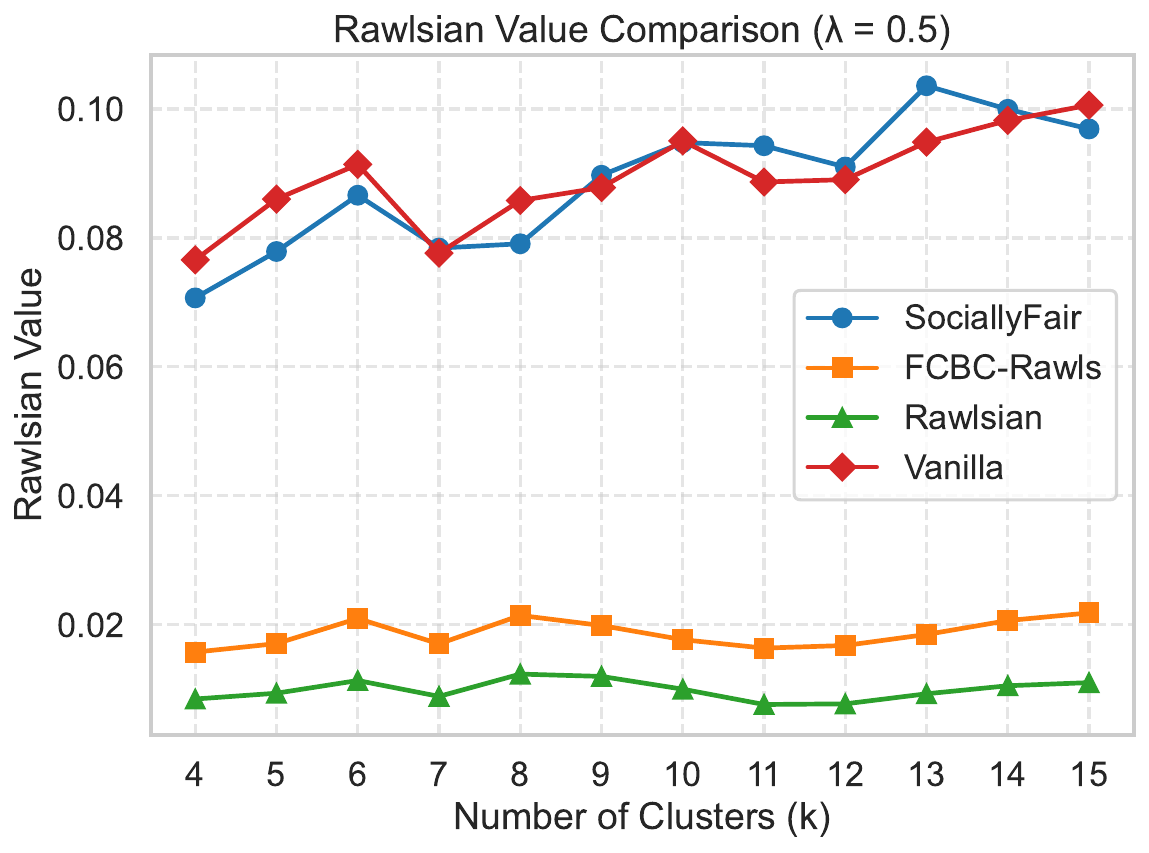} &
        \includegraphics[width=0.28\textwidth]{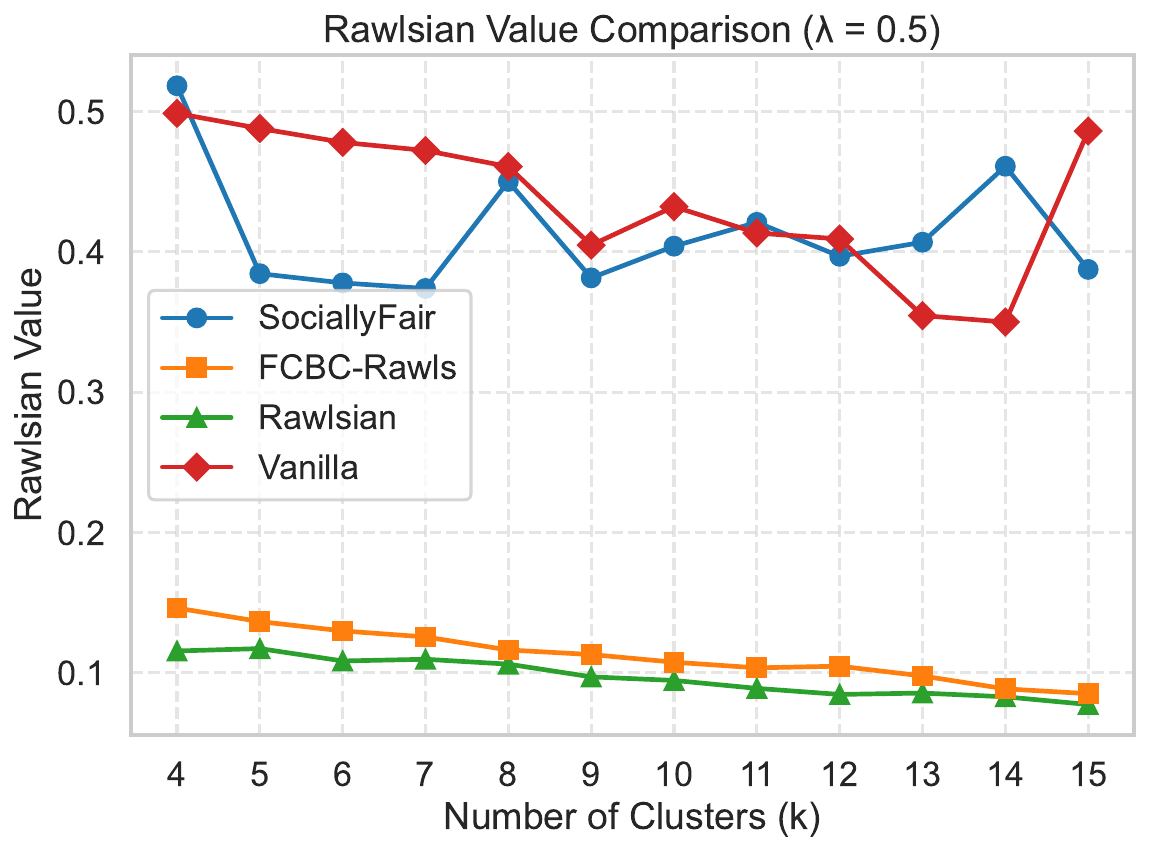} &
        \includegraphics[width=0.28\textwidth]{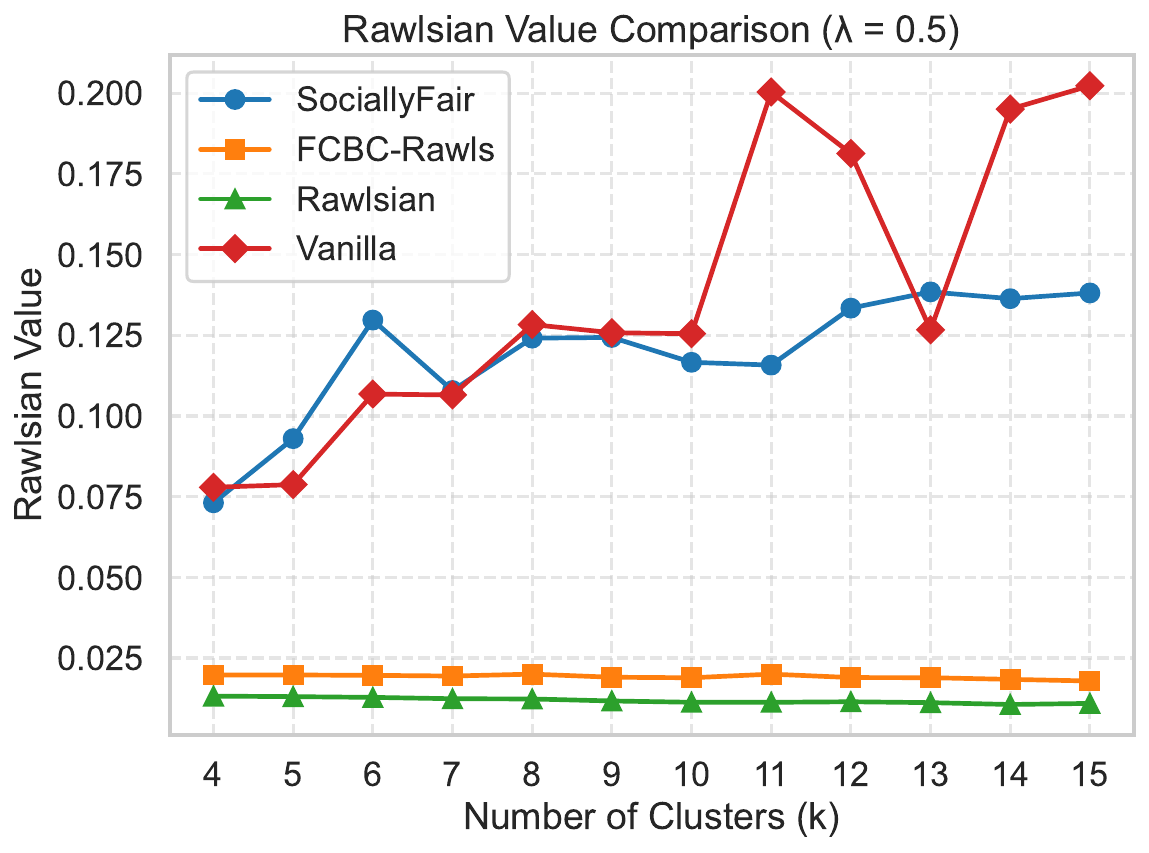} \\

        \raisebox{4 em}{\rotatebox[origin=c]{90}{\normalsize\textbf{Utilitarian}}} &
        \includegraphics[width=0.28\textwidth]{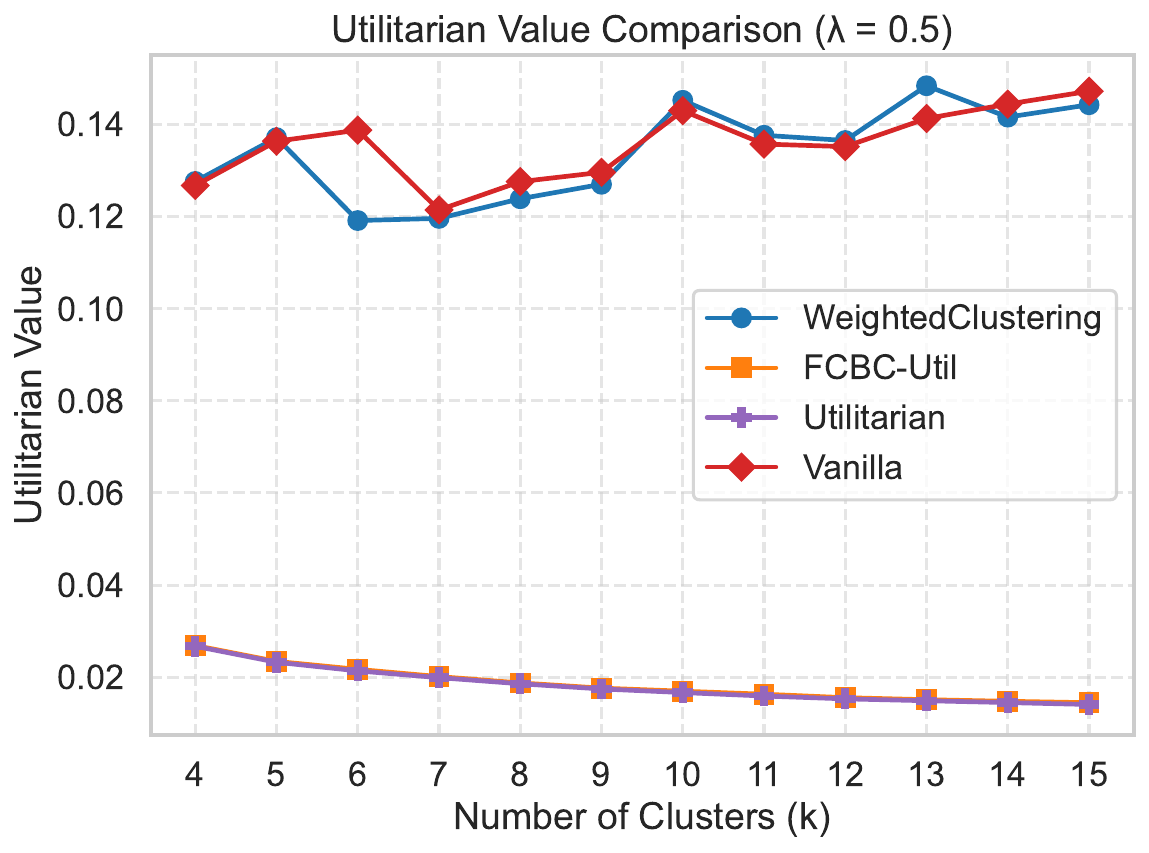} &
        \includegraphics[width=0.28\textwidth]{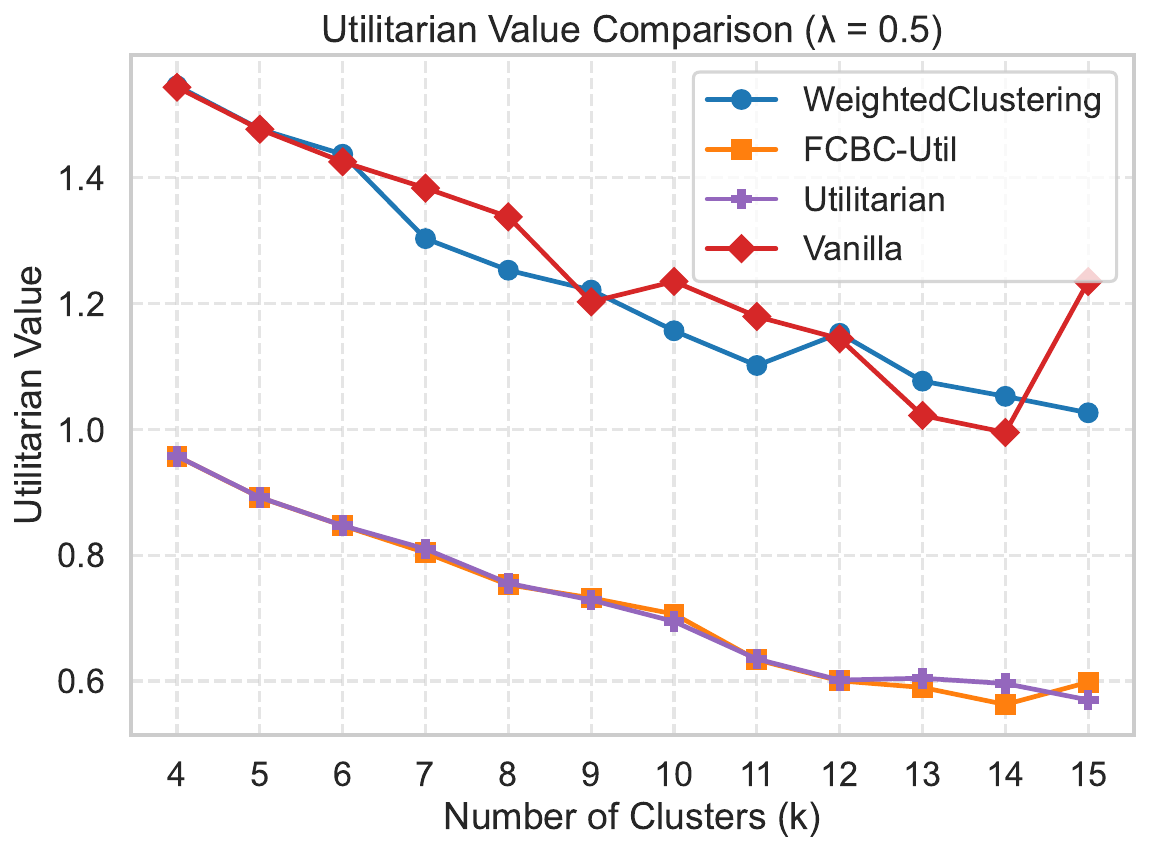} &
        \includegraphics[width=0.28\textwidth]{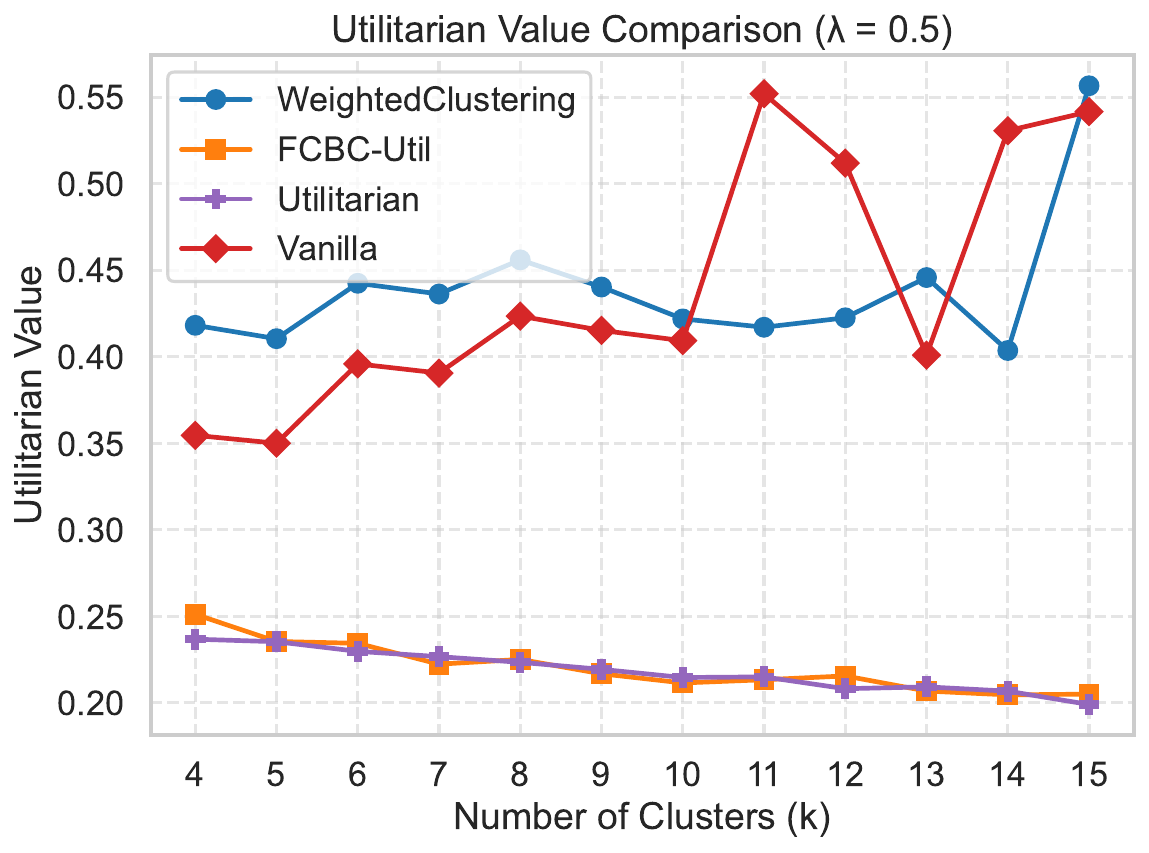} \\
    \end{tabular}

    \caption{Comparison of Rawlsian (top row) and Utilitarian (bottom row) objective values across three datasets. Each column corresponds to a dataset.}
    \label{fig:3_data_lambda_0_5}
\end{figure}

In Figure \ref{fig:3_data_lambda_0_5}, we compare the Rawlsian and Utilitarian objective values. In each plot, the number of centers $k$ is varied from $4$ to $15$. We present results for $\lambda = 0.5$. We observe that $\algr$ dominates the other baselines, and $\algu$ is equally competitive with \textsc{FCBC-Util} for $\lambda = 0.5$ on the three datasets. In Appendix \ref{app:exp}, we show plots for the full range of $\lambda$ where our algorithms find solutions that either are equally competitive or dominate existing methods by a larger amount including \textsc{FCBC-Util}.

\begin{ack}
Use unnumbered first level headings for the acknowledgments. All acknowledgments
go at the end of the paper before the list of references. Moreover, you are required to declare
funding (financial activities supporting the submitted work) and competing interests (related financial activities outside the submitted work).
More information about this disclosure can be found at: \url{https://neurips.cc/Conferences/2025/PaperInformation/FundingDisclosure}.

Do {\bf not} include this section in the anonymized submission, only in the final paper. You can use the \texttt{ack} environment provided in the style file to automatically hide this section in the anonymized submission.
\end{ack}


\bibliography{biblio}
\bibliographystyle{plainnat}

\newpage
\appendix
\section{Useful Facts}
\begin{fact}\label{claim:generalized_triangle_ineq}
For $p \in \{1,2\}$ and any three arbitrary points $a,b,$ and $c$ with a distance measure satisfying the triangle inequality  we have 
\begin{align}
    d^p(a,b) \leq 2^{p-1} \cdot \big(d^p(a,c)+d^p(c,b)\big) 
\end{align}
\end{fact}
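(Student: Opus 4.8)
The plan is to reduce everything to the ordinary triangle inequality together with the elementary inequality $(x+y)^p \le 2^{p-1}(x^p+y^p)$ for nonnegative reals $x,y$. First I would apply the triangle inequality to obtain $d(a,b) \le d(a,c) + d(c,b)$, and then raise both sides to the power $p$. Since $t \mapsto t^p$ is monotone increasing on $\mathbb{R}_{\ge 0}$ and all distances are nonnegative, this preserves the inequality and yields $d^p(a,b) \le \big(d(a,c)+d(c,b)\big)^p$. It then remains to bound $\big(d(a,c)+d(c,b)\big)^p$ by $2^{p-1}\big(d^p(a,c)+d^p(c,b)\big)$, after which the claim follows immediately by setting $x=d(a,c)$ and $y=d(c,b)$.

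For the second step, because the statement only concerns $p\in\{1,2\}$, I would simply verify both cases directly. For $p=1$ the constant $2^{p-1}=1$ and the target inequality is in fact an equality, so nothing is needed beyond the triangle inequality itself. For $p=2$ I would expand $(x+y)^2 = x^2 + 2xy + y^2$ and apply the AM-GM bound $2xy \le x^2 + y^2$, giving $(x+y)^2 \le 2(x^2+y^2) = 2^{p-1}(x^2+y^2)$, exactly as required.

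More generally, the same bound $(x+y)^p \le 2^{p-1}(x^p+y^p)$ holds for every $p \ge 1$ by convexity of $t \mapsto t^p$: Jensen's inequality applied to the two points $x,y$ with equal weights gives $\big(\tfrac{x+y}{2}\big)^p \le \tfrac{1}{2}(x^p+y^p)$, and multiplying through by $2^p$ recovers the claim. I would include this only as a short remark, since for the purposes of this paper the two-case check already suffices and is fully elementary.

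There is essentially no serious obstacle here; the only point requiring a small amount of care is justifying that raising the triangle inequality to the power $p$ is legitimate, which is immediate since both sides are nonnegative and $t \mapsto t^p$ is monotone on $\mathbb{R}_{\ge 0}$. The result is a standard relaxed (or approximate) triangle inequality, and the constant $2^{p-1}$ is precisely what later feeds into the definitions $\gamma_p = 2^{p-1}(2^{p-1}+1)$ and $\gamma_p' = 2^{2(p-1)}$ driving the approximation analysis.
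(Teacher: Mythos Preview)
Your proposal is correct and follows essentially the same approach as the paper: both treat $p=1$ as immediate from the triangle inequality, and for $p=2$ both square the triangle inequality and then apply AM-GM to bound the cross term $2xy \le x^2+y^2$. Your added remark on general $p\ge 1$ via convexity is a nice bonus but not needed for the paper's purposes.
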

\begin{proof}
For $p=1$ we have $2^{p-1}=1$ and the proof follows by the triangle inequality. For $p=2$ we have by squaring both sides of the triangle inequality:
\begin{align*}
    d^2(a,b) & \leq d^2(a,c) + d^2(c,b) + 2 d(a,c) d(b,c) \\
             & \leq 2 (d^2(a,c) + d^2(c,b)) 
\end{align*}
where in the above we used the fact that $d(a,c) d(b,c) \leq \frac{d^2(a,c) + d^2(c,b)}{2}$ by the AM-GM inequality.
\end{proof}

\section{Omitted Proofs} \label{sec:missing_proofs}
We restate our first lemma and give its proof. 
\existanceofapproxsol*
\begin{proof}
We denote by $(\Sstar, \phistar)$ an optimal solution for the Rawlsian objective.  $(\Ssf,\phisf)$ denote the solution of the $\alpha$-approximation solution for the socially fair objective. 
We construct the assignment $\phi'$ as follows: for every $\istar \in \Sstar$, define $\nrst(\istar)=\arg\min_{i \in \Ssf}d(i, \istar)$. For every $j \in \Points$, define $\phi'(j) = \nrst(\phistar(j))$. 

Before we delve into the details of the proof, we will denote by $\totdistcol^{*}_h$, $\totdistcol^{\SF}_h$, and $\totdistcol'_h$ the values of $\totdistcol_h(\Sstar,\phistar)$,  $\totdistcol_h(\Ssf,\phisf)$, and $\totdistcol_h(\Ssf,\phi')$, respectively. Similarly, for $\totviolcol^{*}_h$, $\totviolcol^{\SF}_h$, and $\totviolcol'_h$. Moreover, as noted earlier $\OPTR$ is the optimal value of the Rawlsian objective. Further, we let $\OPTSF$ be the optimal value of the socially fair objective when evaluated for the socially fair objective.  

We start with the following claim.
\begin{claim}\label{cl:intermediate}
\begin{align}
      \max\limits_{h \in \Colors}  \frac{\totdistcol_{h}^{\SF}}{|\Points^{h}|} \quad  \leq \quad \alpha \cdot \frac{1}{\lambda} \OPTR 
\end{align}
\end{claim}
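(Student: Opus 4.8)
The plan is to chain two inequalities: first the $\alpha$-approximation guarantee of \textsc{SociallyFairAlg}, and then a comparison of the socially fair optimum against the socially fair cost of the \emph{Rawlsian} optimal solution. Concretely, I would first observe that the socially fair objective value of the solution $(\Ssf,\phisf)$ is exactly $\max_{h \in \Colors} \frac{\totdistcol_h^{\SF}}{|\Points^h|}$ by the definition in \eqref{eq:def_sf}, and since $(\Ssf,\phisf)$ comes from an $\alpha$-approximation algorithm for that objective, this quantity is at most $\alpha \cdot \OPTSF$.

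The second step is to bound $\OPTSF$ in terms of $\OPTR$. Here I would evaluate the socially fair objective at the Rawlsian optimal solution $(\Sstar,\phistar)$. Since $\OPTSF$ is the minimum of the socially fair objective over all center/assignment pairs, it is at most the socially fair cost of this particular (suboptimal for socially fair) solution, i.e.
\begin{align*}
    \OPTSF \leq \max_{h \in \Colors} \frac{\totdistcol_h^{*}}{|\Points^h|}.
\end{align*}

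The final step connects the right-hand side to $\OPTR$. The key observation is that the violation terms are nonnegative, $\totviolcol_h^{*} \geq 0$, so for every color $h$ we have $\lambda \totdistcol_h^{*} \leq \lambda \totdistcol_h^{*} + (1-\lambda)\totviolcol_h^{*}$, and dividing by $|\Points^h|$ gives $\frac{\lambda \totdistcol_h^{*}}{|\Points^h|} \leq \disu_h(\Sstar,\phistar) \leq \OPTR$. Taking the maximum over $h$ and dividing by $\lambda$ yields $\max_h \frac{\totdistcol_h^{*}}{|\Points^h|} \leq \frac{1}{\lambda}\OPTR$. Combining the three inequalities gives $\max_h \frac{\totdistcol_h^{\SF}}{|\Points^h|} \leq \alpha \cdot \OPTSF \leq \alpha \cdot \frac{1}{\lambda}\OPTR$, which is the claim.

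There is no real obstacle here — the argument is a short two-hop comparison, and the only subtlety worth stating explicitly is that the second inequality crucially uses that $(\Sstar,\phistar)$ is a feasible point for the socially fair problem (so it upper-bounds $\OPTSF$), while the nonnegativity of $\totviolcol_h$ is what lets us strip the violation term at the cost of only the factor $\frac{1}{\lambda}$. I would be careful to keep the direction of each inequality straight, since both $\OPTSF$ and $\OPTR$ are minima of max-expressions and it is easy to invert a bound.
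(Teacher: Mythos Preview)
Your proposal is correct and follows essentially the same approach as the paper: both establish $\OPTSF \le \frac{1}{\lambda}\OPTR$ via nonnegativity of the violation terms (using $(\Sstar,\phistar)$ as a feasible point for the socially fair problem) and then apply the $\alpha$-approximation guarantee. The only cosmetic difference is that the paper phrases the middle step as a proof by contradiction, whereas your direct inequality chain is slightly cleaner.
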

\begin{proof}
Let $(\Ssf^*,\phisf^*)$ denote an optimal \SF solution and let $\totdistcol^{\SF^*}_h=\totdistcol_h(\Ssf^*,\phisf^*)$. We start by showing that $\OPTSF \leq \frac{1}{\lambda} \OPTR$. 

Suppose that was not true, i.e., $\frac{1}{\lambda}\OPTR <\OPTSF$. This implies that we have 
\begin{align*}
    \max_{h} \left(\frac{1}{|\Points^h|} \totdistcol^*_h + \frac{1-\lambda}{\lambda} \frac{1}{|\Points^h|}  \totviolcol^*_h \right) < \max_{h'}\frac{1}{|\Points^{h'}|}  \totdistcol^{\SF^*}_{h'}
\end{align*} 
Now note that $\frac{1-\lambda}{\lambda}  \frac{1}{|\Points^h|} \totviolcol^*_h \geq 0$. This means that
\begin{align*}
\max_{h} \frac{1}{|\Points^h|} \totdistcol^*_h < \max_{h'}\frac{1}{|\Points^{h'}|}  \totdistcol^{\SF^*}_{h'} = \OPTSF
\end{align*}
But this implies that by using $(\Sstar,\phistar)$ we would achieve a strictly more optimal solution for the socially fair objective which is a contradiction since $(\Ssf^*,\phisf^*)$ is an optimal solution for the $\SF$ objective. Therefore, we must have $\OPTSF \leq \frac{1}{\lambda} \OPTR$.

Now, using the above and since $(\Ssf,\phisf)$ results from an $\alpha$ approximation algorithm for the socially fair objective we must have $\Big( \max\limits_{h \in \Colors}  \frac{\totdistcol_{h}^{\SF}}{|\Points^{h}|} \Big)  \leq \alpha \OPTSF \leq \alpha \cdot \frac{1}{\lambda} \OPTR$. 
\end{proof}

We will now prove the following claim. 

\begin{claim}\label{cl:sf_bound_rawl}
$\forall h \in \Colors$ we have
\begin{align}
        \totdistcol'_h \leq \gammap \totdistcol^*_h \ + \ \gammapp \totdistcol_h^{\SF}
\end{align}

\end{claim}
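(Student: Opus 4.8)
The plan is to prove the per-color inequality pointwise and then sum. Since $\totdistcol'_h = \sum_{j \in \Pointsh} d^p(j,\phi'(j))$, $\totdistcol^*_h = \sum_{j \in \Pointsh} d^p(j,\phistar(j))$, and $\totdistcol_h^{\SF} = \sum_{j \in \Pointsh} d^p(j,\phisf(j))$ are exactly the same three quantities summed over $\Pointsh$, it suffices to establish for each fixed $j \in \Pointsh$ that
\begin{align*}
    d^p(j,\phi'(j)) \leq \gammap \, d^p(j,\phistar(j)) + \gammapp \, d^p(j,\phisf(j)).
\end{align*}

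Fix $j \in \Pointsh$ and abbreviate $i^* = \phistar(j) \in \Sstar$ (its optimal center) and $\hat{i} = \phi'(j) = \nrst(i^*) \in \Ssf$. First I would apply the generalized triangle inequality of Fact~\ref{claim:generalized_triangle_ineq} with the intermediate point $i^*$, giving
\begin{align*}
    d^p(j,\hat{i}) \leq 2^{p-1}\bigl(d^p(j,i^*) + d^p(i^*,\hat{i})\bigr).
\end{align*}
The crucial step is bounding $d^p(i^*,\hat{i})$. By the definition $\hat{i} = \nrst(i^*) = \arg\min_{i \in \Ssf} d(i,i^*)$, the center $\hat{i}$ is the nearest point of $\Ssf$ to $i^*$, so it is at least as close as $\phisf(j)$; hence $d(i^*,\hat{i}) \leq d(i^*,\phisf(j))$, i.e.\ $d^p(i^*,\hat{i}) \leq d^p(i^*,\phisf(j))$. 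Applying Fact~\ref{claim:generalized_triangle_ineq} a second time, now with intermediate point $j$, yields
\begin{align*}
    d^p(i^*,\hat{i}) \leq d^p(i^*,\phisf(j)) \leq 2^{p-1}\bigl(d^p(i^*,j) + d^p(j,\phisf(j))\bigr).
\end{align*}

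Substituting this into the first inequality and collecting terms gives
\begin{align*}
    d^p(j,\hat{i}) &\leq 2^{p-1} d^p(j,i^*) + 2^{2(p-1)}\bigl(d^p(j,i^*)+d^p(j,\phisf(j))\bigr) \\
    &= \bigl(2^{p-1} + 2^{2(p-1)}\bigr) d^p(j,\phistar(j)) + 2^{2(p-1)} d^p(j,\phisf(j)).
\end{align*}
Recognizing that $2^{p-1}+2^{2(p-1)} = 2^{p-1}(2^{p-1}+1) = \gammap$ and $2^{2(p-1)} = \gammapp$ yields the pointwise bound, and summing over $j \in \Pointsh$ completes the proof. I do not expect a genuine obstacle here, since the argument is just a double application of the generalized triangle inequality; the one point requiring care is exploiting the nearest-center property of $\nrst$ to replace $\hat{i}$ by $\phisf(j)$, as this swap is precisely what allows the socially-fair assignment distance $\totdistcol_h^{\SF}$ to appear in the bound.
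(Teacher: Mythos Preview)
Your proposal is correct and follows essentially the same approach as the paper's proof: a pointwise bound via two applications of Fact~\ref{claim:generalized_triangle_ineq} (first with intermediate point $i^*$, then with intermediate point $j$), combined with the nearest-center swap $d^p(i^*,\hat{i}) \leq d^p(i^*,\phisf(j))$, followed by summation over $\Pointsh$. The algebra and the identification of $\gammap = 2^{p-1}+2^{2(p-1)}$ and $\gammapp = 2^{2(p-1)}$ match exactly.
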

\begin{proof}
We focus on a point $j$ and let $i^*=\phi^*(j)$, the following must hold:
\begin{align*}
    d^p(j,\phi'(j)) &=d^p(j, \nrst(\istar))\\
    &\leq 2^{p-1}(d^p(j, \istar)+d^p(\istar, \nrst(\istar))) \quad \text{(By Fact \ref{claim:generalized_triangle_ineq})}\\
    &\leq  2^{p-1}( d^p(j, \istar)+d^p(\istar, \phisf(j)) )\quad \text{(Since $\nrst(\istar)$ is the closest center to $i$ in $\Ssf$)} \\
    & \leq 2^{p-1}(d^p(j, \istar)+2^{p-1}(d^p(j, \phisf(j))+ d^p(j, \istar))) \quad \text{(By Fact \ref{claim:generalized_triangle_ineq})}\\
    & =2^{p-1}((2^{p-1}+1)d^p(j, \istar) + 2^{p-1} d^p(j, \phisf(j))) \\ 
    & = 2^{p-1}((2^{p-1}+1) d^p(j, \phi^*(j)) + 2^{p-1} d^p(j, \phisf(j))) \\ 
    & = \gammap d^p(j, \phi^*(j)) + \gammapp d^p(j, \phisf(j)))
\end{align*}
From the above, we get 
\begin{align*}
        \sum_{j \in \Pointsh}d^p(j, \phi'(j)) \leq \gammap \sum_{j \in \Pointsh}d^p(j, \phistar(j)) + \gammapp \sum_{j \in \Pointsh}d^p(j, \phisf(j)) 
\end{align*}
Which is equivalent to 
\begin{align}
    \totdistcol'_h \leq \gammap \totdistcol^*_h + \gammapp \totdistcol_h^{\SF}
\end{align} 
\end{proof}

Now we introduce the second claim for the violation. 
\begin{claim}\label{cl:viol_bound_rawl}
$\forall h \in \Colors$ we have
\begin{align}
    \totviolcol'_h \leq \totviolcol^*_h
\end{align}
\end{claim}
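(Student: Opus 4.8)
The plan is to exploit the special structure of the assignment $\phi'$. Since $\phi'(j) = \nrst(\phistar(j))$, every point that the optimal solution routes to a fixed center $\istar \in \Sstar$ is routed by $\phi'$ to the single center $\nrst(\istar) \in \Ssf$. Consequently each cluster of $(\Ssf,\phi')$ is a disjoint \emph{union} of optimal clusters: writing $I(i') = \{\istar \in \Sstar : \nrst(\istar) = i'\}$ for each $i' \in \Ssf$, the sets $\{I(i')\}_{i' \in \Ssf}$ partition $\Sstar$, and the $\phi'$-cluster at $i'$ satisfies $|C_{i'}| = \sum_{\istar \in I(i')} |C^*_{\istar}|$ together with $|C^h_{i'}| = \sum_{\istar \in I(i')} |C^{*,h}_{\istar}|$. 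So proving the claim reduces to showing that merging clusters never increases the size-weighted violation of color $h$.

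The engine of the argument is a convexity observation. For a cluster with total size $n$ and color-$h$ count $m$, multiplying $\viol(h,i)$ through by the cluster size linearizes the threshold comparisons and gives
\begin{align*}
    |C_i| \cdot \viol(h,i) = \max\Big\{\, m - (r_h+\alpha_h)\,n,\ (r_h-\beta_h)\,n - m,\ 0 \,\Big\}.
\end{align*}
Call this quantity $g_h(n,m)$. The key point is that $g_h$ is a pointwise maximum of three linear forms in $(n,m)$, each with no constant term, so it is convex and positively homogeneous of degree $1$. First I would record that any such function is subadditive, $g_h(n_1+n_2, m_1+m_2) \le g_h(n_1,m_1)+g_h(n_2,m_2)$, which follows immediately since each linear form is additive and the max of additive maps is subadditive (equivalently, from midpoint convexity combined with $g_h(2x)=2g_h(x)$).

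With subadditivity in hand, the claim follows by a short aggregation. For each $i' \in \Ssf$, applying subadditivity across the optimal clusters in $I(i')$ yields
\begin{align*}
    |C_{i'}| \cdot \viol(h, i') = g_h\!\Big(\sum_{\istar \in I(i')} |C^*_{\istar}|,\ \sum_{\istar \in I(i')} |C^{*,h}_{\istar}|\Big) \le \sum_{\istar \in I(i')} g_h\big(|C^*_{\istar}|, |C^{*,h}_{\istar}|\big) = \sum_{\istar \in I(i')} |C^*_{\istar}| \cdot \viol(h,\istar).
\end{align*}
Summing this over the partition $\{I(i')\}_{i' \in \Ssf}$ of $\Sstar$ then gives $\totviolcol'_h \le \totviolcol^*_h$, exactly as claimed.

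I expect the main obstacle to be the subadditivity step, and in particular the reformulation that precedes it: recognizing that scaling $\viol(h,i)$ by $|C_i|$ converts the per-cluster violation into the perspective of a convex piecewise-linear function (a max of homogeneous linear forms). Once that reformulation is made, subadditivity — and hence monotonicity of the total violation under cluster merges — is standard, and the structural partition observation together with the final summation are routine bookkeeping.
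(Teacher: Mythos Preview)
Your proof is correct, and it proceeds along a genuinely different route from the paper's.

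The paper's argument is a direct case analysis: having observed (as you do) that each $\phi'$-cluster is a disjoint union of optimal clusters, it fixes a merged cluster, partitions the contributing optimal clusters into those where color $h$ is over-represented, under-represented, or balanced, and then separately handles the two possible outcomes (merged cluster over- or under-represented). In each case it expands the weighted violation, observes that the contributions from the ``wrong-sign'' subsets are nonpositive, and drops them. Your approach instead recognizes that $|C_i|\,\viol(h,i)$ is the pointwise maximum of three homogeneous linear forms in $(|C_i|,|C_i^h|)$ and invokes the resulting subadditivity in one line. This is cleaner: it avoids the case split entirely, handles an arbitrary number of merged clusters uniformly, and makes transparent \emph{why} merging cannot hurt (convexity/subadditivity of the perspective). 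The paper's version has the minor advantage of being fully elementary---no appeal to convexity vocabulary---but your abstraction is both shorter and more informative.
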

\begin{proof}
First, note that $\totviolcol^*_h= \sum_{i^* \in \Sstar} |C_{i^*}| \viol(h,i^*)$ and $\totviolcol'_h = \sum_{i\in \Ssf} |C'_i| \viol(h,i)$ where the cluster $C'_i$ is formed using the new assignment functio $\phi'$. If the assignment assignment function $\phi'$ routes to center $i \in \Ssf$ one cluster from a center $i^* \in \Sstar$, then it follows immediately that $|C'_i| \viol(h,i) = |C_{i^*}| \viol(h,i^*)$.

We will therefore, consider the case where center $i \in \Ssf$ recieves a collection of clusters from $\Sstar$. In particular, let $M$ denote the set of clusters routed to $i$ by $\phi'$. Let $O$ denote the subsets of clusters in $M$ in which color $h$ is over-represented (strictly greater than $\alpha_h + r_h$), $U$ denotes the subsets of clusters in $M$ where $h$ is under-represented (strictly less than $\alpha_h + r_h$), and $E$ denotes the subsets of clusters in $M$ that have no proportional violations. If the resulting proportion in cluster $i$ is between  $\rh-\beta$ and $r_h+\alpha_h$ then it follows that $\viol(h,i)=0$ and therefore $|C'_i| \viol(h,i) \leq \sum_{i^* \in M} |C_{i^*}| \viol(h,i^*)$. 

Therefore, we consider the case where the resulting cluster has an over-representation. The following must be true. 
\begin{align*}
|C'_i| \viol(h,i) &= \left(\sum_{i^* \in M} |C_{i^*}| \right)\left (\frac{\sum_{{i^*} \in M} |C_{i^*}^h| }{\sum_{{i^*} \in M } |C_{i^*}|} - (r_h + \alpha_h )\right )\\
&=\left ( \sum_{{i^*} \in M} |C_{i^*}^h| -(r_h + \alpha_h)\sum_{{i^*} \in M} |C_{i^*}| \right )\\
&=\sum_{{i^*} \in M} ( |C_{i^*}^h|-  (r_h + \alpha_h)|C_{i^*}|)\\
&= \sum_{{i^*} \in O}( |C_{i^*}^h|-  (r_h + \alpha_h)|C_{i^*}|) + \sum_{{i^*} \in U}( |C_{i^*}^h|-  (r_h + \alpha_h)|C_{i^*}|) + \sum_{{i^*} \in E}( |C_{i^*}^h|-  (r_h + \alpha_h)|C_{i^*}|)\\
&\leq \sum_{i^* \in O}| |C_{i^*}^h|-  (r_h + \alpha_h)|C_i^*||\\
&\leq \sum_{i^* \in M} |C_i^*| \viol(h, i^*)
\end{align*}
Now if we assume that cluster $i$ has under-representation, then the following is true. 
\begin{align*}
|C'_i| \viol(h,i)
&=\left (  (r_h - \beta_h) \sum_{{i^*}\in M} |C_{i^*}| -\sum_{{i^*} \in M} |C_{i^*}^h| \right )\\
&=\sum_{{i^*} \in M} ( (r_h - \beta_h)|C_{i^*}| - |C_{i^*}^h|  )\\
&= \sum_{{i^*} \in U}( (r_h - \beta_h)|C_{i^*}| - |C_{i^*}^h|) + \sum_{{i^*} \in O}( (r_h - \beta_h)|C_{i^*}| - |C_{i^*}^h| ) + \sum_{{i^*} \in E}| (r_h - \beta_h)|C_{i^*}| - |C_{i^*}^h||\\
&\leq \sum_{{i^*} \in U}| (r_h - \beta_h)|C_{i^*}| - |C_{i^*}^h||\\
&\leq \sum_{i^* \in M} |C_{i^*}| \viol(h, {i^*})
\end{align*}

Since each center in $\Sstar$ maps to one center in $\Ssf$, it follows that $\sum_{i\in \Ssf} |C'_i| \viol(h,i) \leq \sum_{i^* \in \Sstar} |C_{i^*}| \viol(h,i^*)$. Therefore, it follows that $\totviolcol'_h \leq \totviolcol^*_h$.
\end{proof}
The following claim must then hold.
\begin{claim}\label{cl:bound_term_r}
$\forall h \in \Colors$ we have
\begin{align}
    \disu'_h \leq \gammap \disu^*_h + \gammapp \lambda \frac{\totdistcol^{\SF}_h}{|\Points^h|}
\end{align}
\end{claim}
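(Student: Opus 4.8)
The plan is to obtain this claim as a direct combination of the two preceding claims (Claim \ref{cl:sf_bound_rawl} for the distance term and Claim \ref{cl:viol_bound_rawl} for the violation term), plugged into the definition of the group disutility, together with a single monotonicity observation.

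First I would expand the left-hand side using the definition of disutility in \eqref{eq:disu} applied to the solution $(\Ssf,\phi')$, namely
\begin{align*}
    \disu'_h = \frac{\lambda \totdistcol'_h + (1-\lambda)\totviolcol'_h}{|\Points^h|}.
\end{align*}
Since $\lambda \ge 0$, $(1-\lambda)\ge 0$, and all the quantities involved are nonnegative, I can substitute the two upper bounds already established: $\totdistcol'_h \leq \gammap \totdistcol^*_h + \gammapp \totdistcol_h^{\SF}$ from Claim \ref{cl:sf_bound_rawl}, and $\totviolcol'_h \leq \totviolcol^*_h$ from Claim \ref{cl:viol_bound_rawl}. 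Both substitutions preserve the inequality direction because of the nonnegative coefficients. This yields
\begin{align*}
    \disu'_h \leq \frac{\lambda \gammap \totdistcol^*_h + (1-\lambda)\totviolcol^*_h}{|\Points^h|} + \frac{\lambda \gammapp \totdistcol_h^{\SF}}{|\Points^h|}.
\end{align*}

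The only genuine step is to recognize that the first fraction is bounded by $\gammap \disu^*_h$. Here I would use that $\gammap = 2^{p-1}(2^{p-1}+1) \ge 1$ (concretely $\gammap = 2$ for $p=1$ and $\gammap = 6$ for $p=2$) together with $\totviolcol^*_h \ge 0$, so that $(1-\lambda)\totviolcol^*_h \le \gammap (1-\lambda)\totviolcol^*_h$. Consequently
\begin{align*}
    \lambda \gammap \totdistcol^*_h + (1-\lambda)\totviolcol^*_h \leq \gammap\big(\lambda \totdistcol^*_h + (1-\lambda)\totviolcol^*_h\big),
\end{align*}
and dividing through by $|\Points^h|$ identifies the right-hand side as $\gammap \disu^*_h$, which gives exactly the claimed bound $\disu'_h \leq \gammap \disu^*_h + \gammapp \lambda \frac{\totdistcol^{\SF}_h}{|\Points^h|}$.

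I do not expect any real obstacle in this claim; it is essentially bookkeeping. The one subtlety worth stating explicitly is the monotonicity step $\gammap \ge 1$, which is what allows the distance-term blow-up factor $\gammap$ to also absorb the (unchanged) violation term while leaving the socially-fair residual $\gammapp \lambda \totdistcol_h^{\SF}/|\Points^h|$ as the only additive overhead. This residual is precisely the term that Claim \ref{cl:intermediate} will later control in terms of $\OPTR$, so keeping it isolated on its own is the point of the claim.
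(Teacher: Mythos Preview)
Your proposal is correct and follows essentially the same approach as the paper: apply Claim \ref{cl:sf_bound_rawl} to the distance term, Claim \ref{cl:viol_bound_rawl} to the violation term, then use $\gammap \ge 1$ to absorb the violation term into $\gammap \disu^*_h$, and divide by $|\Points^h|$. The only difference is cosmetic ordering (the paper works with the numerator first and divides at the end), and you make the $\gammap \ge 1$ step explicit whereas the paper leaves it implicit.
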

\begin{proof}
\begin{align*}
\lambda \totdistcol'_h & \leq   \gammap  \lambda   \totdistcol^*_h + 
\gammapp \lambda  \totdistcol_h^{\SF} \quad \quad \text{(By Claim \ref{cl:sf_bound_rawl})} \\ \implies     \lambda \totdistcol'_h + (1-\lambda) \totviolcol'_h & \leq  \gammap  \lambda   \totdistcol^*_h + 
\gammapp \lambda  \totdistcol_h^{\SF} + (1-\lambda) \totviolcol^*_h \quad \quad \text{(By Claim \ref{cl:viol_bound_rawl})}  \\ 
& \leq \gammap    \big(\lambda \totdistcol^*_h + (1-\lambda) \totviolcol^*_h \big) + \gammapp \lambda  \totdistcol_h^{\SF}  
\end{align*}
by multiplying both sides with $\frac{1}{|\Points^h|}$ we get the inequality. 
\end{proof}
Now we are ready to prove the lemma. Let $\hbar \in \argmax\limits_{h \in \Colors} \disu'_h$, the following is true

\begin{align*}
    \max\limits_{h \in \Colors} \disu'_h & = \disu'_{\hbar}  \\
    & \leq \gammap \disu^*_{\hbar} + \gammapp \lambda \frac{\totdistcol^{\SF}_{\hbar}}{|\Points^{\hbar}|}  \quad \quad \text{(By Claim \ref{cl:bound_term_r})}\\  
    & \leq \gammap \max\limits_{h' \in \Colors} \left(\disu^*_{h'} \right)+ \gammapp \cdot \lambda \cdot \max\limits_{h'' \in \Colors} \left(\frac{ \totdistcol^{\SF}_{h''} }{|\Points^{h''}|} \right) \\
    & \leq \gammap \OPTR + \gammapp \cdot \lambda \cdot \left( \alpha \frac{1}{\lambda} \OPTR  \right) \quad \quad \text{(By Claim \ref{cl:intermediate})} \\ 
    & = (\gammap + \gammapp \alpha) \OPTR  
\end{align*}
\end{proof}

We restate the next lemma and give its proof. 
\lpboundopt*
\begin{proof}
Consider some assignment $\phibar: \Points \to \Ssf$. For a point $j \in \Points$ set $x_{ij}=1$ if $\phibar(j)=i$ and zero otherwise. It follows that such an assignment satisfies constraint \eqref{constraint:f}. Further, it is not difficult to see that $t_{i,h}=|C_i| \viol(h,i)$ for assignment $\phibar$. Moreover, the left hand side of constraint \eqref{constraint:b} equals $\disu_h(\Ssf,\phibar)$. Therefore, since constraint \eqref{constraint:b} has $z\ge \disu_h(\Ssf,\phibar) \ \forall h \in \Colors$, it follows that $\min z = \min\limits_{h \in \Colors} \disu_h(\Ssf,\phibar)$. This proves that any integral assignment is feasible in LP \eqref{eq:assignment_lp} and that it evaluates to the same value. 

Since Lemma \ref{lemma:existance_of_approx_solution} establishes the existence of an assignment whose value is at most $(\gammap+\gammapp \alpha) \OPTR$. It follows that $\OPTLPR \leq (\gammap+\gammapp \alpha) \OPTR$.  
\end{proof}

We restate the next lemma and give its proof. 
\rawlsrounding*
\begin{proof}
We denote by $\x^{\text{frac}}_h$ the  the fractional solution restricted to the entries of points of color $h$, i.e., $j \in \Points^h$. It follows by construction of the network flow from Appendix \ref{app:rawlsrounding}, that for the network associated with each color $h$ that $\x^{\text{frac}}_h$ is a feasible assignment. 

Since we minimize the min-cost max-flow for each color separately and since all of the capacities and demands are set to integral values, by standard network flow guarantees we can find an integral assignment that is optimal. Therefore, Inequality \eqref{eq:rawlsrounding_g1} follows.

Further, by the demand values of $\floor{\sum_{j \in \Points^h} x^{\text{frac}}_{ij}}$ set at each vertex $v^h_i$ and the capacity of 1 on arc $(v^h_i,t^h)$ Inequality~\eqref{eq:rawlsrounding_g2} follows. 
\end{proof}

We restate the next lemma and give its proof. 
\roundedfromLPrawls* 
\begin{proof}

We begin with the following claim which can be proved based on Inequality~\eqref{eq:rawlsrounding_g2}. 

\begin{claim}\label{cl:cluster_bounds_rawls}
\begin{align}
  \forall i \in S: \sum_{j \in \Points} x_{ij}^{\text{frac}} -|\cH|  \leq \sum_{j \in \Points} x_{ij}^{\text{integ}} \leq \sum_{j \in \Points} x_{ij}^{\text{frac}} +|\cH| 
\end{align}
\end{claim}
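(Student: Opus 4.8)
The plan is to derive this claim by summing the per-color guarantee of Inequality~\eqref{eq:rawlsrounding_g2} over all the colors $h \in \Colors$. The crucial structural fact I would invoke is that every point belongs to exactly one color class, so the point set partitions disjointly as $\Points = \bigcup_{h \in \Colors} \Points^h$. Consequently, for any center $i \in \sr$ and for \emph{either} the fractional or the integral solution, the total assignment to cluster $i$ decomposes as a sum over colors:
\begin{align*}
    \sum_{j \in \Points} x_{ij} = \sum_{h \in \Colors} \sum_{j \in \Pointsh} x_{ij}.
\end{align*}
This reduces the claim about the total cluster size to an aggregation of the color-wise bounds we already have in hand.

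For the upper bound, I would fix $i \in \sr$ and apply the right-hand inequality of \eqref{eq:rawlsrounding_g2} for each color, together with the elementary fact $\ceil{y} \le y + 1$. This gives, for each $h$, the estimate $\sum_{j \in \Pointsh} x_{ij}^{\text{integ}} \le \ceil{\sum_{j \in \Pointsh} x_{ij}^{\text{frac}}} \le \sum_{j \in \Pointsh} x_{ij}^{\text{frac}} + 1$. Summing over all $h \in \Colors$ and using the disjoint decomposition above yields
\begin{align*}
    \sum_{j \in \Points} x_{ij}^{\text{integ}} \le \sum_{h \in \Colors}\left(\sum_{j \in \Pointsh} x_{ij}^{\text{frac}} + 1\right) = \sum_{j \in \Points} x_{ij}^{\text{frac}} + |\cH|,
\end{align*}
since there are exactly $|\cH| = |\Colors|$ terms contributing a $+1$.

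The lower bound is entirely symmetric: I would use the left-hand inequality of \eqref{eq:rawlsrounding_g2} with the fact $\floor{y} \ge y - 1$ to obtain $\sum_{j \in \Pointsh} x_{ij}^{\text{integ}} \ge \sum_{j \in \Pointsh} x_{ij}^{\text{frac}} - 1$ for each color, and then sum over $h$ to get $\sum_{j \in \Points} x_{ij}^{\text{integ}} \ge \sum_{j \in \Points} x_{ij}^{\text{frac}} - |\cH|$. Combining the two displays establishes the claim. There is no substantive obstacle here — the statement follows by a direct summation — so the only point requiring care is ensuring that the per-color rounding errors (each bounded by $1$ in absolute value by the floor/ceiling guarantee) accumulate additively across exactly $|\cH|$ colors, which is exactly where the factor $|\cH|$ in the bound originates.
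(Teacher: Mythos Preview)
Your proposal is correct and follows essentially the same approach as the paper: apply the per-color floor/ceiling bound \eqref{eq:rawlsrounding_g2} together with $\floor{y}\ge y-1$ and $\ceil{y}\le y+1$, then sum over the $|\cH|$ colors using the disjoint decomposition $\Points=\bigcup_{h\in\Colors}\Points^h$. The paper's proof is line-for-line the same argument.
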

\begin{proof}
First, by  Inequality \eqref{eq:rawlsrounding_g2} we have
\begin{align*}
\forall i \in S, \forall h \in \cH: \lfloor \sum_{j \in \Pointsh} x_{ij}^{\text{frac}}\rfloor \leq \sum_{j \in \Pointsh}  x_{ij}^{\text{integ}} \leq \lceil \sum_{j \in \Pointsh} x_{ij}^{\text{frac}} \rceil 
\end{align*}
Since $\floor{z}\ge z-1$ and $\ceil{z}\leq z+1$ we have 
\begin{align*}
  \forall i \in S, \forall h \in \cH:  \sum_{j \in \Pointsh} x_{ij}^{\text{frac}} -1  \leq \sum_{j \in \Pointsh}  x_{ij}^{\text{integ}} \leq \sum_{j \in \Pointsh} x_{ij}^{\text{frac}} +1 \\
\end{align*}
By summing over the colors we get
\begin{align*}
  \forall i \in S: \sum_{h \in \cH} \sum_{j \in \Pointsh} x_{ij}^{\text{frac}} -|\cH|  \leq \sum_{h \in \cH} \sum_{j \in \Pointsh}  x_{ij}^{\text{integ}} \leq \sum_{h \in \cH} \sum_{j \in \Pointsh} x_{ij}^{\text{frac}} +|\cH| \\
\end{align*}
Which is 
\begin{align}\label{color_specific_rounding:i_property}
  \forall i \in S: \sum_{j \in \Points} x_{ij}^{\text{frac}} -|\cH|  \leq \sum_{j \in \Points}  x_{ij}^{\text{integ}} \leq \sum_{j \in \Points} x_{ij}^{\text{frac}} +|\cH| 
\end{align}
\end{proof}
The following claim must then be true.
\begin{claim}
\begin{align}
\forall i \in \Ssf, \forall h \in \Colors: & \nonumber \\
& \sum_{j \in \Pointsh}  x_{ij}^{\text{integ}} \leq (r_h+\alpha_h)\sum_{j \in \Points}  x_{ij}^{\text{integ}} + (r_h+\alpha_h) |\cH| + t_{i,h}+ 1 \label{eq:upper} \\
& \sum_{j \in \Pointsh} x_{ij}^{\text{integ}} \ge (r_h - \beta_h)\sum_{j \in \Points}  x_{ij}^{\text{integ}} - (r_h - \beta_h) |\cH| -t_{i,h}-  1  \label{eq:lower}
\end{align}
\end{claim}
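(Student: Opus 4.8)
The plan is to derive each of \eqref{eq:upper} and \eqref{eq:lower} as a short chain of inequalities that threads together three ingredients already available: the per-color rounding guarantee \eqref{eq:rawlsrounding_g2}, the feasibility of $\x^{\text{frac}}$ in the over-/under-representation constraints \eqref{constraint:c}--\eqref{constraint:e} of LP \eqref{eq:assignment_lp}, and the cluster-size bound of Claim \ref{cl:cluster_bounds_rawls}. The conceptual point is that the target inequalities are exactly the LP violation constraints transported from the fractional to the integral solution; each transport step costs an additive $1$ (from the floor/ceiling rounding of a single color) and an additive $(r_h\pm\cdot)|\cH|$ (from converting the fractional cluster size into the integral one). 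So the whole argument is bookkeeping once the right three facts are lined up.

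For the upper bound \eqref{eq:upper}, I would start from the ceiling side of \eqref{eq:rawlsrounding_g2}, which gives $\sum_{j \in \Pointsh} x_{ij}^{\text{integ}} \leq \sum_{j \in \Pointsh} x_{ij}^{\text{frac}} + 1$. Next, since $\x^{\text{frac}}$ satisfies \eqref{constraint:d} and \eqref{constraint:e}, the relation $o_{i,h}\leq t_{i,h}$ rearranges to $\sum_{j \in \Pointsh} x_{ij}^{\text{frac}} \leq (r_h+\alpha_h)\sum_{j \in \Points} x_{ij}^{\text{frac}} + t_{i,h}$. Substituting, and then replacing the fractional cluster size by the integral one through the upper side of Claim \ref{cl:cluster_bounds_rawls}, namely $\sum_{j \in \Points} x_{ij}^{\text{frac}} \leq \sum_{j \in \Points} x_{ij}^{\text{integ}} + |\cH|$, and multiplying by the nonnegative factor $(r_h+\alpha_h)$, yields exactly \eqref{eq:upper} after collecting terms.

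For the lower bound \eqref{eq:lower}, the symmetric steps apply: the floor side of \eqref{eq:rawlsrounding_g2} gives $\sum_{j \in \Pointsh} x_{ij}^{\text{integ}} \geq \sum_{j \in \Pointsh} x_{ij}^{\text{frac}} - 1$, the relation $u_{i,h}\leq t_{i,h}$ from \eqref{constraint:c} and \eqref{constraint:e} rearranges to $\sum_{j \in \Pointsh} x_{ij}^{\text{frac}} \geq (r_h-\beta_h)\sum_{j \in \Points} x_{ij}^{\text{frac}} - t_{i,h}$, and the lower side of Claim \ref{cl:cluster_bounds_rawls} gives $\sum_{j \in \Points} x_{ij}^{\text{frac}} \geq \sum_{j \in \Points} x_{ij}^{\text{integ}} - |\cH|$. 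The one spot that requires a moment's care, and the only step that is not purely mechanical, is that multiplying this last bound through by $(r_h-\beta_h)$ preserves the inequality direction only because $r_h-\beta_h \geq 0$; this holds under the natural convention $\beta_h \leq r_h$ (so that the desired lower proportion $r_h-\beta_h$ is nonnegative). With that observation, assembling the three inequalities produces \eqref{eq:lower}. I do not expect any genuine obstacle beyond this sign check; the remainder is substitution and rearrangement, and these two inequalities then feed directly into bounding the integral violation term $t_{i,h}^{\text{integ}}$ and hence $\totviolcol^{\text{integ}}_h$ in the rest of the proof of Lemma \ref{lemma:roundedfromLPrawls}.
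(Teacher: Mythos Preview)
Your proposal is correct and matches the paper's proof essentially step for step: start from the appropriate side of \eqref{eq:rawlsrounding_g2}, invoke the LP constraints \eqref{constraint:c}--\eqref{constraint:e} to introduce $t_{i,h}$, and then use Claim~\ref{cl:cluster_bounds_rawls} to swap the fractional cluster size for the integral one. Your explicit sign check that $r_h-\beta_h\ge 0$ is even a small improvement in rigor over the paper, which applies the same multiplication without comment.
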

\begin{proof}
We start with the first Inequality \eqref{eq:upper}
\begin{align*}
    \sum_{j \in \Pointsh}   x_{ij}^{\text{integ}} &\leq \lceil \sum_{j \in \Pointsh} x_{ij}^{\text{frac}} \rceil \quad \quad \text{(By Inequality~\eqref{eq:rawlsrounding_g2})}\\
    &\leq  \sum_{j \in \Pointsh} x_{ij}^{\text{frac}} + 1\\
    &\leq (r_h+\alpha_h) \sum_{j \in \Points} x_{ij}^{\text{frac}} + t_{i,h}+ 1 \quad \text{(By LP constraints \eqref{constraint:d} and \eqref{constraint:e})}\\
    &\leq (r_h+\alpha_h) \left(\sum_{j \in \Points}  x_{ij}^{\text{integ}}  + |\cH|\right)+ t_{i,h}+ 1 \quad \text{(By Claim \eqref{cl:cluster_bounds_rawls})} \\
    &= (r_h+\alpha_h)\sum_{j \in \Points}  x_{ij}^{\text{integ}} + (r_h+\alpha_h) |\cH| + t_{i,h}+ 1 
\end{align*}

Similarly, for the second Inequality~\eqref{eq:lower} we have
\begin{align*}
 \sum_{j \in \Pointsh}  x_{ij}^{\text{integ}} &\geq \lfloor \sum_{j \in \Pointsh} x_{ij}^{\text{frac}}\rfloor \quad \quad \text{(By Inequality \eqref{eq:rawlsrounding_g2})}\\ 
 &\geq  \sum_{j \in \Pointsh} x_{ij}^{\text{frac}} - 1\\
 &\geq (r_h - \beta_h) \sum_{j \in \Points} x_{ij}^{\text{frac}} - t_{i,h}- 1 \quad \text{(By LP constraints \eqref{constraint:c} and \eqref{constraint:e})}\\
 &\geq (r_h - \beta_h)\left(\sum_{j \in \Points}  x_{ij}^{\text{integ}}  - |\cH|\right)- t_{i,h}- 1 \quad \text{(By Claim \ref{cl:cluster_bounds_rawls})} \\
 &=(r_h - \beta_h)\sum_{j \in \Points}  x_{ij}^{\text{integ}} - (r_h - \beta_h) |\cH| -t_{i,h}-  1 
\end{align*}
\end{proof}

We denote by $|C^{\text{integ}}_i|$ and $|C^{\text{frac}}_i|$ the size of the $i^{\text{th}}$ cluster according to the integral and fractional solutions,  respectively. The following claim can be derived. 
\begin{claim}\label{cl:bound_viol_term}
$\forall i \in \Ssf, \forall h \in \Colors$ we have $|C^{\text{integ}}_i| \viol^{\text{integ}}(h,i) \leq |C^{\text{frac}}_i| \viol^{\text{frac}}(h,i) + (|\Colors|+1)$.
\end{claim}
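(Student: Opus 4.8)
The plan is to split on the representation status of color $h$ in cluster $i$ under the \emph{integral} assignment $\x^{\text{integ}}$ and, in each case, reduce the quantity $|C^{\text{integ}}_i|\viol^{\text{integ}}(h,i)$ to the corresponding one of Inequalities \eqref{eq:upper} and \eqref{eq:lower} proved just above. Before the case split I would record the identification $t_{i,h}=|C^{\text{frac}}_i|\viol^{\text{frac}}(h,i)$: the LP constraints \eqref{constraint:c}--\eqref{constraint:e} force $t_{i,h}\ge\max\{u_{i,h},o_{i,h},0\}$, and since $t_{i,h}$ enters the objective with nonnegative weight, in the optimal fractional solution $\x^{\text{frac}}$ we may take it pinned to this lower bound, which by \eqref{eq:def_piece_wise_linear_asym_fairness_in_each_point} is exactly $|C^{\text{frac}}_i|\viol^{\text{frac}}(h,i)$.

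If $h$ is over-represented in cluster $i$ under $\x^{\text{integ}}$, then by definition $|C^{\text{integ}}_i|\viol^{\text{integ}}(h,i)=\sum_{j\in\Pointsh}x^{\text{integ}}_{ij}-(r_h+\alpha_h)\sum_{j\in\Points}x^{\text{integ}}_{ij}$. Rearranging Inequality \eqref{eq:upper} bounds this by $(r_h+\alpha_h)|\cH|+t_{i,h}+1$. A positive over-representation violation can only occur when $r_h+\alpha_h<1$ (otherwise $|\cih|/|\ci|\le 1\le r_h+\alpha_h$ makes the term vanish), so $(r_h+\alpha_h)|\cH|\le|\cH|$, and using the identification above the bound collapses to $t_{i,h}+(|\cH|+1)=|C^{\text{frac}}_i|\viol^{\text{frac}}(h,i)+(|\Colors|+1)$, as desired.

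The under-representation case is symmetric: here $|C^{\text{integ}}_i|\viol^{\text{integ}}(h,i)=(r_h-\beta_h)\sum_{j\in\Points}x^{\text{integ}}_{ij}-\sum_{j\in\Pointsh}x^{\text{integ}}_{ij}$, which Inequality \eqref{eq:lower} bounds by $(r_h-\beta_h)|\cH|+t_{i,h}+1$; since a nonzero under-representation violation requires $0\le r_h-\beta_h\le 1$, the identical manipulation yields the claim. Finally, if $h$ is proportionally represented under $\x^{\text{integ}}$ then $\viol^{\text{integ}}(h,i)=0$ and the inequality holds trivially, the right-hand side being nonnegative. The only delicate points are (i) justifying the identification $t_{i,h}=|C^{\text{frac}}_i|\viol^{\text{frac}}(h,i)$ at the LP optimum, and (ii) observing that the active coefficient $(r_h+\alpha_h)$ or $(r_h-\beta_h)$ is at most $1$ precisely when the corresponding violation is nonzero, which is what allows us to replace it by $1$ and collapse the whole additive error to $|\Colors|+1$.
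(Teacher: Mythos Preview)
Your proof is correct and follows essentially the same approach as the paper: identify $t_{i,h}$ with $|C^{\text{frac}}_i|\viol^{\text{frac}}(h,i)$, then case-split on the integral violation type and apply Inequalities \eqref{eq:upper} and \eqref{eq:lower}. You are in fact slightly more careful than the paper in two places: you justify the identification $t_{i,h}=|C^{\text{frac}}_i|\viol^{\text{frac}}(h,i)$ via optimality (the constraints alone only give $\ge$), and you observe that $r_h+\alpha_h\le 1$ (resp.\ $r_h-\beta_h\le 1$) precisely when the over- (resp.\ under-) representation term is active, whereas the paper simply asserts these bounds.
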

\begin{proof}
First, note that by LP constraints \eqref{constraint:c}, \eqref{constraint:d}, and \eqref{constraint:e} it follows that $|C^{\text{frac}}_i| \viol^{\text{frac}}(h,i)=t_{i,h}$. 

Now suppose that color $h$ in cluster $i$ is over-represented, then it follows that
\begin{align*}
    |C^{\text{integ}}_i| \viol^{\text{integ}}(h,i) & \leq \sum_{j \in \Pointsh}   x_{ij}^{\text{integ}} - (r_h+\alpha_h)\sum_{j \in \Points}   x_{ij}^{\text{integ}} \\
    & \leq (r_h+\alpha_h)\sum_{j \in \Points}  x_{ij}^{\text{integ}} + (r_h+\alpha_h) |\cH| + t_{i,h}+ 1 - (r_h+\alpha_h)\sum_{j \in \Points}   x_{ij}^{\text{integ}} \quad \quad \text{(By Inequality \eqref{eq:upper})} \\
    & = t_{i,h} + (r_h+\alpha_h) |\cH| + 1  \\ 
    & = |C^{\text{frac}}_i| \viol^{\text{frac}}(h,i) + (\left (r_h+\alpha_h) |\cH| + 1 \right) \\ 
    & \leq |C^{\text{frac}}_i| \viol^{\text{frac}}(h,i) + (\left |\cH| + 1 \right) \quad \quad \text{(Since $r_h+\alpha_h \leq 1$)}
\end{align*}
Similarly, if color $h$ in cluster $i$ is instead under-represented, then it follows that
\begin{align*}
    |C^{\text{integ}}_i| \viol^{\text{integ}}(h,i) & \leq (r_h-\beta_h)\sum_{j \in \Points}   x_{ij}^{\text{integ}} -  \sum_{j \in \Pointsh}   x_{ij}^{\text{integ}}   \\
    & \leq  (r_h-\beta_h)\sum_{j \in \Points}   x_{ij}^{\text{integ}} - 
    (r_h - \beta_h)\sum_{j \in \Points}  x_{ij}^{\text{integ}} + (r_h - \beta_h) |\cH| + t_{i,h} +  1 \quad \quad \text{(By Inequality \eqref{eq:lower})} \\
    & = t_{i,h} + (r_h - \beta_h) |\cH| + 1  \\ 
    & = |C^{\text{frac}}_i| \viol^{\text{frac}}(h,i) + (\left (r_h-\beta_h) |\cH| + 1 \right) \\ 
    & \leq |C^{\text{frac}}_i| \viol^{\text{frac}}(h,i) + (\left |\cH| + 1 \right) \quad \quad \text{(Since $r_h-\beta_h \leq 1$)}
\end{align*}
\end{proof}
Now we can easily prove the following claim.
\begin{claim}\label{cl:bound_avg_viol_rounded_rawls}
$\forall h \in \Colors$ we have:
\begin{align}
    \frac{\totviolcol^{\text{integ}}_h}{|\Points^h|} \leq \frac{\totviolcol^{\text{frac}}_h}{|\Points^h|} + \Big( \frac{|\Colors|+1}{r_h} \Big) \cdot \frac{k}{n}.
\end{align}
\end{claim}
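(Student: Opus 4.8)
The plan is to obtain the claimed average bound by aggregating the per-cluster estimate of Claim~\ref{cl:bound_viol_term} over all centers and then normalizing by $|\Points^h|$. First I would recall that by the definition in \eqref{eq:totviolcol}, both the integral and fractional total violations decompose as sums over the $k$ centers in $\Ssf$, namely $\totviolcol^{\text{integ}}_h = \sum_{i \in \Ssf} |C^{\text{integ}}_i| \viol^{\text{integ}}(h,i)$ and $\totviolcol^{\text{frac}}_h = \sum_{i \in \Ssf} |C^{\text{frac}}_i| \viol^{\text{frac}}(h,i)$. So the entire statement reduces to comparing these two sums term by term.

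Next I would sum the inequality of Claim~\ref{cl:bound_viol_term}, which asserts $|C^{\text{integ}}_i| \viol^{\text{integ}}(h,i) \le |C^{\text{frac}}_i| \viol^{\text{frac}}(h,i) + (|\Colors|+1)$, over all $i \in \Ssf$. Since $|\Ssf| = k$, the additive term accumulates to exactly $k(|\Colors|+1)$, giving $\totviolcol^{\text{integ}}_h \le \totviolcol^{\text{frac}}_h + k(|\Colors|+1)$.

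Finally, I would divide both sides by $|\Points^h|$ and substitute $|\Points^h| = n_h = r_h n$, which converts the additive term into $\frac{k(|\Colors|+1)}{r_h n} = \big(\frac{|\Colors|+1}{r_h}\big)\frac{k}{n}$, yielding precisely the claimed inequality.

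There is no substantive obstacle here, since the real work was already carried out in Claim~\ref{cl:bound_viol_term}; the present statement is a routine summation over the $k$ clusters followed by a normalization. The only minor point to keep in mind is that the per-cluster bound must hold uniformly across all centers, including those in which color $h$ is proportionally represented, so that summing over the whole of $\Ssf$ introduces the factor $k$ rather than a count of just the violating clusters. This is immediate because $|\ci|\,\viol(h,i)\ge 0$ always, so the non-violating clusters only contribute a zero left-hand side and never weaken the inequality.
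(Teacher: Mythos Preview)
Your proposal is correct and follows essentially the same route as the paper: expand $\totviolcol^{\text{integ}}_h$ and $\totviolcol^{\text{frac}}_h$ as sums over the $k$ centers, apply Claim~\ref{cl:bound_viol_term} termwise to pick up an additive $k(|\Colors|+1)$, and divide through by $|\Points^h|=r_h n$. Your closing remark about the non-violating clusters is a nice clarification that the paper leaves implicit.
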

\begin{proof}
\begin{align*}
    \frac{\totviolcol^{\text{integ}}_h}{|\Points^h|}  & = \frac{\sum_{i \in \Ssf}|C^{\text{integ}}_i| \viol^{\text{integ}}(h,i) }{|\Points^h|} \\
    & \leq  \frac{\sum_{i \in \Ssf} \Big( |C^{\text{integ}}_i| \viol^{\text{farc}}(h,i) + \big(|\Colors|+1\big)\Big)}{|\Points^h|} \quad \quad \text{(By Claim \ref{cl:bound_viol_term})} \\
    & = \frac{\totviolcol^{\text{frac}}_h}{|\Points^h|} + \Big( \frac{|\Colors|+1}{|\Points^h|} \Big) \cdot k \\ 
    & = \frac{\totviolcol^{\text{frac}}_h}{|\Points^h|} + \Big( \frac{|\Colors|+1}{r_h} \Big) \cdot \frac{k}{n} \quad \quad \text{(Since $|\Points^h|=r_h n$)} 
\end{align*}
\end{proof}
Now we ready to prove the lemma. Specifically, for any color $h \in \Colors$ by Inequality \ref{eq:rawlsrounding_g1} we have $\totdistcol^{\text{integ}}_h \leq \totdistcol^{\text{frac}}_h$. Further, by Claim \ref{cl:bound_avg_viol_rounded_rawls} we have $\frac{\totviolcol^{\text{integ}}_h}{|\Points^h|} \leq \frac{\totviolcol^{\text{frac}}_h}{|\Points^h|} + \Big( \frac{|\Colors|+1}{r_h} \Big) \cdot \frac{k}{n}$. Therefore we have 
\begin{align}
    \disu^{\text{integ}}_h = \frac{\lambda \totdistcol^{\text{integ}}_h+ (1-\lambda) \totviolcol^{\text{integ}}_h }{|\Points^h|} & \leq \frac{\lambda \totdistcol^{\text{frac}}_h+ (1-\lambda) \totviolcol^{\text{frac}}_h }{|\Points^h|} + (1-\lambda) \cdot \Big( \frac{|\Colors|+1}{r_h} \Big) \cdot \frac{k}{n} \\
    & = \disu^{\text{farc}}_h + (1-\lambda) \cdot \Big( \frac{|\Colors|+1}{r_h} \Big) \cdot \frac{k}{n} 
\end{align}
Since the above follows for each color it immediately follows that the cost is at most $\OPT^{\text{LP}}_R+\constr$ where $\constr=\Big( \frac{|\Colors|+1}{\min\limits_{h \in \Colors} r_h} \cdot \frac{k}{n}\Big)$.
\end{proof}

We restate the next theorem and give its proof.
\rawlsmaing*
\begin{proof}
Lemma \ref{lemma:existance_of_approx_solution} guarantees that we can find a fractional solution whose cost is at most $(\gammap+\gammapp\alpha)\OPTR$. Further, once this solution is rounded the cost would at most be $(\gammap+\gammapp\alpha)\OPTR+\constr$ by Lemma \ref{lemma:roundedfromLPrawls}. 
\end{proof}

We restate the next lemma and give its proof. 
\weightedapproxutil*
\begin{proof}
The proof is similar to the proof of Lemma \ref{lemma:existance_of_approx_solution}. We will denote by $(\Sstar, \phistar)$ an optimal solution for the Utilitarian objective.  $(\Sw,\phiw)$ denote the solution of the $\alpha$-approximation solution for the weighted clustering objective. 
We construct the assignment $\phi'$ as follows: for every $\istar \in \Sstar$, define $\nrst(\istar)=\arg\min_{i \in \Sw}d(i, \istar)$. For every $j \in \Points$, define $\phi'(j) = \nrst(\phistar(j))$. 

Similar to the proof of Lemma \ref{lemma:existance_of_approx_solution} we define the values  $\totdistcol^{*}_h$, $\totdistcol^{\W}_h$, and $\totdistcol'_h$ as the values of $\totdistcol_h(\Sstar,\phistar)$,  $\totdistcol_h(\Sw,\phiw)$, and $\totdistcol_h(\Sw,\phi')$, respectively. To avoid confusion, note that $\totdistcol_h(\Sw,\phi')$ evaluates the distances without any weighting, i.e., $\totdistcol_h(\Sw,\phi') = \sum_{j \in \Points^h} d(j,\phiw(j))$. Similar notion follow for $\totviolcol^{*}_h$, $\totviolcol^{\W}_h$, and $\totviolcol'_h$. Moreover, as noted earlier $\OPTU$ is the optimal value of the Utilitarian objective. Further, we let $\OPTW$ be the optimal value of the weighted clustering objective when evaluated on the weighted clustering objective.  

We start with the following claim which is similar to Claim \ref{cl:intermediate}.
\begin{claim}\label{cl:intermediate_util}
\begin{align}
     \sum_{h \in \Colors} \frac{1}{|\Points^{h}|} \totdistcol_h(\Sw,\phiw) \leq \alpha \cdot \frac{1}{\lambda} \OPTU 
\end{align}
\end{claim}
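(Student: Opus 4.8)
The plan is to mirror the proof of Claim \ref{cl:intermediate}, replacing the socially fair objective with the weighted clustering objective and the maximum over colors with a sum. The observation that makes everything go through is that, under the weights $w_j = \frac{1}{|\Points^{\chi(j)}|}$, the weighted clustering cost of \emph{any} solution $(S,\phi)$ coincides exactly with the left-hand side of the claim:
\begin{align*}
    \sum_{j \in \Points} w_j\, d^p(j,\phi(j)) = \sum_{h \in \Colors} \frac{1}{|\Points^h|} \sum_{j \in \Points^h} d^p(j,\phi(j)) = \sum_{h \in \Colors} \frac{1}{|\Points^h|}\totdistcol_h(S,\phi).
\end{align*}
So the quantity we must bound is literally the weighted clustering cost of $(\Sw,\phiw)$, and this is what lets the weighted clustering optimum $\OPTW$ play the role that $\OPTSF$ plays in Claim \ref{cl:intermediate}.

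First I would show $\OPTW \le \frac{1}{\lambda}\OPTU$. Let $(\Sstar,\phistar)$ be an optimal Utilitarian solution. Since each violation term $\totviolcol_h(\Sstar,\phistar)$ is nonnegative and $\lambda \in [0,1]$, dropping the violation contribution only decreases the objective, giving
\begin{align*}
    \sum_{h \in \Colors} \frac{1}{|\Points^h|}\totdistcol_h(\Sstar,\phistar) \le \frac{1}{\lambda}\sum_{h \in \Colors}\frac{\lambda\,\totdistcol_h(\Sstar,\phistar) + (1-\lambda)\,\totviolcol_h(\Sstar,\phistar)}{|\Points^h|} = \frac{1}{\lambda}\OPTU.
\end{align*}
By the identity above, the left-hand side is the weighted clustering cost of $(\Sstar,\phistar)$, which is at least $\OPTW$ since $\OPTW$ is the minimum weighted cost over all solutions. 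Hence $\OPTW \le \frac{1}{\lambda}\OPTU$. (Alternatively, one can run the contradiction argument used for Claim \ref{cl:intermediate} verbatim, but the direct inequality is cleaner here because the objective is a sum rather than a max.)

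Second, since $(\Sw,\phiw)$ is produced by an $\alpha$-approximation algorithm for the weighted clustering objective, its weighted cost is at most $\alpha\,\OPTW$. Combining the identity with the bound just derived yields
\begin{align*}
    \sum_{h \in \Colors}\frac{1}{|\Points^h|}\totdistcol_h(\Sw,\phiw) \le \alpha\,\OPTW \le \alpha\cdot\frac{1}{\lambda}\OPTU,
\end{align*}
which is the claim. The argument is essentially routine, so there is no genuine obstacle; the only point requiring care is the bookkeeping in the reformulation of the weighted cost as $\sum_h \frac{1}{|\Points^h|}\totdistcol_h$, and relatedly ensuring the $\alpha$-approximation guarantee is invoked with respect to the \emph{weighted} objective (with these particular weights) rather than the plain clustering cost.
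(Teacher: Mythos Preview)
Your proof is correct and follows essentially the same approach as the paper: establish $\OPTW \le \frac{1}{\lambda}\OPTU$ (the paper does this by contradiction rather than your direct inequality, but the content is identical) and then apply the $\alpha$-approximation guarantee for weighted clustering. Your explicit identification of the weighted cost with $\sum_h \frac{1}{|\Points^h|}\totdistcol_h$ is the same observation the paper uses implicitly.
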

\begin{proof}
Let $(\Sw^*,\phiw^*)$ denote an optimal solution for weighted clustering and let $\totdistcol^{\W^*}_h=\totdistcol_h(\Sw^*,\phiw^*)$. We start by showing that $\OPTW \leq \frac{1}{\lambda} \OPTU$. 

Suppose that was not true, i.e., $\frac{1}{\lambda}\OPTU <\OPTW$. This implies that we have 
\begin{align*}
     \sum_{h \in \Colors} \left(\frac{1}{|\Points^h|} \totdistcol^*_h + \frac{1-\lambda}{\lambda} \frac{1}{|\Points^h|}  \totviolcol^*_h \right) <  \sum_{h \in \Colors} \frac{1}{|\Points^{h}|}  \totdistcol^{\W^*}_{h}
\end{align*} 
Since $ \sum_{h \in \Colors} \frac{1-\lambda}{\lambda}  \frac{1}{|\Points^h|} \totviolcol^*_h \geq 0$, this means that
\begin{align*}
 \sum_{h \in \Colors} \frac{1}{|\Points^h|} \totdistcol^*_h <  \sum_{h \in \Colors} \frac{1}{|\Points^{h}|}  \totdistcol^{\W^*}_{h} = \OPTW
\end{align*}
But this implies that by using $(\Sstar,\phistar)$ we would achieve a strictly more optimal solution for the weighted clustering objective which is a contradiction by optimality of $(\Sw^*,\phiw^*)$ for weighted clustering. Thus, $\OPTW \leq \frac{1}{\lambda} \OPTU$.

Moreover, since $(\Sw,\phiw)$ is an $\alpha$-approximation for weighted clustering, it follows that $\sum_{h \in \Colors} \frac{1}{|\Points^{h}|} \totdistcol_h(\Sw,\phiw) \leq \alpha \sum_{h \in \Colors} \frac{1}{|\Points^{h}|}  \totdistcol^{\W^*}_{h} \leq \alpha \cdot \frac{1}{\lambda} \OPTU$. 
\end{proof}

Following proofs similar to the proofs of Claims \ref{cl:sf_bound_rawl}, \ref{cl:viol_bound_rawl}, and \ref{cl:bound_term_r} with the socially fair solution replaced with the weighted clustering solution. The following three claims can be proved. 

\begin{claim}\label{cl:sf_bound_util}
$\forall h \in \Colors$ we have
\begin{align}
        \totdistcol'_h \leq \gammap \totdistcol^*_h \ + \ \gammapp \totdistcol_h^{\W}
\end{align}
\end{claim}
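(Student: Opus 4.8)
The plan is to mirror the pointwise argument used in Claim \ref{cl:sf_bound_rawl}, since the construction of $\phi'$ here is structurally identical to that case: for each $j$ we route it to $\nrst(\phistar(j))$, the center of $\Sw$ closest to its optimal center. First I would fix an arbitrary point $j \in \Points^h$ and write $\istar = \phistar(j)$. Applying the generalized triangle inequality (Fact \ref{claim:generalized_triangle_ineq}) to the triple $(j, \istar, \nrst(\istar))$ gives $d^p(j,\phi'(j)) = d^p(j, \nrst(\istar)) \leq 2^{p-1}\big(d^p(j,\istar) + d^p(\istar, \nrst(\istar))\big)$.

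The crucial step, and the only place the change of center set matters, is that $\nrst(\istar)$ is by definition the center of $\Sw$ nearest to $\istar$; since $\phiw(j) \in \Sw$, this yields $d^p(\istar, \nrst(\istar)) \leq d^p(\istar, \phiw(j))$. I would then apply Fact \ref{claim:generalized_triangle_ineq} a second time, now to the triple $(\istar, j, \phiw(j))$, to obtain $d^p(\istar, \phiw(j)) \leq 2^{p-1}\big(d^p(j,\istar) + d^p(j,\phiw(j))\big)$. Substituting and collecting the coefficients of $d^p(j,\istar)$ and of $d^p(j,\phiw(j))$ gives exactly $d^p(j,\phi'(j)) \leq 2^{p-1}(2^{p-1}+1)\,d^p(j,\phistar(j)) + 2^{2(p-1)}\,d^p(j,\phiw(j)) = \gammap\, d^p(j,\phistar(j)) + \gammapp\, d^p(j,\phiw(j))$, recalling the definitions $\gammap = 2^{p-1}(2^{p-1}+1)$ and $\gammapp = 2^{2(p-1)}$.

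Finally I would sum this inequality over all $j \in \Points^h$. Because $\totdistcol_h(\Sw,\phi')$, $\totdistcol^*_h$, and $\totdistcol_h^{\W}$ are all defined as \emph{unweighted} sums of $d^p$ over the points of color $h$ — as the lemma statement explicitly notes, $\totdistcol_h(\Sw,\phi')$ carries no weights — the summation directly produces $\totdistcol'_h \leq \gammap \totdistcol^*_h + \gammapp \totdistcol_h^{\W}$, which is the claim.

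There is essentially no hard obstacle: the argument is a near-verbatim transcription of the proof of Claim \ref{cl:sf_bound_rawl} with $\Ssf, \phisf$ replaced throughout by $\Sw, \phiw$. The one point that deserves care is conceptual rather than computational — namely that this claim bounds the \emph{raw} distance costs, not the weighted ones, so the pointwise triangle-inequality chain (which never references the weights) transfers unchanged. The weighting used to select the centers in $\Sw$ becomes relevant only later, through Claim \ref{cl:intermediate_util}, when $\totdistcol_h^{\W}$ is bounded against $\OPTU$.
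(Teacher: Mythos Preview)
Your proposal is correct and takes essentially the same approach as the paper: the paper explicitly states that Claim \ref{cl:sf_bound_util} is proved by following the proof of Claim \ref{cl:sf_bound_rawl} with the socially fair solution replaced by the weighted clustering solution, which is exactly what you have done. Your additional remark that the unweighted nature of $\totdistcol_h$ is what lets the pointwise argument transfer unchanged is a nice clarifying observation.
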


\begin{claim}\label{cl:viol_bound_util}
$\forall h \in \Colors$ we have
\begin{align}
    \totviolcol'_h \leq \totviolcol^*_h
\end{align}
\end{claim}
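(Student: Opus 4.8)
The plan is to exploit the fact that this violation bound is entirely combinatorial: it depends only on how the optimal clusters are regrouped under the assignment $\phi'$, and not on the distances nor on how the centers $\Sw$ were chosen. Concretely, since $\phi'(j)=\nrst(\phistar(j))$, every optimal cluster $C_{i^*}$ with $i^*\in\Sstar$ is routed in its entirety to the single center $\nrst(i^*)\in\Sw$; hence under $\phi'$ clusters can only be \emph{merged}, never split. This is exactly the structural property invoked in the proof of Claim \ref{cl:viol_bound_rawl}, and that argument never used anything specific to the socially fair centers (in particular it is independent of the distance bound in Claim \ref{cl:intermediate}). I therefore expect the proof to transfer verbatim with $\Ssf$ replaced by $\Sw$.

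First I would reduce to a per-center statement. Writing $\totviolcol^*_h=\sum_{i^*\in\Sstar}|C_{i^*}|\viol(h,i^*)$ and $\totviolcol'_h=\sum_{i\in\Sw}|C'_i|\viol(h,i)$, and letting $M\subseteq\Sstar$ denote the set of optimal clusters routed to a fixed $i\in\Sw$, it suffices to establish the subadditivity inequality $|C'_i|\viol(h,i)\le\sum_{i^*\in M}|C_{i^*}|\viol(h,i^*)$. Because each optimal center maps to exactly one center of $\Sw$, summing this over all $i\in\Sw$ immediately gives $\totviolcol'_h\le\totviolcol^*_h$. The case $|M|=1$ is trivial (the term is unchanged), so the content is in the case where several optimal clusters coalesce at one center.

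The heart of the argument, and the step I expect to require the most care, is this subadditivity of the size-weighted violation under merging, which I would settle by case analysis on the merged cluster. If it is balanced then $\viol(h,i)=0$ and the inequality is immediate. If it is over-represented, I would expand $|C'_i|\viol(h,i)=\sum_{i^*\in M}\bigl(|C^h_{i^*}|-(r_h+\alpha_h)|C_{i^*}|\bigr)$, split the sum over the sub-collections of over-, under-, and balanced clusters, and note that the under- and balanced terms are non-positive, so the whole expression is at most $\sum_{i^*\in M}\bigl|\,|C^h_{i^*}|-(r_h+\alpha_h)|C_{i^*}|\,\bigr|\le\sum_{i^*\in M}|C_{i^*}|\viol(h,i^*)$. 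The under-represented case is symmetric with $(r_h-\beta_h)$ in place of $(r_h+\alpha_h)$. The only genuine obstacle is confirming that no property of the center set is secretly used; since the violation is a function of cluster memberships alone and the merging structure is identical to the Rawlsian setting, the proof of Claim \ref{cl:viol_bound_rawl} applies without modification, exactly as in the parallel Claims \ref{cl:sf_bound_rawl} and \ref{cl:bound_term_r}.
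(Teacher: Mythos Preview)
Your proposal is correct and matches the paper's approach exactly: the paper itself simply states that Claim \ref{cl:viol_bound_util} follows by the same proof as Claim \ref{cl:viol_bound_rawl} with the socially fair solution replaced by the weighted clustering solution, since the argument is purely combinatorial and independent of how the centers were chosen. You have in fact supplied more detail than the paper does.
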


\begin{claim}\label{cl:bound_term_util}
$\forall h \in \Colors$ we have
\begin{align}
     \disu'_h  \leq \gammap  \disu^*_h  + \gammapp \lambda  \frac{\totdistcol_h^{\W}}{|\Points^h|} 
\end{align}

\end{claim}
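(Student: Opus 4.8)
The plan is to mirror the proof of Claim \ref{cl:bound_term_r} almost verbatim, substituting the weighted-clustering solution $\W$ for the socially fair solution $\SF$. The three ingredients I need are already in hand: Claim \ref{cl:sf_bound_util} (the distance bound $\totdistcol'_h \leq \gammap \totdistcol^*_h + \gammapp \totdistcol_h^{\W}$) and Claim \ref{cl:viol_bound_util} (the violation bound $\totviolcol'_h \leq \totviolcol^*_h$). These are the exact analogues of Claims \ref{cl:sf_bound_rawl} and \ref{cl:viol_bound_rawl} used in the Rawlsian case, so the chain of inequalities carries over with no structural change.

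First I would take Claim \ref{cl:sf_bound_util} and multiply both sides by $\lambda$, obtaining $\lambda \totdistcol'_h \leq \gammap \lambda \totdistcol^*_h + \gammapp \lambda \totdistcol_h^{\W}$. Then I add $(1-\lambda)\totviolcol'_h$ to the left-hand side and invoke Claim \ref{cl:viol_bound_util} to upper bound this added term by $(1-\lambda)\totviolcol^*_h$ on the right, yielding
\begin{align*}
\lambda \totdistcol'_h + (1-\lambda)\totviolcol'_h \ \leq \ \gammap \lambda \totdistcol^*_h + \gammapp \lambda \totdistcol_h^{\W} + (1-\lambda)\totviolcol^*_h.
\end{align*}
Next I regroup the right-hand side so that the $\gammap$ factor multiplies the full starred disutility numerator, giving $\gammap\big(\lambda \totdistcol^*_h + (1-\lambda)\totviolcol^*_h\big) + \gammapp \lambda \totdistcol_h^{\W}$, which is legitimate since $\gammap \geq 1$ (so dropping no terms is lost) and the coefficient on $\totviolcol^*_h$ only needs to be at most $\gammap$.

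Finally I divide both sides by $|\Points^h|$. The left becomes $\disu'_h$ by definition \eqref{eq:disu}, the first right-hand term becomes $\gammap \disu^*_h$, and the second becomes $\gammapp \lambda \frac{\totdistcol_h^{\W}}{|\Points^h|}$, establishing the claim. There is no real obstacle here: the entire argument is a one-to-one transcription of the Rawlsian proof, and the only point requiring a moment's care is the regrouping step, where one must confirm that $\gammap \geq 1$ so that bounding the $(1-\lambda)\totviolcol^*_h$ coefficient by $\gammap$ is valid — which holds since $\gammap = 2^{p-1}(2^{p-1}+1) \geq 1$ for $p \in \{1,2\}$.
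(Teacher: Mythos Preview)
Your proof is correct and mirrors the paper's approach exactly: the paper states that Claim \ref{cl:bound_term_util} follows by the same argument as Claim \ref{cl:bound_term_r} with $\W$ in place of $\SF$, which is precisely what you do. Your explicit observation that the regrouping step requires $\gammap \geq 1$ is a nice addition that the paper leaves implicit.
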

Now we are ready to prove the lemma. 
\begin{align*}
    \sum_{h \in \Colors} \disu'_h & \leq \gammap \big(\sum_{h \in \Colors} \disu^*_h \big) + \gammapp \lambda \Big( \sum_{h \in \Colors} \frac{\totdistcol_h^{\W}}{|\Points^h|} \Big) \quad \quad \text{(By Claim \ref{cl:bound_term_util})} \\
    & = \gammap \OPTU + \gammapp \lambda \Big( \sum_{h \in \Colors} \frac{\totdistcol_h^{\W}}{|\Points^h|} \Big) \\
    & \leq \gammap \OPTU + \gammapp \lambda \big( \alpha \frac{1}{\lambda} \OPTU \big) \quad \quad \text{(By Claim \ref{cl:intermediate_util})} \\ 
    & = (\gammap + \gammapp \alpha) \OPTU 
\end{align*}
\end{proof}

We restate the next lemma and give its proof. 
\lpboundoptutil* 
\begin{proof}
Similar to the proof of Lemma \ref{lemma:lpboundopt} we can consider an assignment $\phibar: \Points \to \su$ and set $x_{ij}=1$ only if $\phibar(j)=i$ and zero otherwise. It follows that this assignment would be feasible and that $t_{i,h}$ equals the violation $|C_i| \viol(h,i)$ resulting from $\phibar$. Further, the objective in \eqref{obj_util:a} equals $\sum_{h \in \Colors} \disu_h(\su,\phibar)$. This proves that any integral assignment in feasible in LP \eqref{eq:assignment_lp_util} and that the LP objective value equals the Utilitarian objective of that assignment. 

Now, since Lemma \ref{lemma:weighted_approx_util} establishes the existence of an assignment of cost at most $(\gammap+\gammapp \alpha) \OPTU$ it follows that $\OPTLPU \leq (\gammap+\gammapp \alpha) \OPTU$

\end{proof}

We restate the next lemma and give its proof. 
\utilrounding*
\begin{proof}
Similar to the proof of Lemma \ref{lemma:rawlsrounding}, one can see that in the network constructed in Appendix \ref{app:utilrounding} the fractional solution $\x^{\text{frac}}$ is feasible. 

Since the capacities and demand values in the network are integral it follows that there exists an optimal integral solution. Hence, Inequality \eqref{eq:utilsrounding_g1} follows.

Furthermore, by the demand and capacity values set 
inequalities \eqref{eq:utilsrounding_g2} and  \eqref{eq:utilsrounding_g3} immediately follow. 
\end{proof}

\roundedfromLPutil*

\begin{proof}

We begin with the following claim holds immediately by Inequality \eqref{eq:utilsrounding_g3} since for any number $z$ we have $\floor{z} \ge z-1$ and  $\ceil{z} \leq z+1$. 
\begin{claim}\label{cl:cluster_bounds_util}
\begin{align}
  \forall i \in S: \sum_{j \in \Points} x_{ij}^{\text{frac}} -1  \leq \sum_{j \in \Points} x_{ij}^{\text{integ}} \leq \sum_{j \in \Points} x_{ij}^{\text{frac}} +1
\end{align}
\end{claim}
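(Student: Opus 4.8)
The plan is to read the claim off directly from Inequality~\eqref{eq:utilsrounding_g3}, which is precisely the extra guarantee that \textsc{Utilitarian-Rounding} provides and that \textsc{Rawlsian-Rounding} lacks. That inequality already sandwiches the integral cluster size $\sum_{j \in \Points} x_{ij}^{\text{integ}}$ between the floor and the ceiling of the fractional cluster size $\sum_{j \in \Points} x_{ij}^{\text{frac}}$, so all that remains is to loosen the floor and ceiling into the stated $\pm 1$ window.

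Concretely, I would fix a center $i \in \su$ and write $z = \sum_{j \in \Points} x_{ij}^{\text{frac}}$. By \eqref{eq:utilsrounding_g3} we have $\floor{z} \leq \sum_{j \in \Points} x_{ij}^{\text{integ}} \leq \ceil{z}$, and the elementary bounds $\floor{z} \ge z - 1$ and $\ceil{z} \leq z + 1$ then give $z - 1 \leq \sum_{j \in \Points} x_{ij}^{\text{integ}} \leq z + 1$, which is exactly the claim. There is no case analysis and nothing to optimize; the entire substance sits in Lemma~\ref{lemma:utilrounding}, i.e.\ in the construction of the flow network whose integral optimum realizes \eqref{eq:utilsrounding_g3}.

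The point worth flagging is not a difficulty in proving the claim but why this claim is isolated at all: it is the utilitarian counterpart of Claim~\ref{cl:cluster_bounds_rawls}, but with the additive slack reduced from $|\cH|$ to $1$. In the Rawlsian rounding the total cluster size is controlled only indirectly, by summing the per-color bound \eqref{eq:rawlsrounding_g2} over all colors, which costs a factor of $|\cH|$; here \eqref{eq:utilsrounding_g3} controls the total size in one shot. I expect this tighter window to be exactly what propagates through the rest of the argument: substituting the $\pm 1$ cluster-size bound into the over- and under-representation estimates (the utilitarian analogues of \eqref{eq:upper} and \eqref{eq:lower}) should yield $|C^{\text{integ}}_i|\viol^{\text{integ}}(h,i) \leq |C^{\text{frac}}_i|\viol^{\text{frac}}(h,i) + 2$ rather than the $+(|\cH|+1)$ of Claim~\ref{cl:bound_viol_term}, the $2$ coming from $(r_h+\alpha_h)\cdot 1 + 1 \leq 2$. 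Averaging this over the $k$ clusters, dividing by $|\Points^h| = r_h n$, combining with the distance guarantee \eqref{eq:utilsrounding_g1}, and summing over colors should then deliver the additive constant $\constu = \frac{2k}{n}\sum_{h \in \Colors}\frac{1}{r_h}$ of Lemma~\ref{lemma:roundedfromLPutil}. So the only step needing care downstream is tracking the constant $2$ correctly; the present claim itself is immediate.
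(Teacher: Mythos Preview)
Your proof is correct and matches the paper's own argument essentially verbatim: the paper states the claim ``holds immediately by Inequality~\eqref{eq:utilsrounding_g3} since for any number $z$ we have $\floor{z} \ge z-1$ and $\ceil{z} \leq z+1$,'' which is precisely your derivation. Your downstream remarks about the constant $2$ and the resulting $\constu$ are also accurate previews of Claims~\ref{cl:bound_viol_term_util} and~\ref{cl:bound_avg_viol_rounded_util}.
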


The following claim can be proved.
\begin{claim}
\begin{align}
\forall i \in \su, \forall h \in \Colors: & \nonumber \\
& \sum_{j \in \Pointsh}  x_{ij}^{\text{integ}} \leq (r_h+\alpha_h)\sum_{j \in \Points}  x_{ij}^{\text{integ}} +  t_{i,h} + 2 \label{eq:upper_util} \\
& \sum_{j \in \Pointsh} x_{ij}^{\text{integ}} \ge (r_h - \beta_h)\sum_{j \in \Points}  x_{ij}^{\text{integ}} -  t_{i,h} -  2  \label{eq:lower_util}
\end{align}
\end{claim}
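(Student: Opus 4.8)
The plan is to mirror the proof of the analogous pair of inequalities in the Rawlsian setting (Inequalities \eqref{eq:upper} and \eqref{eq:lower}), substituting the Utilitarian rounding guarantees and LP constraints throughout. The one substantive difference is that here I would invoke Claim \ref{cl:cluster_bounds_util} in place of its Rawlsian counterpart: since Claim \ref{cl:cluster_bounds_util} controls the integral cluster size with a slack of only $1$ rather than $|\cH|$, this is precisely what collapses the additive constant down to $2$.

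For the upper bound \eqref{eq:upper_util}, I would start from the per-color rounding guarantee \eqref{eq:utilsrounding_g2}, which gives $\sum_{j \in \Pointsh} x_{ij}^{\text{integ}} \leq \ceil{\sum_{j \in \Pointsh} x_{ij}^{\text{frac}}} \leq \sum_{j \in \Pointsh} x_{ij}^{\text{frac}} + 1$. Next, constraints \eqref{constraint_util:c} and \eqref{constraint_util:d} let me replace the fractional color count by $(r_h+\alpha_h)\sum_{j \in \Points} x_{ij}^{\text{frac}} + t_{i,h}$. Then Claim \ref{cl:cluster_bounds_util} converts the fractional cluster size into the integral one at an additive cost of $(r_h+\alpha_h)\cdot 1$. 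Finally, bounding $r_h+\alpha_h \leq 1$ merges the two residual constants $(r_h+\alpha_h)$ and the ceiling's $1$ into a total of $2$, yielding \eqref{eq:upper_util}.

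For the lower bound \eqref{eq:lower_util}, I would proceed symmetrically: use the other half of \eqref{eq:utilsrounding_g2} together with $\floor{z} \geq z-1$, apply constraints \eqref{constraint_util:b} and \eqref{constraint_util:d} to introduce $(r_h-\beta_h)\sum_{j \in \Points} x_{ij}^{\text{frac}} - t_{i,h}$, invoke the lower half of Claim \ref{cl:cluster_bounds_util}, and bound $r_h-\beta_h \leq 1$ to arrive at the stated slack of $2$.

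There is essentially no genuine obstacle here; the argument is a direct transcription of the Rawlsian proof with the correct substitutions. The only point worth flagging is bookkeeping: one must use the Utilitarian LP constraints and the Utilitarian cluster-size bound of Claim \ref{cl:cluster_bounds_util} (which itself relies on the extra guarantee \eqref{eq:utilsrounding_g3} that \textsc{Utilitarian-Rounding} provides but \textsc{Rawlsian-Rounding} does not), since it is exactly this tighter control on the total cluster size that distinguishes the Utilitarian constant $2$ from the weaker Rawlsian $(|\cH|+1)$.
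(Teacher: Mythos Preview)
Your proposal is correct and follows essentially the same approach as the paper's own proof: the paper proceeds exactly as you describe, chaining \eqref{eq:utilsrounding_g2}, the Utilitarian LP constraints \eqref{constraint_util:b}--\eqref{constraint_util:d}, and Claim~\ref{cl:cluster_bounds_util}, then bounding $r_h+\alpha_h \leq 1$ (respectively $r_h-\beta_h \leq 1$) to collapse the additive slack to $2$. Your identification of Claim~\ref{cl:cluster_bounds_util} (via guarantee \eqref{eq:utilsrounding_g3}) as the source of the tighter constant is exactly the point.
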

\begin{proof}
We start with the first Inequality \eqref{eq:upper}
\begin{align*}
    \sum_{j \in \Pointsh}   x_{ij}^{\text{integ}} &\leq \lceil \sum_{j \in \Pointsh} x_{ij}^{\text{frac}} \rceil \quad \quad \text{(By Inequality \eqref{eq:utilsrounding_g2})}\\
    &\leq  \sum_{j \in \Pointsh} x_{ij}^{\text{frac}} + 1\\
    &\leq (r_h+\alpha_h) \sum_{j \in \Points} x_{ij}^{\text{frac}} + t_{i,h}+ 1 \quad \text{(By LP constraints \eqref{constraint_util:c} and \eqref{constraint_util:d})}\\
    &\leq (r_h+\alpha_h) \left(\sum_{j \in \Points}  x_{ij}^{\text{integ}}  + 1\right)+ t_{i,h}+ 1 \quad \text{(By Claim \eqref{cl:cluster_bounds_util})} \\
    &= (r_h+\alpha_h)\sum_{j \in \Points}  x_{ij}^{\text{integ}} + t_{i,h}+ 2 \quad \quad \text{(Since $r_h+\alpha_h \leq 1$)}
\end{align*}

Similarly, for the second Inequality \eqref{eq:lower} we have
\begin{align*}
 \sum_{j \in \Pointsh}  x_{ij}^{\text{integ}} &\geq \lfloor \sum_{j \in \Pointsh} x_{ij}^{\text{frac}}\rfloor \quad \quad \text{(By Inequality \eqref{eq:utilsrounding_g2})}\\ 
 &\geq  \sum_{j \in \Pointsh} x_{ij}^{\text{frac}} - 1\\
 &\geq (r_h - \beta_h) \sum_{j \in \Points} x_{ij}^{\text{frac}} - t_{i,h}- 1 \quad \text{(By LP constraints \eqref{constraint_util:b} and \eqref{constraint_util:d})}\\
 &\geq (r_h - \beta_h)\left(\sum_{j \in \Points}  x_{ij}^{\text{integ}}  - 1\right)- t_{i,h}- 1 \quad \text{(By Claim \ref{cl:cluster_bounds_rawls})} \\
 &=(r_h - \beta_h)\sum_{j \in \Points}  x_{ij}^{\text{integ}} - (r_h - \beta_h)  -t_{i,h}-  1  \\
  &=(r_h - \beta_h)\sum_{j \in \Points}  x_{ij}^{\text{integ}} - t_{i,h}-  2 \quad \quad \text{(Since $r_h+\alpha_h \leq 1$)}
\end{align*}
\end{proof}

We denote by $|C^{\text{integ}}_i|$ and $|C^{\text{frac}}_i|$ the size of the $i^{\text{th}}$ cluster according to the integral and fractional solutions,  respectively. The following claim can be derived. 
\begin{claim}\label{cl:bound_viol_term_util}
$\forall i \in \su, \forall h \in \Colors$ we have $|C^{\text{integ}}_i| \viol^{\text{integ}}(h,i) \leq |C^{\text{frac}}_i| \viol^{\text{frac}}(h,i) + 2$.
\end{claim}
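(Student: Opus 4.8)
The plan is to mirror the structure of Claim~\ref{cl:bound_viol_term} (its Rawlsian analog) but to exploit the sharper cluster-size control available in the utilitarian rounding. First I would record that, by LP constraints \eqref{constraint_util:b}, \eqref{constraint_util:c}, and \eqref{constraint_util:d}, the fractional term coincides with the auxiliary variable, i.e. $|C^{\text{frac}}_i|\viol^{\text{frac}}(h,i)=t_{i,h}$, so the claim reduces to showing $|C^{\text{integ}}_i|\viol^{\text{integ}}(h,i)\le t_{i,h}+2$. Because $\viol(h,i)$ is defined as a maximum with $0$, the statement is immediate whenever color $h$ has no proportional violation in the integral cluster $i$: the left-hand side is then $0$, and $0\le t_{i,h}+2$ holds since $t_{i,h}\ge 0$. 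Hence only the over- and under-represented cases require work.

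Next I would split into those two cases and invoke the inequalities \eqref{eq:upper_util} and \eqref{eq:lower_util} just established for $\su$. In the over-represented case, $|C^{\text{integ}}_i|\viol^{\text{integ}}(h,i)=\sum_{j\in\Pointsh}x_{ij}^{\text{integ}}-(r_h+\alpha_h)\sum_{j\in\Points}x_{ij}^{\text{integ}}$; substituting the upper bound \eqref{eq:upper_util} cancels the two $(r_h+\alpha_h)\sum_{j}x_{ij}^{\text{integ}}$ terms and leaves exactly $t_{i,h}+2$. The under-represented case is symmetric: writing $|C^{\text{integ}}_i|\viol^{\text{integ}}(h,i)=(r_h-\beta_h)\sum_{j\in\Points}x_{ij}^{\text{integ}}-\sum_{j\in\Pointsh}x_{ij}^{\text{integ}}$ and applying \eqref{eq:lower_util} again cancels the $(r_h-\beta_h)\sum_{j}x_{ij}^{\text{integ}}$ terms and yields $t_{i,h}+2$. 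In both cases, combining with $t_{i,h}=|C^{\text{frac}}_i|\viol^{\text{frac}}(h,i)$ gives the desired bound.

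There is no serious obstacle here once \eqref{eq:upper_util} and \eqref{eq:lower_util} are in hand, since the entire content lives in those two inequalities. The one point I would take care to trace is the origin of the additive constant $2$, because this is exactly where the utilitarian rounding improves on the Rawlsian one: the extra flow guarantee \eqref{eq:utilsrounding_g3} forces the integral cluster size to differ from the fractional size by at most $1$ (Claim~\ref{cl:cluster_bounds_util}), rather than by $|\Colors|$ as in the Rawlsian case (Claim~\ref{cl:cluster_bounds_rawls}). That single-unit slack, together with the unit rounding slack on $\sum_{j\in\Pointsh}x_{ij}$ coming from \eqref{eq:utilsrounding_g2}, is precisely what produces the ``$1+1=2$'' appearing in \eqref{eq:upper_util}/\eqref{eq:lower_util} and hence in the claim. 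I would therefore present the short two-case argument and point back to these inequalities, keeping the arithmetic to the minimum.
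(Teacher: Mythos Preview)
Your proposal is correct and follows essentially the same route as the paper: identify $|C^{\text{frac}}_i|\viol^{\text{frac}}(h,i)=t_{i,h}$ via the LP constraints, then split into the over- and under-represented cases and apply \eqref{eq:upper_util} and \eqref{eq:lower_util} respectively to obtain $t_{i,h}+2$. Your explicit handling of the no-violation case and the commentary tracing the constant $2$ back to \eqref{eq:utilsrounding_g3} are nice additions that the paper leaves implicit.
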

\begin{proof}
First, note that by LP constraints \eqref{constraint_util:b}, \eqref{constraint_util:c}, and \eqref{constraint_util:d} it follows that $|C^{\text{frac}}_i| \viol^{\text{frac}}(h,i)=t_{i,h}$. 

Now suppose that color $h$ in cluster $i$ is over-represented, then it follows that
\begin{align*}
    |C^{\text{integ}}_i| \viol^{\text{integ}}(h,i) & \leq \sum_{j \in \Pointsh}   x_{ij}^{\text{integ}} - (r_h+\alpha_h)\sum_{j \in \Points}   x_{ij}^{\text{integ}} \\
    & \leq (r_h+\alpha_h)\sum_{j \in \Points}  x_{ij}^{\text{integ}} +  t_{i,h}+ 2 - (r_h+\alpha_h)\sum_{j \in \Points}   x_{ij}^{\text{integ}} \quad \quad \text{(By Inequality \eqref{eq:upper_util})} \\
    & = t_{i,h} + 2  \\ 
    & = |C^{\text{frac}}_i| \viol^{\text{frac}}(h,i) + 2 
\end{align*}
Similarly, if color $h$ in cluster $i$ is instead under-represented, then it follows that
\begin{align*}
    |C^{\text{integ}}_i| \viol^{\text{integ}}(h,i) & \leq (r_h-\beta_h)\sum_{j \in \Points}   x_{ij}^{\text{integ}} -  \sum_{j \in \Pointsh}   x_{ij}^{\text{integ}}   \\
    & \leq  (r_h-\beta_h)\sum_{j \in \Points}   x_{ij}^{\text{integ}} - 
    (r_h - \beta_h)\sum_{j \in \Points}  x_{ij}^{\text{integ}} + t_{i,h} +  2 \quad \quad \text{(By Inequality \eqref{eq:lower_util})} \\
    & = t_{i,h} + + 2  \\ 
    & = |C^{\text{frac}}_i| \viol^{\text{frac}}(h,i) + 2  
\end{align*}
\end{proof}
Now we can easily prove the following claim.
\begin{claim}\label{cl:bound_avg_viol_rounded_util}
\begin{align}
    \sum_{h \in \Colors} \frac{\totviolcol^{\text{integ}}_h}{|\Points^h|} \leq \sum_{h \in \Colors} \frac{\totviolcol^{\text{frac}}_h}{|\Points^h|} + \Big( \frac{2k}{n}  \cdot  
    \sum_{h \in \Colors} \frac{1}{r_h} \Big) 
\end{align}
\end{claim}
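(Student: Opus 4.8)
The plan is to aggregate the per-cluster, per-color bound supplied by Claim \ref{cl:bound_viol_term_util} first across clusters and then across colors, exactly mirroring the structure used to prove Claim \ref{cl:bound_avg_viol_rounded_rawls} in the Rawlsian setting. The single substantive input is the inequality $|C^{\text{integ}}_i| \viol^{\text{integ}}(h,i) \leq |C^{\text{frac}}_i| \viol^{\text{frac}}(h,i) + 2$, which by Claim \ref{cl:bound_viol_term_util} holds for every center $i \in \su$ and every color $h \in \Colors$. Note that the additive slack here is the constant $2$, in contrast to the $(|\Colors|+1)$ appearing in the Rawlsian analysis; this improvement stems from the extra cluster-size guarantee \eqref{eq:utilsrounding_g3} enjoyed by \textsc{Utilitarian-Rounding}, and it is precisely what produces the tighter constant $\constu$ in the final statement.

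First I would fix a color $h$ and expand $\totviolcol^{\text{integ}}_h = \sum_{i \in \su} |C^{\text{integ}}_i| \viol^{\text{integ}}(h,i)$ by definition \eqref{eq:totviolcol}. Applying Claim \ref{cl:bound_viol_term_util} term by term and summing over the $k$ centers gives $\totviolcol^{\text{integ}}_h \leq \totviolcol^{\text{frac}}_h + 2k$, since the slack of $2$ is incurred once per center and $|\su| = k$. Dividing through by $|\Points^h| = r_h n$ then yields the per-color bound $\frac{\totviolcol^{\text{integ}}_h}{|\Points^h|} \leq \frac{\totviolcol^{\text{frac}}_h}{|\Points^h|} + \frac{2k}{r_h n}$. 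Summing this inequality over all $h \in \Colors$, the residual terms combine into $\frac{2k}{n} \sum_{h \in \Colors} \frac{1}{r_h}$, which is exactly the claimed additive constant, completing the argument.

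I do not anticipate a genuine obstacle: essentially all of the work is absorbed into Claim \ref{cl:bound_viol_term_util}, and what remains is careful bookkeeping of the constants. The only points requiring attention are ensuring the additive slack is counted once per cluster (so that summing over $\su$ produces $2k$ rather than $2$) and performing the substitution $|\Points^h| = r_h n$ \emph{before} summing over colors, so that the final constant separates cleanly as $\frac{2k}{n} \sum_{h \in \Colors} \frac{1}{r_h}$ instead of remaining entangled with the individual $r_h$ factors.
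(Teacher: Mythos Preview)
Your proposal is correct and mirrors the paper's own proof almost exactly: both expand $\totviolcol^{\text{integ}}_h$ via \eqref{eq:totviolcol}, apply Claim \ref{cl:bound_viol_term_util} to each $(i,h)$ pair, sum the additive $2$ over the $k$ centers, divide by $|\Points^h|=r_h n$, and then sum over colors. The only cosmetic difference is that the paper carries the outer sum over $h$ from the start rather than first isolating a per-color inequality as you do, but the computation and constants are identical.
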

\begin{proof}
\begin{align}
    \sum_{h \in \Colors} \frac{\totviolcol^{\text{integ}}_h}{|\Points^h|} & =  \sum_{h \in \Colors} \frac{\sum_{i\in S} |C_i^{\text{integ}}|\viol^{\text{integ}}(h,i)}{|\Points^h|} \\
    & \leq \sum_{h \in \Colors} \frac{\sum_{i\in S} (|C_i^{\text{frac}}|\viol^{\text{frac}}(h,i) + 2 )}{|\Points^h|} \quad \quad \text{(By Claim \ref{cl:bound_viol_term_util})} \\
    & = \sum_{h \in \Colors} \frac{\totviolcol^{\text{frac}}_h}{|\Points^h|} + \Big( 2k \cdot \sum_{h \in \Colors} \frac{1}{|\Points^h|} \Big)  \\
    & = \sum_{h \in \Colors} \frac{\totviolcol^{\text{frac}}_h}{|\Points^h|} + \Big( \frac{2k}{n} \cdot \sum_{h \in \Colors} \frac{1}{r_h} \Big)  \quad \quad \text{(Since $|\Points^h|=r_h n$)}
\end{align}
\end{proof}
Now we ready to prove the lemma. 
\begin{align*}
    \sum_{h \in \Colors} \disu^{\text{integ}}_h & = \sum_{h \in \Colors} \frac{\lambda \totdistcol^{\text{integ}}_h + (1-\lambda) \totviolcol^{\text{integ}}_h}{|\Points^h|} \\
    & \leq \sum_{h \in \Colors} \frac{\lambda \totdistcol^{\text{farc}}_h + (1-\lambda)\totviolcol^{\text{frac}}_h}{|\Points^h|} + (1-\lambda) \Big( \frac{2k}{n} \cdot \sum_{h \in \Colors} \frac{1}{r_h} \Big) \quad \quad \text{(By Inequality \eqref{eq:utilsrounding_g1} and Claim \ref{cl:bound_avg_viol_rounded_util})} \\ 
    & = \sum_{h \in \Colors} \disu^{\text{frac}}_h +  (1-\lambda) \Big( \frac{2k}{n} \cdot \sum_{h \in \Colors} \frac{1}{r_h} \Big) \\
    & \leq \sum_{h \in \Colors} \disu^{\text{frac}}_h + \constu 
\end{align*}
Where $\constu = \Big( \frac{2k}{n} \cdot \sum_{h \in \Colors} \frac{1}{r_h} \Big)$. 

\end{proof}

We restate the next theorem and give its proof. 
\utilmaing* 
\begin{proof}
The LP will return a fractional solution of cost at most $(\gammap+\gammapp\alpha)\OPTU$ by Lemma \ref{lemma:lpboundopt_util}. Further, Lemma \ref{lemma:roundedfromLPutil} implies that the rounded solution from the LP will have a cost of at most $(\gammap+\gammapp\alpha)\OPTU+\constu$.
\end{proof}
\section{NP-Hardness of the Assignment Problem}\label{app:hardness_of_assignment}
Here we show that finding the optimal Rawlsian or Utilitarian assignment $\phi$ given a set of centers $S$ is NP-hard. 

Following the same reduction from Exact Cover by $3$-Sets as that of \cite{esmaeili2021fair}. See also Theorem 5.1 and Figure 4 in \cite{esmaeili2021fair}. Our instance will be identical. In particular, we would have $r_{\text{red}}=\frac{1}{4}$ and $r_{\text{blue}}=\frac{3}{4}$. Using our notation we would set $\alpha_h=\beta_h=0 \ , \forall h  \in \{\text{red},\text{blue}\}$. We use the same set of vertices $\mathcal{F}$ for the set of given centers. Is it immediate that the minimum disutility for the red color is $\disu_{\text{red}}\ge \frac{\lambda (0\cdot \frac{|\Points^{\text{red}}|}{2}+1\cdot \frac{|\Points^{\text{red}}|}{2}) +(1-\lambda)\cdot 0}{|\Points^{\text{red}}|} = \frac{1}{2} \lambda$. Similarly, for the blue color, we have  $\disu_{\text{blue}} \ge \frac{\lambda (1\cdot |\Points^{\text{blue}}|) +(1-\lambda)\cdot 0}{|\Points^{\text{blue}}|} = \lambda$. We assume $\lambda>0$. It follows straightforwardly that we have a Rawlsian objective of $\lambda$ and a Utilitarian objective of $\frac{3}{2}\lambda$ if and only if Exact Cover by $3$-Sets has a solution. This is proves that the assignment problem is NP-hard for both the Rawlsian and Utilitarian objectives.

\section{The Network Flow Rounding Scheme}

Our rounding schemes build upon standard network flow rounding procedure introduced in \citet{bercea2018cost}. We first introduce the standard version as appeared in \citet{bercea2018cost} and will then introduce our variants in their subsections.

The input to network rounding scheme is fractional solution to Linear Program, $\x^{\text{frac}}$ and center set $S$ and distances $d(i, j)$, $\forall i \in S, j \in \Points$. In standard rounding, one network is constructed.  Denote the network as $G = (V, E)$ where $V$ is the set of vertices and $E$ the set of arcs. Components of graph $G$ are listed below.
\begin{enumerate}
    \item $V =  \{v_j| j \in \Points \} \cup \{v_i^h | h \in \Colors, i \in S \}\cup \{v_i|i \in S\} \cup \{t\} $
    \item $E =  \{(v_j, v_i^h )| x_{ij}^{\text{frac}}>0\} \cup \{(v_i^h, v_i)| h \in \Colors, i \in S \}\cup\{(v_i,t)| i \in S \}$ 
\end{enumerate}

Specifically, we create one vertex $v_j$ for each point $j \in \Points$ with demand $-1$.   For each center $i \in S $ we define a set of vertices $v_i^h$ for each color $h \in \cH$, and a separate vertex $v_i$. Let each $v_i^h$ has demand $\lfloor \sum_{j \in \Pointsh} x_{ij}\rfloor$, and $v_i$ has a demand $\left(\lfloor \sum_{j \in \Points} x_{ij}\rfloor - \sum_{h \in \Colors}\lfloor \sum_{j \in \Pointsh} x_{ij}\rfloor\right)$.
We create a set of arcs $(v_i^h, v_i)$ for each $i$ and $h$. Let these arcs have cost $0$.  For each $x_{ij}^{\text{frac}}>0$, we create an arc $(v_j, v_{i}^h)$ for the $h$ such that $j \in \Pointsh$. For this arc $(v_j, v_{i}^h)$ we set its cost to be $d^p(i,j)$.

We define a sink vertex $t$ with demand $(|\Points| - \sum_{i \in S} \lfloor \sum_{j \in \Points} x_{ij} \rfloor)$ and connect $t$ with $v_i$ for each $i\in S$ with an arc. Arcs $\{(v_i,t)| i \in S \}$ have costs 0. We let all arcs have unit capacity.

Because all capacities and demands in this network are integral, an integral solution can be found in polynomial time. 

It is not difficult to see that $\x^{\text{frac}}$ is feasible and therefore the cost of the integral solution cannot exceed that of the fractional solution.

By construction, we have the two inequalities shown in Lemma~\ref{lemma:utilrounding}, 
\eqref{eq:utilsrounding_g2} and  \eqref{eq:utilsrounding_g3}.

\subsection{Details of the Rawlsian Rounding}\label{app:rawlsrounding}

We construct the Rawlsian Rounding min-cost flow instance on fractional solution $\x^{\text{frac}}$ from LP \eqref{eq:assignment_lp}. The network instance contains unconnected independent networks, one for each color $h \in \Colors$ as illustrated in Figure \ref{fig:color_specific_rounding}. We denote $G^h$ as network of color $h$. $G^h = (V^h, E^h).$ Components of $G^h$ are listed below.
\begin{enumerate}
    \item $V^h = \{v_j|j \in \Pointsh\} \cup\{v_i^h|i \in \sr\} \cup t^h$
    \item $E^h = \{(v_j, v_i^h)| j \in \Pointsh, i \in \sr\}\cup \{(v_i^h, t^h) | i \in \sr \}$
\end{enumerate}
  For each fixed color $h \in \cH$, for each center $i \in \sr$ we define a set of vertices $v_i^h$. Each of $v_i^h$ has demand $\lfloor \sum_{j \in \Pointsh} x_{ij}^{\text{frac}}\rfloor$ for the corresponding center $i$. Note $x_{ij}^{\text{frac}} $ are elements of solution $\x^{\text{frac}}$. For each point $j \in \Pointsh$, we create a vertex $v_j$, with demand $-1$. Between $v_j$ and $v_i^h$, we create arc $(v_j, v_i^h)$ if $x_{ij}>0$. We let arcs $(v_j, v_i^h)$ each has capacity 1. We include a sink vertex $t^h$ which has demands $(|\Pointsh|-\sum_{i \in \sr} \lfloor \sum_{j \in \Pointsh} x_{ij}^{\text{frac}}\rfloor)$. We connect each of $v_i^h$ for $i \in \sr$  with  the sink $t^h$ with an arc $(v_i^h, t^h)$. These arcs have capacity 1 as well. On the set of arcs $(v_j, v_i^h)$ we assign cost $\frac{1}{|\Pointsh|}d^p(i, j)$. For other arcs in this network, we assign cost 0.

On each color specific network constructed as described above, given the capacity, demands are integers, there exist an integral optimal min-cost flow solution. Note the fractional assignment from LP solution constrained to only  entries of color $h$,  $ \x_{h}^{\text{frac}} = \{x_{ij}^{\text{frac}} \}_{j\in \Pointsh}$   constitutes a feasible fractional flow solution to the min-cost flow problem.

\subsection{Details of the Utilitarian Rounding}\label{app:utilrounding}

For Utilitarian Rounding, we take fractional solution of Linear Program \eqref{eq:assignment_lp_util}, centers $\su$, and distances $d(i, j)$, $\forall i \in S, j \in \Points$ as input. We set up one network whose the components $G=(V, E)$ are the same as described as standard network setup at beginning of this section. The only difference is we set costs of arcs $(v_j, v_{i}^h)$ to $\frac{d^p(i,j)}{|\Points^{\chi(j)}|}$. Network structure remain the same. Costs of other arcs, demands of vertices, capacities of all arcs remain the same.

\section{Necessity of Solving the Assignment Problem}
Here we start by showing that the value of the average violation $\totviolcol_h$ for any color $h \in \Colors$ cannot exceed $2$, (see Fact \ref{fact:maximum_fair_cost} below). Further, Lemmas \ref{lemma:existance_of_approx_solution} and \ref{lemma:weighted_approx_util} establish the existence of an assignment with a bounded approximation ratio for the Rawlsian and Utilitarian objectives, respectively. Based on the above, one  might wonder if optimizing for the assignment as done in the LP and rounding steps of $\algr$ and $\algu$ is even necessary, perhaps due to the boundedness of the average violation we would never incur a large degradation in the approximation. We show in Theorem \ref{thm:withoutphi_unbounded_approx_ratio} that optimizing for the assignment is in fact necessary as otherwise the approximation ratio would be unbounded while the additive approximation would be a constant unlike our additive approximation which is on the order of $\cO(\frac{k}{n})$ which converges to zero as $n \to \infty$.

\begin{fact}\label{fact:maximum_fair_cost}
For any color $h$ the normalized violation is upper bounded by $2$. More specifically, $\frac{\totviolcol_h}{|\Pointsh|}  \leq 2(1-r_h) \leq 2$.
\end{fact}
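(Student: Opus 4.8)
The plan is to partition the centers $S$ according to how color $h$ is represented in each cluster, and to bound the contributions of the over- and under-represented clusters to $\totviolcol_h$ separately, showing that each is at most $(1-r_h)|\Pointsh|$. Write $O$, $U$, and $E$ for the sets of clusters in which color $h$ is over-represented, under-represented, and proportionally represented (no violation), respectively. Since $\viol(h,i)=0$ for every $i \in E$, those clusters contribute nothing, so only the sums over $O$ and $U$ need to be controlled.

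For the over-represented clusters, I would use that $\viol(h,i) = \frac{|\cih|}{|\ci|} - (r_h+\alpha_h)$, so that $|\ci|\,\viol(h,i) = |\cih| - (r_h+\alpha_h)|\ci| \le |\cih| - r_h|\ci|$, dropping the nonnegative term $\alpha_h$. Because $|\cih| \le |\ci|$, this is in turn at most $(1-r_h)|\cih|$. Summing over $i \in O$ and using $\sum_{i \in O}|\cih| \le \sum_{i \in S}|\cih| = |\Pointsh|$ gives $\sum_{i \in O}|\ci|\,\viol(h,i) \le (1-r_h)|\Pointsh|$.

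For the under-represented clusters I would argue symmetrically: $|\ci|\,\viol(h,i) = (r_h-\beta_h)|\ci| - |\cih| \le r_h|\ci| - |\cih| \le r_h\big(|\ci| - |\cih|\big)$, where the last step uses $r_h \le 1$. Here $|\ci| - |\cih|$ is exactly the number of non-$h$ points in cluster $i$, and summing over all clusters these total $n - |\Pointsh| = (1-r_h)n$. Hence $\sum_{i \in U}|\ci|\,\viol(h,i) \le r_h(1-r_h)n = (1-r_h)|\Pointsh|$, using $|\Pointsh| = r_h n$. Adding the two bounds yields $\totviolcol_h \le 2(1-r_h)|\Pointsh|$, and dividing by $|\Pointsh|$ gives $\frac{\totviolcol_h}{|\Pointsh|} \le 2(1-r_h) \le 2$, the final inequality following from $r_h \ge 0$.

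There is no deep obstacle here, but the one point that requires care is the asymmetry between the two cases. Over-representation is naturally charged against the total count of color-$h$ points, which is bounded by $|\Pointsh|$ and carries a factor $(1-r_h)$; under-representation must instead be charged against the non-$h$ points, bounded by $(1-r_h)n$ and carrying a factor $r_h$. The key observation is that these two differently-weighted counts both collapse to the same quantity $(1-r_h)|\Pointsh|$, so the two contributions combine cleanly into the factor of $2$.
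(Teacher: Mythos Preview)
Your proof is correct and reaches the same bound $2(1-r_h)$, but it proceeds along a genuinely different line than the paper. The paper first reduces to the hardest case $\alpha_h=\beta_h=0$, then performs an algebraic change of variables so that $\frac{\totviolcol_h}{|\Pointsh|}$ becomes $(1-r_h)\sum_{i\in S}|q_i-p_i|$, where $p_i=\frac{|\cih|}{|\Pointsh|}$ and $q_i=\frac{|\ci|-|\cih|}{(1-r_h)n}$ are both probability vectors over the clusters; the bound then follows from the standard fact that the $\ell_1$ distance between two probability vectors is at most $2$. Your argument instead splits clusters into over- and under-represented sets and bounds each side by elementary inequalities, with the nice observation that both sides collapse to $(1-r_h)|\Pointsh|$ for different reasons. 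Your route is slightly more hands-on and avoids the need to spot the probability-distribution reformulation; the paper's route is shorter once that reformulation is in hand and makes transparent why the factor $2$ appears (it is exactly the diameter of the simplex in $\ell_1$).
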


\begin{proof}
We set $\alpha_h=\beta_h=0 \ , \forall h \in \Colors$. Note that by definition of $\viol(h,i)$ in Eq.~\eqref{eq:def_piece_wise_linear_asym_fairness_in_each_point} setting   $\alpha_h=\beta_h=0$ would only lead to a higher value for $\totviolcol_h$ in comparison to any other values for $\alpha_h$ and $\beta_h$. 

Now, consider a fixed color $h$. For a center $i \in S$ set $p_i= \frac{|\cih|}{|\Pointsh|}$. That is, $p_i$ is the proportion of color $h$ points assigned to cluster $i$ in the given clustering. By definition $\sum_{i\in S} p_i=1$. Denote number of points in cluster $\ci$ having colors in $\cH$ other than color $h$ by $C_i^{\bar h}$. Accordingly, we have 
\begin{align}
     \frac{\totviolcol_h}{|\Pointsh|} & = \frac{1}{|\Pointsh|}\sum_{i \in S} |\ci| \cdot \viol(h,i) \nonumber \\ 
     &= \frac{1}{|\Pointsh|} \sum_{i \in S}  \left \vert |\ci| r_h - |\cih|\right \vert \nonumber \\
     &= \sum_{i \in S} \left \vert \frac{|\ci|}{|\Pointsh|} r_h - p_i \right \vert \nonumber \\
     &= \sum_{i \in S} \left \vert \frac{|\ci|}{|\Points|}  - p_i \right \vert \nonumber \\
     &= \sum_{i \in S} \left \vert   \frac{|C_i^{\bar h} | + p_i |\Pointsh|}{|\Points|}  - p_i  \right \vert \nonumber \\
     &=\sum_{i \in S} \left \vert \frac{|C_i^{\bar h} | }{|\Points|} + \Big(\frac{|\Pointsh|}{|\Points|}-1\Big) p_i  \right \vert \nonumber \\
     &=\sum_{i \in S} \left \vert \frac{|C_i^{\bar h} | }{|\Points|} + (r_h-1) p_i \right \vert \nonumber \\
     &= \sum_{i \in S} \left \vert \frac{|C_i^{\bar h} | }{|\Points|} - (1-r_h) p_i \right \vert \nonumber\\
     &= (1-r_h) \sum_{i \in S} \left \vert \frac{|C_i^{\bar h} | }{|\Points|(1-r_h) } -  p_i \right \vert \label{eq:total_cost_h_over_clusters}
\end{align}

When maximizing over assignment $\phi$, we essentially maximize over $p_i$ and $|\Points_i^{\bar h} |$ over $i \in S$ because other quantities in the objective are fixed. That is 
\begin{align*}
\max_{\phi}\frac{1}{|\Pointsh|}\sum_{i \in S} |\ci| \cdot \viol(h,i) = \max_{p_i,|C_i^{\bar h} |}(1-r_h) \sum_{i \in S} \left \vert \frac{|C_i^{\bar h} | }{|\Points|(1-r_h) } -  p_i \right \vert 
\end{align*}

Note consider the assignment of points of colors in $\cH$ other than $h$, i.e., $|C_i^{\bar h} |$ over $i \in S$. Define $q_i=\frac{|C_i^{\bar h} | }{|\Points|(1-r_h) }$. By definition $\sum_{i=1}^k q_i = 1$. With this notation, 
\eqref{eq:total_cost_h_over_clusters} can be written as
\begin{align*}
\max_{p_i,|C_i^{\bar h} |}(1-r_h) \sum_{i \in S} \left \vert \frac{|C_i^{\bar h} | }{|\Points|(1-r_h) } -  p_i \right \vert & = (1-r_h) \max_{p_i,q_i} \sum_{i \in S}  |q_i - p_i| \\
& = (1-r_h) \max_{\mathbf{p} , \mathbf{q}} \|\mathbf{q} - \mathbf{p}\|_1  \\ 
& =  (1-r_h) \cdot 2 \quad \quad \text{(Since $ \max_{\mathbf{p} , \mathbf{q}} \|\mathbf{q} - \mathbf{p}\|_1 = 2$ as $\mathbf{p}$ and $\mathbf{q}$ are probability vectors)}\\
& \leq 2 \quad \quad \text{(Since $(1-r_h) \leq 1$)}
\end{align*}
\end{proof}

Now we present that theorem which implies that optimizing the assignment is necessary.

\begin{restatable}{theorem}{sfunboundedapproxratio}\label{thm:withoutphi_unbounded_approx_ratio}
Denote by $(\Ssf^*,\phisf^*)$ the optimal socially fair solution and by $(\Sw^*,\phiw^*)$ the optimal weighted clustering solution with $w_j=\frac{1}{|\Points^h|} \ , \forall j \in \Points$. Then there exists an instance where $(\Ssf^*,\phisf^*)$  has a Rawlsian objective that is an unbounded multiple of the optimal Rawlsian objective plus an  additive constant. Similarly, $(\Sw^*,\phiw^*)$ has a Utilitarian objective that is an unbounded multiple of the optimal Utilitarian objective plus an additive constant. Specifically, we have 
\begin{align}
    R(\Ssf^*,\phisf^*) & \ge \rho_1  \OPTR+ \rho_2 \\
    U(\Sw^*,\phiw^*) & \ge \rho_1  \OPTU+ \rho_2
\end{align}
where $\rho_1$ is an arbitrarily large positive value and $\rho_2$ is a positive constant. 
\end{restatable}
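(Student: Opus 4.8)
The plan is to exhibit a one-dimensional family of instances, parameterized by a vanishing separation $\delta>0$, on which the distance-oblivious optimum (socially fair for the Rawlsian claim, weighted for the Utilitarian claim) is forced into perfectly color-segregated clusters of constant disutility, while a balanced clustering drives the true optimum to $0$. Concretely, fix two colors with $r_{\text{red}}=r_{\text{blue}}=\tfrac12$ and $\alpha_h=\beta_h=0$, take $k=2$, and place $m$ red points at position $-\delta$ and $m$ blue points at position $+\delta$ on the line. I would work in the regime $\lambda\in(0,1)$ where the two terms of $\disu_h$ genuinely compete. Since the bound is meant to contrast with the $\cO(k/n)$ additive term of the algorithm, it is worth noting in advance that the constants below are independent of $m$ (hence of $n=2m$).

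First I would pin down the socially fair and weighted optima. Because both \eqref{eq:def_sf} and \eqref{eq:def_wighted} depend only on distances, each attains value $0$ exactly when every point sits on a center; with points occupying only $\pm\delta$ and only $k=2$ centers available, this forces the centers to $\{-\delta,+\delta\}$ and the unique zero-distance assignment to route every red point to $-\delta$ and every blue point to $+\delta$. Hence $(\Ssf^*,\phisf^*)=(\Sw^*,\phiw^*)$ is the fully segregated clustering. Evaluating \eqref{eq:def_piece_wise_linear_asym_fairness_in_each_point} gives $\viol(\text{red},i)=\viol(\text{blue},i)=\tfrac12$ in each monochromatic cluster, so $\totdistcol_h=0$ and $\totviolcol_h=m$ for both colors, whence $\disu_h=(1-\lambda)$ and therefore $R(\Ssf^*,\phisf^*)=(1-\lambda)$ and $U(\Sw^*,\phiw^*)=2(1-\lambda)$, positive constants independent of $\delta$ and $m$.

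Next I would upper bound the true optima by a balanced clustering: keep the centers at $\pm\delta$ but assign half of each color to each center, so every cluster is exactly half red and half blue and thus has zero violation. Every point then lies within distance $2\delta$ of its center, so $\totdistcol_h\le m(2\delta)^p$ and $\disu_h\le \lambda(2\delta)^p$, giving $\OPTR\le \lambda(2\delta)^p$ and $\OPTU\le 2\lambda(2\delta)^p$, both of which tend to $0$ as $\delta\to0$ while remaining strictly positive for $\delta>0$ (a balanced cluster contains points from both locations, so it cannot have all points at its center).

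Finally I would close with an absorption step. Set $\rho_2=(1-\lambda)/2>0$. Given any target $\rho_1$, choose $\delta$ small enough that $\rho_1\cdot\lambda(2\delta)^p\le (1-\lambda)/2$; then $\rho_1\OPTR+\rho_2\le (1-\lambda)/2+(1-\lambda)/2=(1-\lambda)=R(\Ssf^*,\phisf^*)$, which is the first inequality, and the identical estimate (now comparing against $U(\Sw^*,\phiw^*)=2(1-\lambda)$ and $\OPTU\le 2\lambda(2\delta)^p$) yields the Utilitarian one. The one delicate point — and the crux of the construction — is arranging simultaneously that the distance-oblivious optimum \emph{cannot} avoid segregation (true because it greedily achieves distance exactly $0$ and never spends distance to buy balance) and that a balanced clustering is nonetheless \emph{cheap} (true only because the two color locations coalesce as $\delta\to0$); everything else is arithmetic.
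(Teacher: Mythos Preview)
Your construction is correct and is essentially the same as the paper's: two equal-sized monochromatic groups separated by a small distance, with $\alpha_h=\beta_h=0$ and $k=2$, so that the distance-only optimum segregates (yielding $\disu_h=(1-\lambda)$) while a balanced assignment makes $\OPTR,\OPTU$ vanish with the separation. The only cosmetic difference is that the paper parameterizes the separation via $R^p=2\tfrac{1-\lambda}{\lambda}\theta$ and writes $\rho_1=\gamma/\theta$, $\rho_2=(1-\gamma)(1-\lambda)$, whereas you fix $\rho_2=(1-\lambda)/2$ and shrink $\delta$; the underlying instance and the logic are identical.
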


\begin{proof}
\begin{figure}[h]
    \centering
    \includegraphics[width=0.4\linewidth]{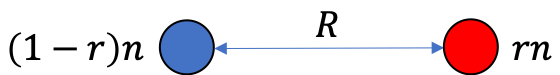}
    \caption{Instance on which socially fair solution has unbounded Rawlsian and Utilitarian objective value.}
    \label{fig:sf_unbounded_approx_ratio}
\end{figure}
We start with the proof for the socially fair solution and the Rawlsian objective. 

Consider the instance in Figure \ref{fig:sf_unbounded_approx_ratio}. In this instance, there are $(1-r)n$ blue points and $rn$ red points where $r \in (0,1)$. All blue points coincide with one another and all red points coincide with one another with both colors being separated by a distance of $R$. Further, we set $\alpha_h=\beta_h=0 \ , \forall h \in \Colors$. For this instance, with $k=2$, the optimal socially fair solution will have two centers $\Ssf^*$ with one center at the blue points and another center at the red points. Clearly, $\phisf^*$ will assign each point to its closest center. The disutilities for the blue and red groups would be:
\begin{align*}
    & \disu_{\text{blue}}(\Ssf^*,\phisf^*) = \lambda \cdot 0 + (1-\lambda) \cdot \frac{|(1-r)(1-r)n-(1-r)n| + |(1-r)rn-0|}{(1-r) n} = 2 (1-\lambda) r \\
    & \disu_{\text{red}}(\Ssf^*,\phisf^*) = \lambda \cdot 0 + (1-\lambda) \cdot \frac{|r(1-r)n-0| + |r\cdot rn-rn|}{r n} = 2 (1-\lambda) (1-r) 
\end{align*}
For simplicity, we set $r=\frac{1}{2}$, thus $R(\Ssf^*,\phisf^*)=\disu_{\text{blue}}(\Ssf^*,\phisf^*)=\disu_{\text{red}}(\Ssf^*,\phisf^*)=(1-\lambda)$. 

Consider a solution $(\Ssf^*,\phi')$, i.e., using the same centers as the optimal socially fair solution but with a different assignment $\phi'$. Specifically, $\phi'$ assigns half the blue points to the right center and half the red points to the left center. It is clear that assignment $\phi'$ would lead to no proportional violations. The disutilities would thus be
\begin{align*}
    & \disu_{\text{blue}}(\Ssf^*,\phi') = \lambda \cdot \frac{\frac{(1-r)n}{2} R^p}{(1-r)n} + (1-\lambda) \cdot 0 = \frac{\lambda}{2} R^p \\
    & \disu_{\text{red}}(\Ssf^*,\phi') = \lambda \cdot \frac{\frac{rn}{2} R^p}{rn} + (1-\lambda) \cdot 0 = \frac{\lambda}{2} R^p 
\end{align*}
Thus for this solution we would have $R(\Ssf^*,\phi')=\disu_{\text{blue}}(\Ssf^*,\phi')=\disu_{\text{red}}(\Ssf^*,\phi')=\frac{\lambda}{2} R^p$. 

Now we set $R^p=2\cdot \frac{1-\lambda}{\lambda}\cdot \theta$ where $\theta \in (0,1)$. This leads to $R(\Ssf^*,\phi')\leq \theta (1-\lambda) < (1-\lambda) =R(\Ssf^*,\phisf^*)$. 

Based on the above, using $\gamma \in [0,1]$ we can write $R(\Ssf^*,\phisf^*)$ in terms of $R(\Ssf^*,\phi')$ as follows:
\begin{align*}
    R(\Ssf^*,\phisf^*) = \gamma \cdot \frac{R(\Ssf^*,\phi')}{\theta}+ (1-\gamma) \cdot (1-\lambda)
\end{align*}
Which is equivalent to 
\begin{align}
    R(\Ssf^*,\phisf^*) = \rho_1 \cdot R(\Ssf^*,\phi') + \rho_2 \label{eq:obj_bound}
\end{align}
Where $\rho_1=\frac{\gamma}{\theta}$ and $\rho_2=(1-\gamma) \cdot (1-\lambda)$. It is clear that for $\gamma \in (0,1)$, $\rho_2$ is a positive constant and $\rho_1$ is a constant that can be made arbitrarily large by choosing an instance with sufficiently small $\theta$. 

Since $(\Ssf^*,\phi')$ is a feasible solution for Rawlsian clustering it follows that $R(\Ssf^*,\phi') \ge \OPTR$. Based on \eqref{eq:obj_bound} this leads to
\begin{align*}
    R(\Ssf^*,\phisf^*) \ge \rho_1 \cdot \OPTR + \rho_2 
\end{align*}

For the weighted clustering solution and the Utilitarian objective. Simply note that $(\Sw^*,\phiw^*)=(\Ssf^*,\phisf^*)$, i.e., the weighted clustering solution is equal to the socially fair solution. We would have $U(\Sw^*,\phiw^*)=2(1-\lambda)$. Following a the same assignment $\phi'$ we would have $U(\Sw^*,\phi')=\lambda R^p$. Further, we still have $R^p=2\frac{1-\lambda}{\lambda}\theta$ where $\theta \in (0,1)$ we have $U(\Sw^*,\phi') < U(\Sw^*,\phiw^*)$. Therefore, we can similarly establish the following: 
\begin{align*}
    U(\Sw^*,\phiw^*)  = \frac{\gamma}{\theta} \cdot U(\Sw^*,\phi') + 2 (1-\gamma) (1-\lambda)
\end{align*}
Since $\OPTU\leq U(\Sw^*,\phi')$ then we have the final guarantee.
\begin{align*}
    U(\Sw^*,\phiw^*) \ge \rho_1 \cdot \OPTU + \rho_2 
\end{align*}
With $\rho_1=\frac{\gamma}{\theta}$ and $\rho_2=2 (1-\gamma) (1-\lambda)$ .
\end{proof}

\section{Comparing Welfare-Centric Clustering to FCBC}\label{app:fcbc_comparison}
The FCBC formulation unlike ours is framed in terms of clustering cost not disutilities and welfare. This causes FCBC to find solutions that are acceptable in terms of the clustering cost but clearly when one considers the disutilities and welfare. To more easily demonstrate how clustering cost, i.e., $\sum_{j\in \Points} d^p(j,\phi(j))$ would different from the Rawlsian and Utilitarian objectives, we will assume that $\lambda=1$, so that $(1-\lambda)=0$ and proportional violations would not incur a cost. 

Consider the simple example shown in Figure \ref{fig:app_comap_to_fcbc}. In that example we have $r_{\text{blue}}=\frac{2}{3}$ and $r_{\text{red}}=r_{\text{green}}=\frac{1}{6}$. There are $4$ special locations in the space $\{a,b,c,o\}$. All red, blue, and green points are at locations $a$,$b$, and $c$, respectively. For ease of discussion we consider the $k$-median problem and the Hamming distance. This implies that $d(a,o)=d(b,o)=d(c,o)=1$ and that $d(a,b)=d(a,c)=d(b,c)=2$. 

Consider the solution $(S,\phi)$ where $S=\{o\}$ and $\phi$ assigns all points to center $o$. Then we would have $\disu_{\text{red}}=\disu_{\text{green}}=\disu_{\text{blue}}=1$. This also leads to $R(S,\phi)=1$ and $U(S,\phi)=3$. The clustering cost would be $\sum_{j\in \Points} d^p(j,\phi(j))=1\cdot n=n$.

Now, consider a less equitable solution $(S',\phi')$ where $S'=\{b\}$ and all points are assigned to center $b$. This would lead to $\disu_{\text{red}}=\disu_{\text{green}}=2$ while $\disu_{\text{blue}}=0$. The Rawlsian and Utilitarian objectives would become clearly higher. Specifically, we would have $R(S',\phi')=2$ and $U(S',\phi')=2\cdot 2=4$. On the other hand, the clustering cost would drop since $\sum_{j\in \Points} d^p(j,\phi'(j))=2 \cdot \frac{1}{3} \cdot n=\frac{2}{3}n < n = \sum_{j\in \Points} d^p(j,\phi(j))$. 

Since FCBC uses the clustering costs as an upper bound it would favor such solutions which are clearly inequitable. This highlights the importance of formulating the problem in terms of welfare instead of naive view of simple clustering costs.  

\begin{figure}[h!]
    \centering
    \includegraphics[width=0.2\linewidth]{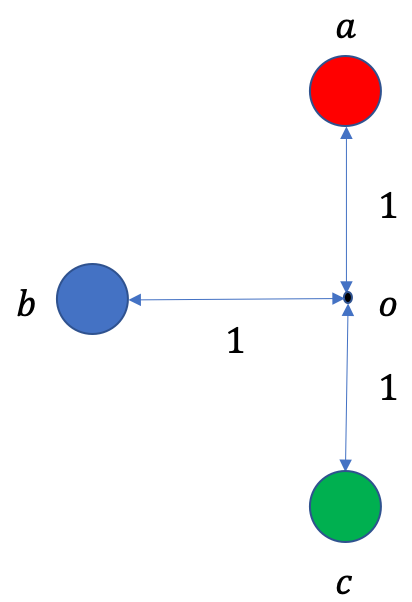}
    \caption{Clustering instance.}
    \label{fig:app_comap_to_fcbc}
\end{figure}

\newpage 
\section{Additional Experiments and Results} 
\label{app:exp}
In this section, we give details of the experiments and show additional experiment results.

\subsection{Details for Datasets Used in Experiments}
\begin{table}[h!]
\centering
\renewcommand{\arraystretch}{1.2}
\begin{tabular}{|c|c|c|c|c|}
\hline
\textbf{Dataset} & \textbf{Dimension} & \textbf{Number of Points} & \textbf{Group Label}&\textbf{$\alpha_h$,$\beta_h$} \\
\hline\hline
Adult & 5 & 32561 & gender &$0.01r_h$\\
\hline
Credit card &14 & 29623  & marital-status &$0.1r_h$ \\
\hline

Census II (subsampled) & 67 & 20,000 & gender &$0.01r_h$\\
\hline
\end{tabular}
\caption{Description of the four datasets used in two group empirical evaluations.}
\label{tab:two_group_dataset_description}
\end{table}

We select the same group labels and features for \adult{}, \credit{}, and \cens{} as done in \cite{bera2019fair,esmaeili2020probabilistic}. 
On \adult{}, attributes "age", "final-weight", "education-num", "capital-gain", "hours-per-week" are used as features. 
On \credit{} and \cens{}, all attributes except group label are used as features. 
 Gender is used as the group label in \adult{} and \cens{}. On \credit{}, we use marital status. 

\begin{table}[H]
\centering
\renewcommand{\arraystretch}{1.2}
\begin{tabular}{|c|c|c|c|c|}
\hline
\textbf{Dataset} & \textbf{Dimension} & \textbf{Number of points} & \textbf{Group Label}&\textbf{$\alpha_h$,$\beta_h$} \\
\hline\hline
Bank & 3 & 4521 & marital-status ($|\Colors|=3$) &$ 0.01r_h$\\
\hline

\end{tabular}
\caption{Description of dataset used in our multi-group empirical evaluation. }
\label{tab:multi_group_dataset_description}
\end{table}

On \bank{}, attributes chosen to represent point in space are "age", "balance" and "duration-of-account", the same as in \citet{backurs2019scalable}.

\subsection{Implementation Details}
\paragraph{Existing Implementation of Socially Fair and FCBC} We use existing implementation of socially fair $k$-means from \citet{ghadiri2021socially} \footnote{https://github.com/samirasamadi/SociallyFairKMeans} which accommodates groups with two colors but not multiple colors. Note existing code for socially fair is in written Matlab, we translate to equivalent Python implementation. We use existing code for FCBC implementation \footnote{https://github.com/Seyed2357/Fair-Clustering-Under-Bounded-Cost}.

\paragraph{Randomness in Center Selection} 
Both our algorithms and FCBC variants uses some form of $k$-means to select  initial centers (socially fair can be setup to runs $k$-means to select initial centers instead of randomly select). Thus randomness in the experimental pipeline lies in the initial centers selected by each algorithm. To minimizes the effects of selecting bad centers, we let all algorithms being compared each run $k$-means++ to initialize centers and select the best out of $10$ sets of centers.

\paragraph{Translation of Parameters $\alpha$, $\beta$ between Our Algorithms and FCBC}
As mentioned in Section \ref{sec:experiments}, we set $\alpha_h=\beta_h =\delta r_h$, where $\delta \in \{0.01, 0.1\}$. Note the same symbol $\alpha_h$ and $\beta_h$ refer to different concepts in paper \citet{esmaeili2021fair} where FCBC algorithms were originally introduced. Denote the upper and lower bounds in \citet{esmaeili2021fair} as $\bar \alpha_h$, $\bar \beta_h$. To fairly compare with FCBC algorithms, we set $\bar \alpha_h = r_h + \alpha_h$, $\bar \beta_h = r_h + \beta_h$, where $\alpha_h$, $\beta_h$ are parameters for our formulation.

\subsection{Data Normalization}

We normalize the data so that the average clustering distance under vanilla $k$-means is on the same order as its proportional violation. This ensures that the distance and violation terms are comparable in magnitude and therefore avoids extreme values for the trade-off parameter $\lambda$. We now introduce details of normalization factor calculation. Denote a normalization factor for each $k$ as $F_{\text{Rawl}}^k$ and $F_{\text{Util}}^k$.  We calculate $F_{\text{Rawl}}^k$ as 
\begin{align*}
    F_{\text{Rawl}}^k =  \frac{\frac{1}{|\Points|}\sum_{j \in \Points} d^p(j,\phi(j))}{\sum_{h \in \Colors }\frac{\totviolcol_h(S,\phi)}{|\Points^h|}}
\end{align*}
where $(S,\phi)$ is vanilla $k$-means' solution with $p=2$, $|S|=k$. To run algorithm for intended $k$ range $[K]$, we normalize once with $F_{\text{Rawl}} = \text{mean}(F_{\text{Rawl}}^k)$, where mean is taken over the range $[K]$. Denote each data point $j$'s feature vector from as $x_j$. Data is then normalized as
\begin{align*}
    \mathbf{x}_j = \mathbf{x}_j / \sqrt{F_{\text{Rawl}}}.
\end{align*}

We calculate $F_{\text{Util}}^k$ as
\begin{align*}
    F_{\text{Util}}^k =  \frac{ \sum_{h \in \Colors} \frac{1}{|\Points^h|}\sum_{j \in \Points^h} d^p(j,\phi(j))}{\sum_{h \in \Colors }\frac{\totviolcol_h(S,\phi)}{|\Points^h|}}
\end{align*}
where $(S,\phi)$ is $k$-means' solution with $p=2$, $|S|=k$. $F_{\text{Util}}$ is similarly calculated as a mean when $k$ is selected to be a range. Data for Utilitarian comparison are normalized as
\begin{align*}
    \mathbf{x}_j = \mathbf{x}_j / \sqrt{F_{\text{Util}}}.
\end{align*}

\subsection{Experimental Results in Setting with Multiple Groups}
In this section, we present comparison of Utilitarian algorithm $\algu$ with three other algorithms on settings with multiple groups. Below are plots for \bank{} dataset, where marital status is chosen as group label forming three groups. See Table \ref{tab:multi_group_dataset_description} for more details about dataset size. For \textsc{FCBC-Util} algorithm, we set its upper bound on clustering distance cost as $1.5$ times $k$-means vanilla distance cost. Parameter $\delta$ is set as $0.01$. 

We do not include Rawlsian comparisons in the multiple-group setting, as the existing implementation of the $k$-means socially fair algorithm—which we use as the first step in $\algr$—is designed for two groups and only provides theoretical guarantees in that setting; its extension to multiple groups relies on a heuristic without formal guarantees.

\begin{figure}[H]
    \centering

    \begin{subfigure}[b]{0.3\textwidth}
        \includegraphics[width=\textwidth]{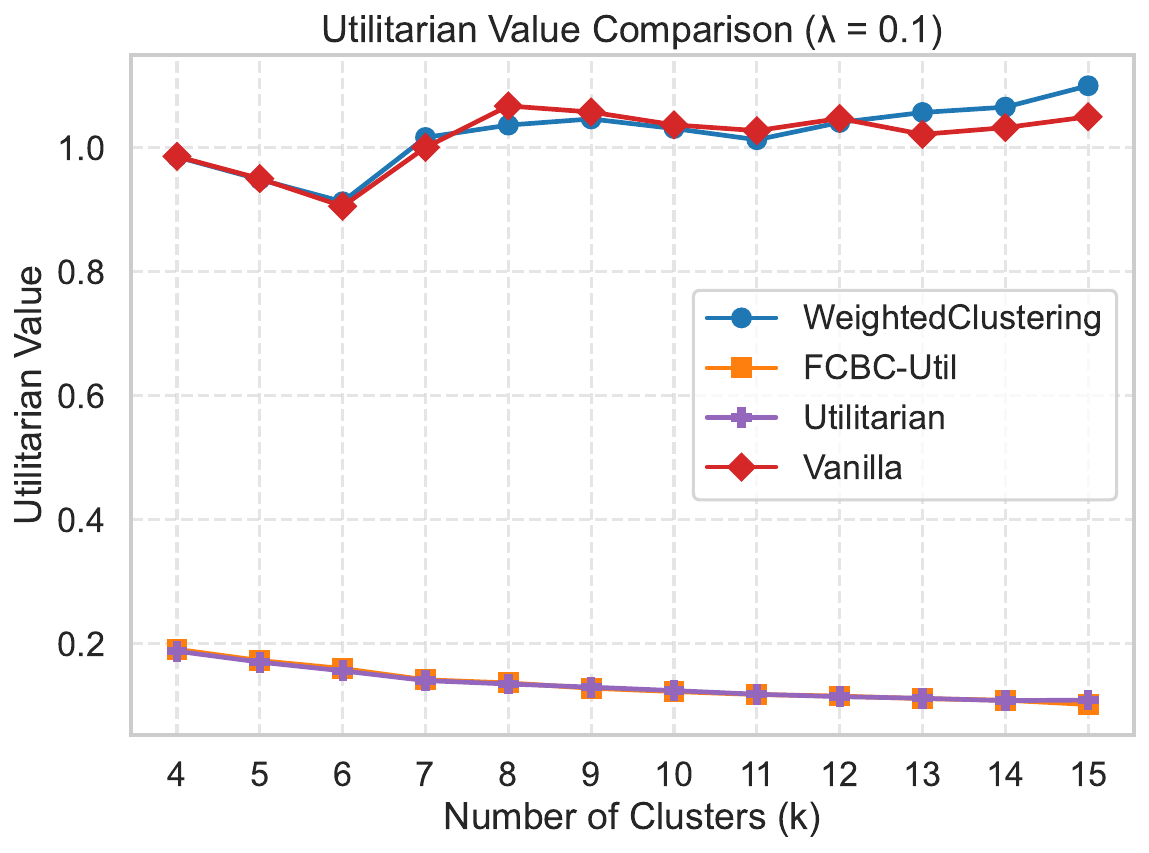}
        \caption{Utilitarian $\lambda = 0.1 $}
    \end{subfigure}
    \hfill
    \begin{subfigure}[b]{0.3\textwidth}
        \includegraphics[width=\textwidth]{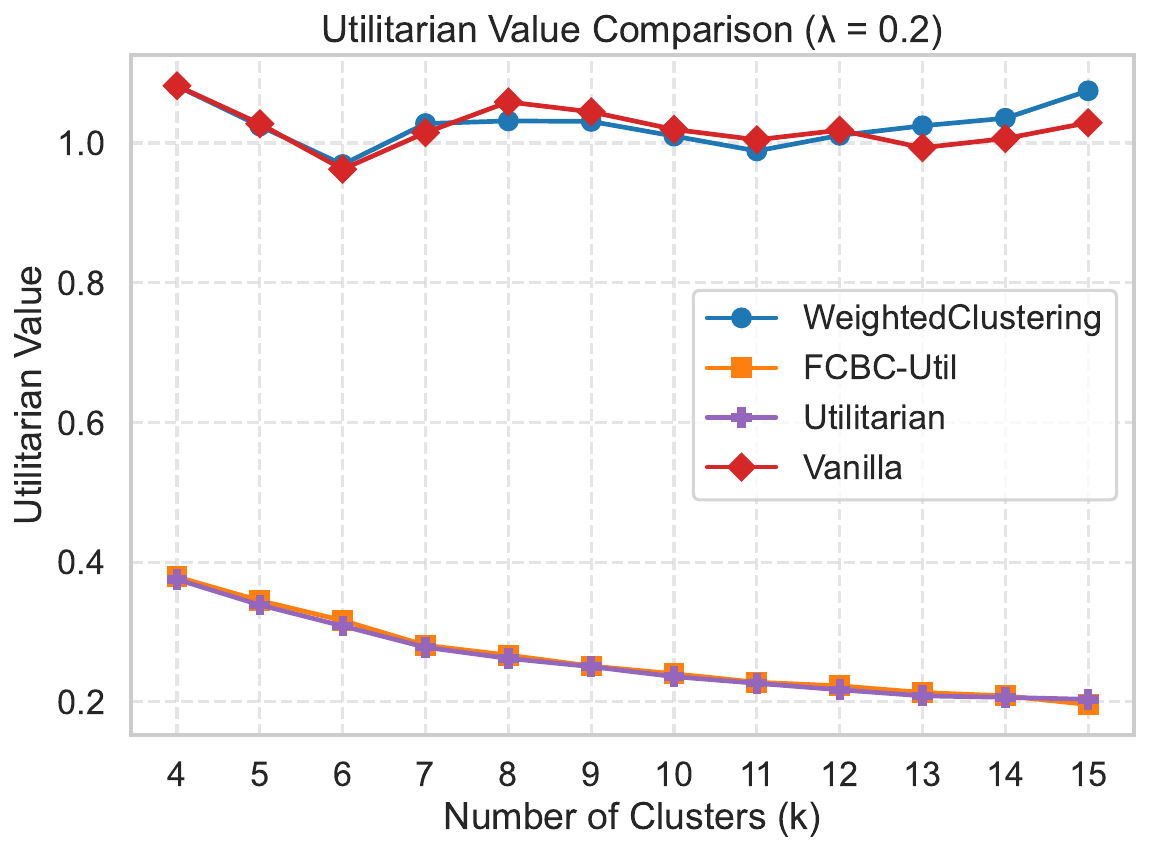}
        \caption{Utilitarian $\lambda = 0.2 $}
    \end{subfigure}
 \hfill
    
    \begin{subfigure}[b]{0.3\textwidth}
        \includegraphics[width=\textwidth]{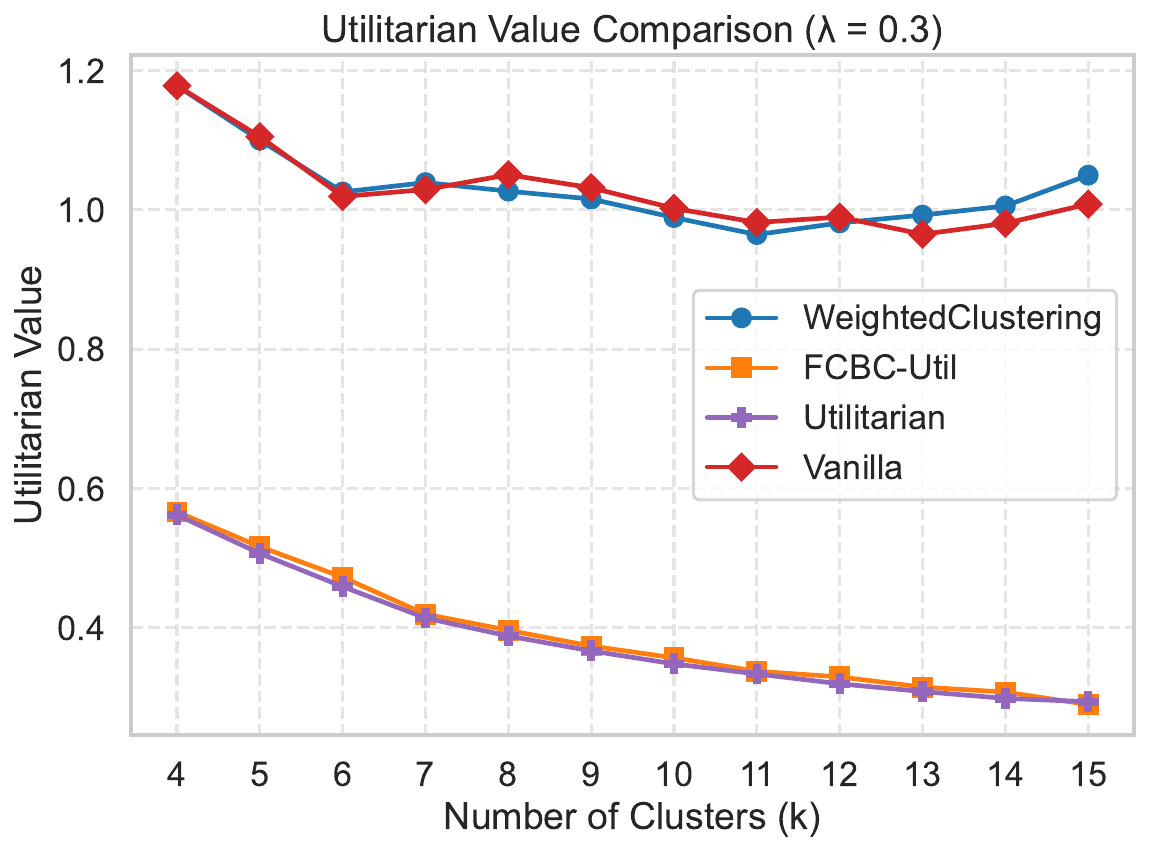}
        \caption{Utilitarian $\lambda = 0.3 $}
    \end{subfigure}

    \begin{subfigure}[b]{0.3\textwidth}
        \includegraphics[width=\textwidth]{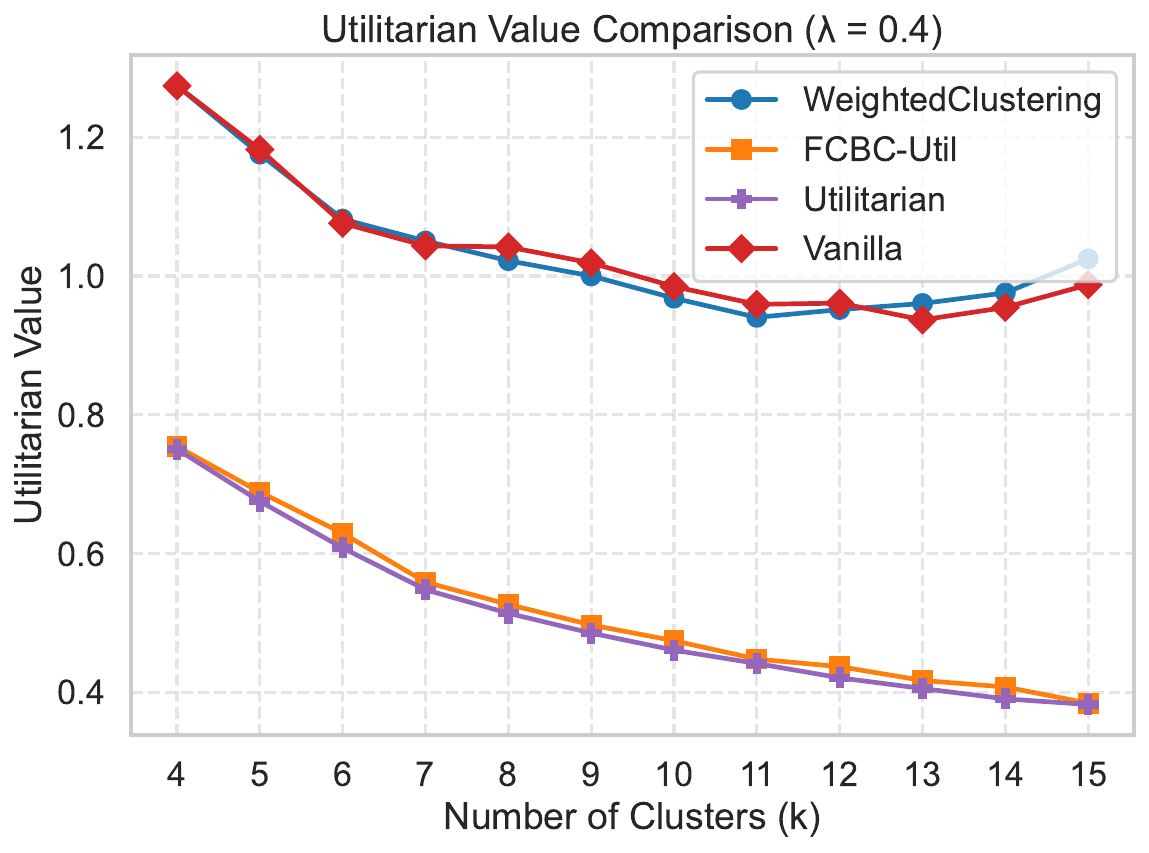}
        \caption{Utilitarian $\lambda = 0.4 $}
    \end{subfigure}
    \hfill
    \begin{subfigure}[b]{0.3\textwidth}
        \includegraphics[width=\textwidth]{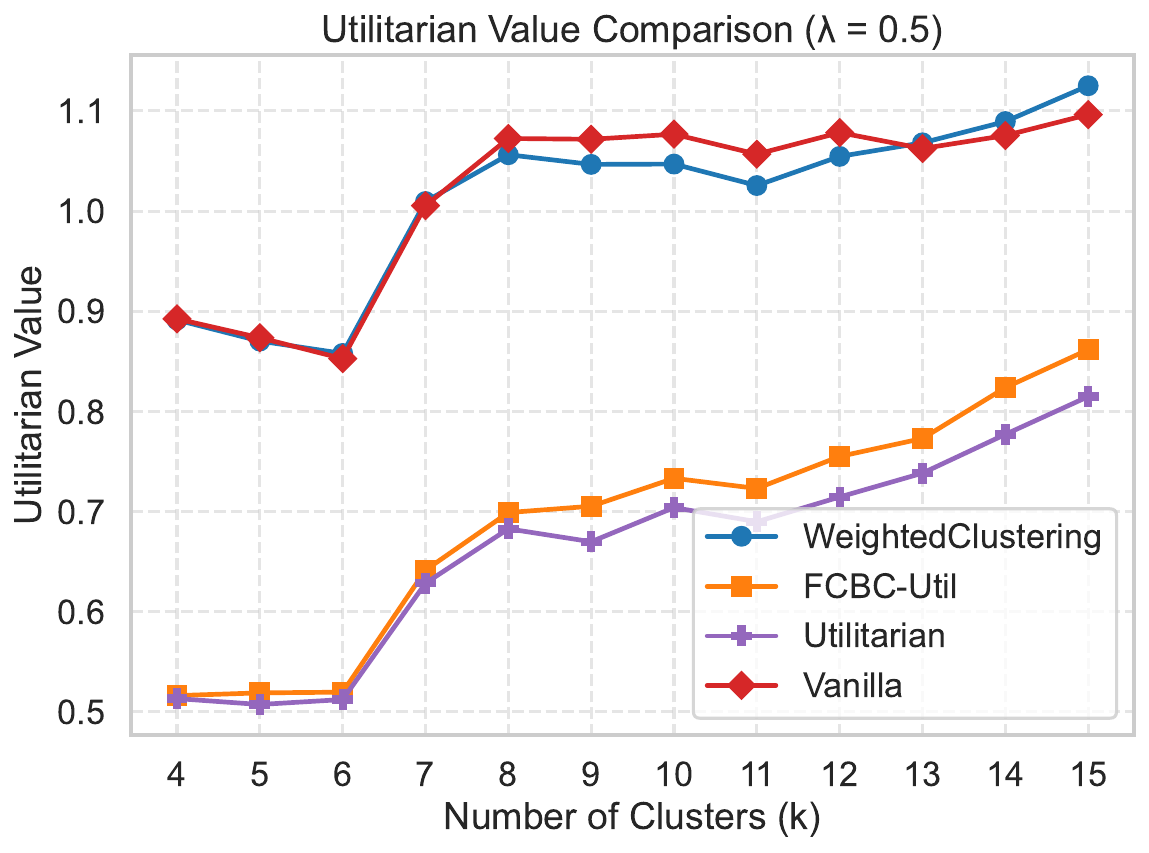}
        \caption{Utilitarian $\lambda = 0.5 $}
    \end{subfigure}
    \hfill
    \begin{subfigure}[b]{0.3\textwidth}
        \includegraphics[width=\textwidth]{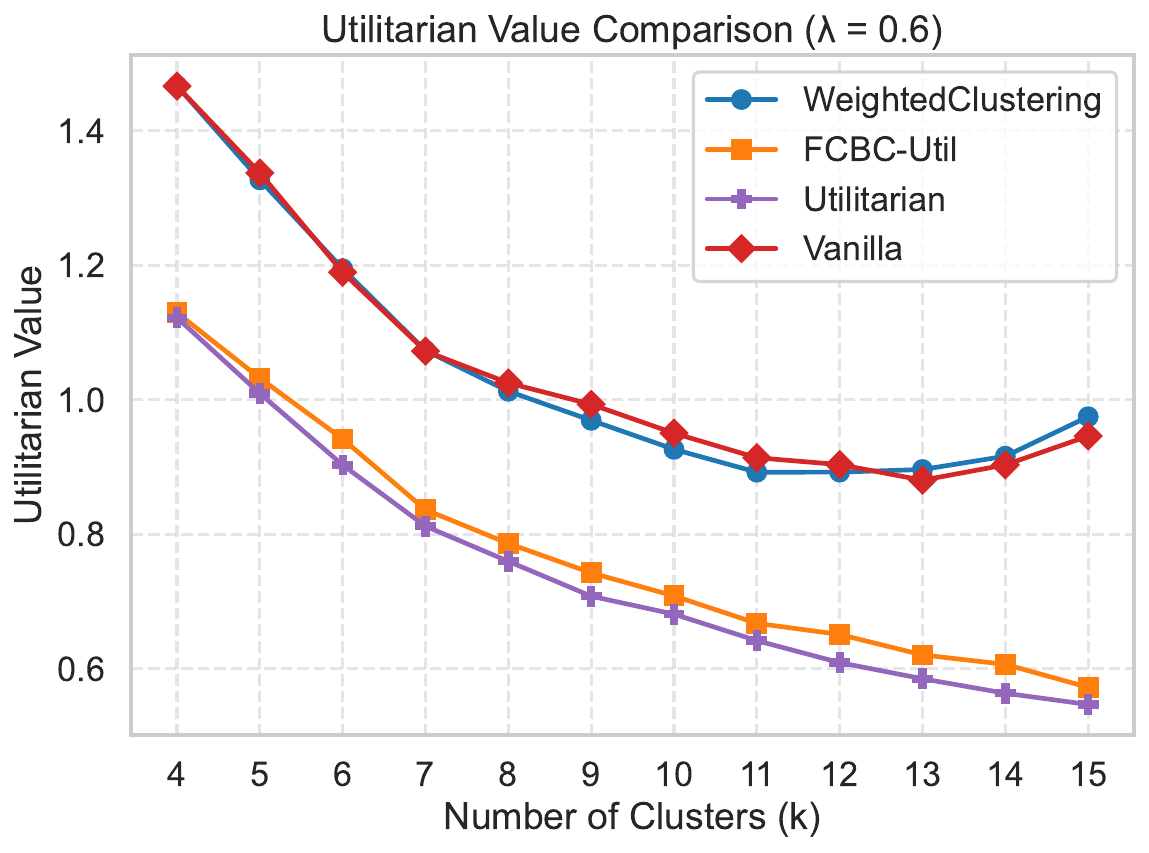}
        \caption{Utilitarian $\lambda = 0.6 $}
    \end{subfigure}
   
    \begin{subfigure}[b]{0.3\textwidth}
        \includegraphics[width=\textwidth]{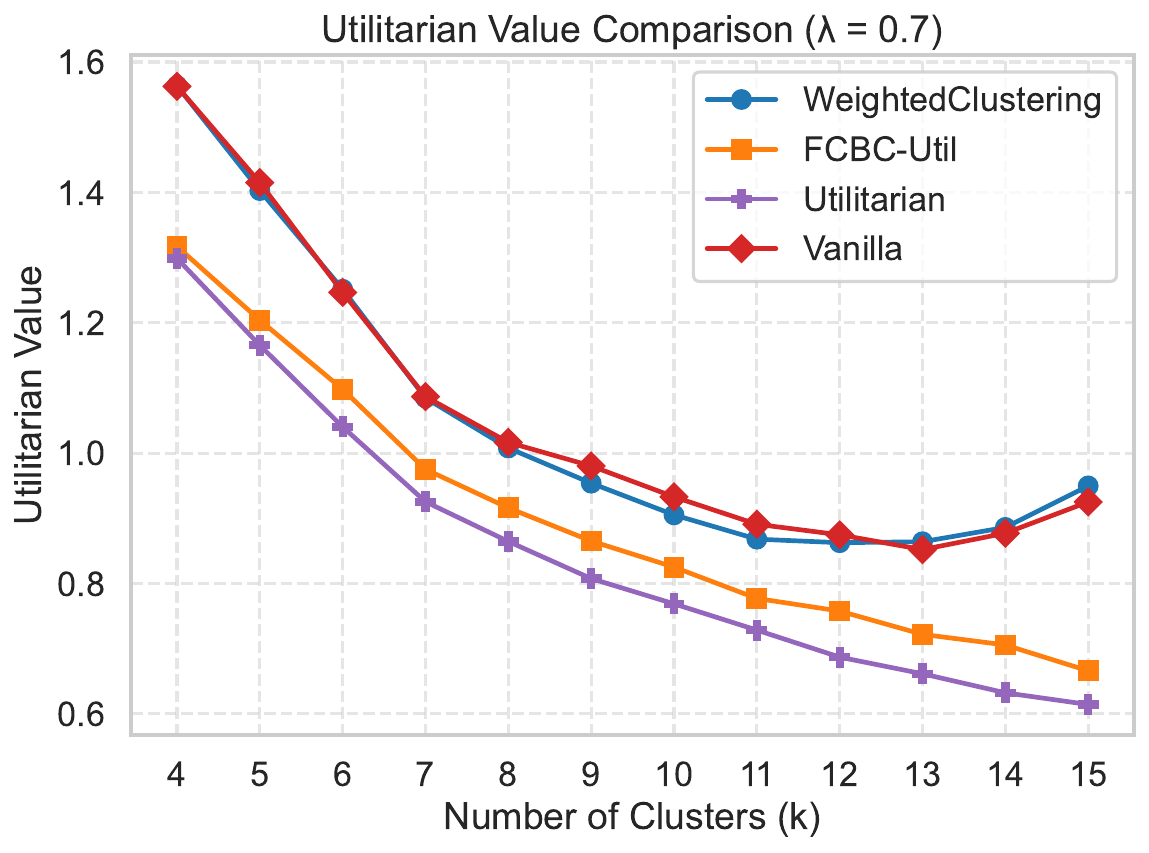}
        \caption{Utilitarian $\lambda = 0.7 $}
    \end{subfigure}
    \hfill
    \begin{subfigure}[b]{0.3\textwidth}
        \includegraphics[width=\textwidth]{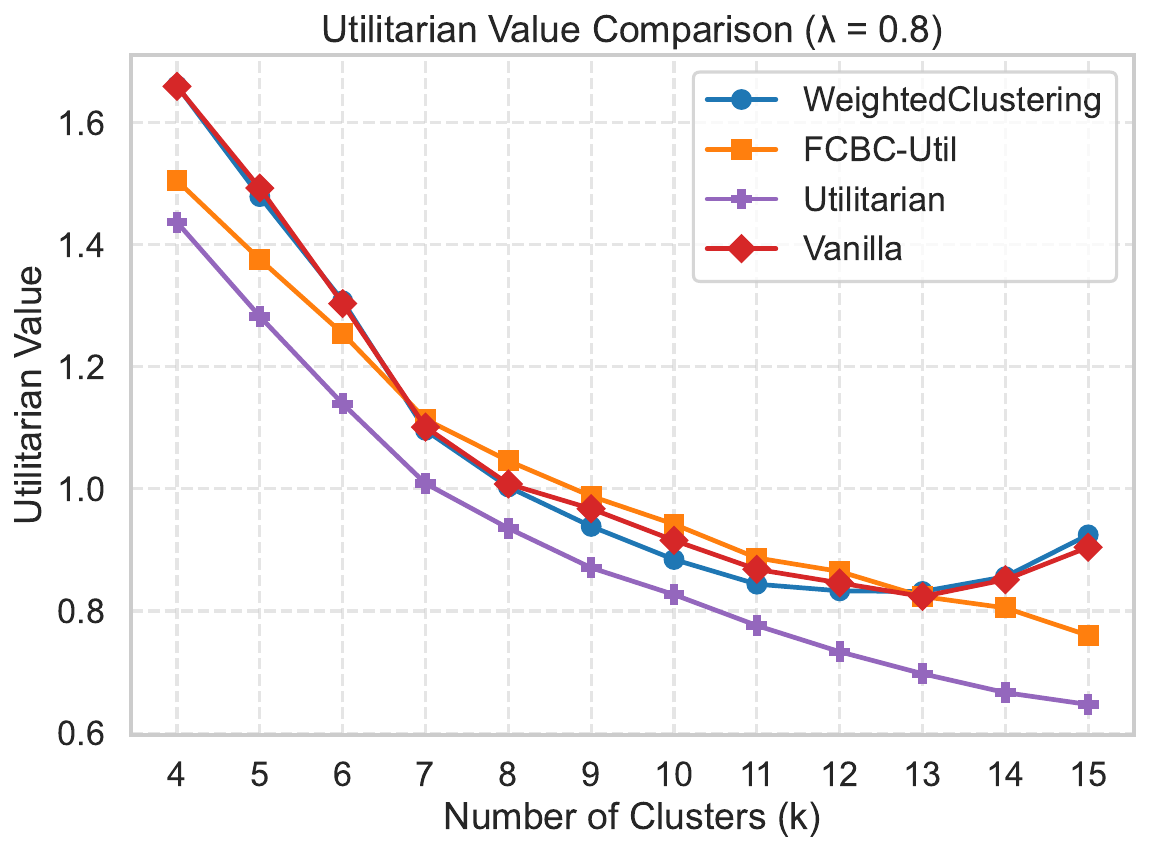}
        \caption{Utilitarian $\lambda = 0.8 $}
    \end{subfigure}
\hfill
    
    \begin{subfigure}[b]{0.3\textwidth}
        \includegraphics[width=\textwidth]{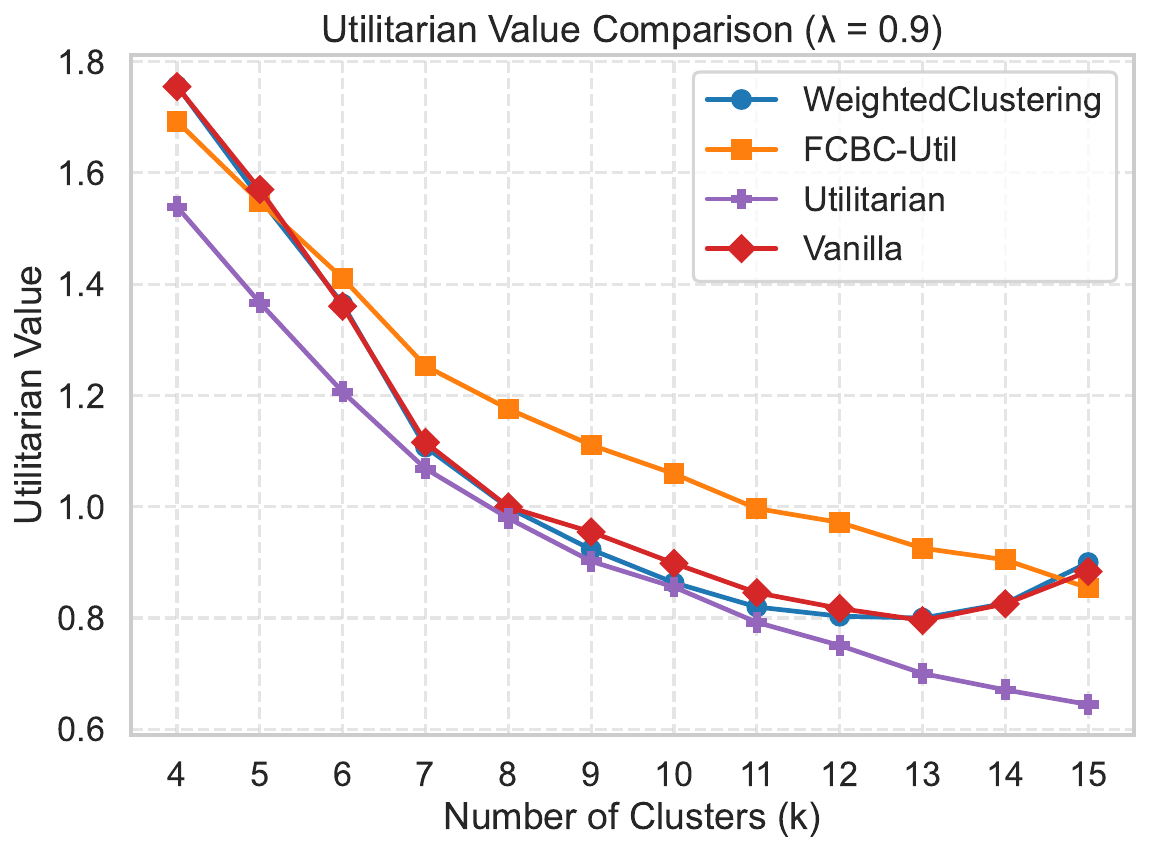}
        \caption{Utilitarian $\lambda = 0.9 $}
    \end{subfigure}

    \caption{Bank Data Utilitarian Value Comparison $|\Colors|=3$}
    \label{fig:eleven_subfigs}
\end{figure}
\FloatBarrier

\subsection{More Experimental Results in Setting with Two Groups}

In the Experiment section in the main body Section \ref{sec:experiments}, we presented experimental results and plots for $\lambda = 0.5$ for three datasets Adult, Credit Card and Census. In this section, we present plots for all other $\lambda$ values in $\{0.1, 0.2, \dots, 0.9\}$. We omit $\lambda = 1.0$ since it is equivalent with running socially fair or weighted clustering. We observe that for all $\lambda$ values, our proposed algorithm $\algr$ and $\algu$ find clustering solutions that dominate the other algorithms' solutions on the two objectives.

Below we present a set of plots for Rawlsian objective for $\lambda \in \{0.1, 0.2, \dots, 0.9\}$.

\begin{figure}[H]
    \centering

    \begin{minipage}[b]{0.32\textwidth}
        \centering
        \textbf{Adult}
    \end{minipage}
    \hfill
    \begin{minipage}[b]{0.32\textwidth}
        \centering
        \textbf{Credit Card }
    \end{minipage}
    \hfill
    \begin{minipage}[b]{0.32\textwidth}
        \centering
        \textbf{Census }
    \end{minipage}

    \vspace{0.3cm}

    \begin{subfigure}[b]{0.32\textwidth}
        \includegraphics[width=\textwidth]{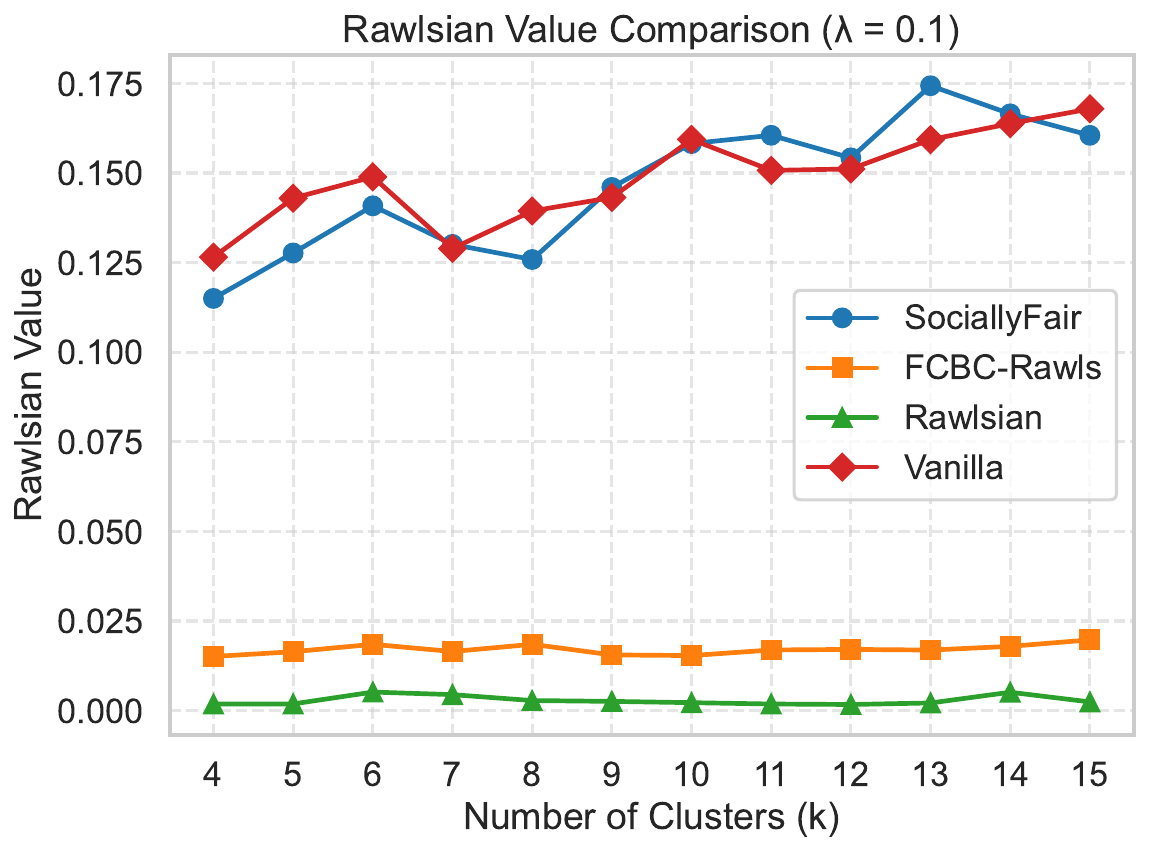}
        
        \label{fig:adult_a}
    \end{subfigure}
    \hfill
    \begin{subfigure}[b]{0.32\textwidth}
        \includegraphics[width=\textwidth]{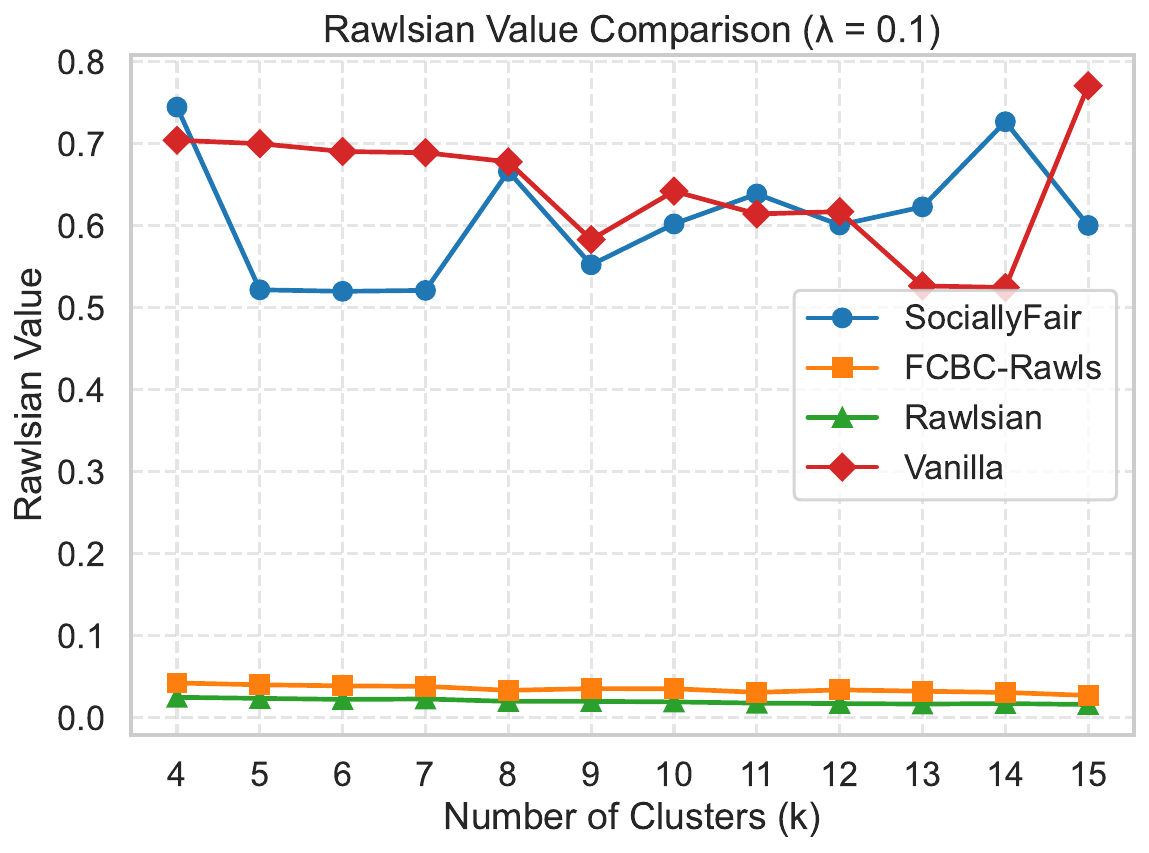}
        
        \label{fig:credit_a}
    \end{subfigure}
    \hfill
    \begin{subfigure}[b]{0.32\textwidth}
        \includegraphics[width=\textwidth]{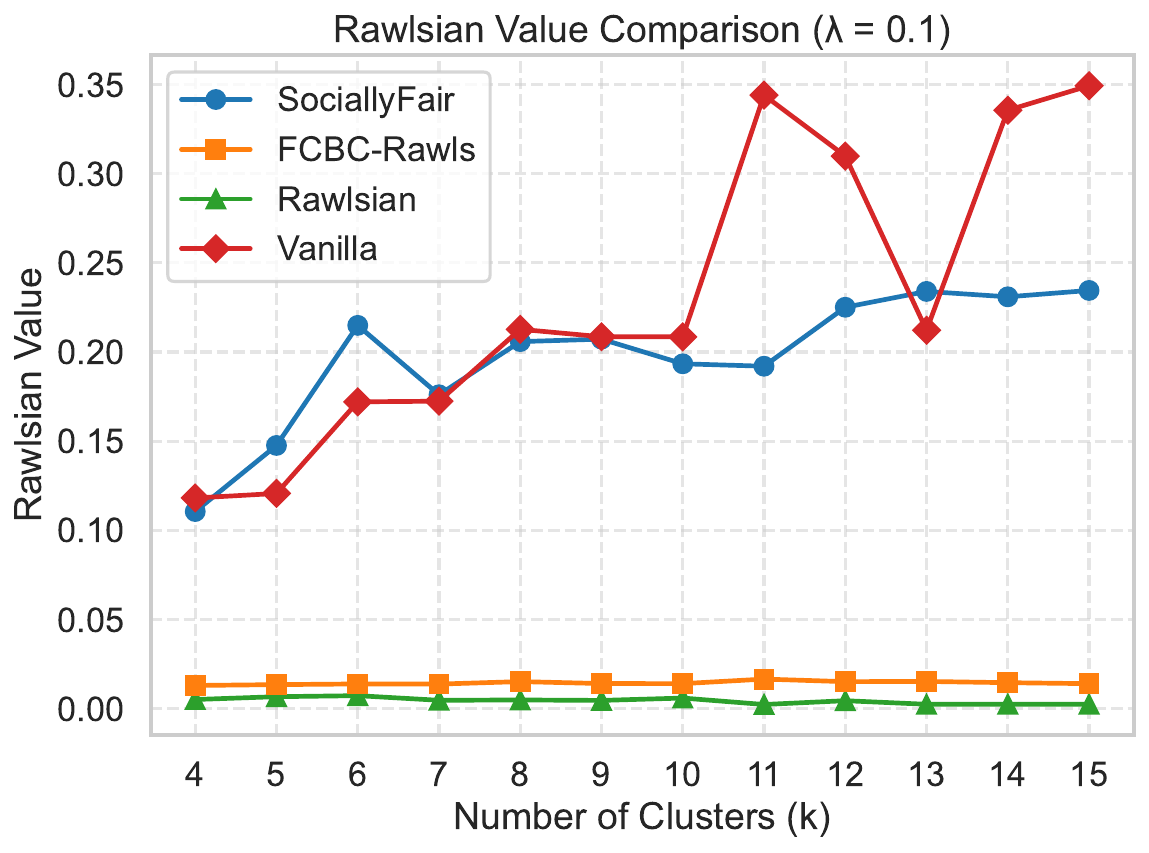}
        
        \label{fig:census_a}
    \end{subfigure}

     \begin{subfigure}[b]{0.32\textwidth}
        \includegraphics[width=\textwidth]{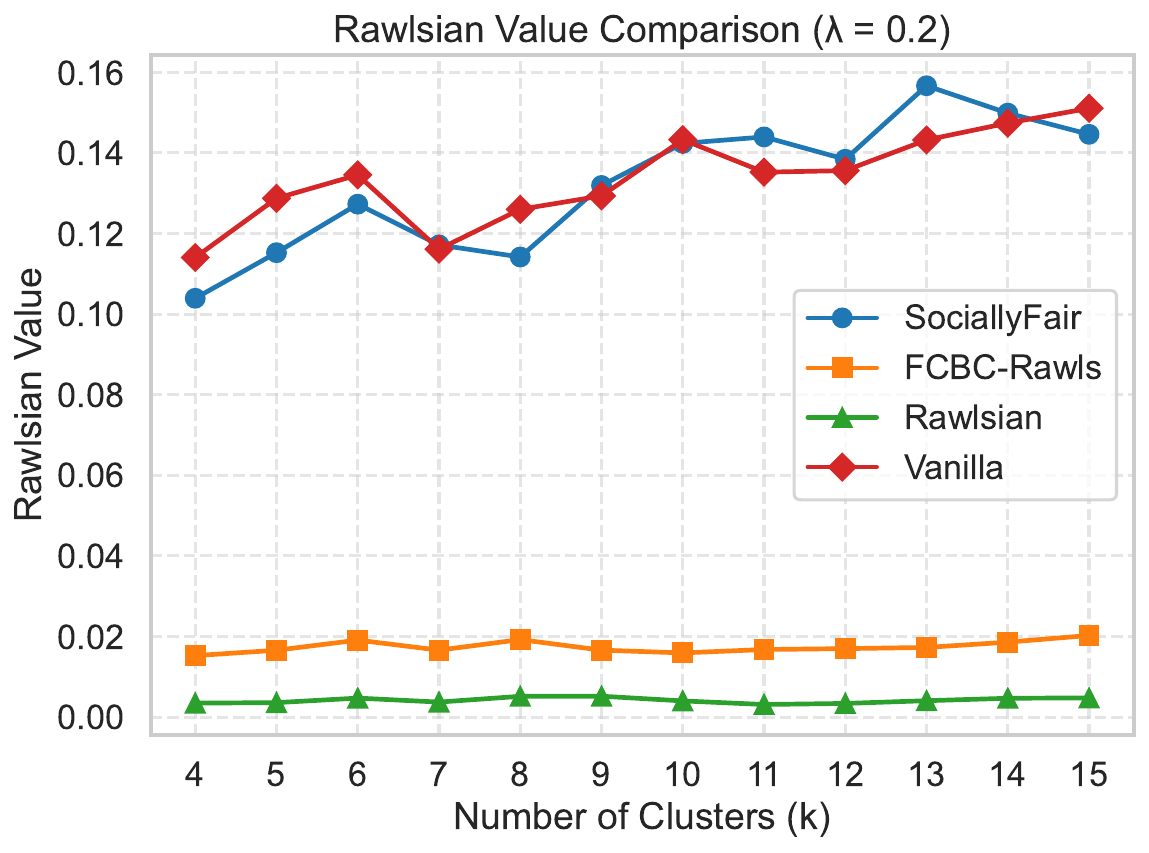}
        
        \label{fig:adult_a}
    \end{subfigure}
    \hfill
    \begin{subfigure}[b]{0.32\textwidth}
        \includegraphics[width=\textwidth]{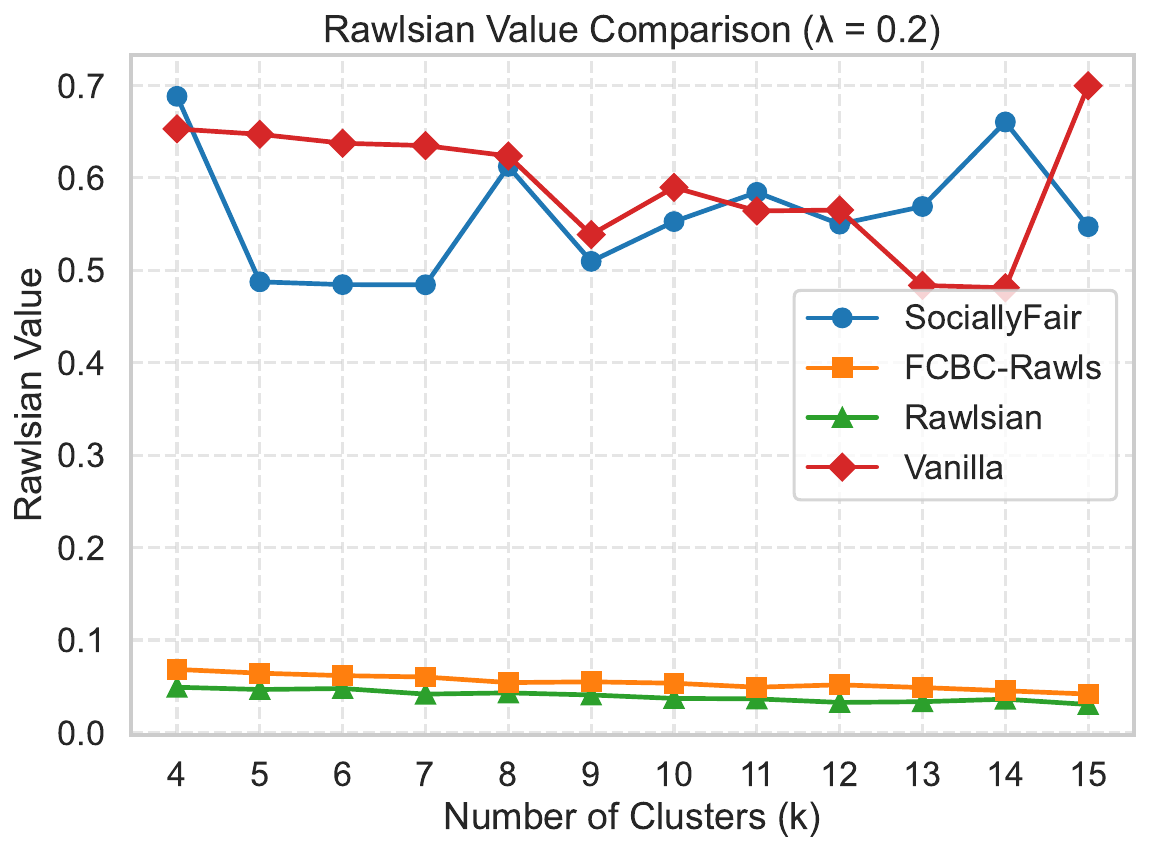}
        
        \label{fig:credit_a}
    \end{subfigure}
    \hfill
    \begin{subfigure}[b]{0.32\textwidth}
        \includegraphics[width=\textwidth]{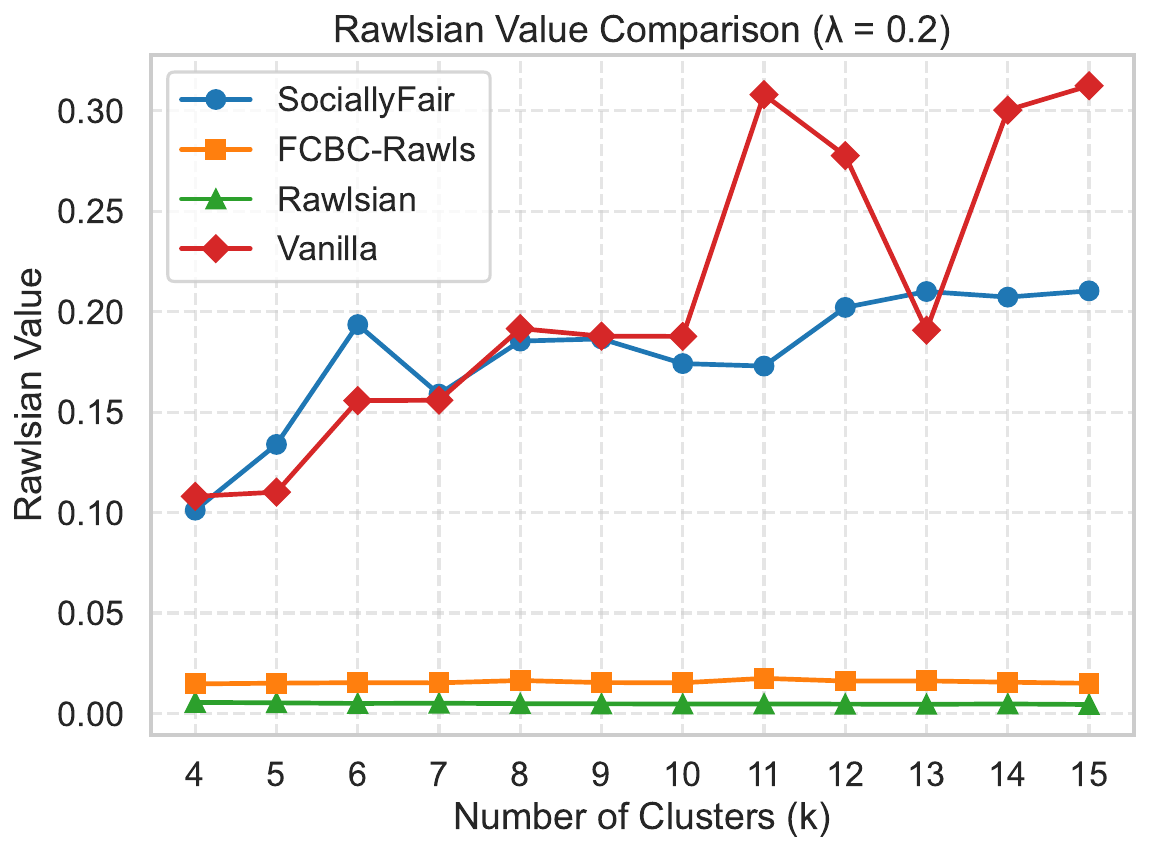}
        
        \label{fig:census_a}
    \end{subfigure}

     \begin{subfigure}[b]{0.32\textwidth}
        \includegraphics[width=\textwidth]{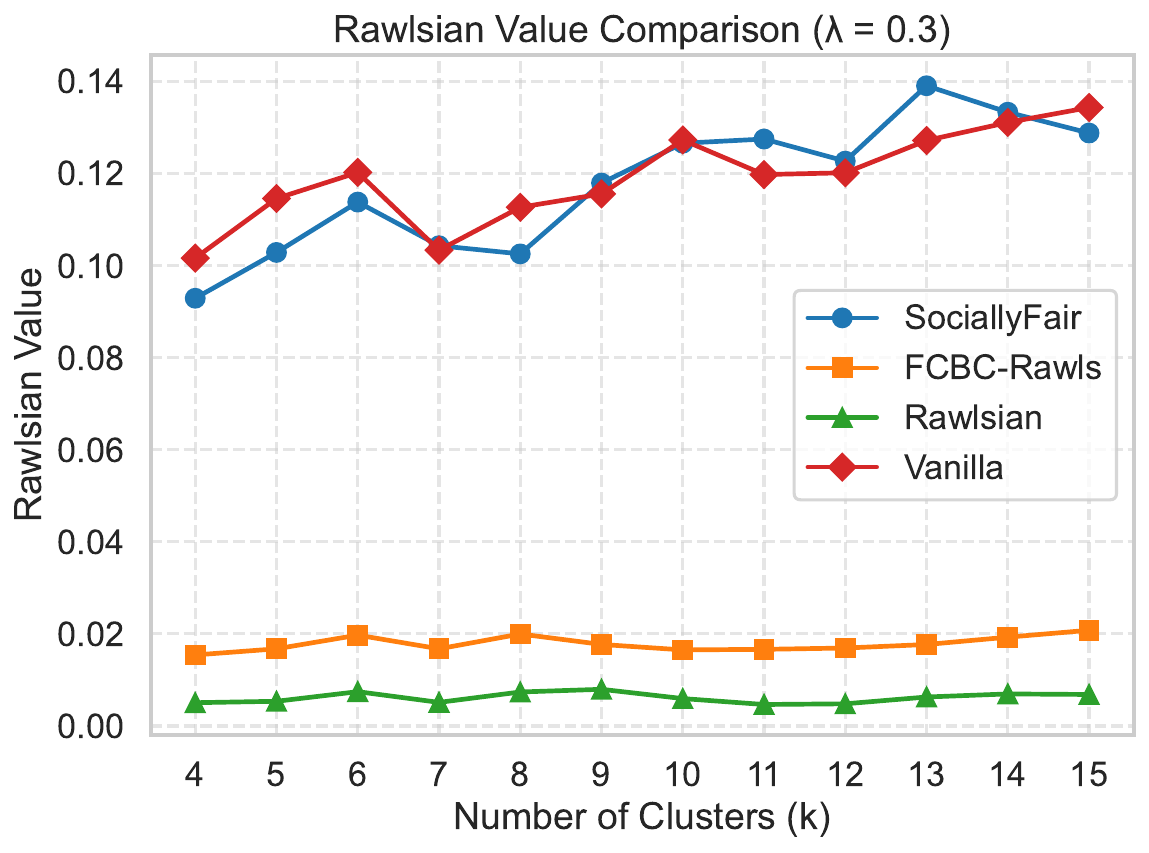}
        
        \label{fig:adult_a}
    \end{subfigure}
    \hfill
    \begin{subfigure}[b]{0.32\textwidth}
        \includegraphics[width=\textwidth]{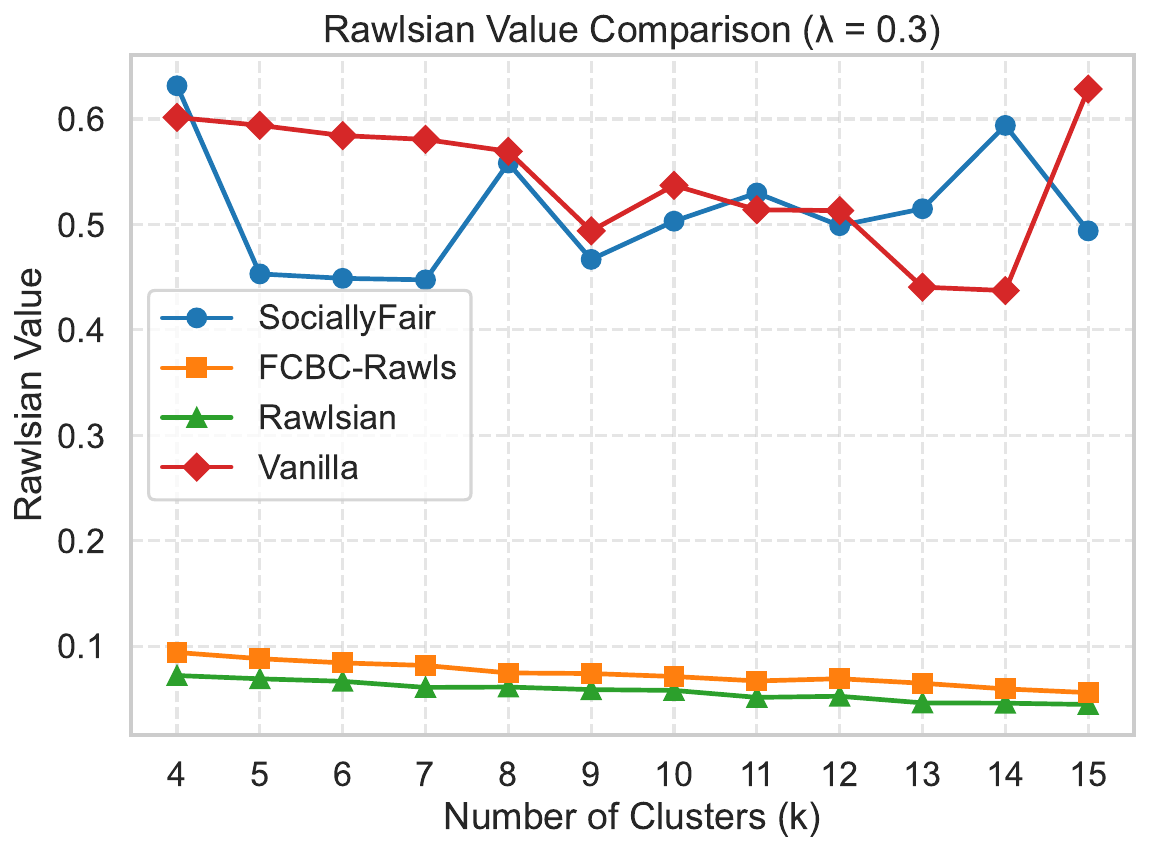}
        
        \label{fig:credit_a}
    \end{subfigure}
    \hfill
    \begin{subfigure}[b]{0.32\textwidth}
        \includegraphics[width=\textwidth]{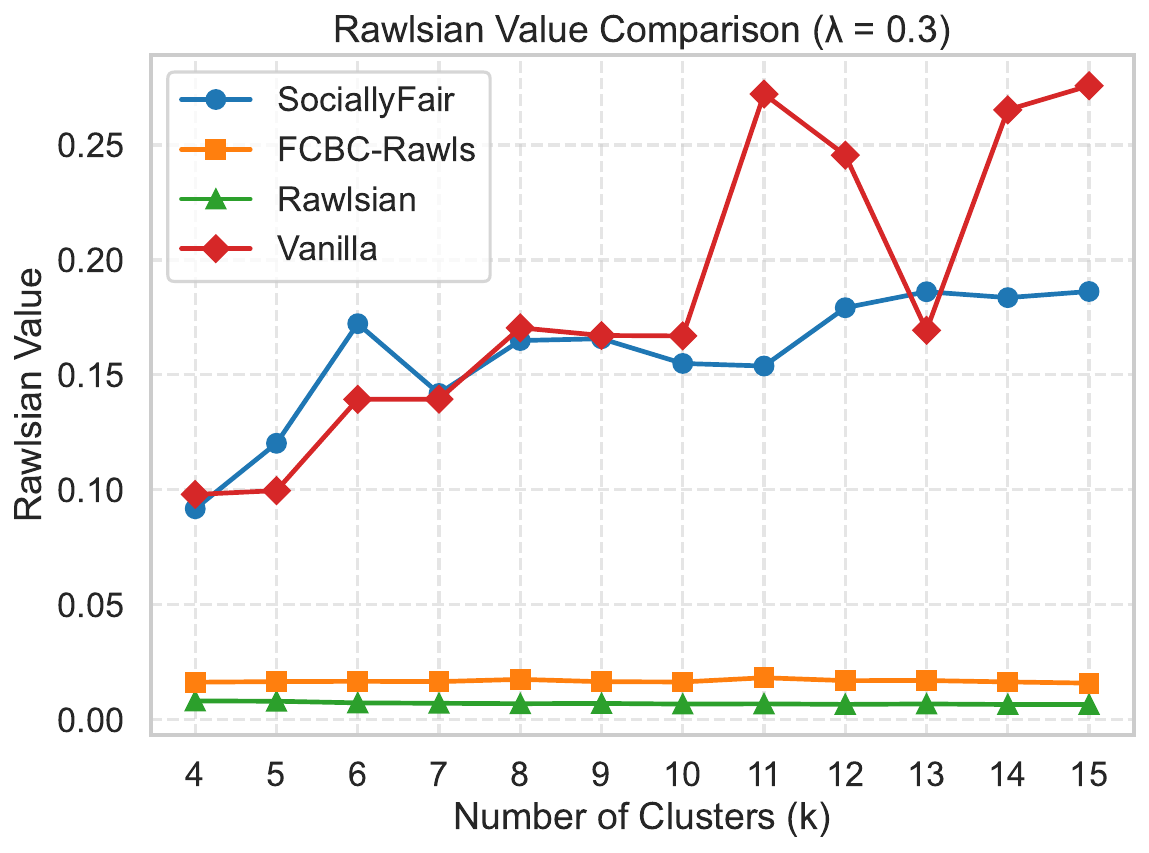}
        
        \label{fig:census_a}
    \end{subfigure}

     \begin{subfigure}[b]{0.32\textwidth}
        \includegraphics[width=\textwidth]{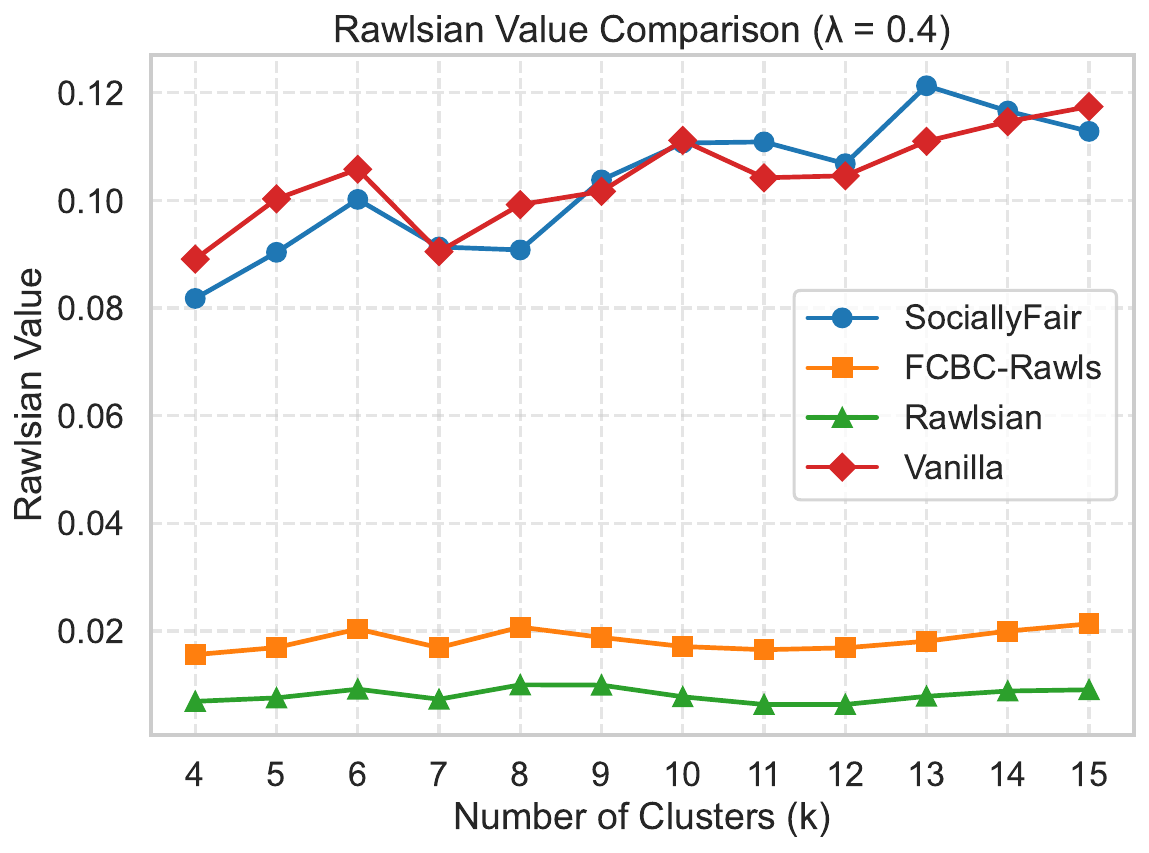}
        
        \label{fig:adult_a}
    \end{subfigure}
    \hfill
    \begin{subfigure}[b]{0.32\textwidth}
        \includegraphics[width=\textwidth]{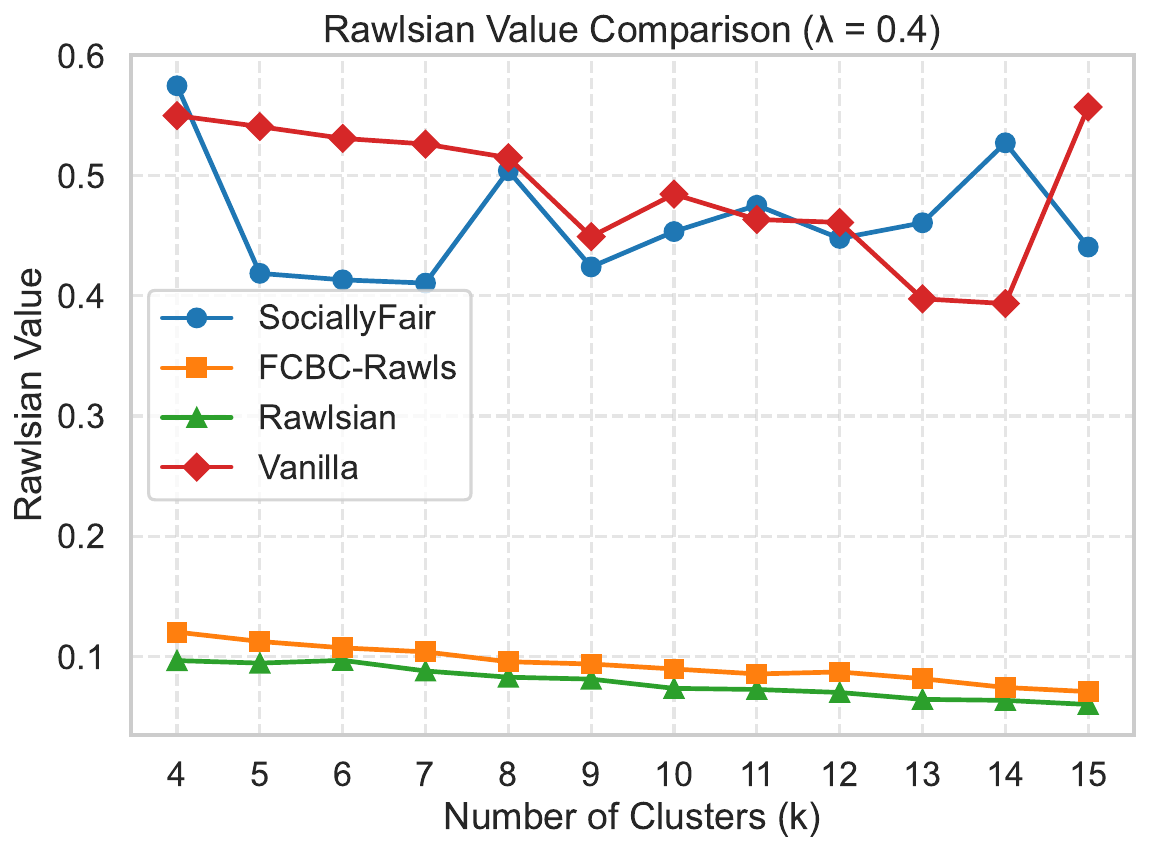}
        
        \label{fig:credit_a}
    \end{subfigure}
    \hfill
    \begin{subfigure}[b]{0.32\textwidth}
        \includegraphics[width=\textwidth]{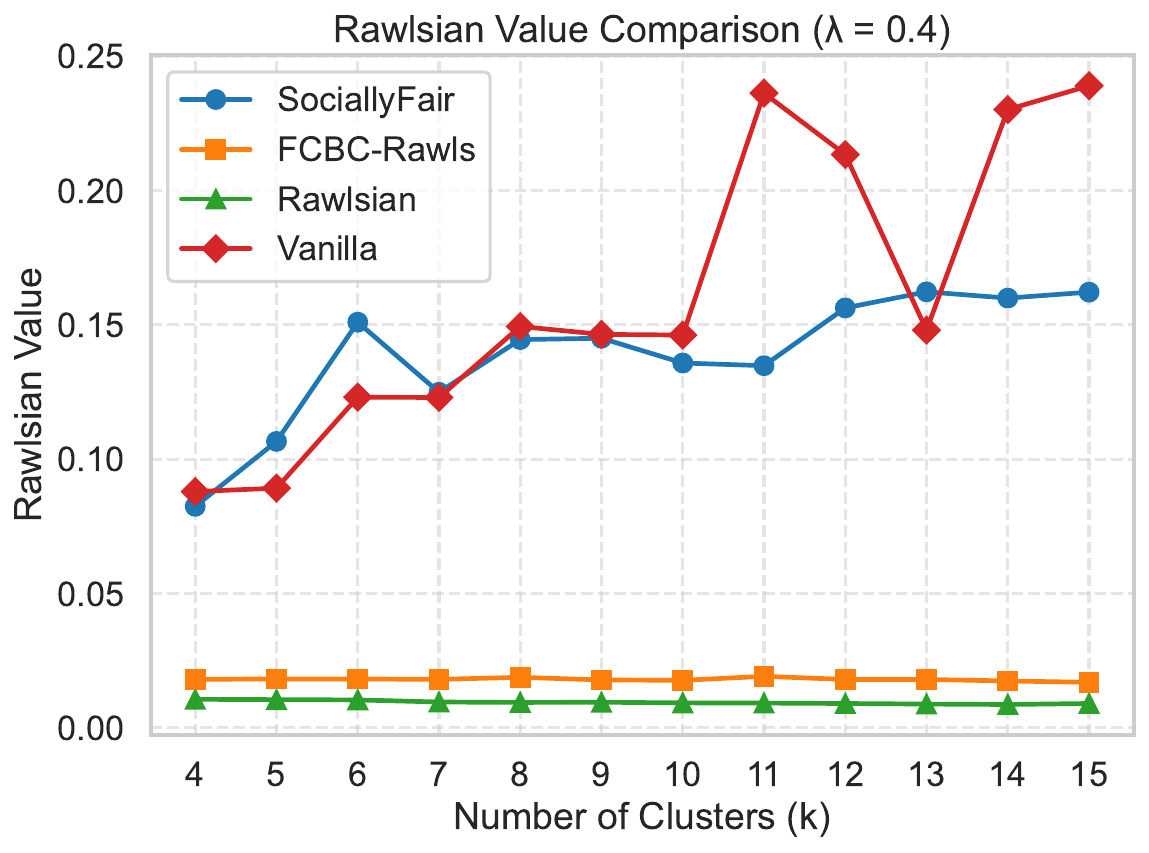}
        
        \label{fig:census_a}
    \end{subfigure}
    \caption{Rawlsian Value comparison for $\lambda=0.1$ to $0.4$}
    
\end{figure}
\FloatBarrier

\begin{figure}[H]
    \centering

     \begin{subfigure}[b]{0.32\textwidth}
        \includegraphics[width=\textwidth]{plots/two_group/ad_welfare_costs_comparison_lambda_0_5.pdf}
        
        \label{fig:adult_a}
    \end{subfigure}
    \hfill
    \begin{subfigure}[b]{0.32\textwidth}
        \includegraphics[width=\textwidth]{plots/two_group/cc_welfare_costs_comparison_lambda_0_5.pdf}
        
        \label{fig:credit_a}
    \end{subfigure}
    \hfill
    \begin{subfigure}[b]{0.32\textwidth}
        \includegraphics[width=\textwidth]{plots/two_group/cen_welfare_costs_comparison_lambda_0_5.pdf}
        
        \label{fig:census_a}
    \end{subfigure}

    \begin{subfigure}[b]{0.32\textwidth}
        \includegraphics[width=\textwidth]{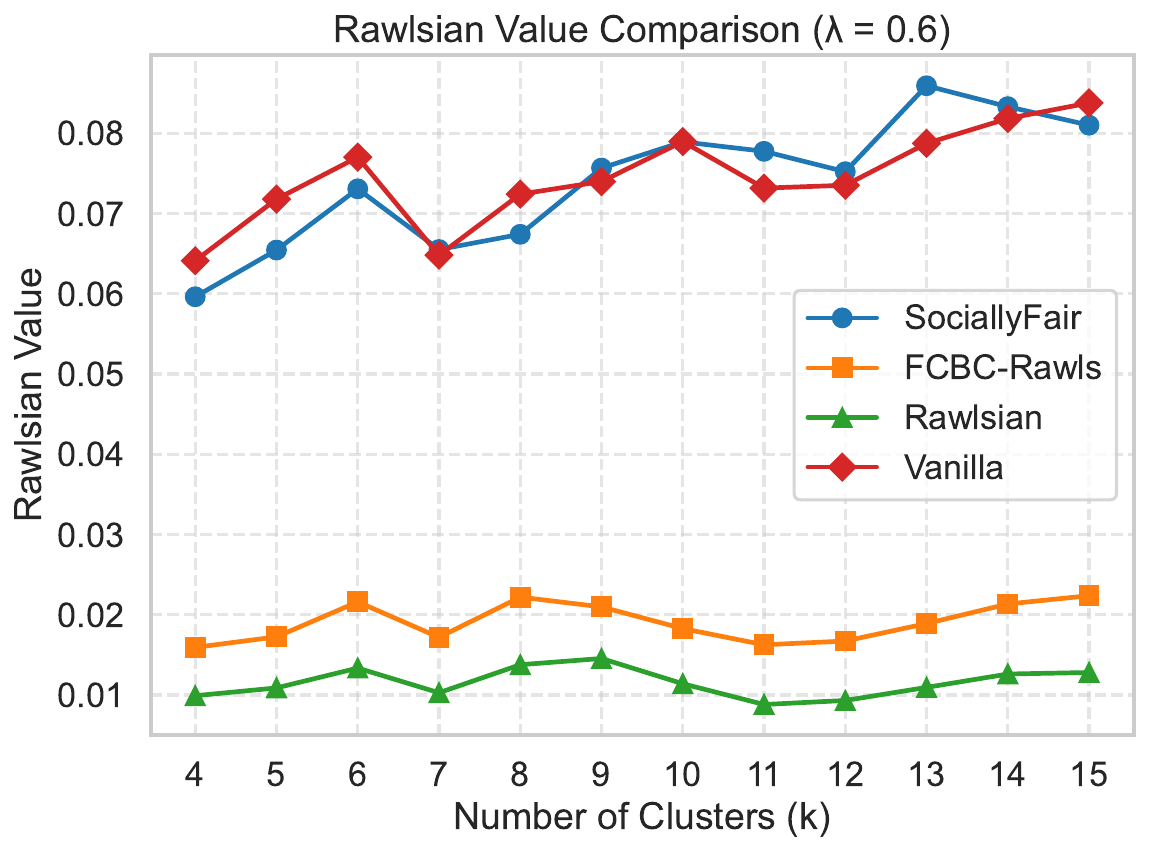}
        
        \label{fig:adult_a}
    \end{subfigure}
    \hfill
    \begin{subfigure}[b]{0.32\textwidth}
        \includegraphics[width=\textwidth]{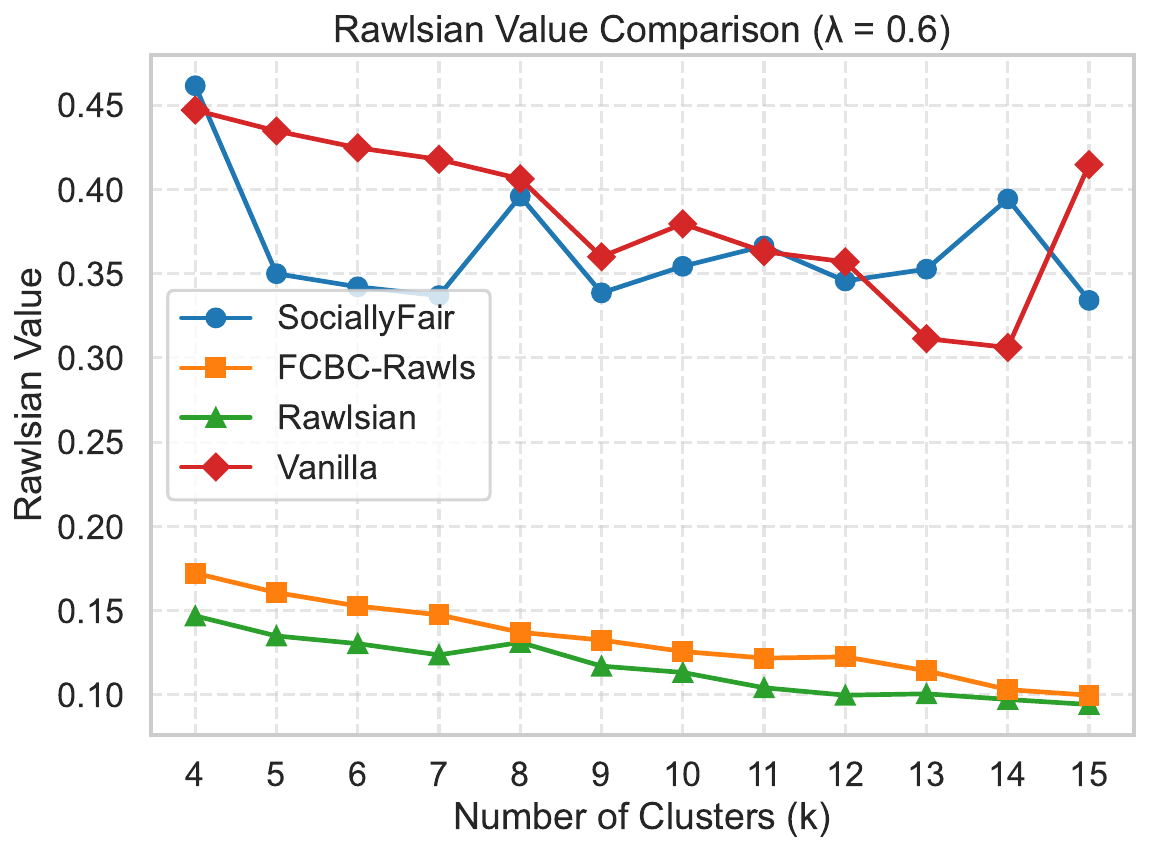}
        
        \label{fig:credit_a}
    \end{subfigure}
    \hfill
    \begin{subfigure}[b]{0.32\textwidth}
        \includegraphics[width=\textwidth]{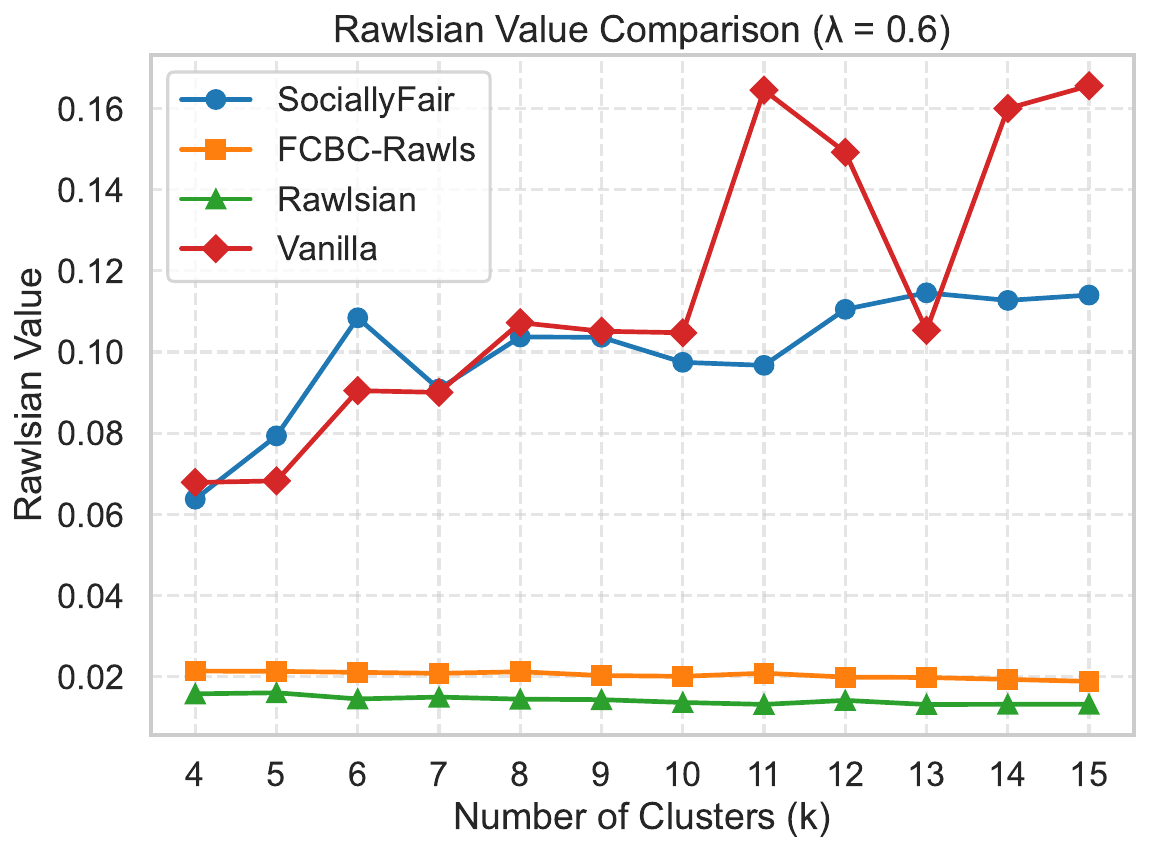}
        
        \label{fig:census_a}
    \end{subfigure}

    \begin{subfigure}[b]{0.32\textwidth}
        \includegraphics[width=\textwidth]{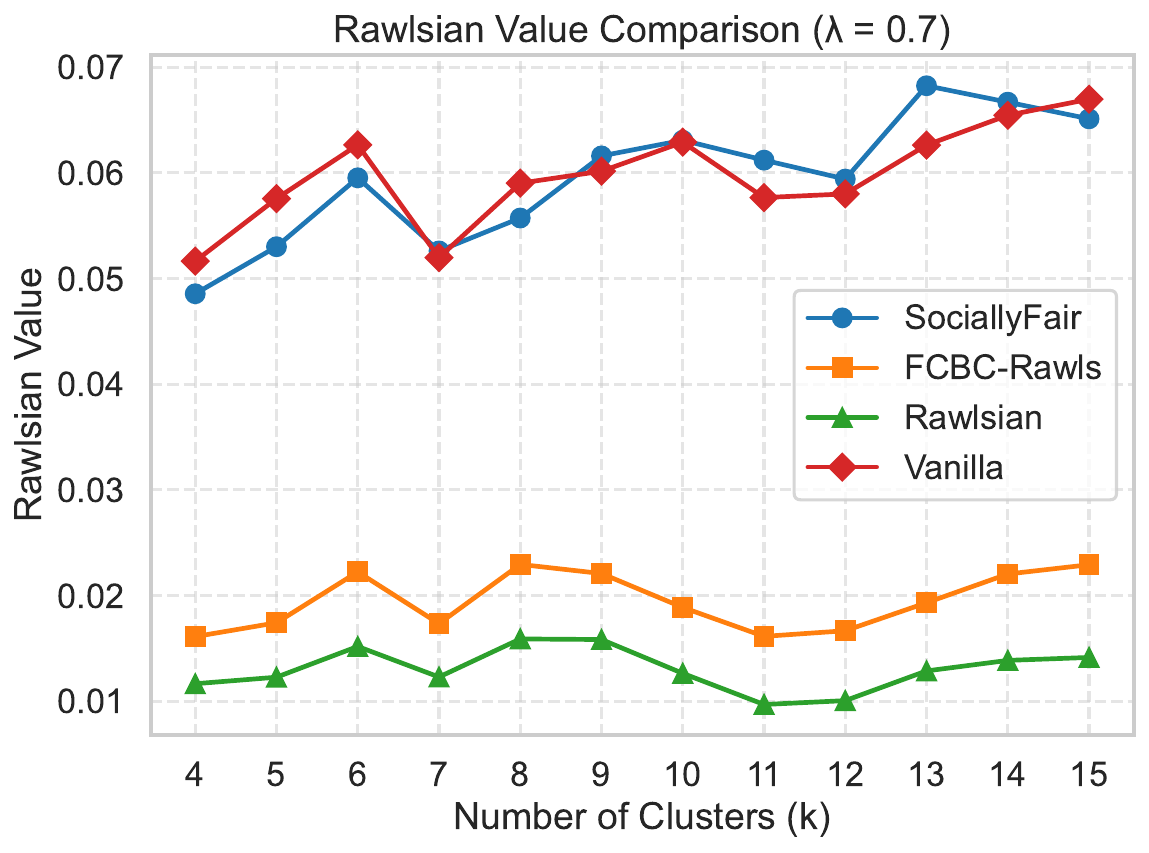}
        
        \label{fig:adult_a}
    \end{subfigure}
    \hfill
    \begin{subfigure}[b]{0.32\textwidth}
        \includegraphics[width=\textwidth]{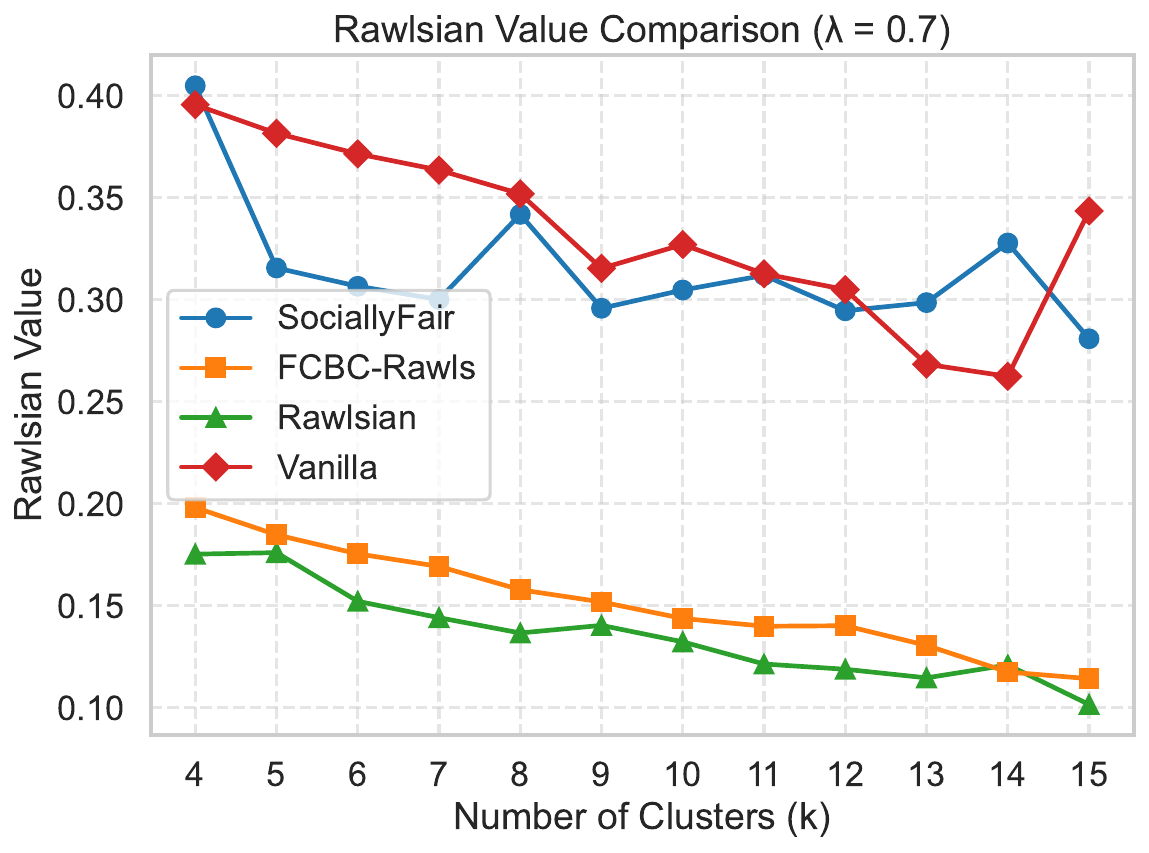}
        
        \label{fig:credit_a}
    \end{subfigure}
    \hfill
    \begin{subfigure}[b]{0.32\textwidth}
        \includegraphics[width=\textwidth]{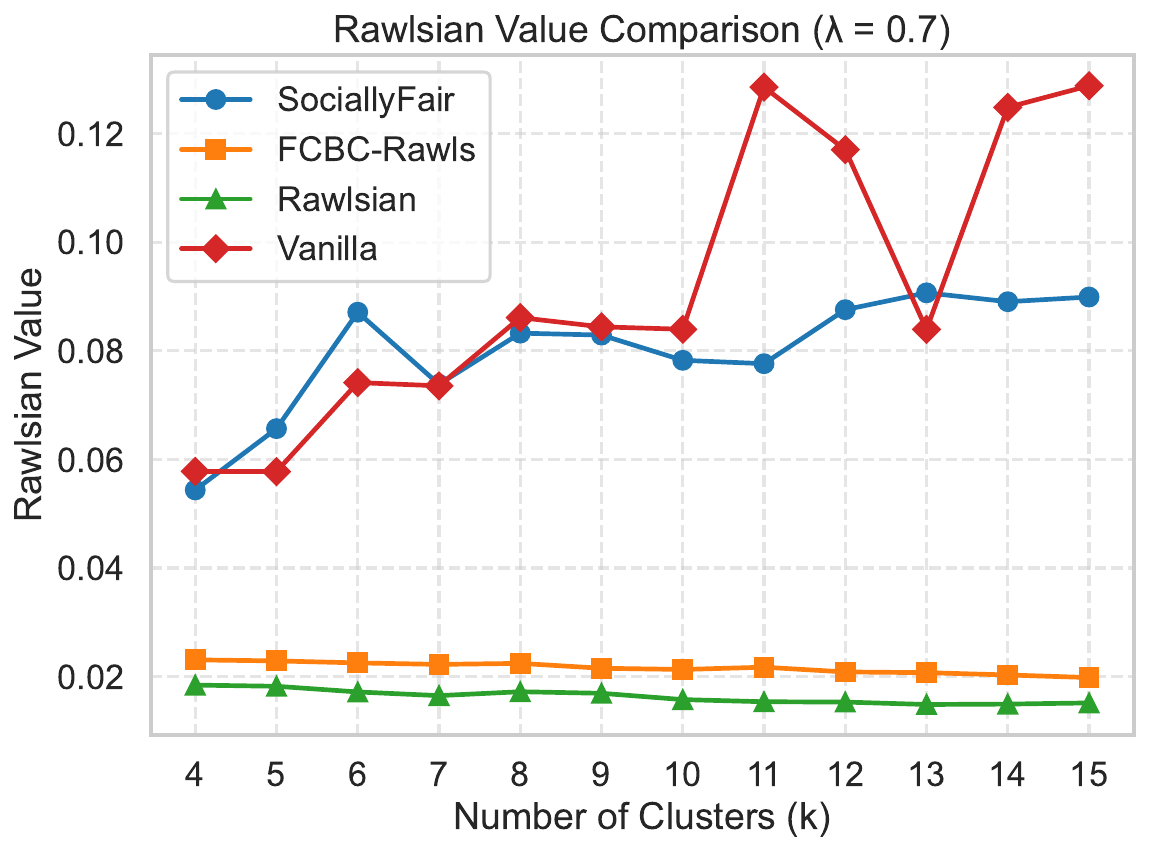}
        
        \label{fig:census_a}
    \end{subfigure}

     \begin{subfigure}[b]{0.32\textwidth}
        \includegraphics[width=\textwidth]{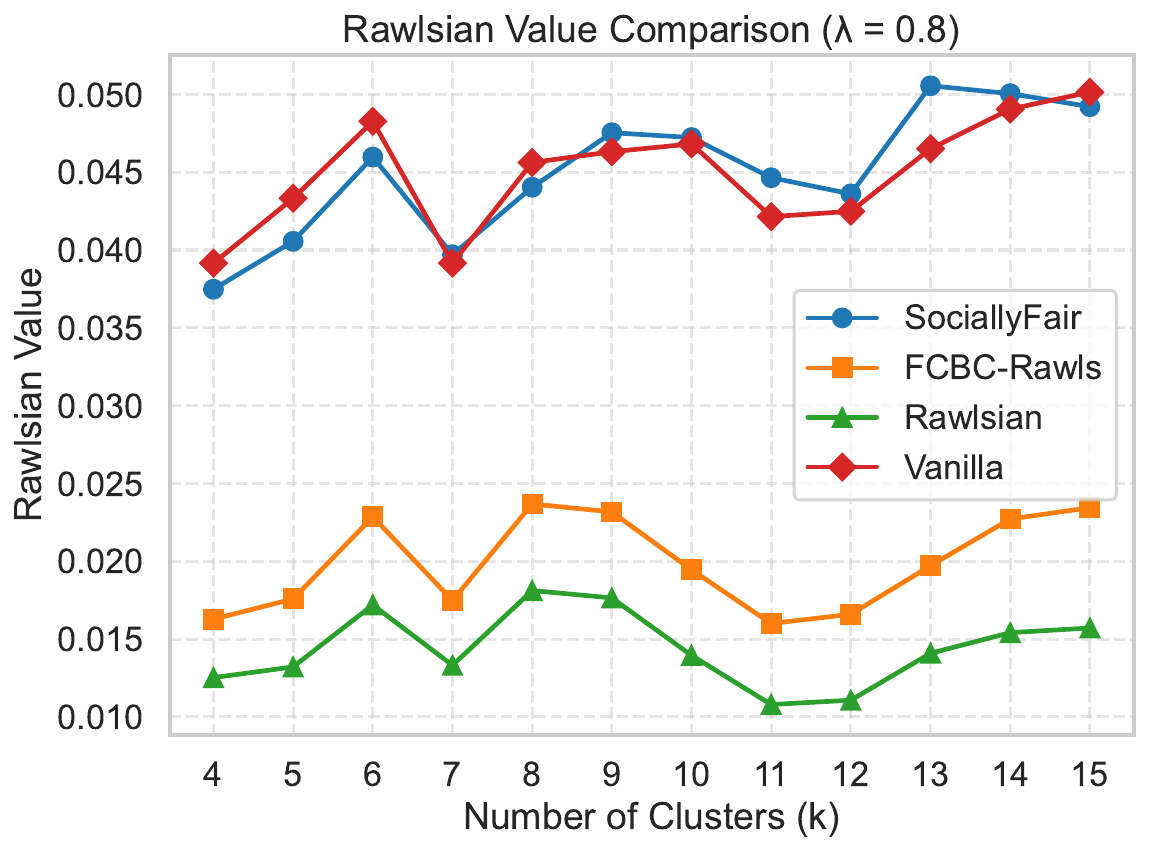}
        
        \label{fig:adult_a}
    \end{subfigure}
    \hfill
    \begin{subfigure}[b]{0.32\textwidth}
        \includegraphics[width=\textwidth]{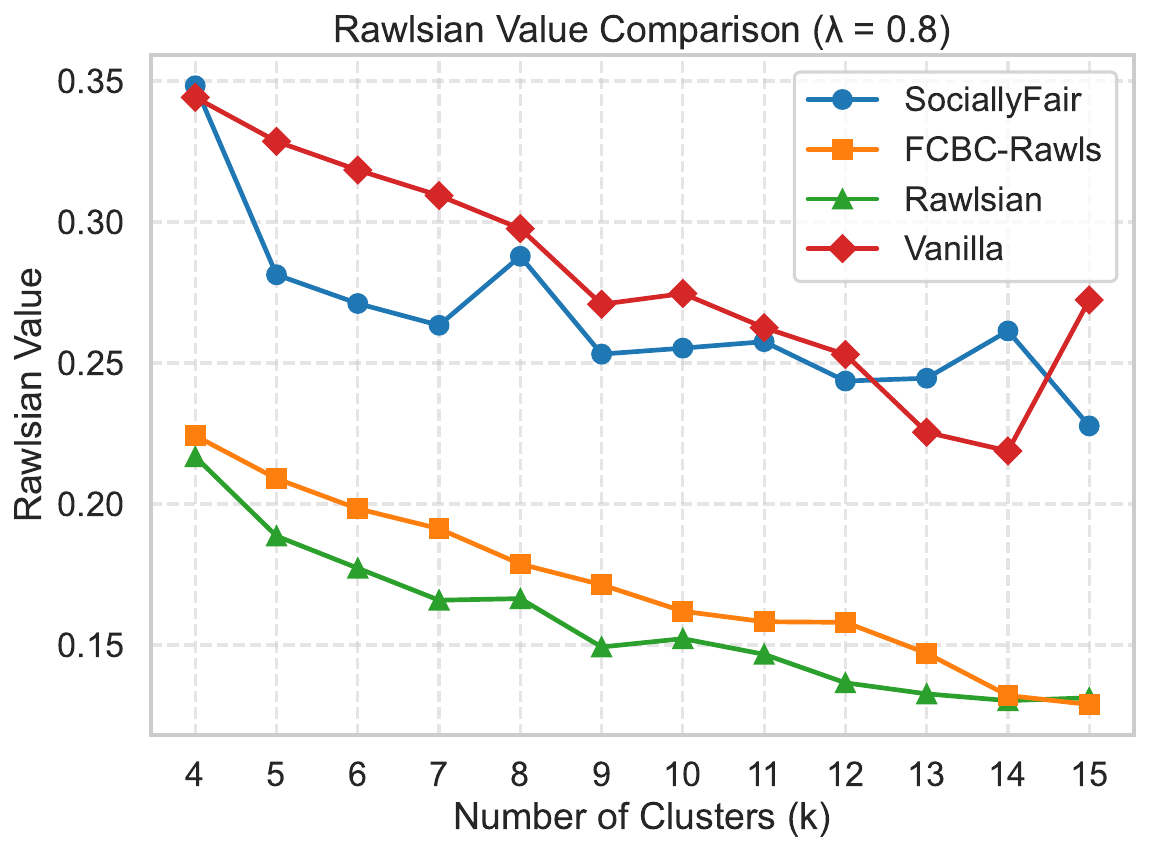}
        
        \label{fig:credit_a}
    \end{subfigure}
    \hfill
    \begin{subfigure}[b]{0.32\textwidth}
        \includegraphics[width=\textwidth]{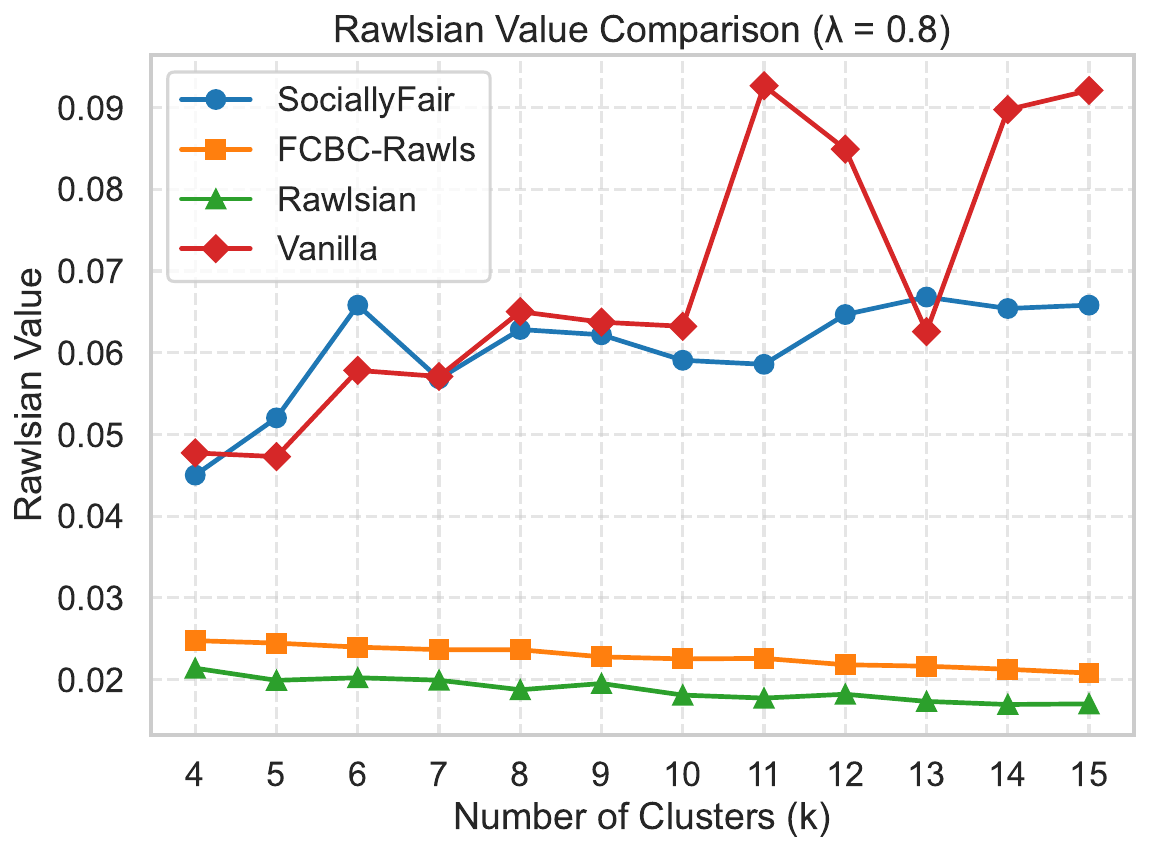}
        
        \label{fig:census_a}
    \end{subfigure}

     \begin{subfigure}[b]{0.32\textwidth}
        \includegraphics[width=\textwidth]{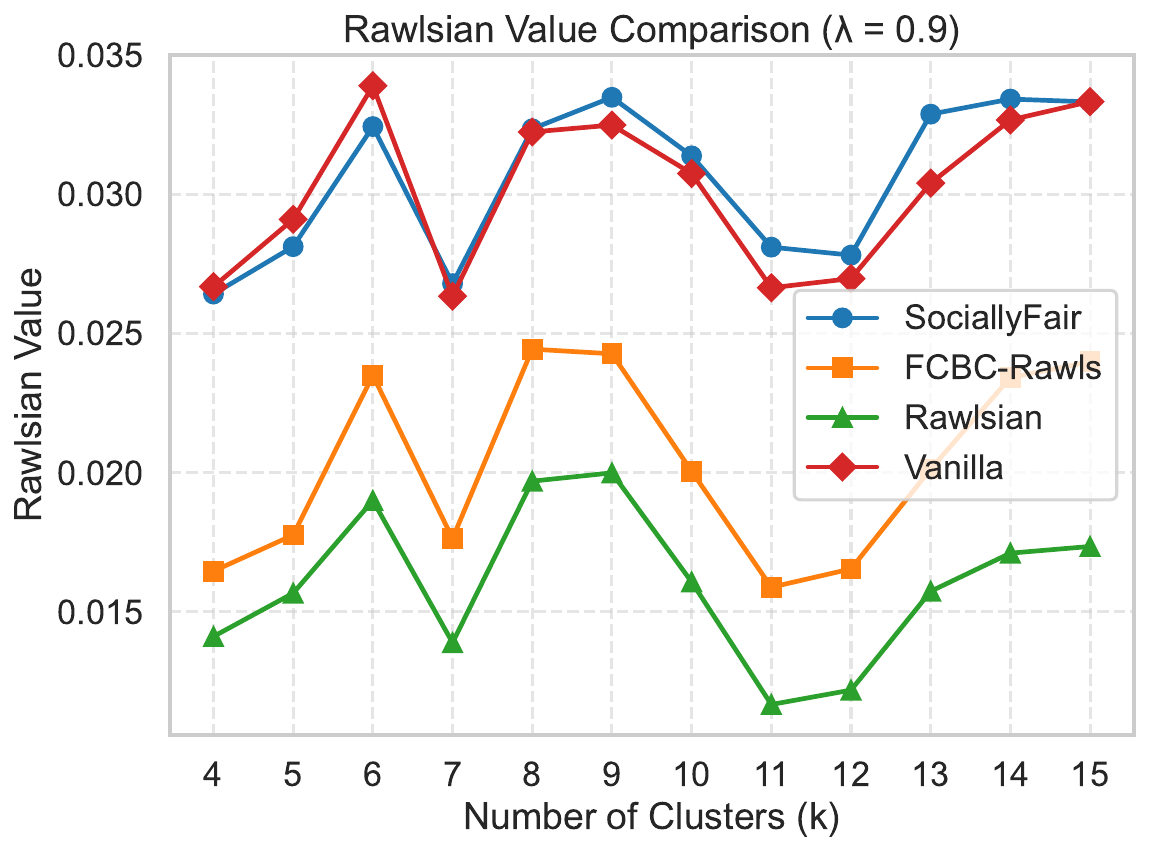}
        
        \label{fig:adult_a}
    \end{subfigure}
    \hfill
    \begin{subfigure}[b]{0.32\textwidth}
        \includegraphics[width=\textwidth]{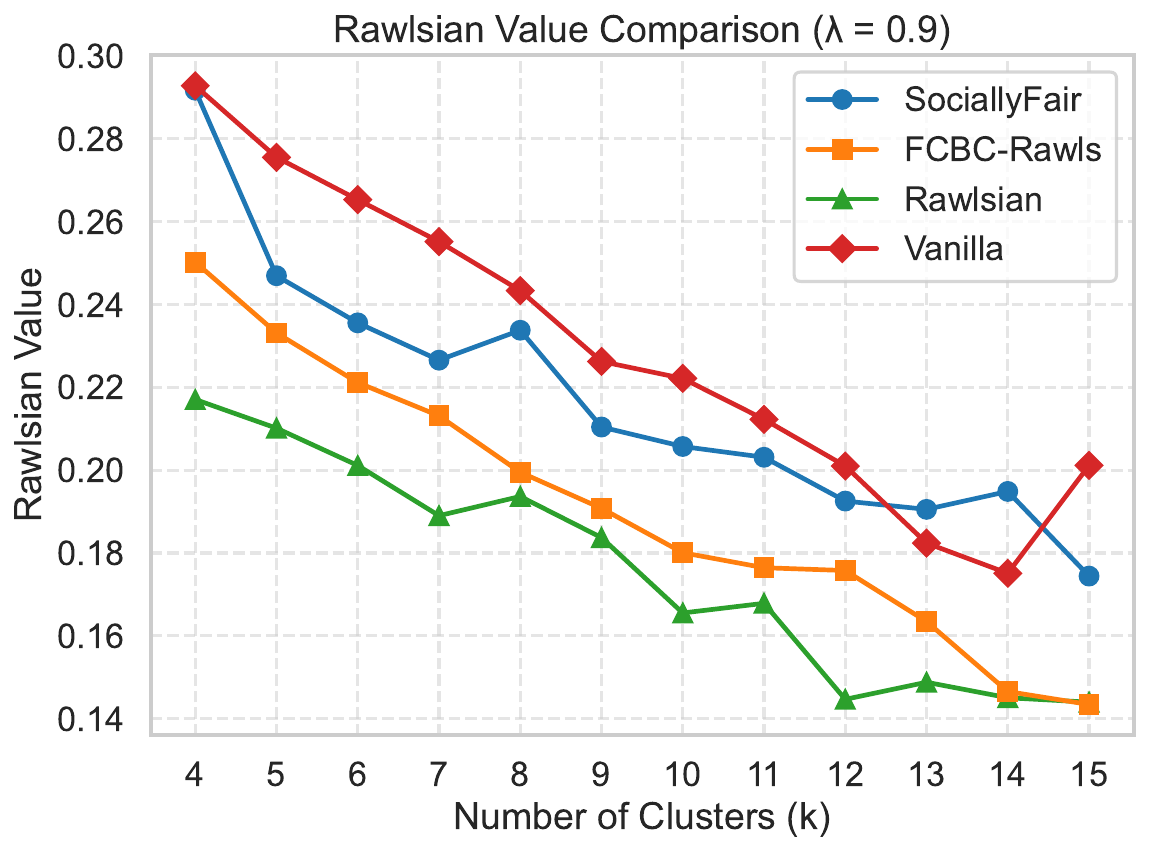}
        
        \label{fig:credit_a}
    \end{subfigure}
    \hfill
    \begin{subfigure}[b]{0.32\textwidth}
        \includegraphics[width=\textwidth]{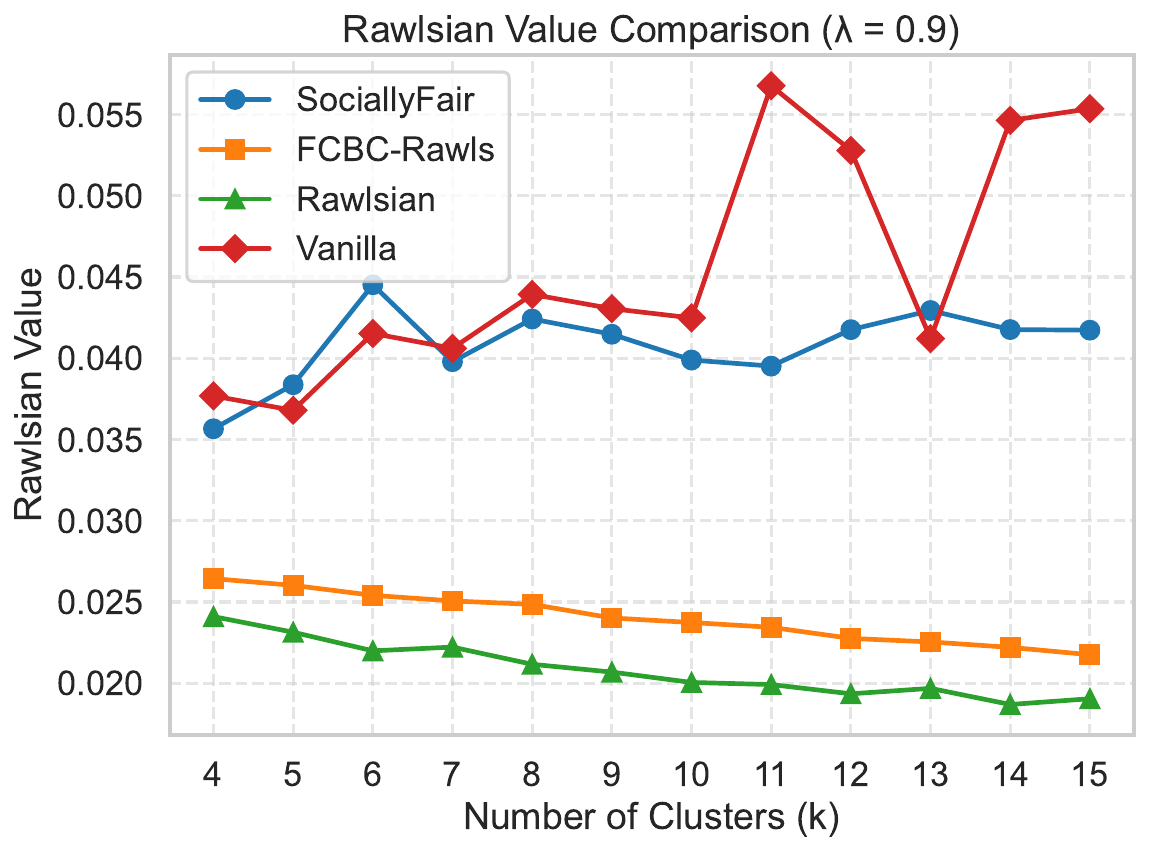}
        
        \label{fig:census_a}
    \end{subfigure}

    \caption{Rawlsian Value comparison for $\lambda=0.5$ to $0.9$}
\end{figure}
\FloatBarrier

\clearpage
Below we present a set of plots for Utilitarian objective for $\lambda \in \{0.1, 0.2, \dots, 0.9\}$.

\begin{figure}[H]
    \centering

    \begin{minipage}[b]{0.32\textwidth}
        \centering
        \textbf{Adult}
    \end{minipage}
    \hfill
    \begin{minipage}[b]{0.32\textwidth}
        \centering
        \textbf{Credit Card }
    \end{minipage}
    \hfill
    \begin{minipage}[b]{0.32\textwidth}
        \centering
        \textbf{Census }
    \end{minipage}

    \begin{subfigure}[b]{0.32\textwidth}
        \includegraphics[width=\textwidth]{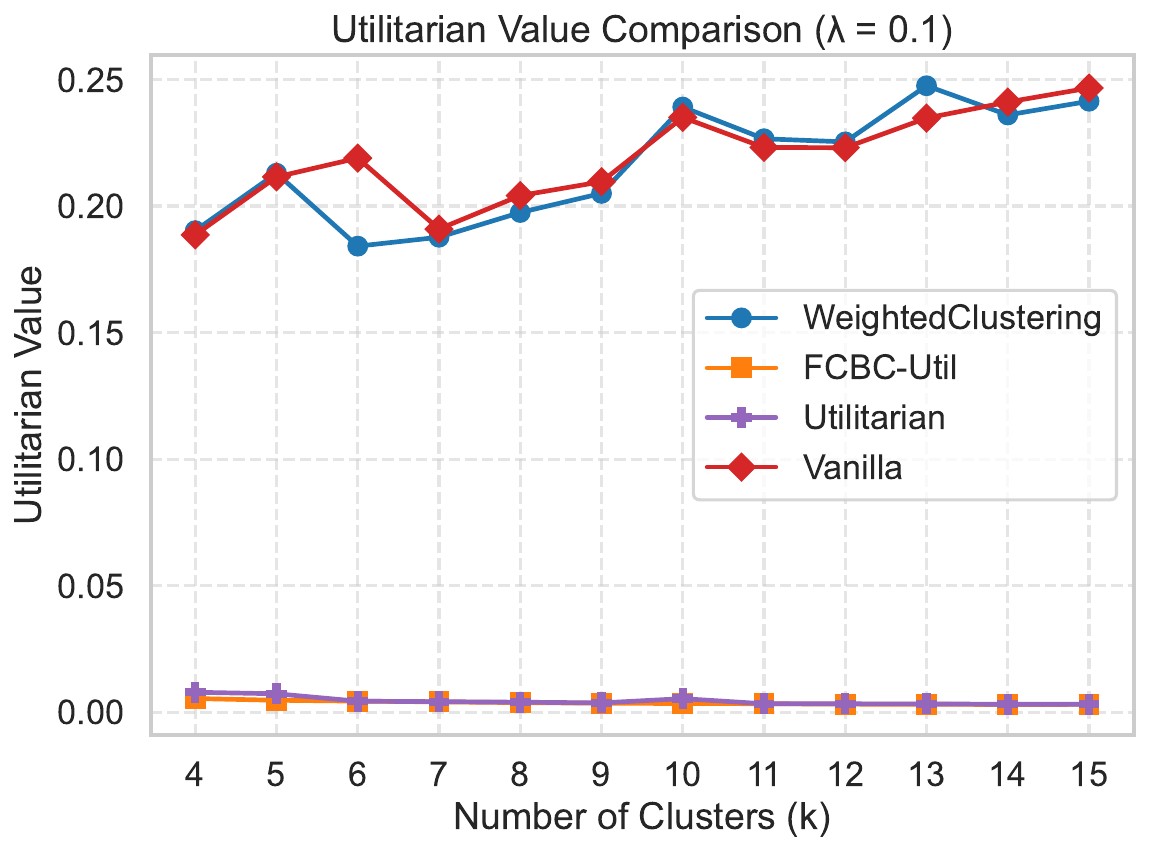}
        
        \label{fig:adult_a}
    \end{subfigure}
    \hfill
    \begin{subfigure}[b]{0.32\textwidth}
        \includegraphics[width=\textwidth]{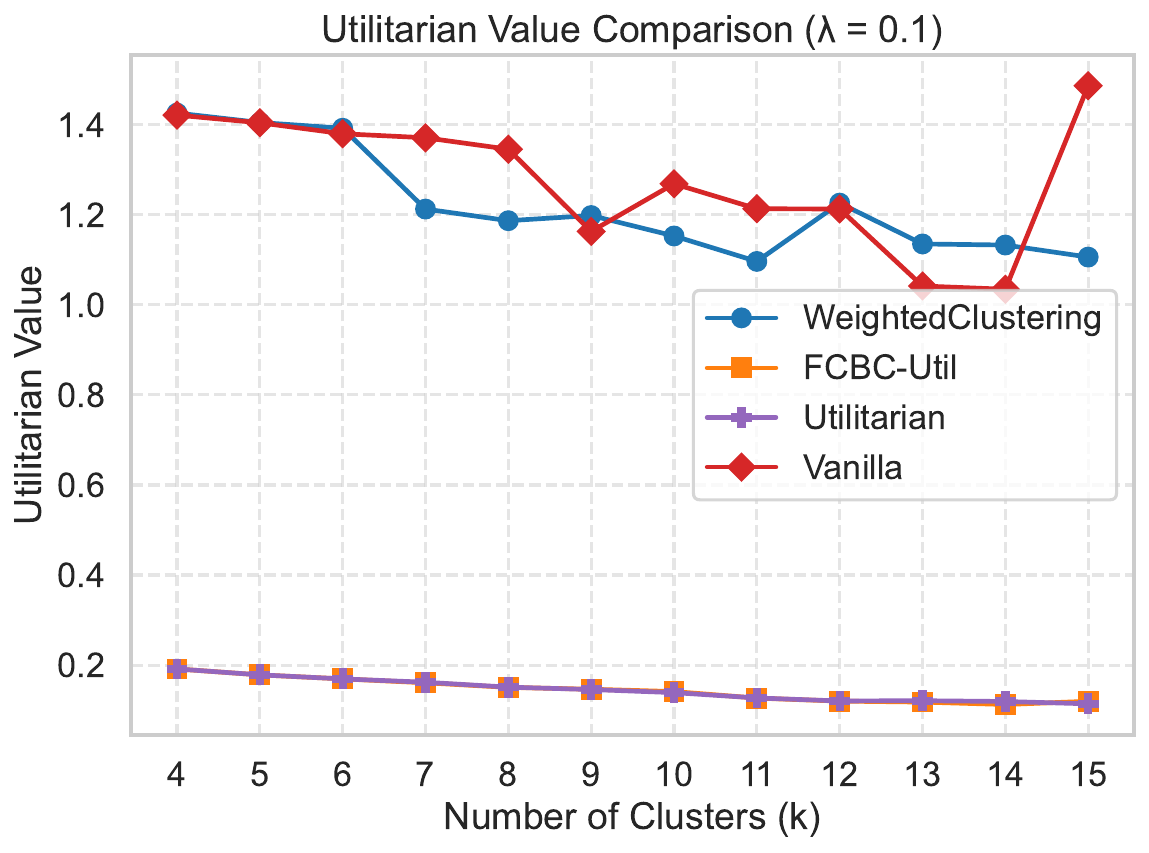}
        
        \label{fig:credit_a}
    \end{subfigure}
    \hfill
    \begin{subfigure}[b]{0.32\textwidth}
        \includegraphics[width=\textwidth]{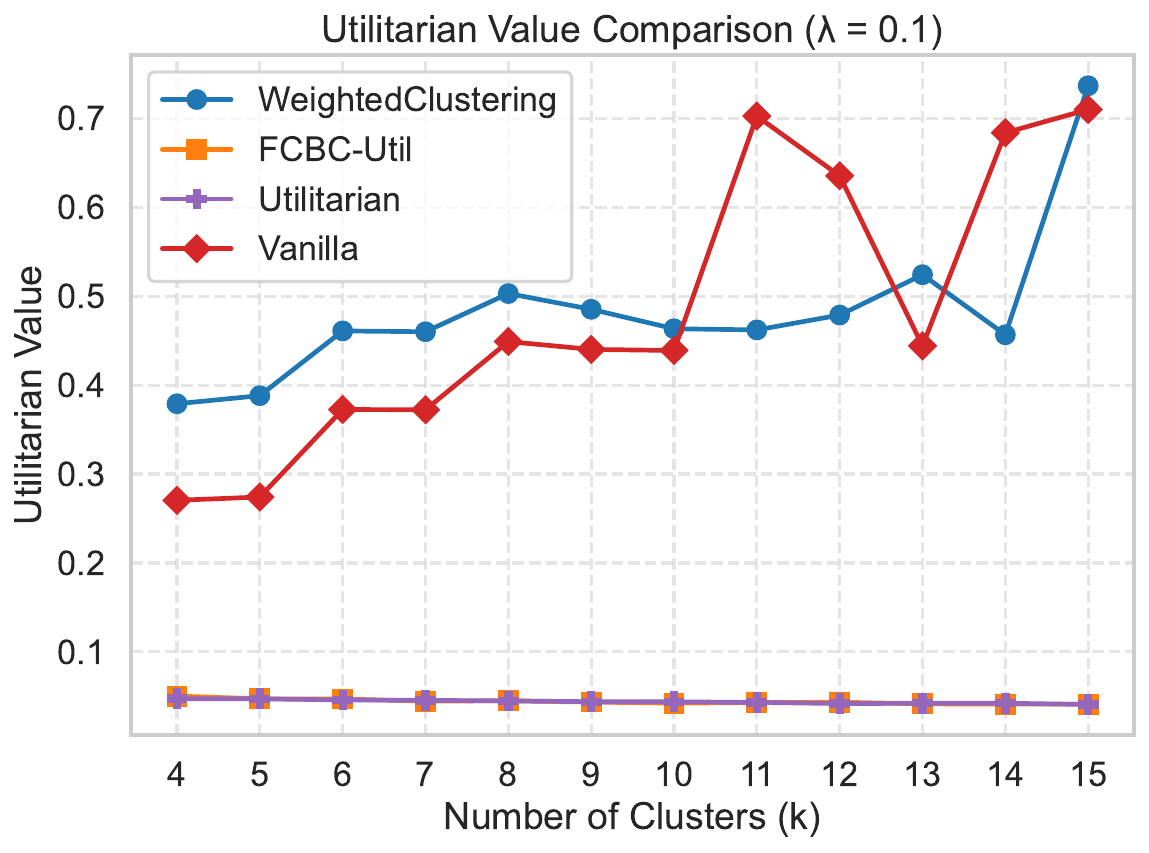}
        
        \label{fig:census_a}
    \end{subfigure}

     \begin{subfigure}[b]{0.32\textwidth}
        \includegraphics[width=\textwidth]{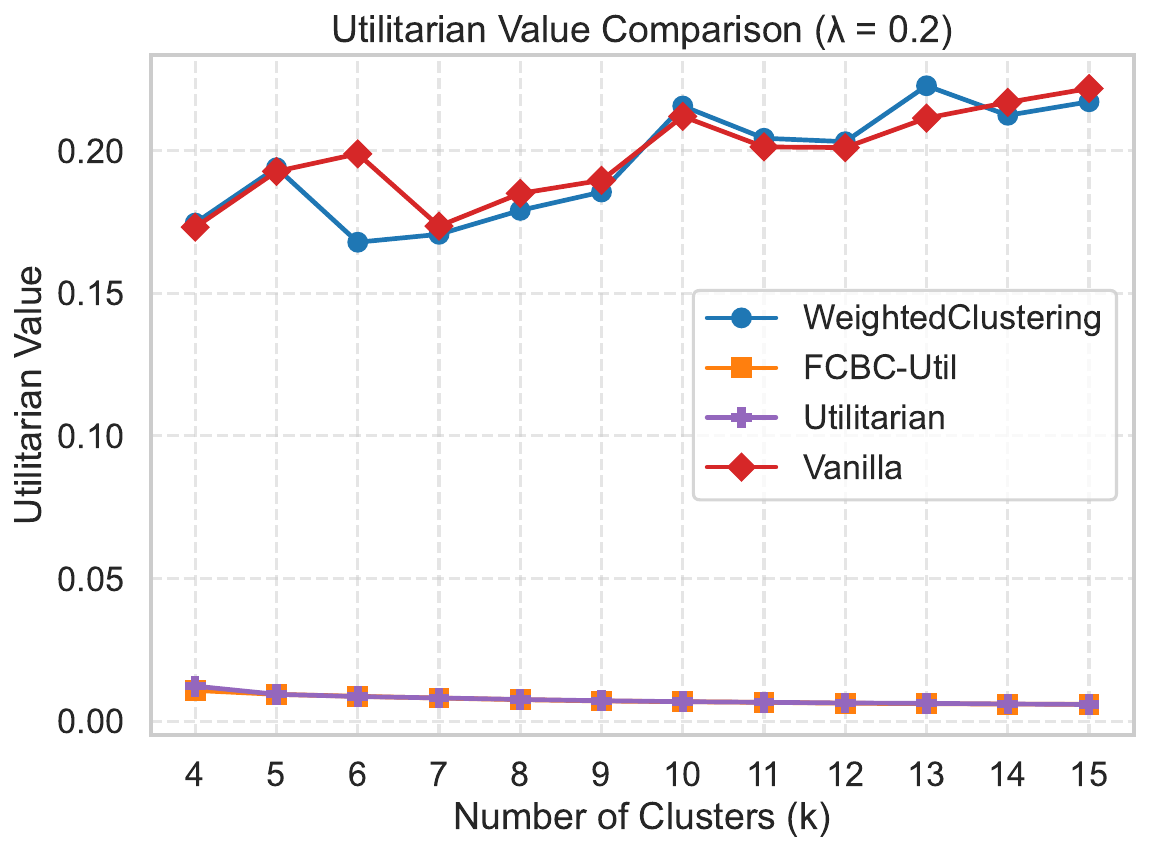}
        
        \label{fig:adult_a}
    \end{subfigure}
    \hfill
    \begin{subfigure}[b]{0.32\textwidth}
        \includegraphics[width=\textwidth]{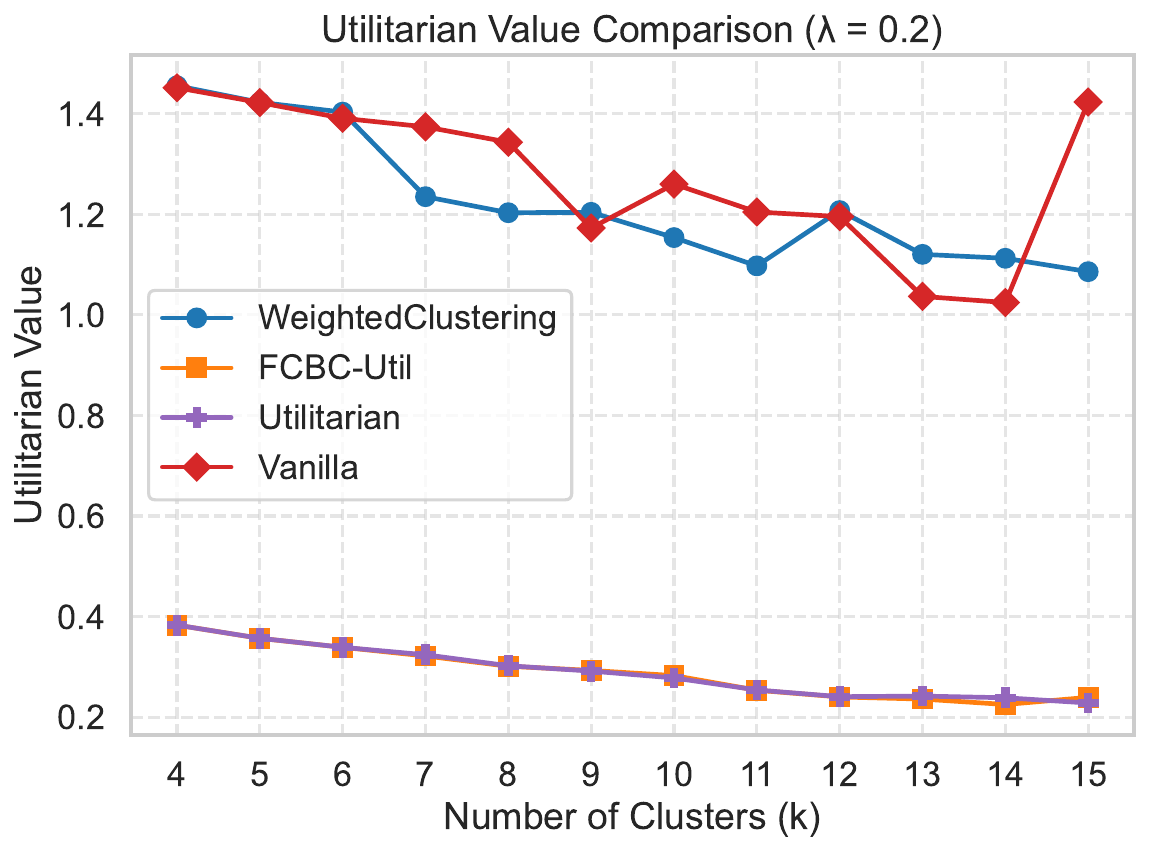}
        
        \label{fig:credit_a}
    \end{subfigure}
    \hfill
    \begin{subfigure}[b]{0.32\textwidth}
        \includegraphics[width=\textwidth]{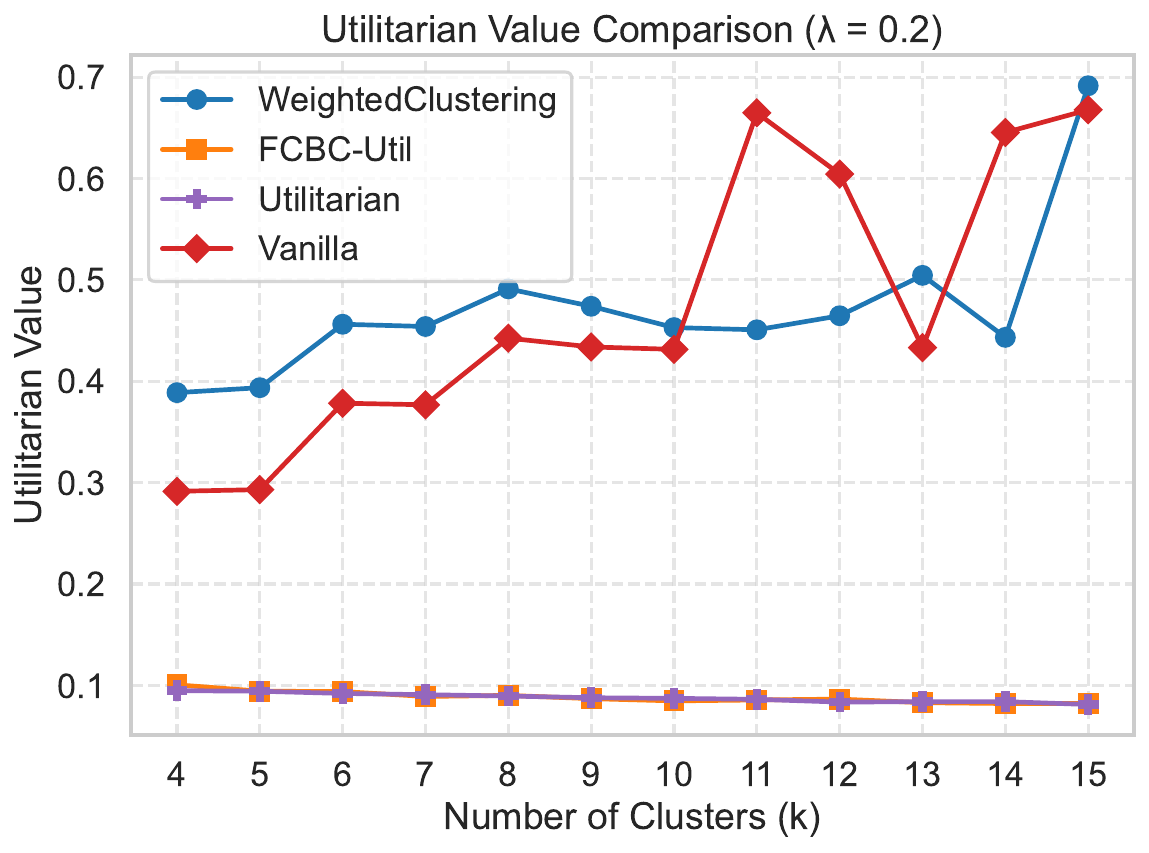}
        
        \label{fig:census_a}
    \end{subfigure}

     \begin{subfigure}[b]{0.32\textwidth}
        \includegraphics[width=\textwidth]{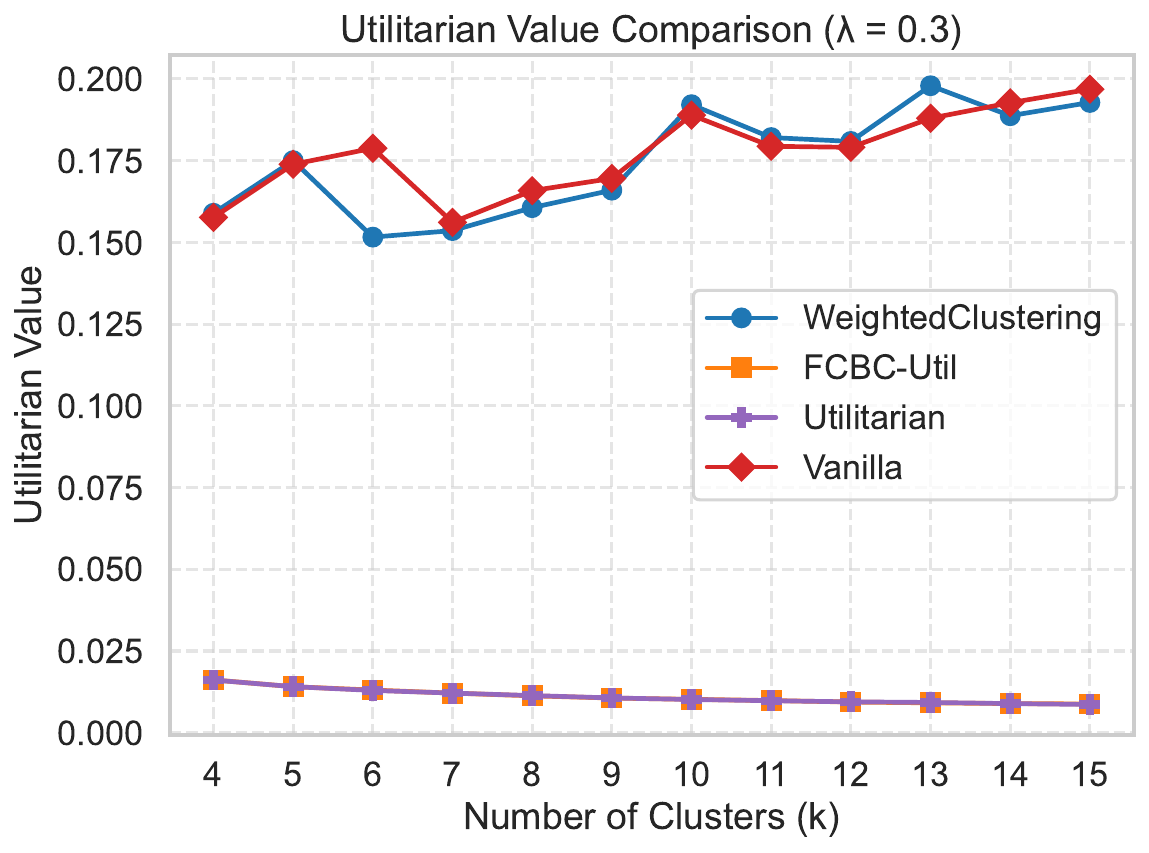}
        
        \label{fig:adult_a}
    \end{subfigure}
    \hfill
    \begin{subfigure}[b]{0.32\textwidth}
        \includegraphics[width=\textwidth]{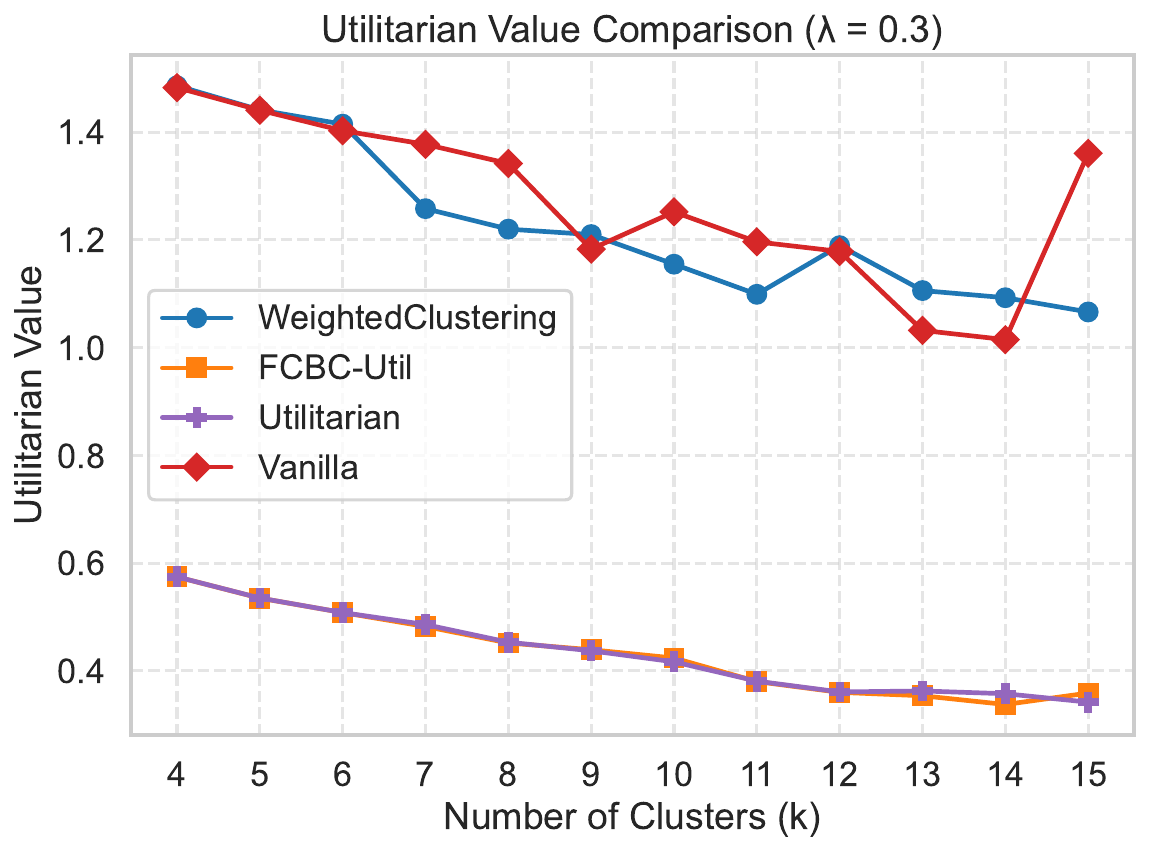}
        
        \label{fig:credit_a}
    \end{subfigure}
    \hfill
    \begin{subfigure}[b]{0.32\textwidth}
        \includegraphics[width=\textwidth]{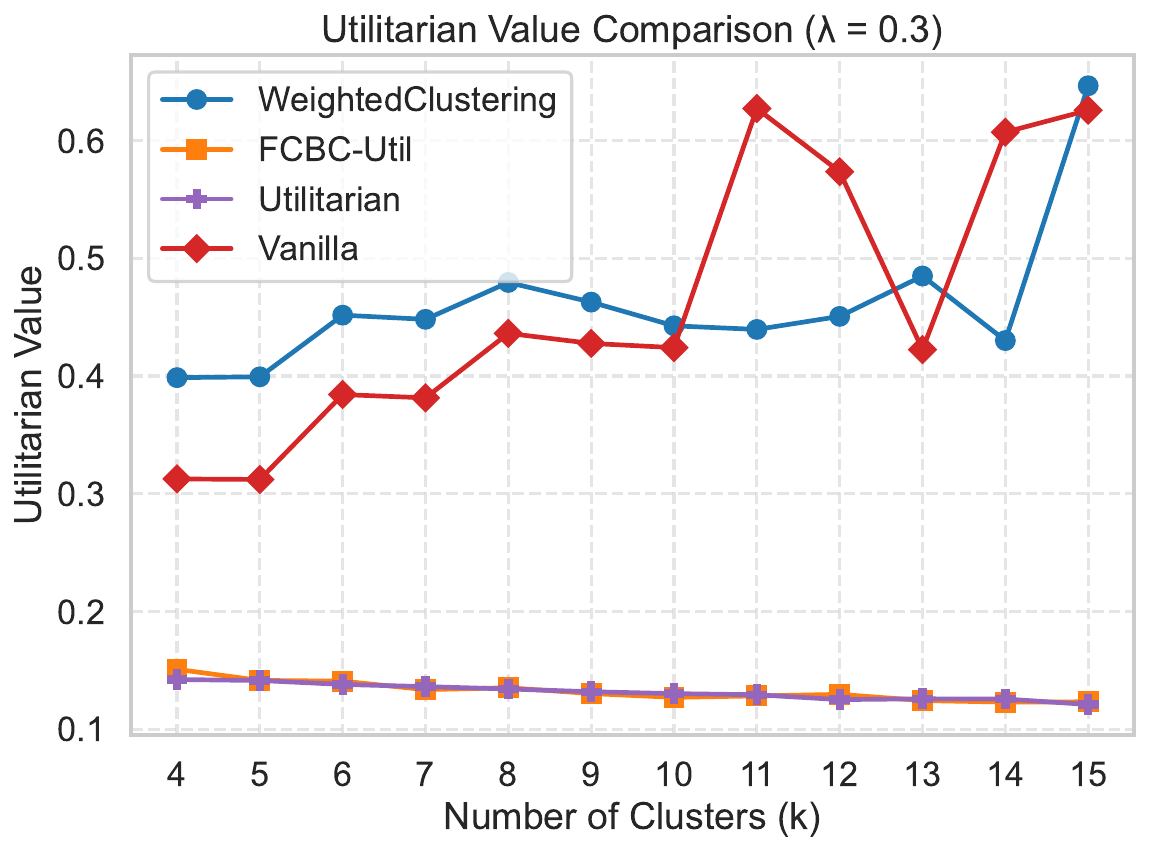}
        
        \label{fig:census_a}
    \end{subfigure}

     \begin{subfigure}[b]{0.32\textwidth}
        \includegraphics[width=\textwidth]{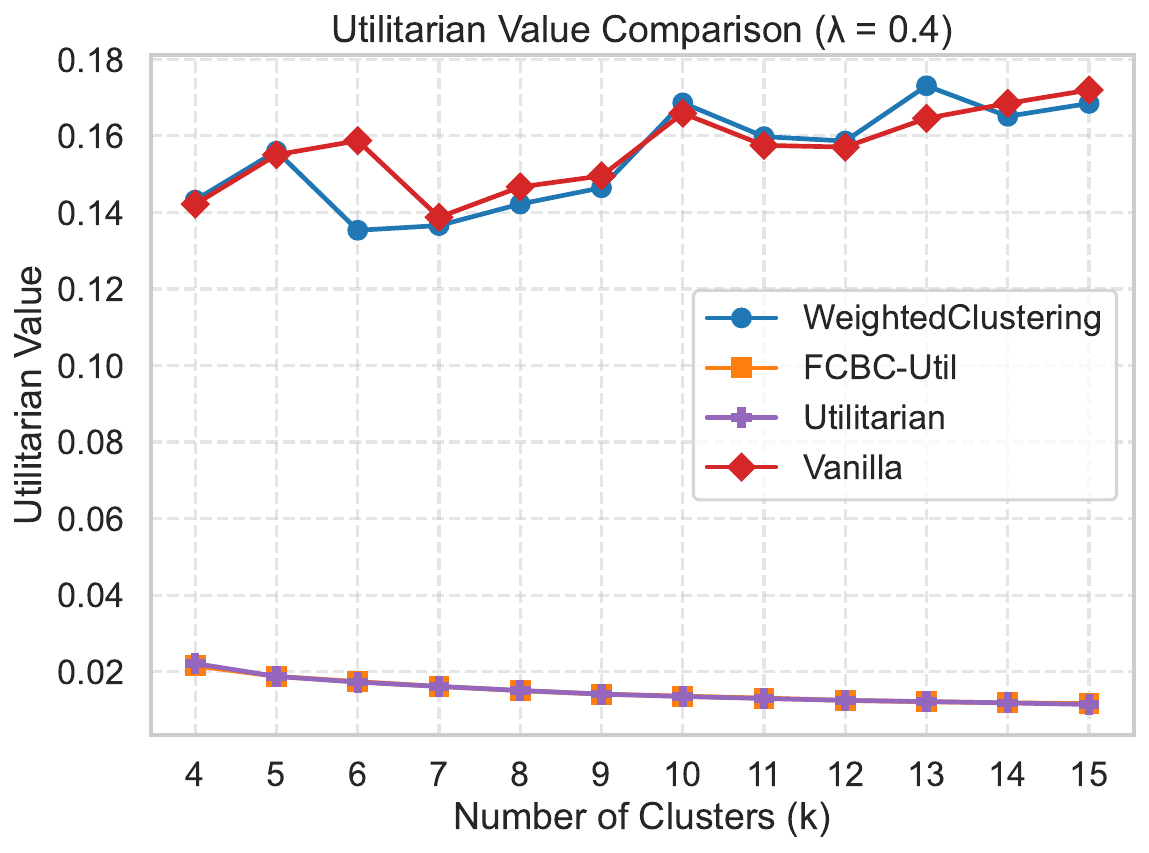}
        
        \label{fig:adult_a}
    \end{subfigure}
    \hfill
    \begin{subfigure}[b]{0.32\textwidth}
        \includegraphics[width=\textwidth]{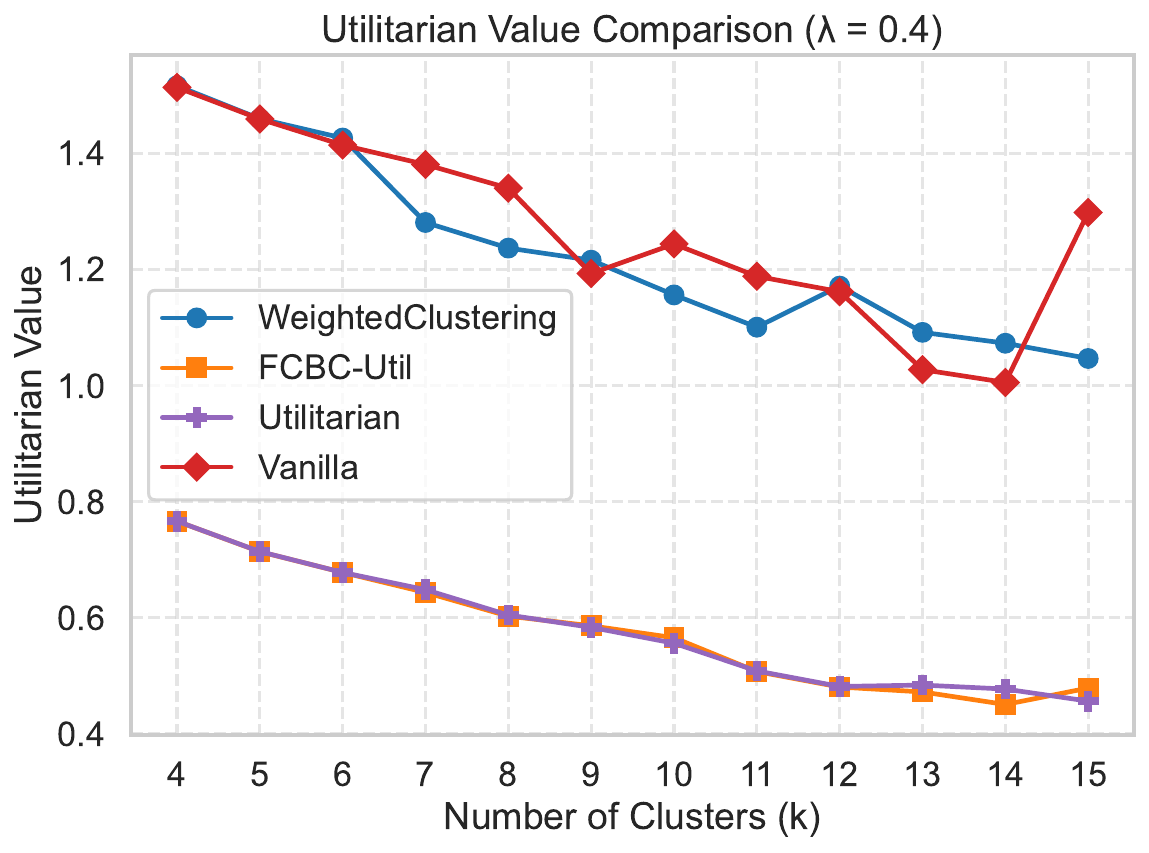}
        
        \label{fig:credit_a}
    \end{subfigure}
    \hfill
    \begin{subfigure}[b]{0.32\textwidth}
        \includegraphics[width=\textwidth]{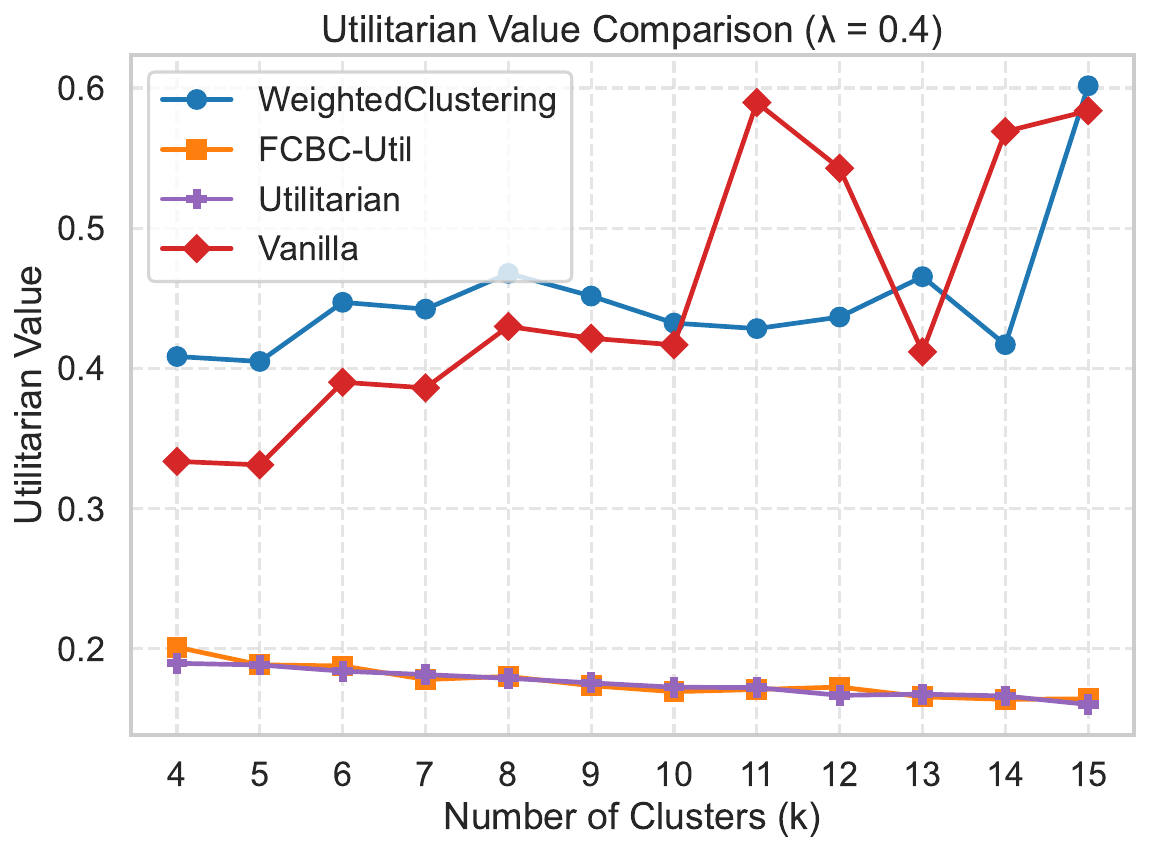}
        
        \label{fig:census_a}
    \end{subfigure}

    \begin{subfigure}[b]{0.32\textwidth}
        \includegraphics[width=\textwidth]{plots/two_group/ad_utilitarian_costs_comparison_lambda_0_500.pdf}
        
        \label{fig:adult_a}
    \end{subfigure}
    \hfill
    \begin{subfigure}[b]{0.32\textwidth}
        \includegraphics[width=\textwidth]{plots/two_group/cc_utilitarian_costs_comparison_lambda_0_500.pdf}
        
        \label{fig:credit_a}
    \end{subfigure}
    \hfill
    \begin{subfigure}[b]{0.32\textwidth}
        \includegraphics[width=\textwidth]{plots/two_group/cen_utilitarian_costs_comparison_lambda_0_500.pdf}
        
        \label{fig:census_a}
    \end{subfigure}
    \caption{Utilitarian Value comparison for $\lambda=0.1$ to $0.5$}

    \end{figure}
\FloatBarrier

\begin{figure}[H]
    \centering

    \begin{subfigure}[b]{0.32\textwidth}
        \includegraphics[width=\textwidth]{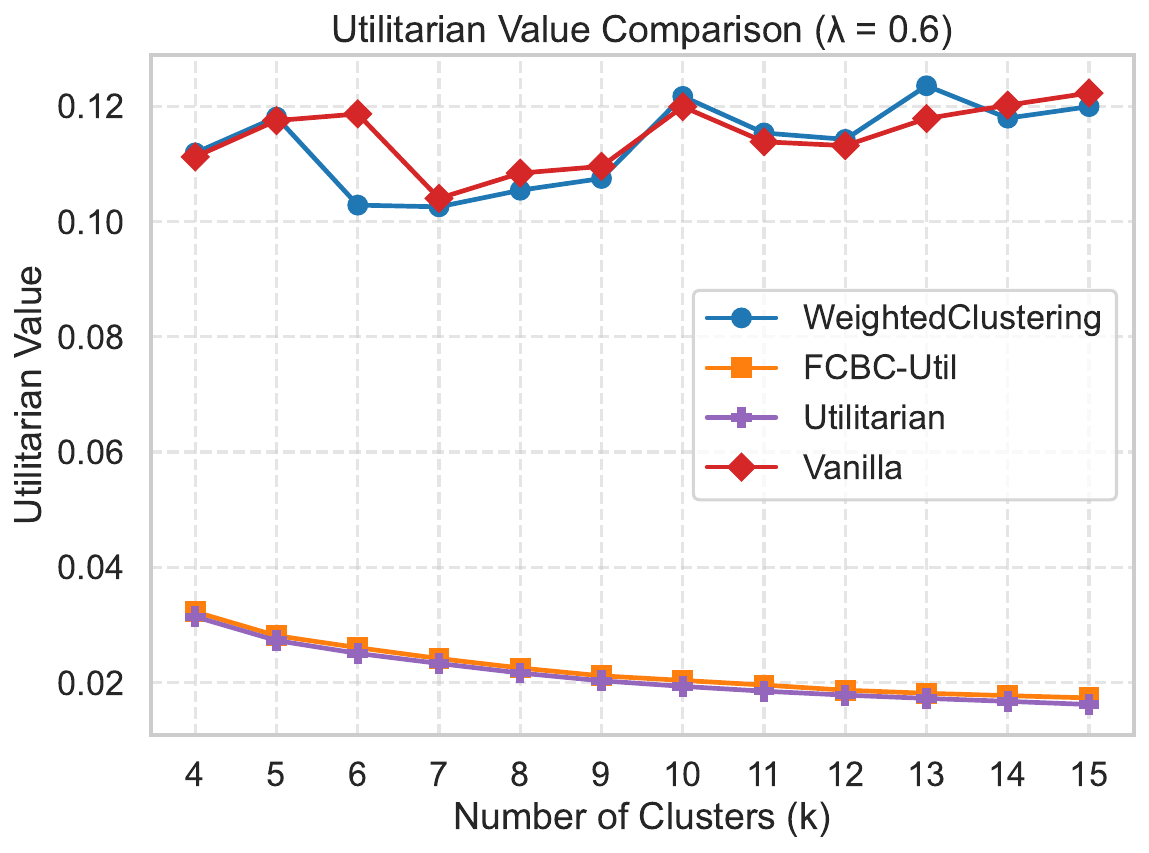}
        
        \label{fig:adult_a}
    \end{subfigure}
    \hfill
    \begin{subfigure}[b]{0.32\textwidth}
        \includegraphics[width=\textwidth]{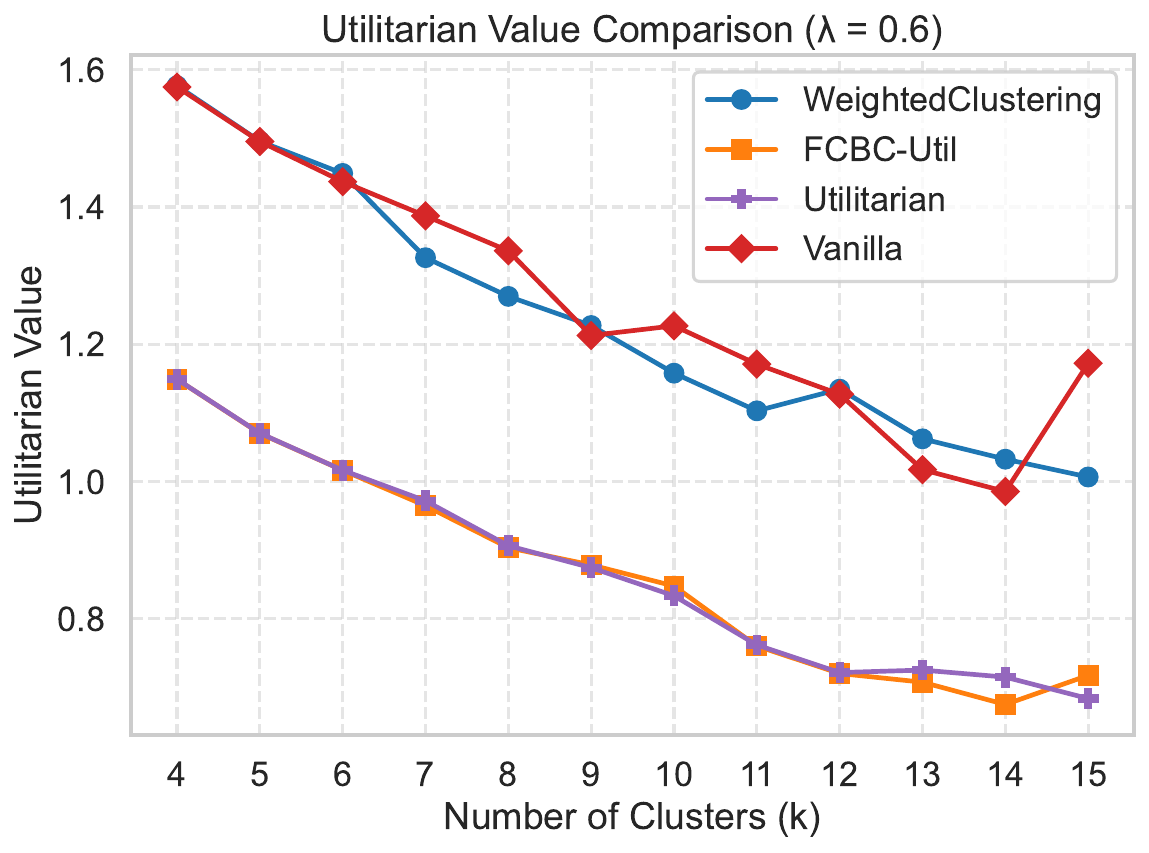}
        
        \label{fig:credit_a}
    \end{subfigure}
    \hfill
    \begin{subfigure}[b]{0.32\textwidth}
        \includegraphics[width=\textwidth]{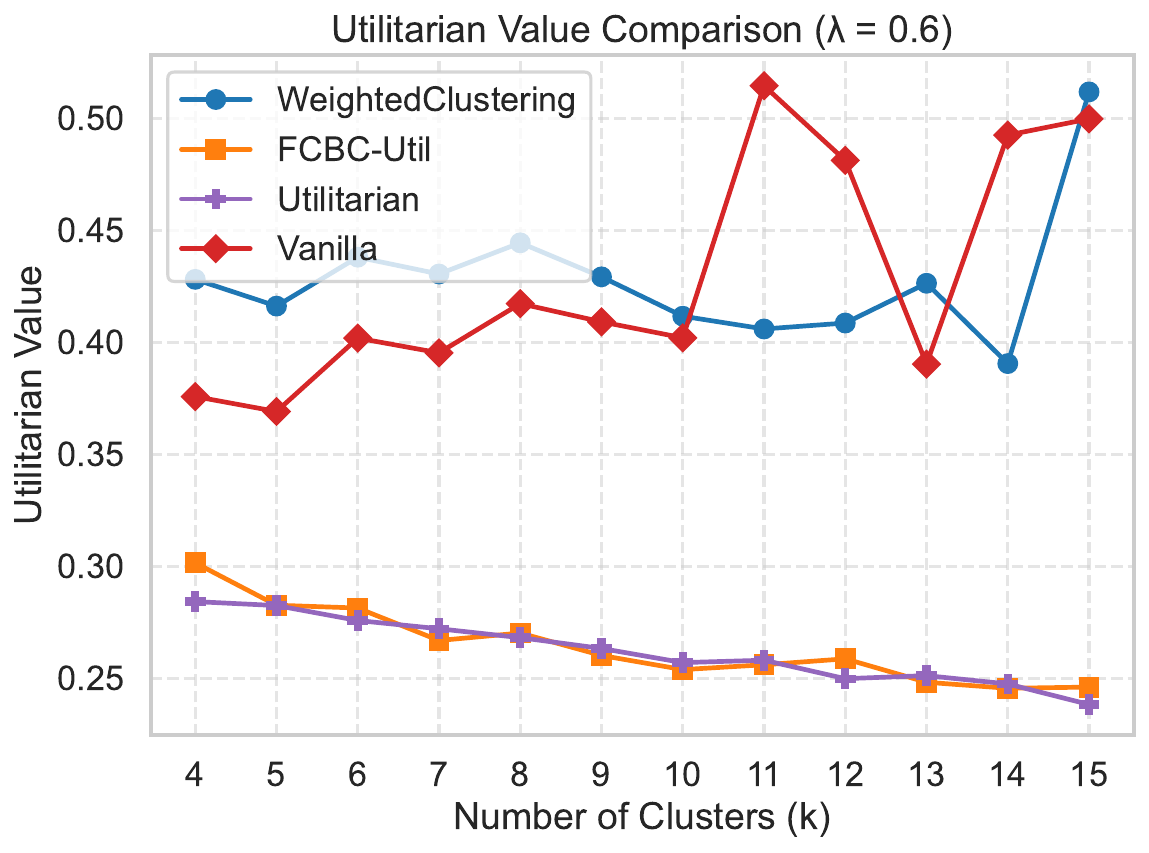}
        
        \label{fig:census_a}
    \end{subfigure}

    \begin{subfigure}[b]{0.32\textwidth}
        \includegraphics[width=\textwidth]{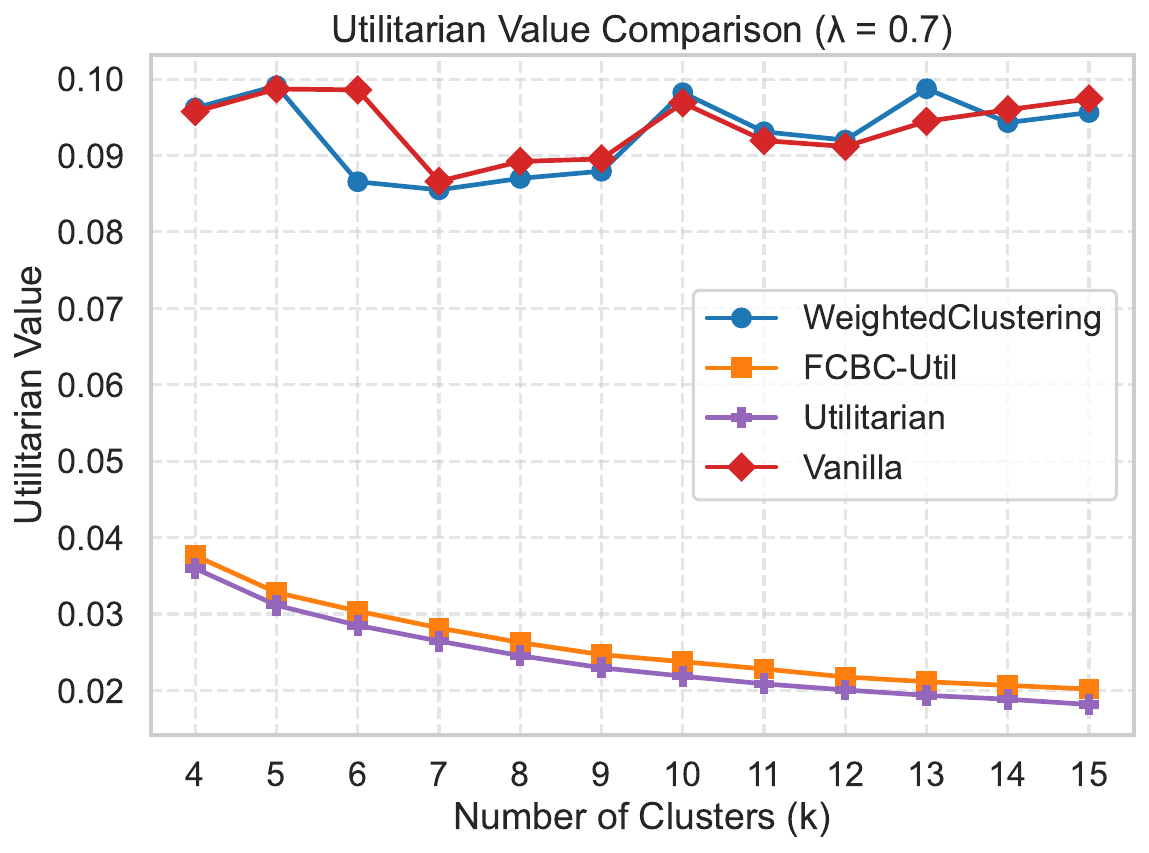}
        
        \label{fig:adult_a}
    \end{subfigure}
    \hfill
    \begin{subfigure}[b]{0.32\textwidth}
        \includegraphics[width=\textwidth]{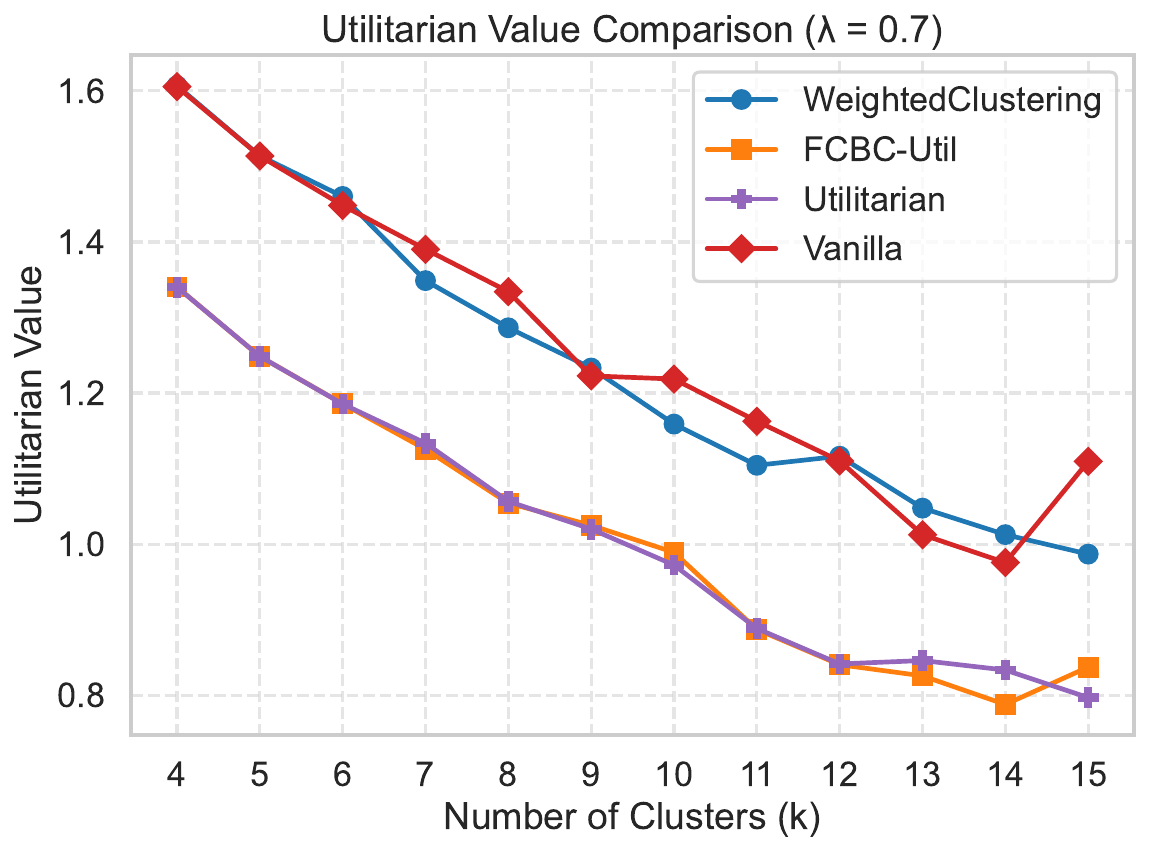}
        
        \label{fig:credit_a}
    \end{subfigure}
    \hfill
    \begin{subfigure}[b]{0.32\textwidth}
        \includegraphics[width=\textwidth]{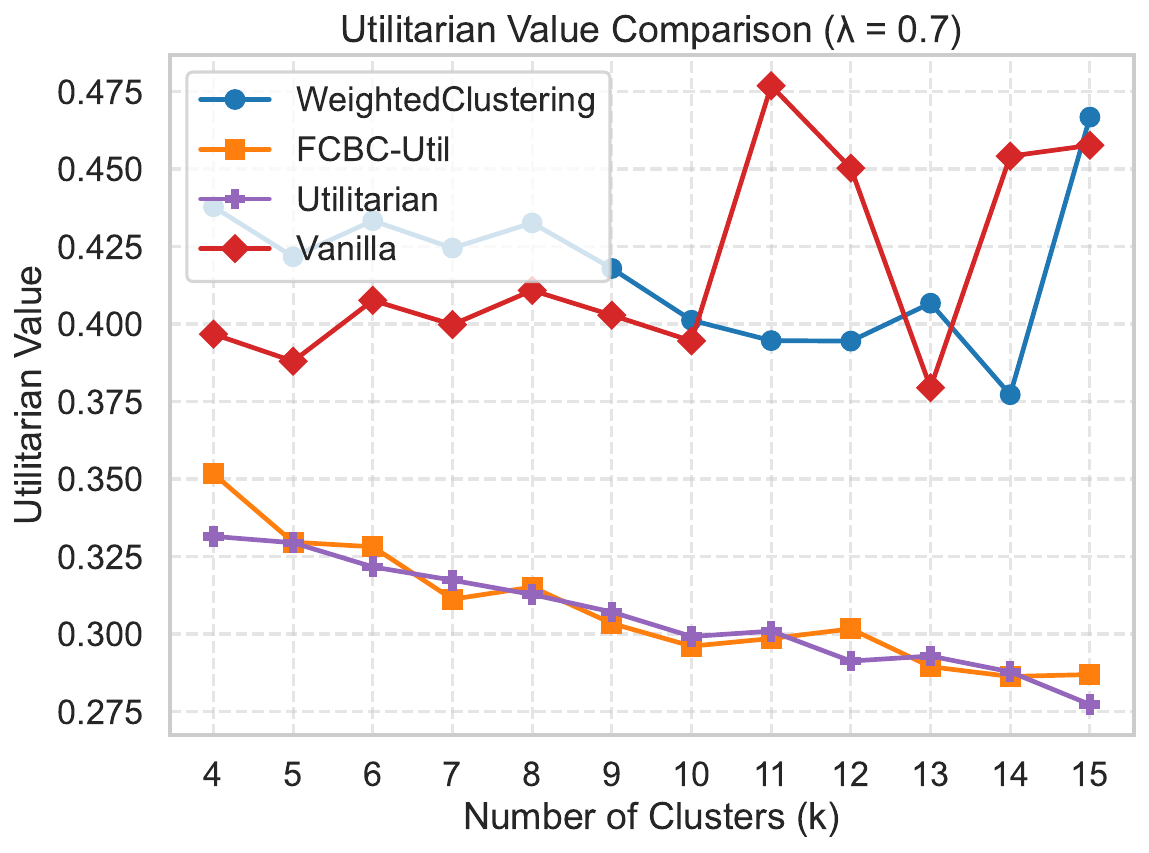}
        
        \label{fig:census_a}
    \end{subfigure}

     \begin{subfigure}[b]{0.32\textwidth}
        \includegraphics[width=\textwidth]{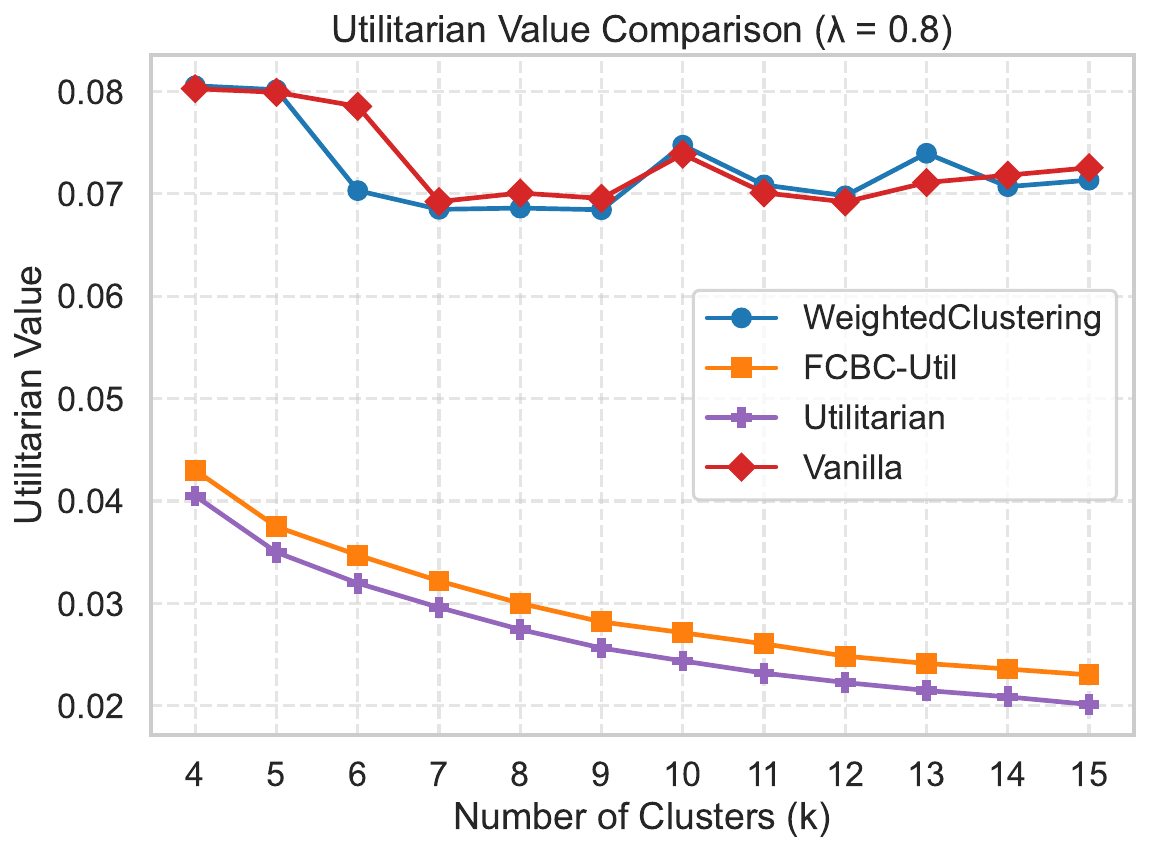}
        
        \label{fig:adult_a}
    \end{subfigure}
    \hfill
    \begin{subfigure}[b]{0.32\textwidth}
        \includegraphics[width=\textwidth]{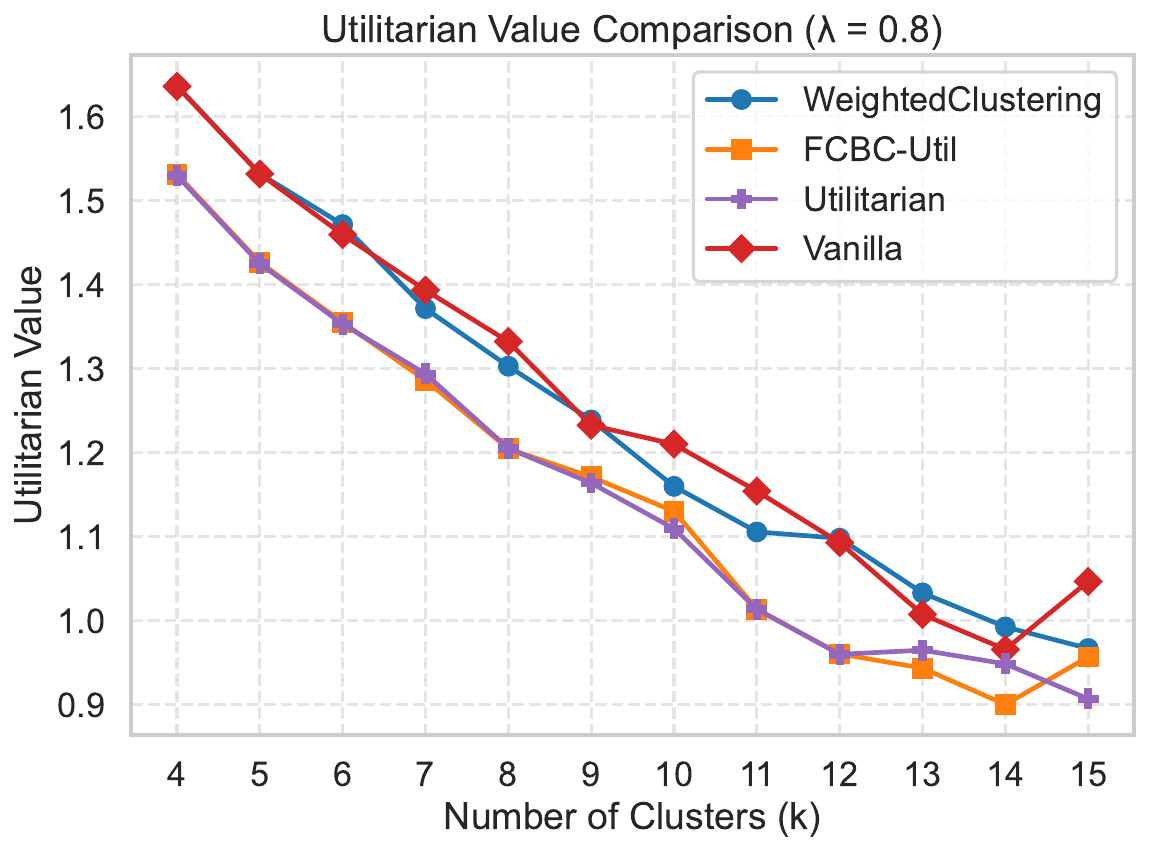}
        
        \label{fig:credit_a}
    \end{subfigure}
    \hfill
    \begin{subfigure}[b]{0.32\textwidth}
        \includegraphics[width=\textwidth]{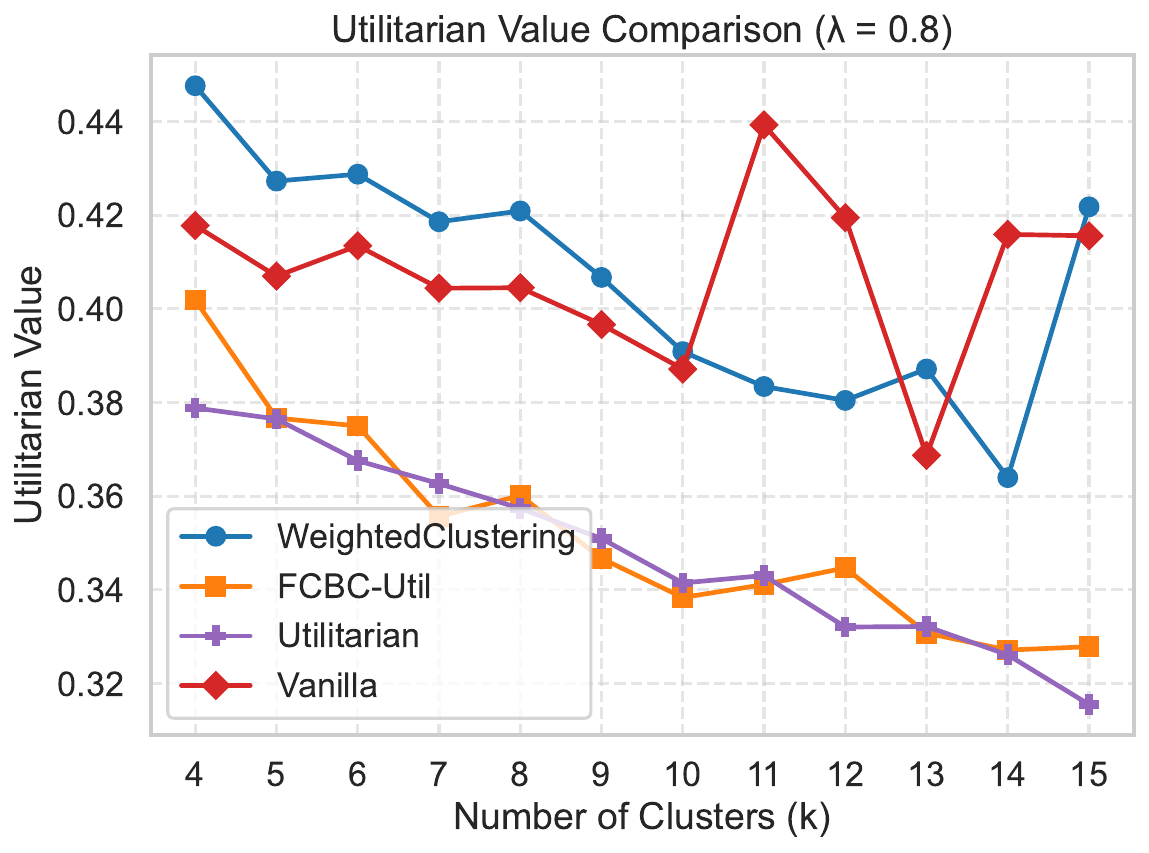}
        
        \label{fig:census_a}
    \end{subfigure}

     \begin{subfigure}[b]{0.32\textwidth}
        \includegraphics[width=\textwidth]{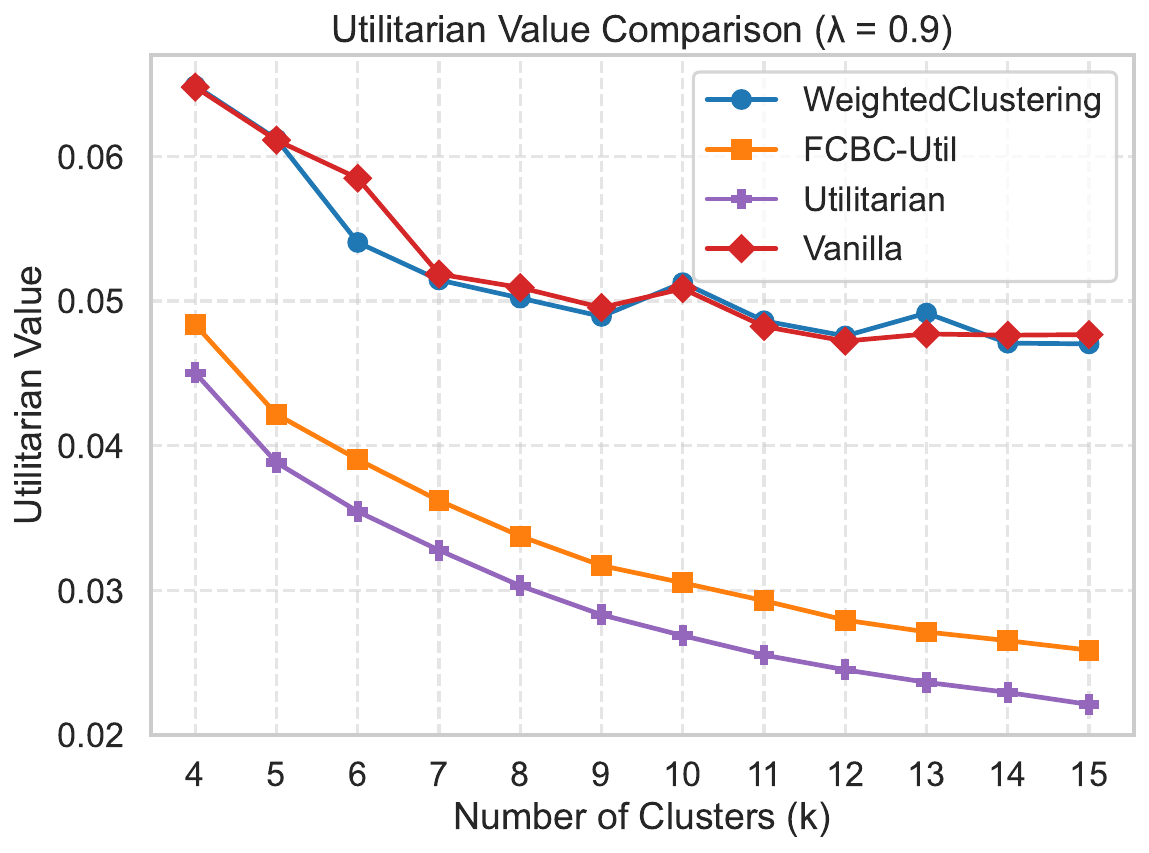}
        
        \label{fig:adult_a}
    \end{subfigure}
    \hfill
    \begin{subfigure}[b]{0.32\textwidth}
        \includegraphics[width=\textwidth]{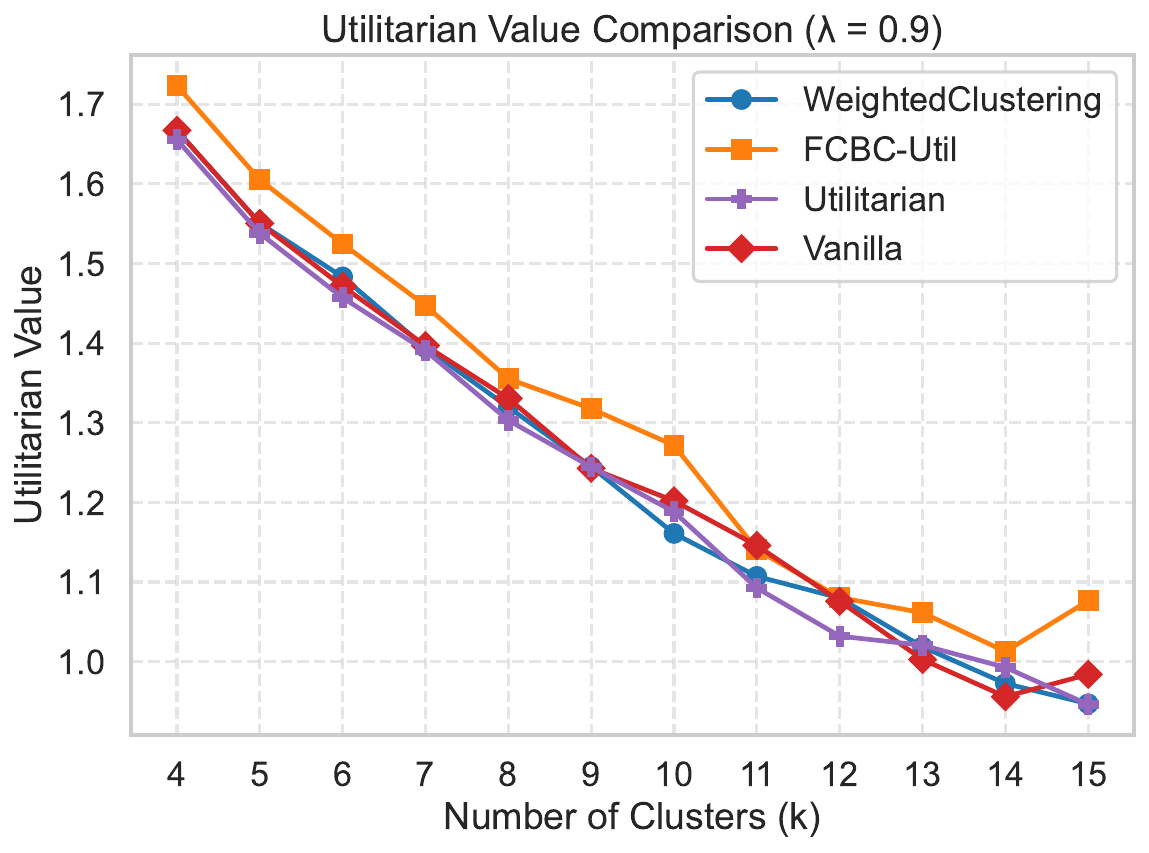}
        
        \label{fig:credit_a}
    \end{subfigure}
    \hfill
    \begin{subfigure}[b]{0.32\textwidth}
        \includegraphics[width=\textwidth]{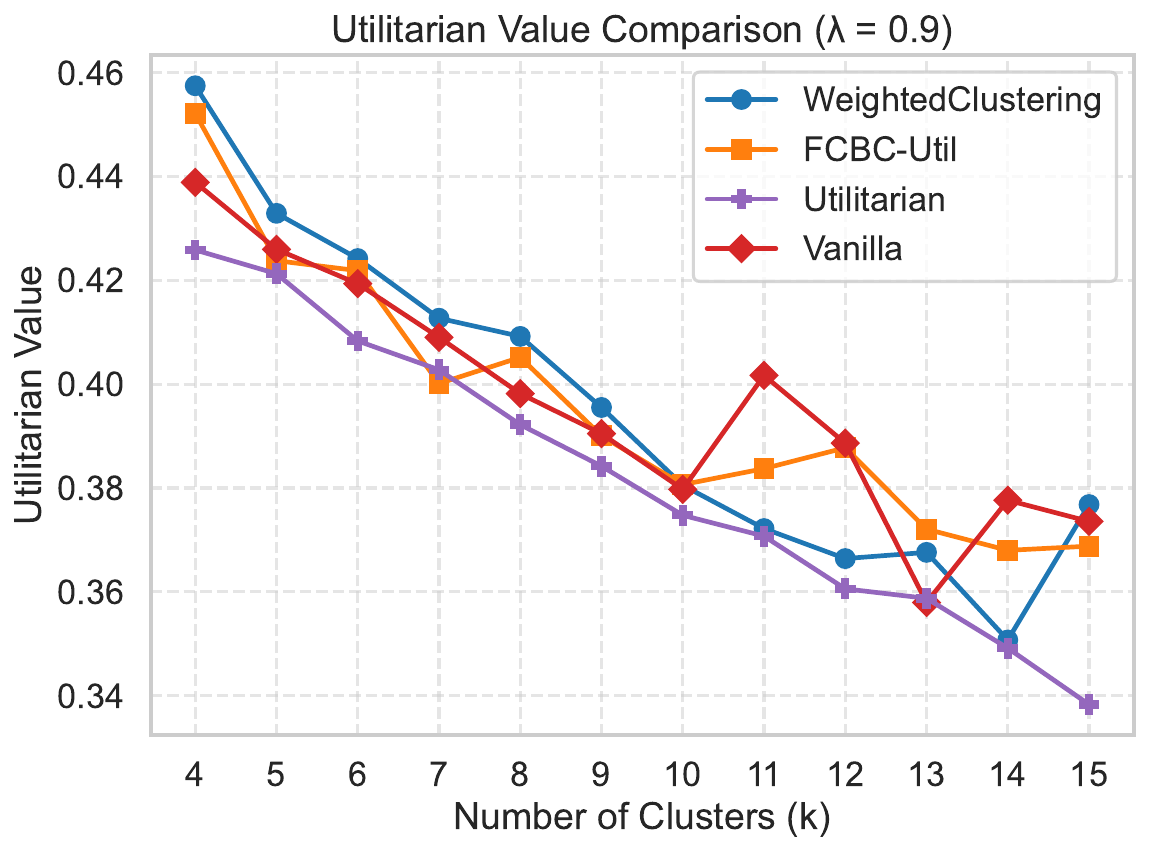}
        
        \label{fig:census_a}
    \end{subfigure}
    \caption{Utilitarian Value comparison for $\lambda=0.6$ to $0.9$}
\end{figure}
\FloatBarrier

\subsection{Comparison with FCBC with other Cost Upper Bounds}
In this section, we provide evidence to support our choice of clustering cost upper bound given to \textsc{FCBC-Rawl} and  \textsc{FCBC-Util}. We will denote the upper bound as $\text{pof}$, which stands for Price of Fairness, which is formally defined as $\text{pof}(c) = \frac{\text{Clustering Cost of Fair Solution $c$}}{\text{Cost of Optimal Agnostic Solution}}$ in fair clustering literature \cite{esmaeili2021fair, dickerson2024fair}. This quantity allows us to succinctly refer to upper bounds on clustering costs given to \textsc{FCBC-Rawl}  and \textsc{FCBC-Util}.

We set parameter $\text{pof}=1.5$ for FCBC algorithms in the comparison experiments. We run FCBC algorithms \textsc{FCBC-Rawl} and \textsc{FCBC-Util} on multiple $\text{pof}$ parameter values and plot Rawlsian and Utilitarian values respectively of solutions find. We observe at $\text{pof}=1.5$ and above the two FCBC algorithms' improvement on Rawlsian and Utilitarian values saturates. This is the case for all $\lambda$ choices in range $[0.1, 0.9]$. We present plot on $\lambda = 0.5$ for three datasets \adult{}, \credit{} and \cens{}.

\begin{figure}[htbp]
    \centering
    \includegraphics[width=0.8\textwidth]{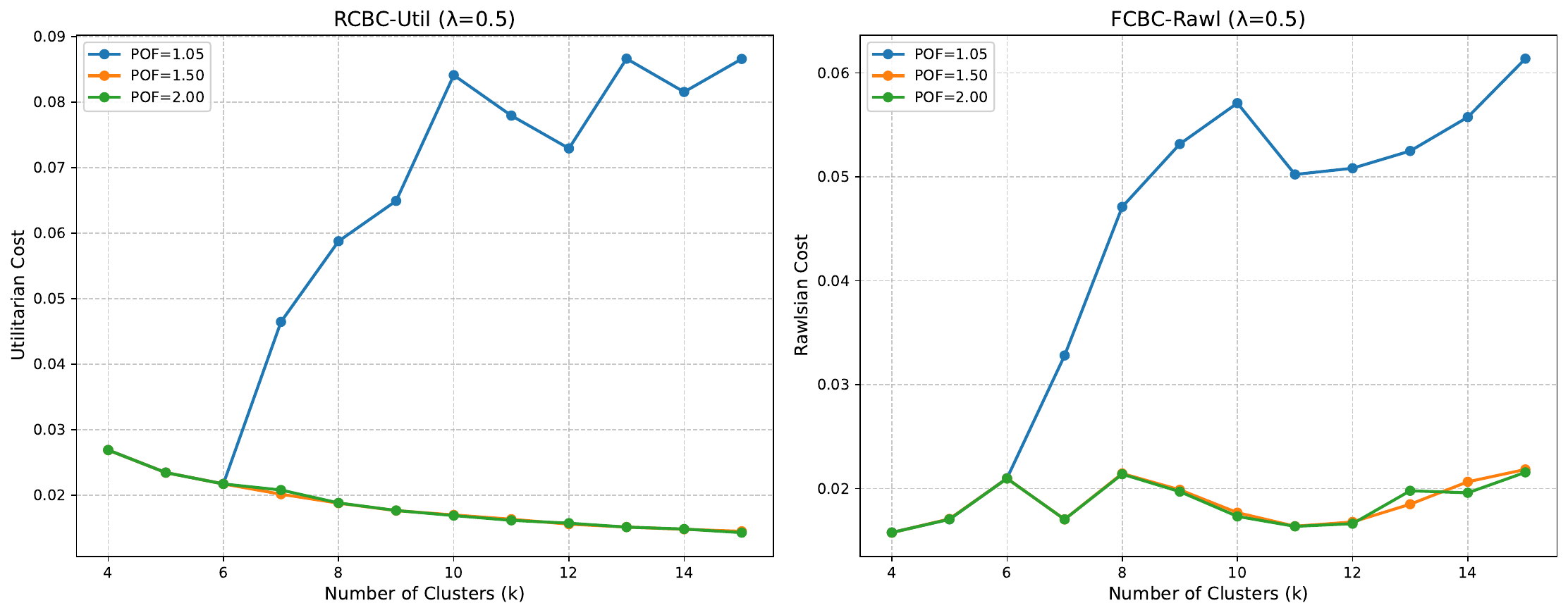}
    \caption{Welfare values for FCBC algorithms with different POF value on \adult{}.}
    \label{fig:example}
\end{figure}

\begin{figure}[htbp]
    \centering
    \includegraphics[width=0.8\textwidth]{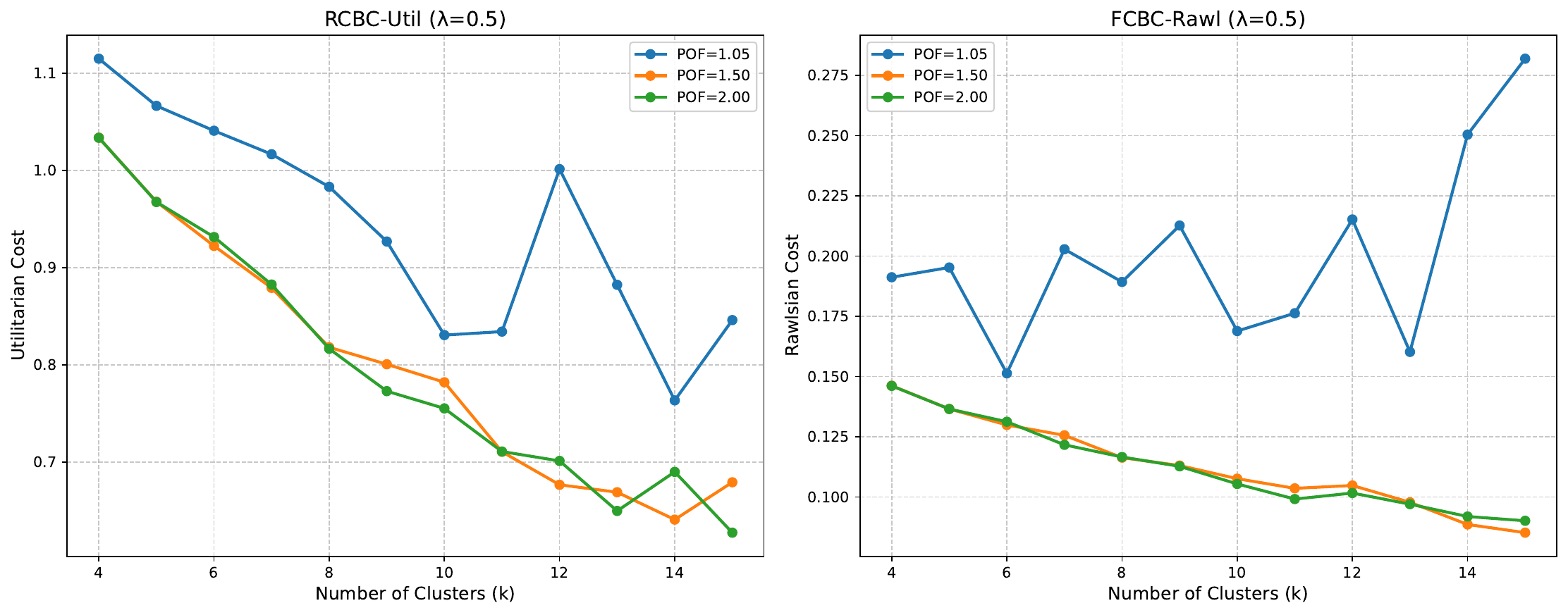}
    \caption{Welfare values for FCBC algorithms with different POF value on \credit{}{}.}
    \label{fig:example}
\end{figure}

\begin{figure}[htbp]
    \centering
    \includegraphics[width=0.8\textwidth]{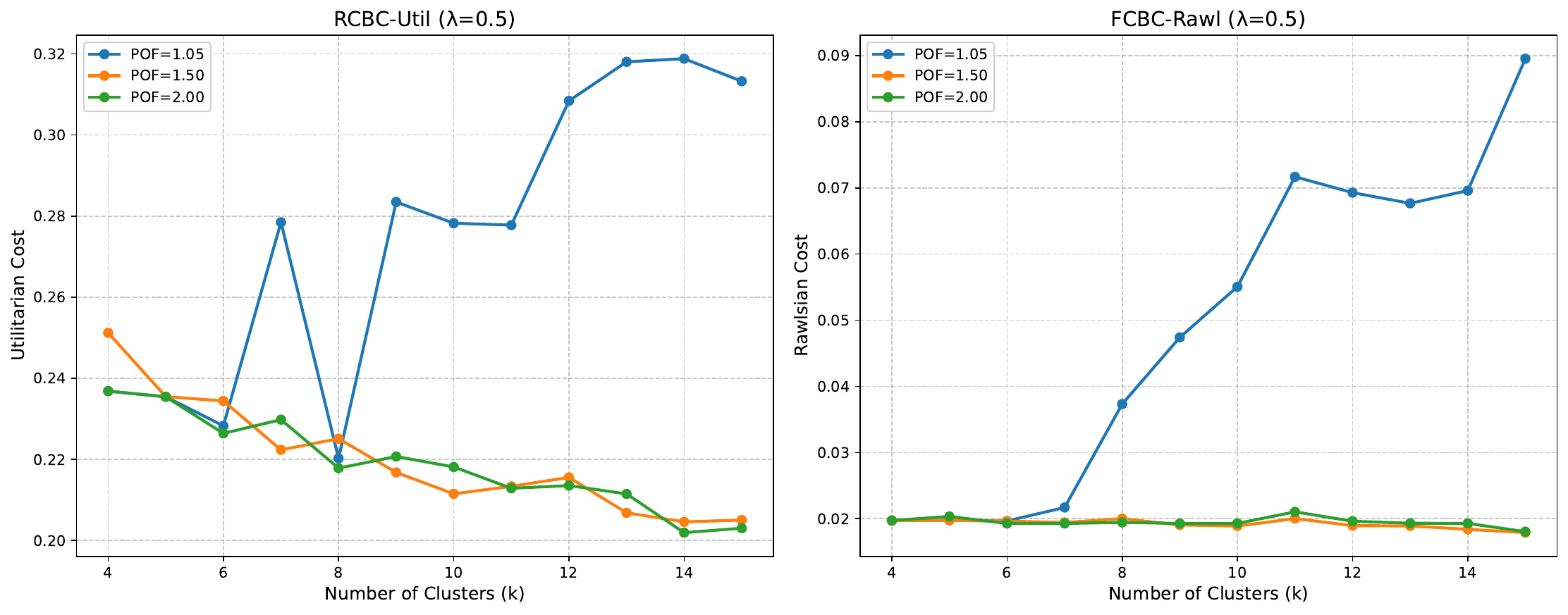}
    \caption{Welfare values for FCBC algorithms with different POF value on \cens{}.}
    \label{fig:example}
\end{figure}
\FloatBarrier

\subsection{Effect of Rounding on the Rawlsian and Utilitarian}
In this section, we empirically show that rounding does not change LP objective value too much for both Rawlsian and Utilitarian objectives. Here we are referring to the constants $\constr$ and $\constu$ that appear in Lemmas \ref{lemma:roundedfromLPrawls} and \ref{lemma:roundedfromLPutil}, respectively. Interestingly, as shown in the plots below, the change is quite small never exceeding $8\times10^{-3}$.

For Rawlsian, we plot difference of program \eqref{eq:assignment_lp} evaluated at $\x^{\text{integ}}$ and evaluated at   $\x^{\text{frac}}$ on \cens{} data. 

\begin{figure}[htbp]
    \centering

    \begin{subfigure}[b]{0.3\textwidth}
        \includegraphics[width=\textwidth]{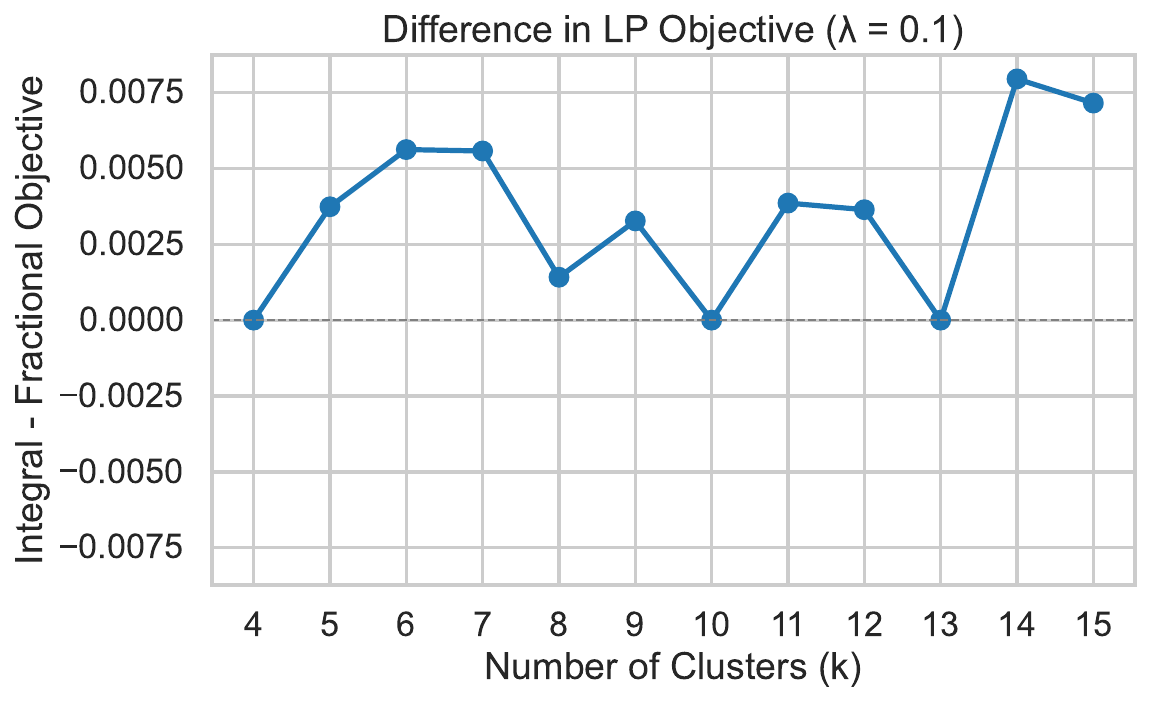}
        
    \end{subfigure}
    \hfill
    \begin{subfigure}[b]{0.3\textwidth}
        \includegraphics[width=\textwidth]{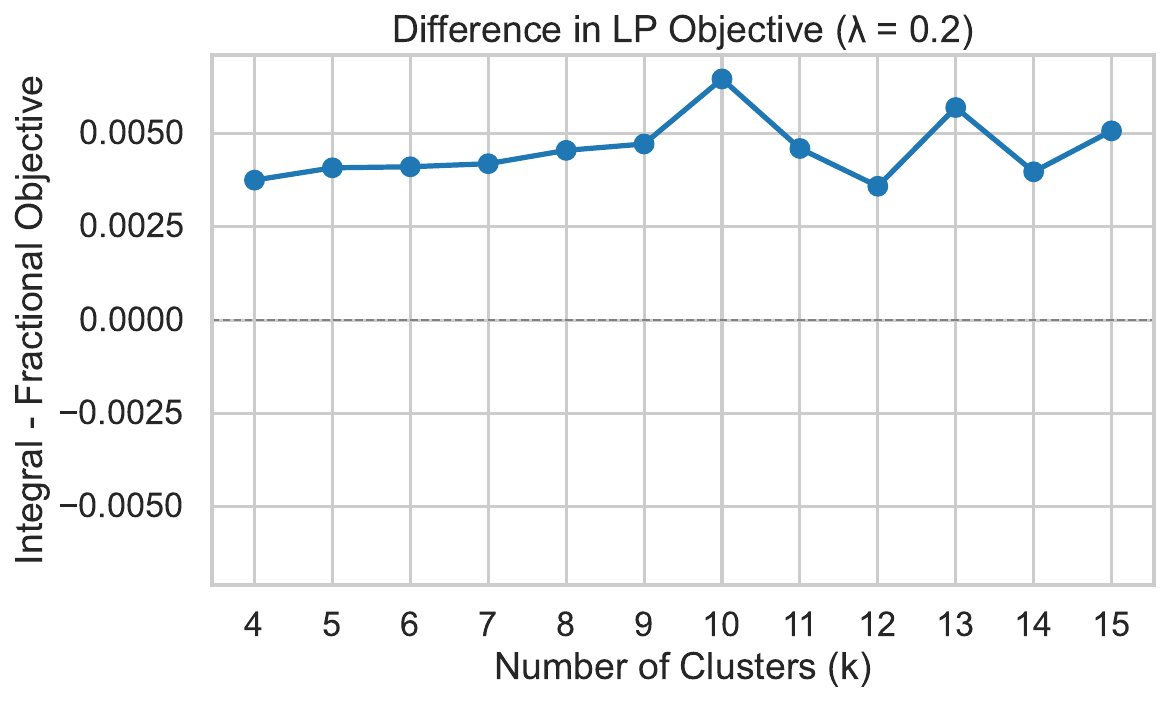}
        
    \end{subfigure}
    \hfill
    \begin{subfigure}[b]{0.3\textwidth}
        \includegraphics[width=\textwidth]{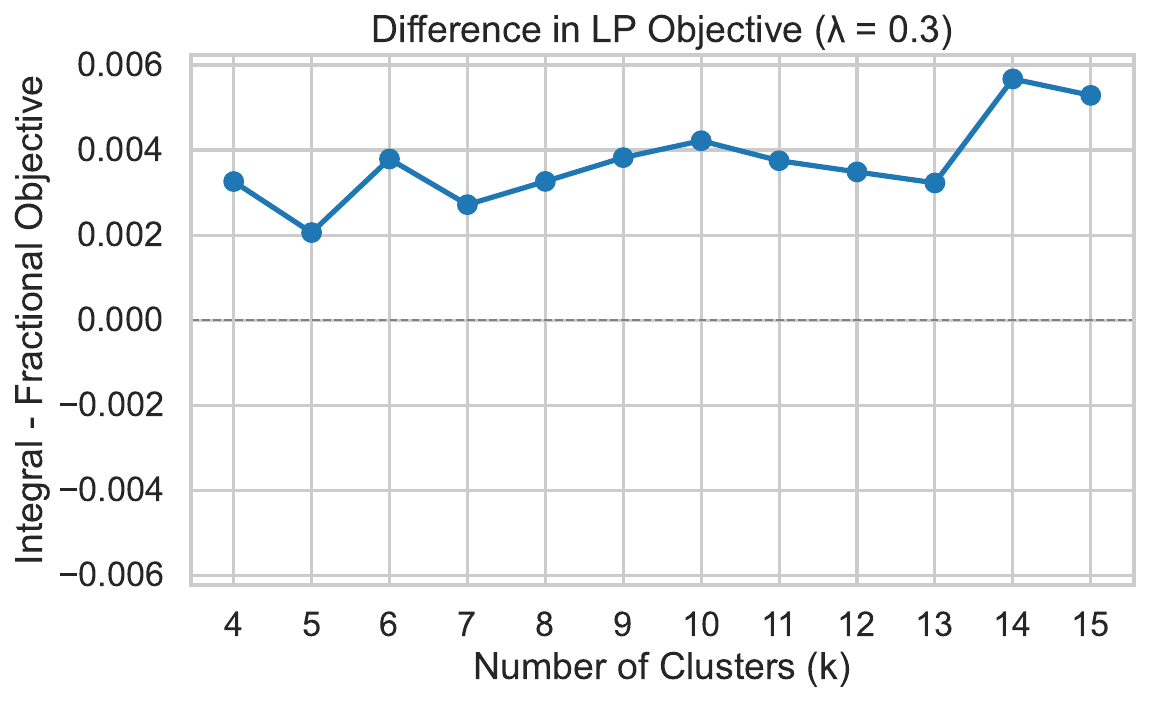}
        
    \end{subfigure}

    \begin{subfigure}[b]{0.3\textwidth}
        \includegraphics[width=\textwidth]{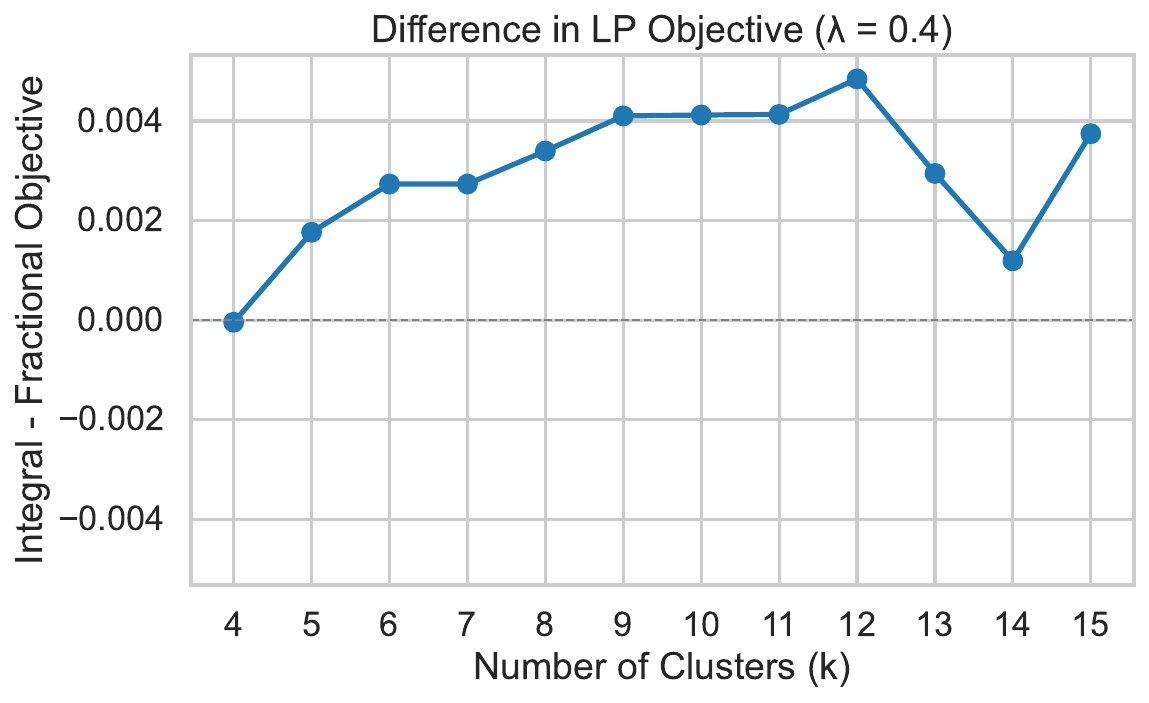}
        
    \end{subfigure}
    \hfill
    \begin{subfigure}[b]{0.3\textwidth}
        \includegraphics[width=\textwidth]{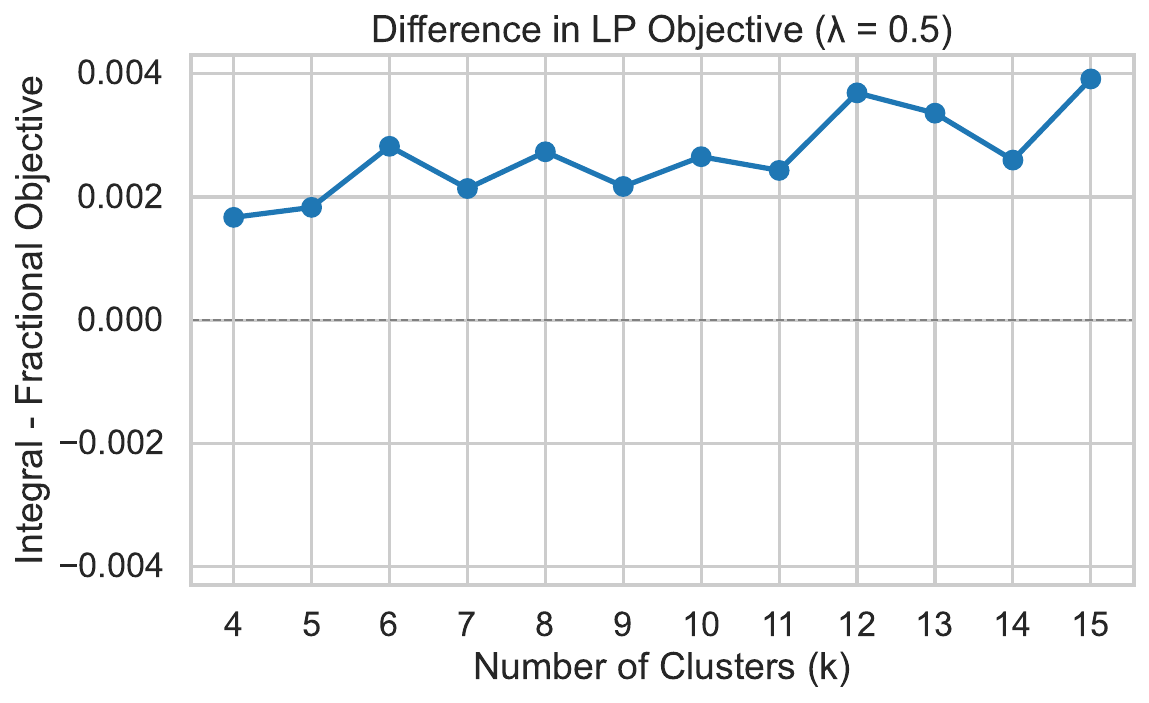}
        
    \end{subfigure}
  \hfill
    \begin{subfigure}[b]{0.3\textwidth}
        \includegraphics[width=\textwidth]{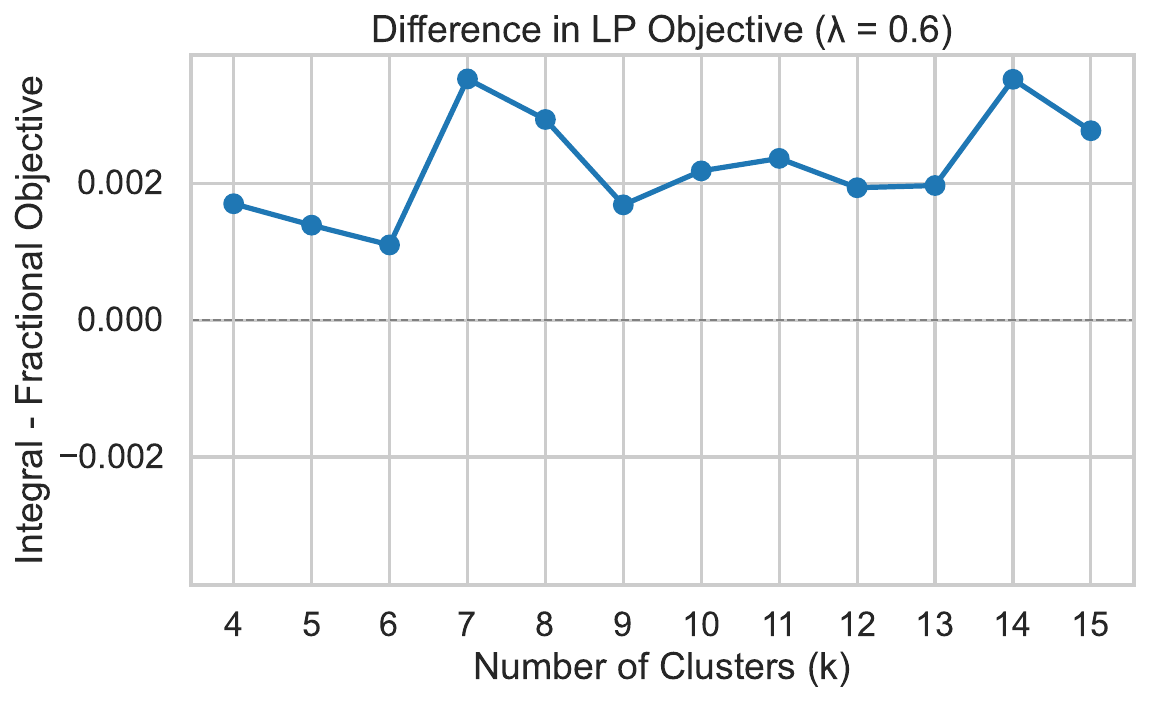}
        
    \end{subfigure}
 
    \begin{subfigure}[b]{0.3\textwidth}
        \includegraphics[width=\textwidth]{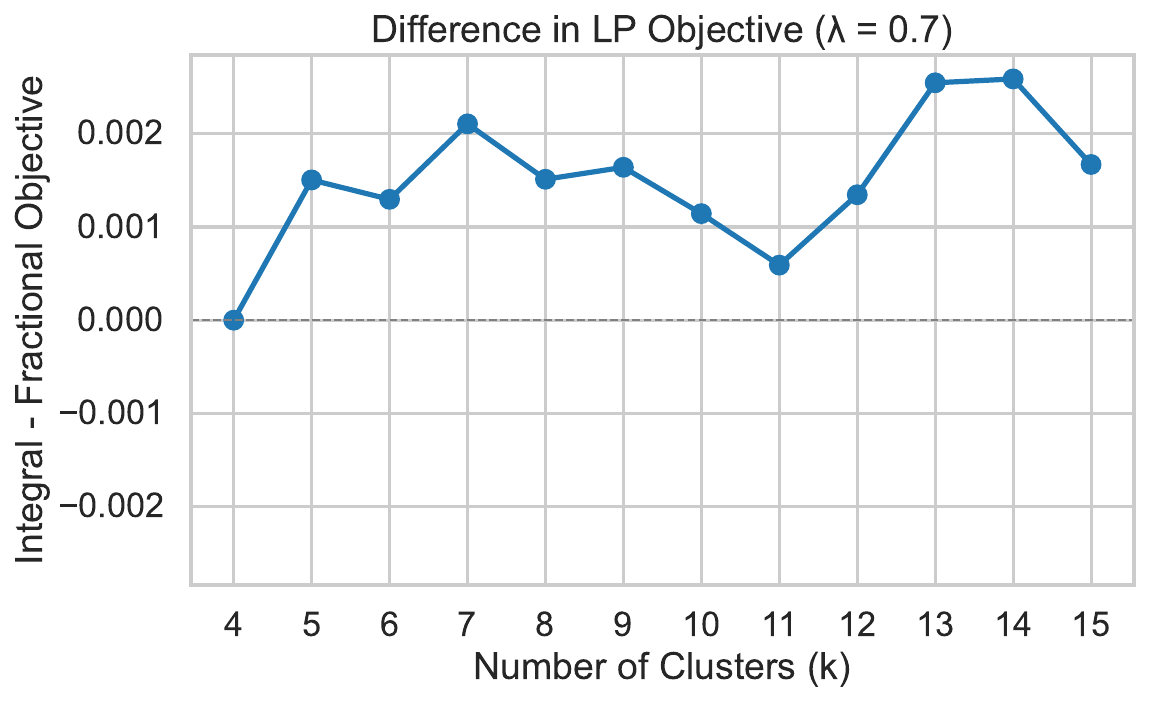}
        
    \end{subfigure}
    \hfill
    \begin{subfigure}[b]{0.3\textwidth}
        \includegraphics[width=\textwidth]{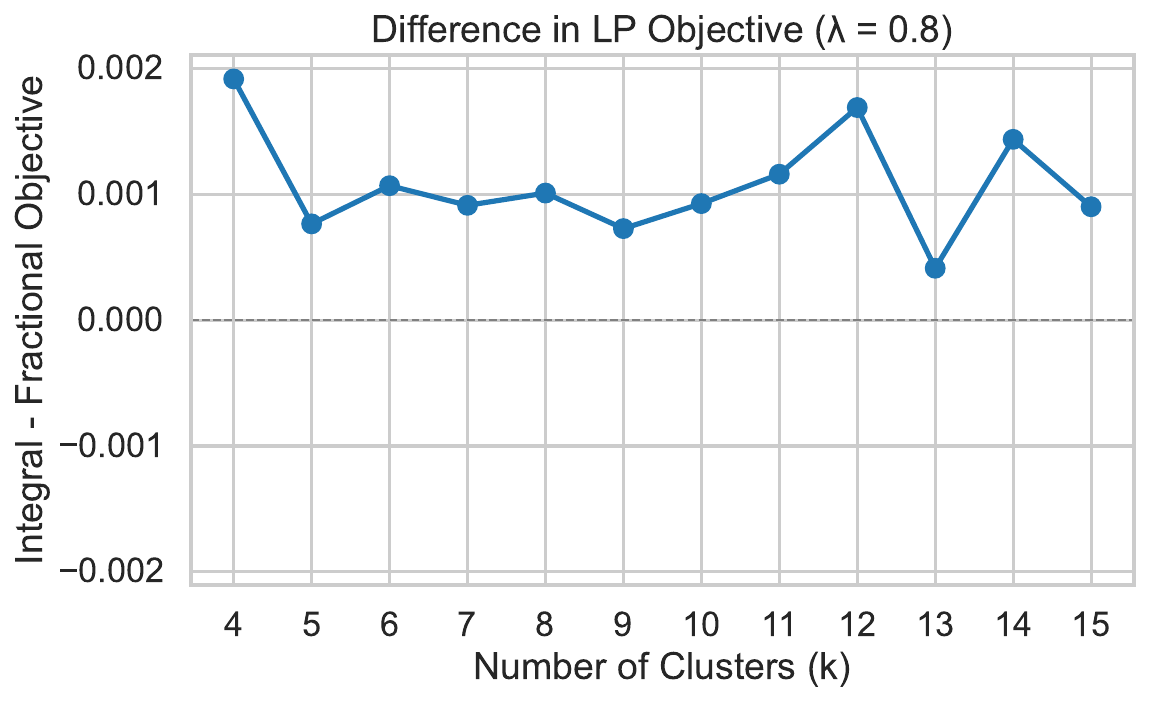}
        
    \end{subfigure}
    \hfill
    \begin{subfigure}[b]{0.3\textwidth}
        \includegraphics[width=\textwidth]{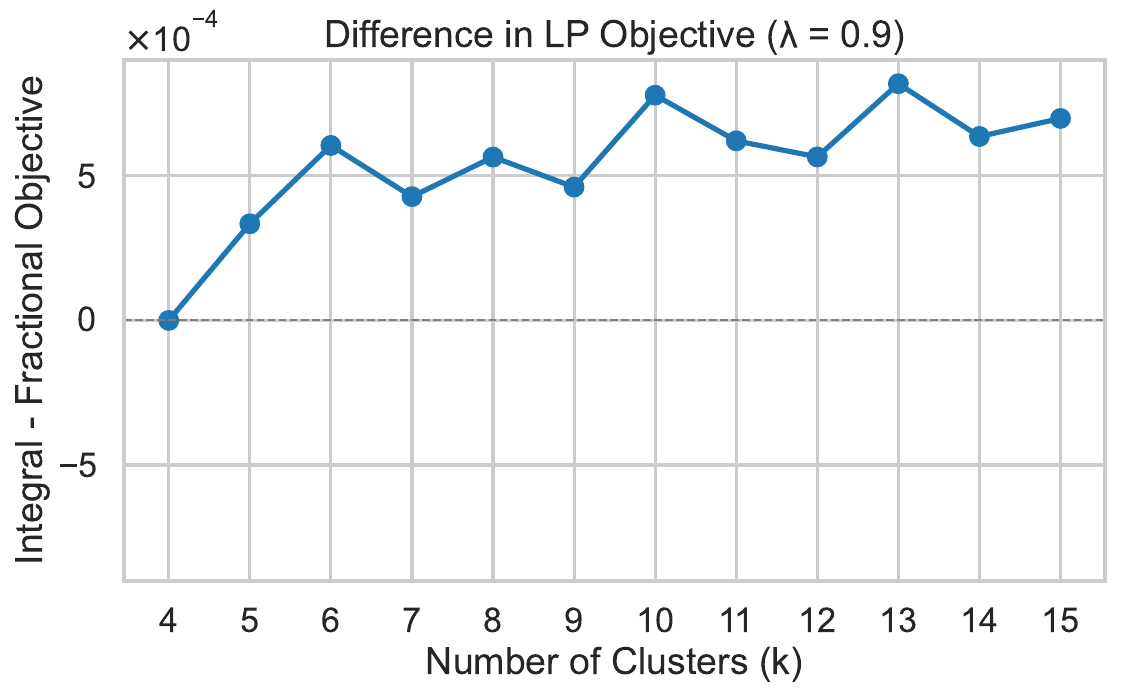}
        
    \end{subfigure}

    \caption{Integral Solution Rawlsian Value - Fractional Solution Rawlsian Value}
    \label{fig:eleven_subfigs}
\end{figure}
\FloatBarrier

Similarly for Utilitarian, we plot the difference of program \eqref{eq:assignment_lp_util} evaluated at $\x^{\text{integ}}$ and evaluated at   $\x^{\text{frac}}$ on \cens{} data.

\begin{figure}[htbp]
    \centering

    \begin{subfigure}[b]{0.3\textwidth}
        \includegraphics[width=\textwidth]{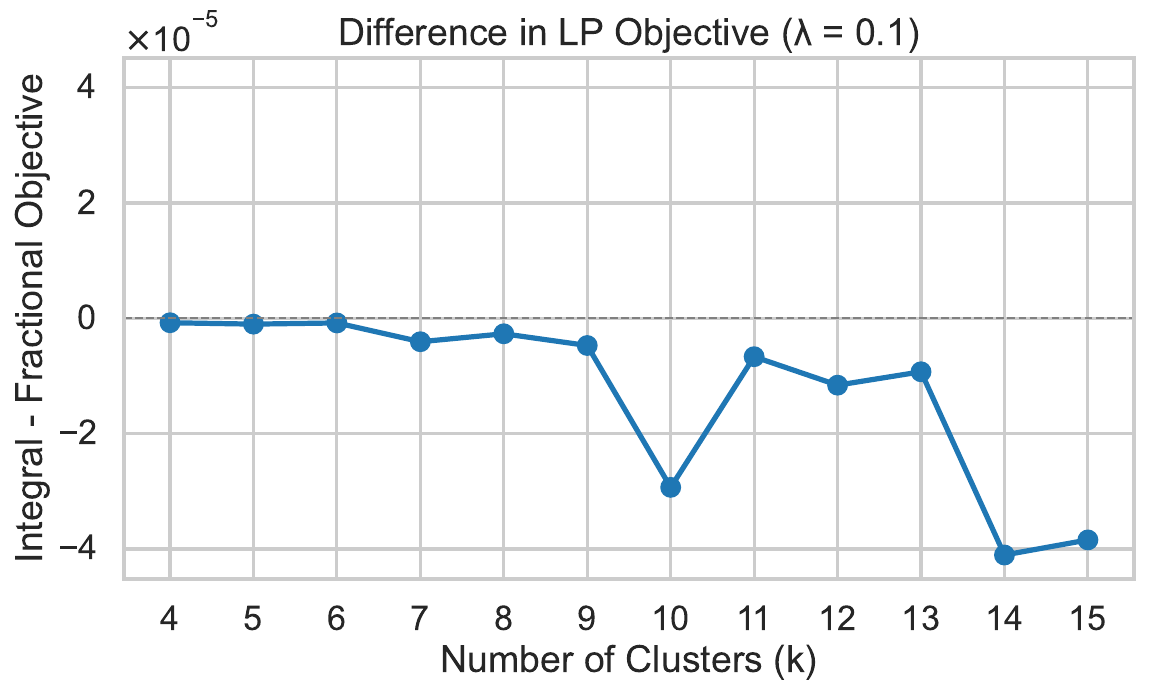}
        
    \end{subfigure}
    \hfill
    \begin{subfigure}[b]{0.3\textwidth}
        \includegraphics[width=\textwidth]{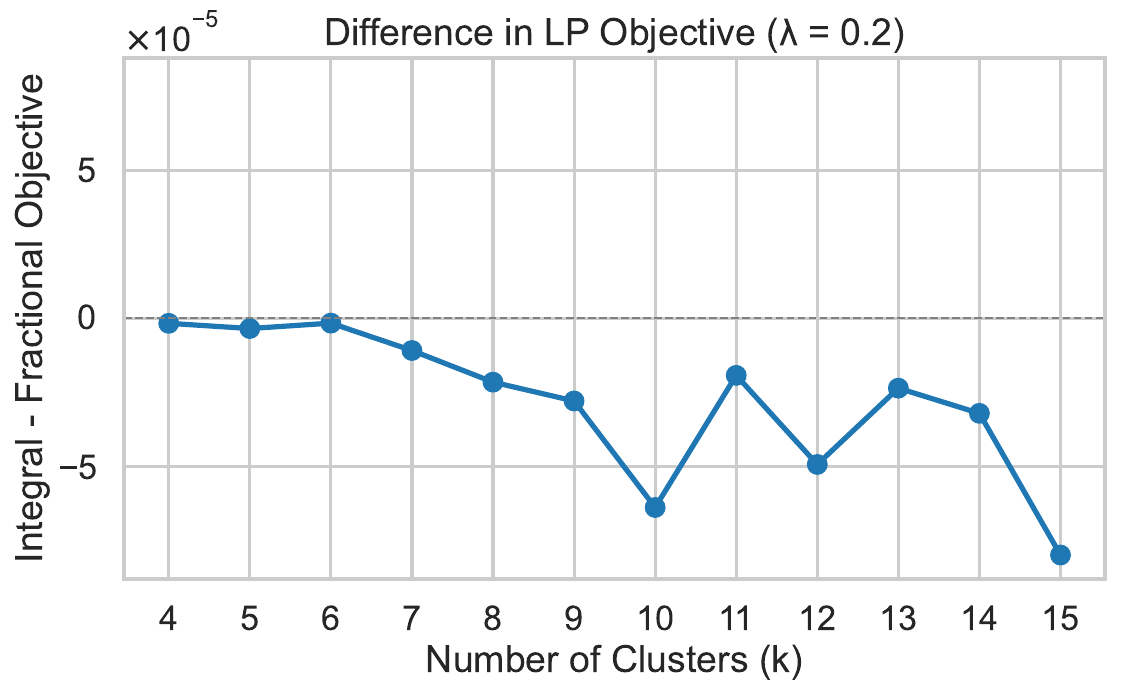}
        
    \end{subfigure}
    \hfill
    \begin{subfigure}[b]{0.3\textwidth}
        \includegraphics[width=\textwidth]{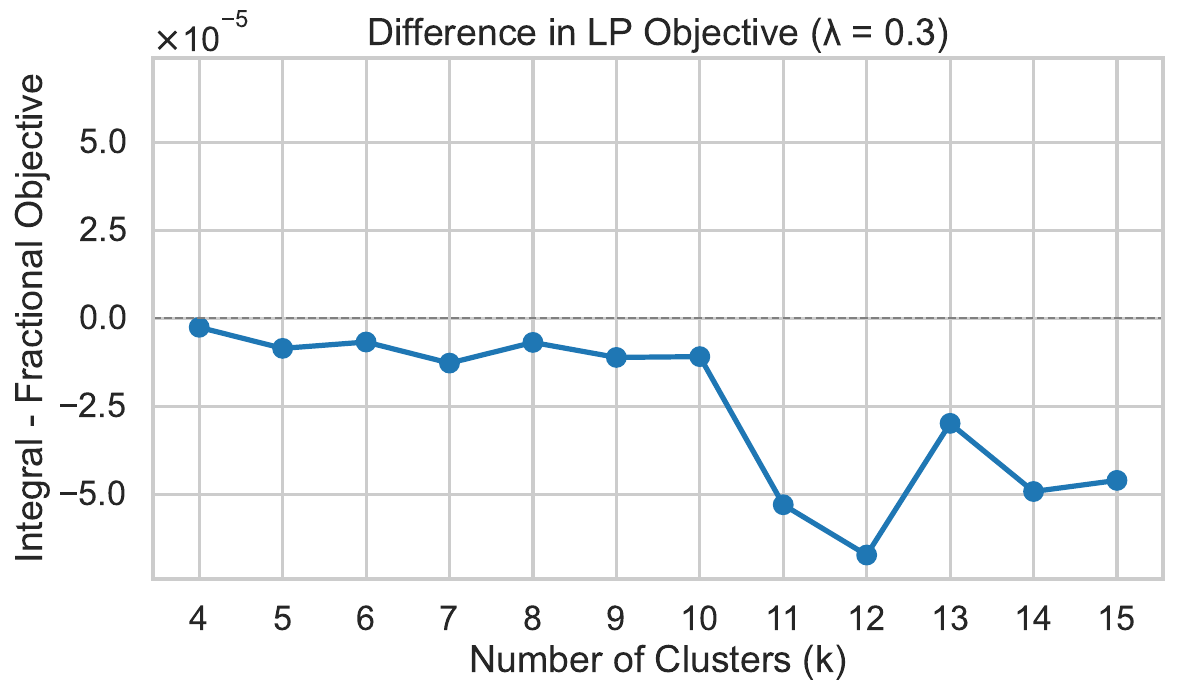}
        
    \end{subfigure}

    \begin{subfigure}[b]{0.3\textwidth}
        \includegraphics[width=\textwidth]{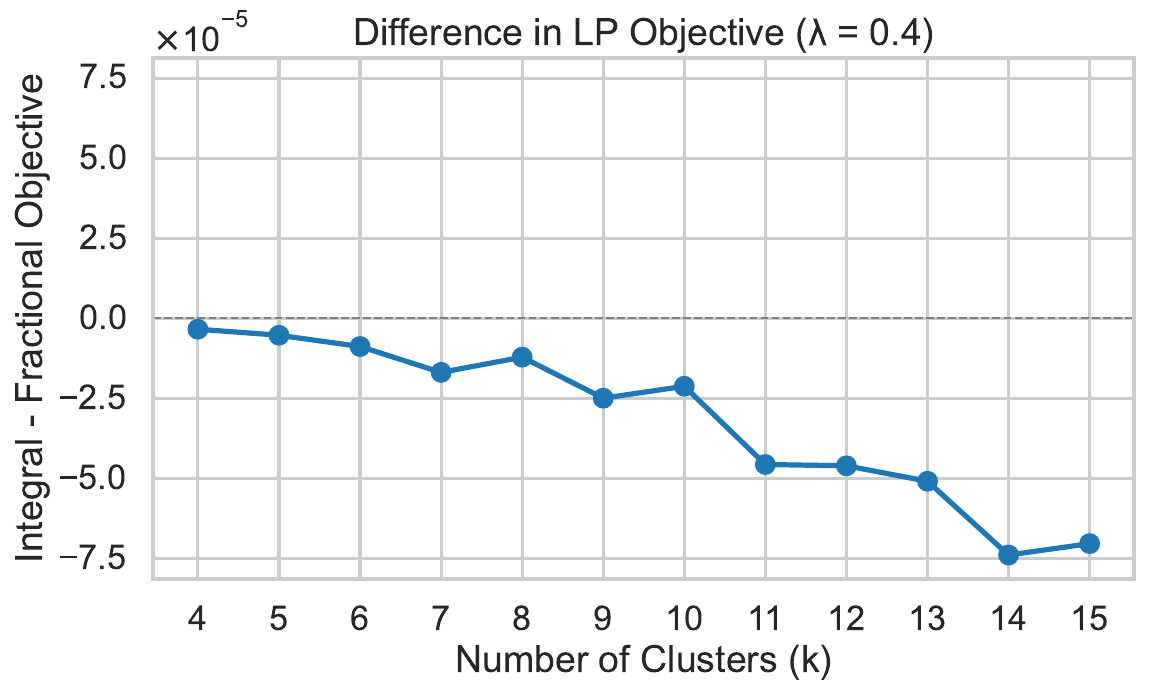}
        
    \end{subfigure}
    \hfill
    \begin{subfigure}[b]{0.3\textwidth}
        \includegraphics[width=\textwidth]{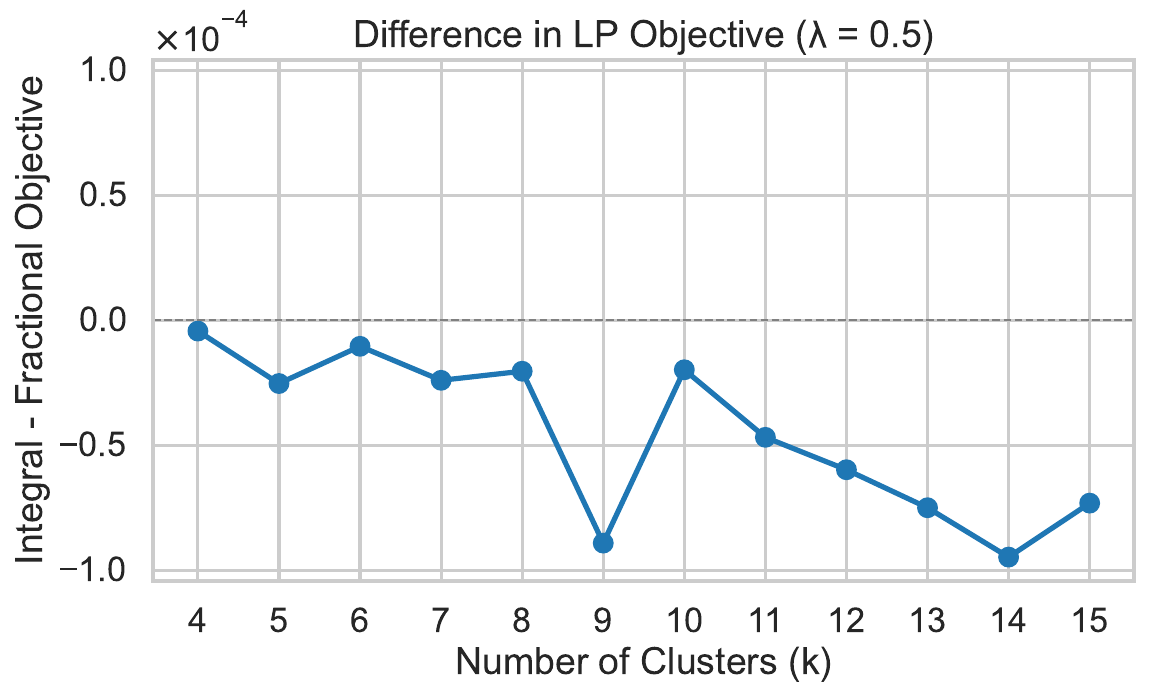}
        
    \end{subfigure}
  \hfill
    \begin{subfigure}[b]{0.3\textwidth}
        \includegraphics[width=\textwidth]{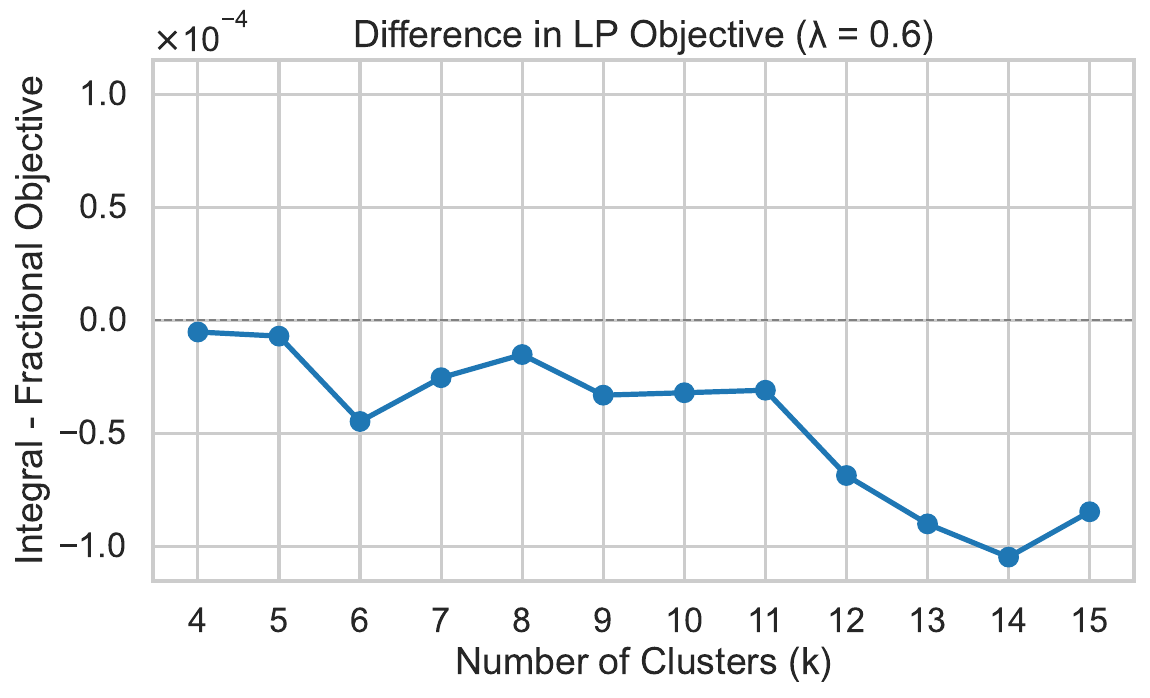}
        
    \end{subfigure}
 
    \begin{subfigure}[b]{0.3\textwidth}
        \includegraphics[width=\textwidth]{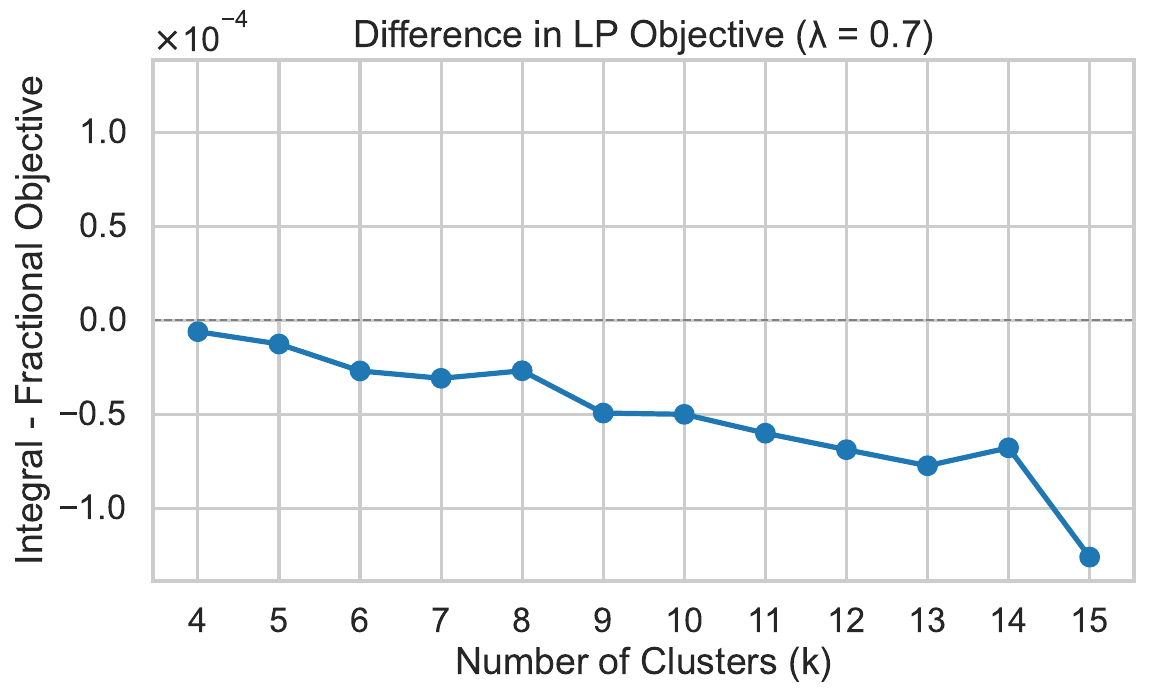}
        
    \end{subfigure}
    \hfill
    \begin{subfigure}[b]{0.3\textwidth}
        \includegraphics[width=\textwidth]{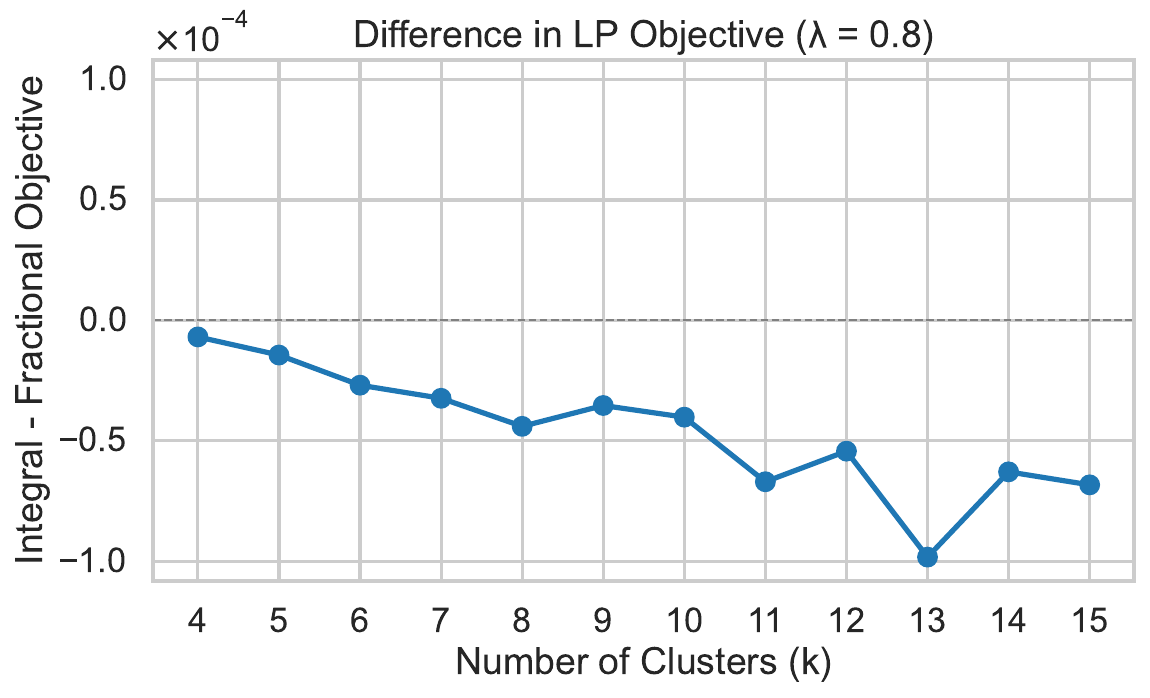}
        
    \end{subfigure}
    \hfill
    \begin{subfigure}[b]{0.3\textwidth}
        \includegraphics[width=\textwidth]{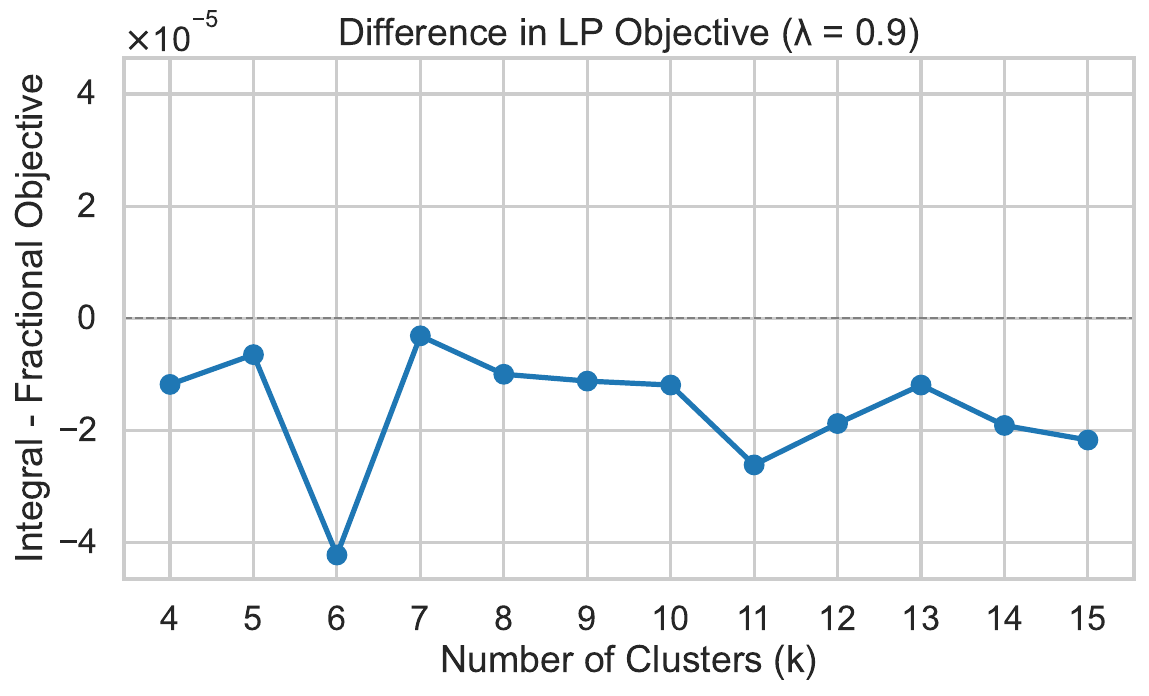}
        
    \end{subfigure}

    \caption{Integral Solution Utilitarian Value - Fractional Solution Utilitarian Value}
    \label{fig:eleven_subfigs}
\end{figure}
\FloatBarrier

\subsection{Run Time of $\algr$ and $\algu$}
In this section, we report the actual execution time of $\algr$ and $\algu$. We measure end-to-end runtime for each algorithm on sub-sampled versions of the \cens{} dataset, with sizes ranging from 10,000 to 20,000 in increments of 1,000. Each run solves the respective objective using $k=4$ clusters. Reported times are averaged over 5 independent runs, with standard deviation shown. All experiments were conducted on a MacBook Pro (2.3GHz 2-core Intel i5, 8GB RAM) using Python 3.10.

Figure~\ref{fig:runtime_plot} and Table~\ref{tab:runtime_table} display the recorded runtimes. Across all sample sizes, execution times range roughly from 15 to 55 seconds, with increasing sample size generally leading to longer runtimes. The shaded regions in the plot indicate the variability across runs.

\begin{figure}[h!]
    \centering
    \includegraphics[width=0.6\textwidth]{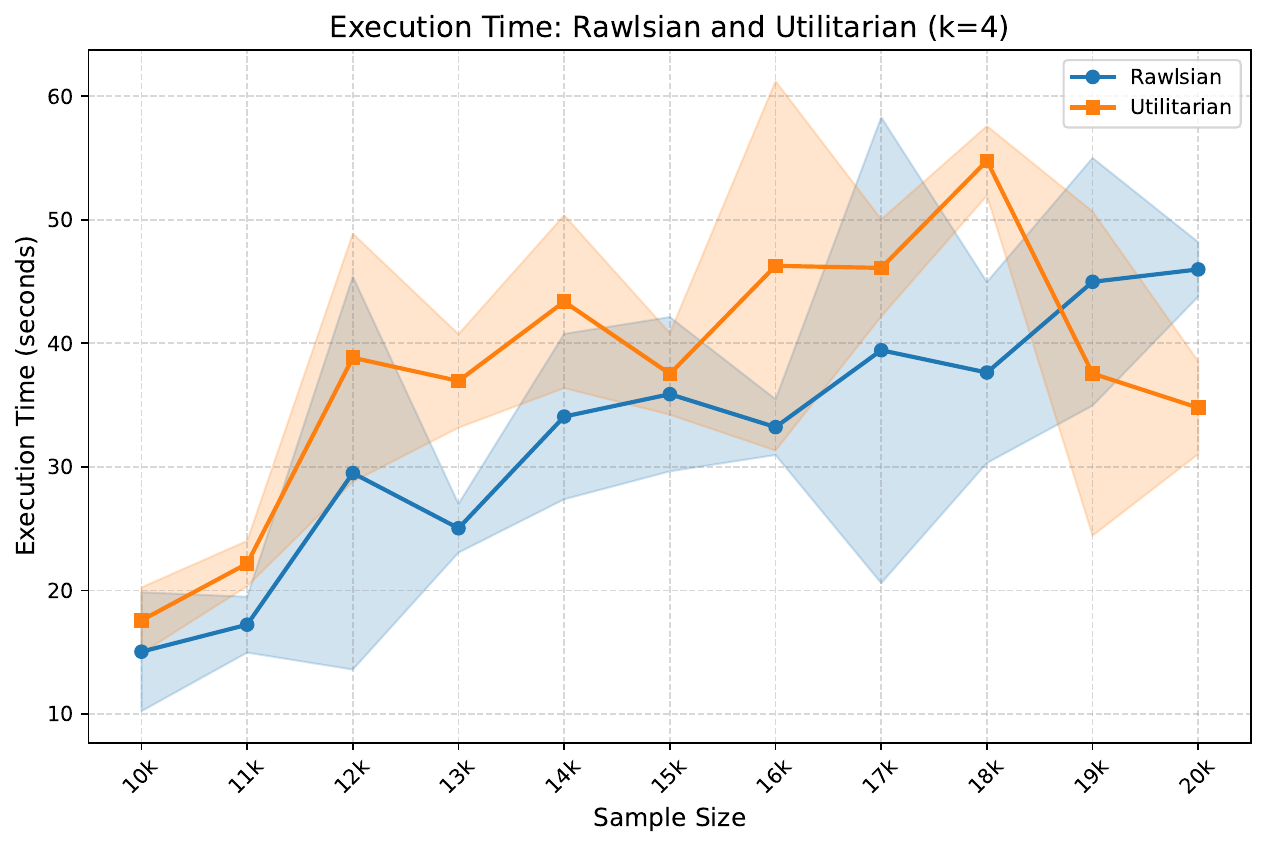}
    \caption{Runtime of $\algr$ and $\algu$ on \cens{} with $k=4$ for different dataset sizes.}
    \label{fig:runtime_plot}
\end{figure}

\begin{table}[h]
\centering
\begin{tabular}{|c|c|c|}
\hline
\textbf{Size} & \textbf{Rawlsian Time (s)} & \textbf{Utilitarian Time (s)} \\
\hline
10k & 15.05 $\pm$ 4.81 & 17.58 $\pm$ 2.67 \\
11k & 17.23 $\pm$ 2.26 & 22.18 $\pm$ 1.83 \\
12k & 29.50 $\pm$ 15.90 & 38.83 $\pm$ 10.04 \\
13k & 25.04 $\pm$ 1.97 & 36.95 $\pm$ 3.78 \\
14k & 34.07 $\pm$ 6.69 & 43.37 $\pm$ 6.99 \\
15k & 35.88 $\pm$ 6.24 & 37.53 $\pm$ 3.31 \\
16k & 33.22 $\pm$ 2.25 & 46.27 $\pm$ 14.94 \\
17k & 39.44 $\pm$ 18.85 & 46.10 $\pm$ 3.93 \\
18k & 37.63 $\pm$ 7.32 & 54.76 $\pm$ 2.82 \\
19k & 44.97 $\pm$ 10.02 & 37.55 $\pm$ 13.11 \\
20k & 45.98 $\pm$ 2.20 & 34.78 $\pm$ 3.75 \\
\hline
\end{tabular}
\caption{Runtime of $\algr$ and $\algu$ on \cens{} with $k=4$ for different dataset sizes.}
\label{tab:runtime_table}
\end{table}

\section{Limitations and Impacts}\label{appsec:limitation}

While our proposed welfare formulations provide flexible frameworks for balancing disutilities from distance and proportional representation, we emphasize that fairness is inherently a context-dependent and value-laden concept. The deployment of the proposed formulation and  algorithms in practice should involve careful deliberation with relevant stakeholders to determine appropriate fairness goals and trade-offs. In particular, the choice of parameter $\lambda$, which governs the trade-off between distance and  proportional representation must be guided by domain-specific considerations and the priorities of affected communities. Further, should the Rawlsian objective, which is mainly an egalitarian objective, be optimized or should the Utilitarian be optimized instead? Moreover, our formulation currently quantifies disutility primarily in terms of clustering distance and proportional group representation. In real-world settings, disutility may depend on additional factors such as downstream impacts of clustering outcomes as hinted in \citet{dickerson2024fair}. Additionally, even when it is clear and agreed upon that proportional group representation is important, the exact values of the proportion relaxation parameters $\alpha_h$ and $\beta_h$ should be carefully done. It is possible that for a given application only group under-representation would matter and as a result one may set $\alpha_h\ge 1-r_h$ for all colors $h \in \Colors$. 

We believe that our proposed framework and algorithms can lead to better welfare and societal impacts if such considerations are carefully taken into account. On the other hand, a superficial approach that does not incorporate the full considerations and does carefully tune parameters such as $\lambda$, $\alpha_h$, and $\beta_h$, may lead to clustering outcomes that are less preferable.



\end{document}